\documentclass{article}
\usepackage[T1]{fontenc}
\usepackage[accepted]{icml2026}
\usepackage{times}
\usepackage{acronym}

\usepackage{algorithm}
\usepackage{algorithmic}
\usepackage{booktabs} 
\usepackage{CJK}
\usepackage{color}
\usepackage{float}
\usepackage{graphicx}
\usepackage{multirow}
\usepackage{svg}
\usepackage{subcaption}
\usepackage{tabularx}
\usepackage{tcolorbox}
\usepackage{natbib}
\usepackage{mdframed}
\usepackage{lipsum}
\usepackage{hyperref}
\usepackage{url}
\usepackage{multicol}
\usepackage{amsmath,amssymb}
\usepackage{amsthm}

\newtheorem{definition}{Definition}
\newtheorem{lemma}{Lemma}

\newtheorem{proposition}{Proposition}
\newtheorem{remark}{Remark}
\newtheorem{theorem}{Theorem}
\newtheorem{condition}{Condition}

\acrodef{llm}[LLM]{Large Language Model}

\let\oldfrac\frac% Store \frac
\renewcommand{\frac}[2]{%
	\mathchoice
	{\oldfrac{#1}{#2}}% display style
	{#1/#2}% text style
	{#1/#2}% script style
	{#1/#2}% script-script style
}

\icmltitlerunning{Feature Resemblance: Towards a Theoretical Understanding of Analogical Reasoning in Transformers}

\begin{document}
\twocolumn[
\icmltitle{Feature Resemblance: Towards a Theoretical Understanding of Analogical Reasoning in Transformers}
\icmlsetsymbol{equal}{*}

\begin{icmlauthorlist}
	\icmlauthor{Ruichen Xu}{ie}
	\icmlauthor{Wenjing Yan}{ie}
	\icmlauthor{Ying-Jun Angela Zhang}{ie}
\end{icmlauthorlist}

\icmlaffiliation{ie}{Department of Information Engineering, The Chinese University of Hong Kong, Hong Kong}
\icmlcorrespondingauthor{Ruichen Xu}{rcxu642@gmail.com}
\icmlcorrespondingauthor{Wenjing Yan}{wjyan@ie.cuhk.edu.hk}
\icmlcorrespondingauthor{Ying-Jun Angela Zhang}{yjzhang@ie.cuhk.edu.hk}
\icmlkeywords{Machine Learning, ICML}

\vskip 0.3in
]

\printAffiliationsAndNotice{} 

\begin{abstract}
    Understanding reasoning in large language models is complicated by evaluations that conflate multiple reasoning types. 
    We isolate analogical reasoning, where a model transfers an attribute between entities that share known properties, and study when such transfer can emerge from training.
    To make the problem analytically tractable, we study a minimal transformer-style abstraction that isolates how learned representations support analogical reasoning.
    Within this setting, we prove three key results. 
    First, joint training on similarity and attribution premises enables analogical reasoning through aligned representations. 
    Second, sequential training succeeds only when similarity structure is learned before specific attributes, revealing a curriculum asymmetry.
    Third, in our stylized setting, two-hop reasoning $(a \to b, b \to c \Rightarrow a \to c)$ can be viewed as analogical reasoning with identity bridges $(b=b)$, which appear explicitly in training data.
    Together, these results reveal a unified mechanism: entities with shared properties become aligned in representation space, enabling property transfer through feature resemblance. 
    Experiments with architectures up to 8B parameters show qualitative agreement with the theory and suggest that representational geometry plays an important role in analogical reasoning beyond the stylized model.
\end{abstract}

\section{Introduction}
\acp{llm} have demonstrated remarkable reasoning capabilities across diverse cognitive tasks \citep{wei2022chain,hubert2025olympiad,bubeck2023sparks}, yet the mechanisms underlying these abilities remain poorly understood.
A key obstacle to this understanding is that modern evaluation benchmarks typically require models to engage in multiple forms of reasoning simultaneously \cite{hendrycks2020measuring,srivastava2023beyond,jimenez2023swe}, making it difficult to isolate and analyze individual reasoning processes.
To illustrate, consider the following example from CommonsenseQA~\cite{talmor2018commonsenseqa}:
\emph{Question: What does a parent tell their child to do after they've played with a lot of toys? Answer: clean room.}
Answering this question involves both inductive reasoning (generalizing the pattern ``playing with toys $\to$
mess $\to$ need to clean'') and abductive reasoning (inferring the most plausible parental response). This entanglement is not unique to commonsense tasks; it pervades most existing benchmarks,  fundamentally limiting our ability to characterize the mechanisms underlying \acp{llm} reasoning.

To address this challenge, we focus on isolating a single, canonical form of inductive reasoning: \textbf{analogical reasoning}.
Analogical reasoning involves inferring that entities sharing certain properties are likely to share additional properties—a fundamental cognitive mechanism central to learning and generalization.
Formally, we adopt the following characterization~\cite{bartha2013analogy}\footnote{This formulation also covers proportional analogies of the form ``$A$ is to $B$ as $C$ is to $D$'' as a special case, by treating the compared objects as ordered pairs.
It is also consistent with Gentner's structure-mapping theory: the similarity premise captures a shared relational skeleton, while the conclusion projects a candidate inference from source to target~\citep{gentner1983structure}.
}:

\begin{definition}[Analogical argument]\label{def: aa}
An \textbf{analogical argument} is a form of inductive reasoning that concludes: if two entities $A_1$ and $A_2$ share certain properties, they are also likely to share a further property, $C$.
	\begin{itemize}
		\item (Similarity Premise) Entities $A_1$ and $A_2$ share some known properties.
		\item (Attribution Premise) $A_2$ has some further property $C$.
		\item (Conclusion) $A_1$ also has the property $C$.
	\end{itemize}
\end{definition}
By isolating this specific reasoning mode, we can ask a more focused version of a broader question:
\begin{center}
\textit{What mechanism enables analogical reasoning in transformer-style models?}
\end{center}

For modern transformers, a complete answer remains a broad open problem.
To make progress, we take a first step by studying a minimal transformer-style abstraction in which such a reasoning mechanism can be analyzed explicitly.
In the stylized setting we analyze, we identify the following mechanism:
\begin{tcolorbox}[title={Feature Resemblance}]
	Training can map entities with shared properties to similar internal features.
	Once such features are aligned, an attribute learned for one entity can transfer to another through the shared representation.
\end{tcolorbox}

To formalize this intuition, our main analysis studies the training dynamics of a minimal transformer-style token-mixing block across three \emph{out-of-context reasoning} scenarios:
\begin{itemize}
	\item \textbf{Joint training:} Simultaneous training on similarity and attribution premises produces aligned features that support analogical reasoning.
	
	\item \textbf{Sequential training:}   In this setting, analogical reasoning emerges only under a specific curriculum: the model must learn similarity structure, i.e., relations between $A_1$ and $A_2$, before learning specific attributes $C$. The reverse order can fit the training premises but fails to generalize.
    
	\item \textbf{Two-hop reasoning:} In our formulation, two-hop reasoning ($a \to b, b \to c \implies a \to c$) can be viewed as analogical reasoning where the similarity premise includes an identity relation ($b = b$). This equivalence reveals the necessity of explicit identity examples in training: without them, the model does not learn to bridge intermediate concepts and therefore fails to perform two-hop reasoning.
\end{itemize}

We further use deep linear networks to study how feature resemblance evolves across depth.
This multi-layer analysis shows that training can progressively align representations of related inputs, supporting the feature resemblance mechanism beyond the one-block setting.

Experiments on synthetic and natural-language datasets, using both controlled transformer models and pretrained language models up to 8B parameters, show qualitative agreement with our theoretical predictions.

\subsection{Related work}
\paragraph{Mechanistic interpretability for reasoning in LLMs.} 
Mechanistic interpretability seeks to understand how specific components within \acp{llm} implement particular functions through empirical analysis~\cite{cunningham2023sparse,bricken2023monosemanticity,park2023linear}. 
Recent work has identified specialized structures responsible for distinct capabilities: induction heads for in-context learning~\cite{olsson2022context}, knowledge circuits for factual storage and editing~\cite{yao2024knowledge,meng2022locating}, and extractive structures for compositional reasoning~\cite{feng2024extractive}. 
However, these interpretability studies typically analyze trained models \emph{post hoc}, leaving open the question of how such functional components emerge during training through gradient-based optimization.

\paragraph{Theoretical analyses of LLM reasoning.}
Theoretical studies of \acp{llm} reasoning capabilities can be broadly categorized into in-context and out-of-context reasoning. 
In-context reasoning involves deriving answers from information provided in the prompt. 
For example, several works prove that in-context learning in linear transformers is equivalent to implicit gradient descent~\cite{ahn2023transformers,mahankali2023one,zhang2024trained}, with extensions showing that chain-of-thought prompting enables multi-step optimization~\cite{huang2025transformers}.
Nichani et al.~\cite{nichani2024transformers} demonstrate that two-layer transformers can learn causal structures in random sequences.
In contrast, theoretical analyses of out-of-context reasoning—where models must generalize beyond prompt information—remain limited.
Existing work primarily focuses on knowledge mechanisms such as memorization capacity~\cite{nichani2024understanding} and next-token prediction~\cite{tian2023scan,zhu2024towards}.
Our work differs by investigating out-of-context \emph{reasoning} capabilities, specifically analogical inference, rather than knowledge retrieval or memorization.

We also use deep linear networks as a stylized multi-layer model in
Section~7. We contextualize this deep-linear analysis within the
broader literature on deep linear networks in Appendix~\ref{app:deep_rw}.

Our work is closely related to \citet{huang2025generalization}, who study out-of-context generalization in simplified linear and bilinear models under joint training. 
We differ in three aspects. 
First, we analyze a minimal transformer-style block with trainable attention and value projection, where attention controls the mixing of entity and relation tokens in a length-two prompt. 
Second, we also study sequential training and show a curriculum asymmetry: similarity-then-attribution enables analogical reasoning, whereas the reverse order can fit the training data but fail to generalize. 
Third, we give finite-time polynomial-iteration guarantees and connect two-hop reasoning to analogical reasoning through identity bridges. 
Thus, our results characterize how feature resemblance forms during training, beyond its asymptotic behavior under joint training.

We also acknowledge concurrent research \citet{lin2025identity} that empirically employs the identity bridge to improve performance in out-of-context two-hop reasoning. Distinctly, our research provides a theoretical grounding by linking two-hop reasoning to analogical reasoning and elucidating the effectiveness of the identity bridge from a theoretical perspective of training dynamics.

\paragraph{Notation.}
Scalars, vectors, and matrices are denoted by lowercase ($x$), bold lowercase ($\mathbf{x}$), and bold uppercase ($\mathbf{X}$), respectively. We define $[n] := \{1, \dots, n\}$. We use standard asymptotic notations $\mathcal{O}, \Omega, \Theta$, where tilde variants (e.g., $\tilde{\mathcal{O}}$) suppress logarithmic factors.

\section{Analogical Reasoning: Data Structure and Evaluation}\label{sec: data}

Following \cite{ghosal2024understanding}, we represent each knowledge triple as $(a, r, b)$, where $a$ is an entity, $r$ is a relation type, and $b$ is an attribute. 
This structure naturally encodes the premises and conclusion of analogical arguments (Definition~\ref{def: aa}):

\begin{itemize}
    \item \textbf{Similarity Premise:} Two entities share a common attribute, represented by triples $(a_1, r_1, b_1)$ and $(a_2, r_1, b_1)$.
    \item \textbf{Attribution Premise:} The source entity possesses an additional property, represented by $(a_2, r_2, b_2)$.
    \item \textbf{Conclusion:} The target entity also possesses this property, represented by $(a_1, r_2, b_2)$.
\end{itemize}

\noindent\textbf{Example.} Consider the following knowledge triples:

\textit{Premises (training):}\\
1. (pika, has-feature, feather) \quad 
2. (finch, has-feature, feather) \quad 
3. (pika, is-a, bird)

\textit{Conclusion (test):} (finch, is-a, \underline{bird})

The model must infer that because pika and finch share the feature ``feathers'' (similarity premise) and pika is a bird (attribution premise), finch is likely also a bird (conclusion).

\begin{remark}
   An analogical inference may be factually incorrect.
    Here we do not measure the factual accuracy but the success rate of analogy from the source entity to the target entity. 
\end{remark}

\paragraph{Training dataset (premises).} 
Let $\mathcal{I}(\cdot)$ denote an indexing function that maps an entity to its categorical label. Given $N$ distinct entity tuples, we define the training knowledge triples as follows:
\begin{itemize}
\item Similarity Premise ($\mathcal{T}_1, \mathcal{T}_2$): $\mathcal{T}_1 = \{(\mathbf{a}_{i},\mathbf{r}_{1}, \mathbf{b}_{i})\}_{i=1}^N$ and $\mathcal{T}_2 =  \{(\mathbf{a}'_{i},\mathbf{r}_{1},  \mathbf{b}_{i})\}_{i=1}^N$. 
These establish that source $\mathbf{a}'_i$ and target $\mathbf{a}_i$ share property $\mathbf{b}_i$ under relation $\mathbf{r}_1$.
\item Attribution Premise ($\mathcal{T}_3$): $\mathcal{T}_3=\{(\mathbf{a}'_{i},\mathbf{r}_{2}, \mathbf{c}_{i})\}_{i=1}^N$.
This specifies that an entity $\mathbf{a}'_i$ possesses property $\mathbf{c}_i$ under relation $\mathbf{r}_2$.
\end{itemize}
The corresponding training samples $\mathcal{S}_j$ for each knowledge set $\mathcal{T}_j$ are generated as:
\begin{align}\nonumber
    &\text{(The Similarity Premise 1)  }\mathcal{S}_1: \{([\mathbf{a}_i\;\;\mathbf{r}_1], \mathcal{I}(\mathbf{b}_i))\}_{i=1}^N,\\ \nonumber
    &\text{(The Similarity Premise 2)  }\mathcal{S}_2: \{([\mathbf{a}'_i\;\;\mathbf{r}_1], \mathcal{I}(\mathbf{b}_i))\}_{i=1}^N,\\\nonumber
    &\text{(The Attribution Premise)  } \mathcal{S}_3: \{([\mathbf{a}'_i\:\:\mathbf{r}_2], \mathcal{I}(\mathbf{c}_i))\}_{i=1}^N.
\end{align}

\paragraph{Test Dataset (Analogical Conclusion).}The test set $\mathcal{A}$ evaluates whether the model can project the property $\mathbf{c}_i$ from the source to the target. For each $i\in[N]$, the model is prompted to conclude:
$$\text{(The conclusions)  }\mathcal{A}: \{([\mathbf{a}_i\:\:\mathbf{r}_2], \mathcal{I}(\mathbf{c}_i))\}_{i=1}^N.$$

Our data construction is designed to isolate analogical reasoning. 
Each example is represented as a short entity--relation prompt with a single target attribute. 
This controlled setting allows us to study when training aligns the representations of analogous entities while abstracting away from discourse-level context, positional structure, and long-range dependencies.

Following \citet{tian2023scan}, we assume all token embeddings are drawn as a random
orthonormal system, i.e., $\left\|\mathbf{z}_{1}\right\|_2 = 1$ and $\langle \mathbf{z}_1,\mathbf{z}_2 \rangle = 0$ for all $\mathbf{z}_1,\mathbf{z}_2\in \{\mathbf{a}_{i},\mathbf{a}'_{i}\}_{i=1}^N\cup \{\mathbf{r}_1,\mathbf{r}_2\}$ and $\mathbf{z}_1\neq \mathbf{z}_2$.

\section{Setup}
In this section, we introduce the setup of the analyses.
\subsection{A minimal transformer-style block}
We analyze a minimal transformer-style block designed to isolate one mechanism: how trainable token mixing and feature learning can create transferable representations. 
The abstraction retains only the components needed for this analysis: an attention-based mixing operation, a value projection, and a linear feature layer.
\paragraph{Self-attention layer.}
Given input embeddings $\mathbf{X}\in\mathbb{R}^{d\times L}$ where $L$ is the sequence length, the self-attention mechanism computes attention scores as:
\begin{align}
	\boldsymbol{\alpha}(\mathbf{Z},\mathbf{X}) = \text{softmax}\left(\frac{\mathbf{X}^\top\mathbf{Z}\mathbf{X}[-1]}{\sqrt{d}}\right),
\end{align}
where $\mathbf{X}[-1]\in\mathbb{R}^d$ denotes the embedding of the last token.
Following prior theoretical analyses~\citep{tian2023scan}, we merge the query and key matrices into a single matrix $\mathbf{Z}=\mathbf{K}^\top\mathbf{Q}\in\mathbb{R}^{d\times d}$. This simplification retains the token-mixing role of attention in a tractable form.

The self-attention output is then:
\begin{align}
	\mathbf{o}_1(\mathbf{Z},\mathbf{V},\mathbf{X})= \mathbf{V}\mathbf{X}\boldsymbol{\alpha}(\mathbf{Z},\mathbf{X})\in\mathbb{R}^d,
\end{align}
where $\mathbf{V}\in\mathbb{R}^{d\times d}$ is the value matrix.

\paragraph{Linear feature layer.}
The feature layer takes the attention output $\mathbf{o}_1$ and produces the final representation $\mathbf{f}\in \mathbb{R}^d$ through a linear transformation. The $k$-th element of $\mathbf{f}$ is:
\begin{align}
	f_k(\mathbf{Z},\mathbf{V},\mathbf{W},\mathbf{X}) = \frac{\lambda}{m}\sum_{l=1}^m \langle\mathbf{w}_{k,l},\mathbf{o}_1(\mathbf{Z},\mathbf{V},\mathbf{X})\rangle,
\end{align}
where $\mathbf{w}_{k,l}\in\mathbb{R}^d$ are the feature-layer parameters and $m$ is the feature-layer width. 
This layer can be viewed as a linearized MLP block.
The scaling factor $\lambda$ emulates normalization operations in practical language models (e.g., RMSNorm, which normalizes the input's $\ell_2$ norm to $\sqrt{d}$).

\subsection{Loss functions}
\paragraph{Training loss.}
Given a training dataset $\mathcal{S} = \{(\mathbf{X}_i,y_i)\}_{i=1}^n$, we train the model using the cross-entropy loss:
\begin{align}
	\mathcal{L}_\mathcal{S}(\mathbf{Z},\mathbf{V},\mathbf{W}) = \frac{1}{n}\sum_{(\mathbf{X},y)\in\mathcal{S}} \mathcal{L}(\mathbf{Z}, \mathbf{V}, \mathbf{W},\mathbf{X},y),
\end{align}
where the per-sample loss is:
\begin{align}\nonumber
	&\mathcal{L}(\mathbf{Z}, \mathbf{V}, \mathbf{W},\mathbf{X},y) = -\text{log}(\text{softmax}(\mathbf{f}(\mathbf{Z},\mathbf{V},\mathbf{W},\mathbf{X}))_y).
\end{align}

\paragraph{Test error.}
We evaluate the generalization performance on an unseen dataset $\mathcal{S}'$ using the zero-one error rate:
\begin{equation}\nonumber 
\begin{aligned}
	&\mathcal{L}^{0-1}_{\mathcal{S}'}(\mathbf{Z}, \mathbf{V},\mathbf{W}) \\
	=& \frac{1}{|\mathcal{S}'|}\!\sum_{(\mathbf{X},y)\in\mathcal{S}'}\!\mathbb{P}[f_y(\mathbf{Z},\mathbf{V},\mathbf{W},\mathbf{X})\!<\!\max_{k\neq y}	f_k(\mathbf{Z},\mathbf{V},\mathbf{W},\mathbf{X})].
\end{aligned}
\end{equation}
The probability is taken over the random embeddings.
In this context, the metric represents the model's error rate, defined as the probability that the predicted score for the true class is lower than that of at least one alternative class on the test set.
\subsection{Initialization and Conditions}
\paragraph{Initialization.} All model parameters are initialized independently from a Gaussian distribution:
\begin{align}\nonumber
    Z_{i,j}^{(0)},\; V^{(0)}_{i,j},\; W^{(0)}_{l,k}\sim \mathcal{N}\left(0,\sigma_0^2\right), \quad \forall\, i,j,l\in[d],\; k\in[m],
\end{align}
where $\sigma_0$ controls the initialization scale.
\paragraph{Conditions.}
Our theoretical results require the following regularity conditions on the problem parameters:
\begin{condition}\label{condition:condition}
There exists a universal constant $C>0$ such that:

    \begin{enumerate}
        \item Large embedding dimension: $d\ge C\max\{\log(4md/\delta),\frac{m^3nN^2}{\lambda^6}\}$.
        \item Small initialization: $\sigma_0 \le C^{-1}\frac{\sqrt{nm}\log(d)}{(d\lambda)}$.
        \item Small learning rate: $\eta \le \left(C\frac{Nmd\sigma_0^2\log^2(d)}{\lambda^2}\right)^{-1}$.
    \end{enumerate}
\end{condition}
These conditions stabilize training and control attention dynamics: a sufficiently large embedding dimension $d$ keeps attention updates and softmax contributions well-scaled, while small initialization and learning rate ensure a favorable local landscape and stable gradient descent dynamics.

\section{Joint Training for Analogical Reasoning}\label{sec: joint}

\subsection{Training algorithm}
To simplify the analysis, we adopt a layer-wise training scheme following \citet{nichani2023provable}. Given training dataset $\mathcal{S}$, we first train the self-attention layer, then train the feature layer:

\paragraph{Stage 1: Attention training.} For $t \in [0,T_1]$, update the attention parameters by gradient descent:
\begin{align}\nonumber
	\mathbf{Z}^{(t+1)} =& \mathbf{Z}^{(t)} - \eta\nabla_{\mathbf{Z}^{(t)}} \mathcal{L}_{\mathcal{S}}(\mathbf{Z}^{(t)},\mathbf{V}^{(t)},\mathbf{W}^{(t)}),\\\nonumber
	\mathbf{V}^{(t+1)} =& \mathbf{V}^{(t)} - \eta\nabla_{\mathbf{V}^{(t)}} \mathcal{L}_{\mathcal{S}}(\mathbf{Z}^{(t)},\mathbf{V}^{(t)},\mathbf{W}^{(t)}),
\end{align}
while keeping $\mathbf{W}^{(t)}$ fixed.

\paragraph{Stage 2: Feature-layer training.}
For $t\in (T_1, T_1+T_2]$, update the feature-layer parameters by gradient descent:
\begin{align}\nonumber
\mathbf{W}^{(t+1)} = \mathbf{W}^{(t)} - \eta\nabla_{\mathbf{W}^{(t)}} \mathcal{L}_{\mathcal{S}}(\mathbf{Z}^{(t)},\mathbf{V}^{(t)},\mathbf{W}^{(t)}),
\end{align}
while keeping $\mathbf{Z}^{(t)}$ and $\mathbf{V}^{(t)}$ fixed.

\subsection{Main result}
We first analyze the case where the model is trained jointly on all premises. Let $\biguplus$ denote the multiset union (disjoint union). The training set is $\biguplus_{k=1}^{\kappa}(\mathcal{S}_1 \cup \mathcal{S}_2) \biguplus \mathcal{S}_3$, where $\kappa$ controls the relative frequency of similarity premise samples.
\begin{theorem}[Joint training on $\biguplus_{k=1}^{\kappa}(\mathcal{S}_1 \cup \mathcal{S}_2) \biguplus\mathcal{S}_3$]\label{thm:joint}
Suppose $\kappa = \Omega(\frac{m^{1/5}N^{1/5}\log^{2/5}(d)}{\lambda^{4/5}})$, $T_1 = \Theta(\oldfrac{mn\log(d)}{\kappa\lambda^2\eta\sqrt{d}\sigma_0})$ and $T_2 = \Theta(\oldfrac{\kappa mN^2}{\lambda^2d\sigma_0^2\eta })$.
Under Condition \ref{condition:condition}, with probability at least $1-\delta$, there exists an iteration $t\in(T_1, T_1+T_2]$ such  that
\begin{itemize}
    \item The training loss converges, i.e.,
    \begin{align}
        \mathcal{L}_{\biguplus_{k=1}^{\kappa}(\mathcal{S}_1 \cup \mathcal{S}_2) \biguplus\mathcal{S}_3}(\mathbf{Z}^{(t)},\mathbf{V}^{(t)},\mathbf{W}^{(t)}) \le \frac{0.01}{\kappa N}.
    \end{align}
    \item The trained model can perfectly analogize, i.e.,
    \begin{align}
        \mathcal{L}^{0-1}_{\mathcal{A}}(\mathbf{Z}^{(t)},\mathbf{V}^{(t)},\mathbf{W}^{(t)}) = 0.
    \end{align}
    
\end{itemize}
\end{theorem}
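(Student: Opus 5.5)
We would follow the two-stage structure of the algorithm and track a small number of scalar ``feature'' quantities instead of the full matrices. Because all token embeddings are orthonormal, every input is a length-two sequence $[\mathbf{e}\;\mathbf{r}]$ with $\mathbf{e}\in\{\mathbf{a}_i,\mathbf{a}'_i\}$ and $\mathbf{r}\in\{\mathbf{r}_1,\mathbf{r}_2\}$. The attention output is therefore $\mathbf{o}_1=\mathbf{V}(\alpha\,\mathbf{e}+(1-\alpha)\mathbf{r})$, with $\alpha$ a scalar determined by $\mathbf{e}^\top\mathbf{Z}\mathbf{r}-\mathbf{r}^\top\mathbf{Z}\mathbf{r}$. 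The logits then read $f_k=\frac{\lambda}{m}\sum_l\langle \mathbf{w}_{k,l},\mathbf{V}\mathbf{e}\rangle\alpha+\dots$. The relevant quantities are:
\begin{itemize}
\item the attention gaps $\mathbf{e}^\top\mathbf{Z}\mathbf{r}-\mathbf{r}^\top\mathbf{Z}\mathbf{r}$;
\item the projected value features $\mathbf{V}\mathbf{e}$ and $\mathbf{V}\mathbf{r}$;
\item the averaged readouts $\bar{\mathbf{w}}_k=\frac1m\sum_l\mathbf{w}_{k,l}$.
\end{itemize}
We would first use Gaussian concentration (union bound over $d,m$, giving the $\log(4md/\delta)$ term) to show that at initialization all inner products such as $\langle\bar{\mathbf{w}}_k,\mathbf{V}^{(0)}\mathbf{e}\rangle$ are of order $\sigma_0^2\sqrt{d/m}\log d$ or smaller. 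Hence $\alpha\approx 1/2$ and the loss gradients are nearly uniform across classes.

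\textbf{Stage 1 (attention and value).} With $\mathbf{W}$ frozen, the gradient in $\mathbf{V}$ for a sample $([\mathbf{e}\;\mathbf{r}],y)$ is $\frac{\lambda}{m}\sum_k(\mathrm{softmax}_k-\mathbb{1}[k=y])\sum_l \mathbf{w}_{k,l}(\alpha\mathbf{e}+(1-\alpha)\mathbf{r})^\top$. Thus $\mathbf{V}\mathbf{a}_i$ and $\mathbf{V}\mathbf{a}'_i$ both move in the direction $\bar{\mathbf{w}}_{\mathcal{I}(\mathbf{b}_i)}-\frac{1}{\#}\sum_k\bar{\mathbf{w}}_k$, driven by $\kappa$ copies of $\mathcal{S}_1$ and $\mathcal{S}_2$. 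In addition, $\mathbf{V}\mathbf{a}'_i$ receives a single-weight push toward $\bar{\mathbf{w}}_{\mathcal{I}(\mathbf{c}_i)}$ from $\mathcal{S}_3$.

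The key claim is \emph{feature resemblance}: after $T_1$ steps, $\mathbf{V}\mathbf{a}_i$ and $\mathbf{V}\mathbf{a}'_i$ share a dominant common component of size $\Theta(\kappa\eta T_1\lambda\|\bar{\mathbf{w}}\|^2/n)$ along $\bar{\mathbf{w}}_{\mathcal{I}(\mathbf{b}_i)}$. Their difference is only the $\mathcal{S}_3$ contribution, which is $\kappa$ times smaller. Meanwhile, $\mathbf{Z}$ grows the attention on entity tokens, because the entity is the only token informative about the label, while $\mathbf{V}\mathbf{r}_1$ and $\mathbf{V}\mathbf{r}_2$ get only averaged, label-uninformative updates. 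We would verify this by an induction over $t\le T_1$ that simultaneously keeps the softmax outputs within constant factors of uniform. This uses the small $\eta$ and $\sigma_0$ and the $1/\sqrt d$ scaling, which make $\mathbf{Z}$-updates of order $\eta/\sqrt d$. The near-orthogonality of the $\bar{\mathbf{w}}_k$ (they are Gaussian in large $d$) keeps cross terms at $\tilde{\mathcal O}(1/\sqrt d)$ relative size. The choice of $T_1$ is exactly when the common component reaches order $\log d$ relative to $\sigma_0\sqrt d$.

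\textbf{Stage 2 (feature layer).} Now $\mathbf{Z},\mathbf{V}$ are fixed, and the model is a linear softmax classifier on fixed features $\mathbf{h}(\mathbf{X})=\mathbf{o}_1$. The $\mathbf{W}$-gradient adds to $\bar{\mathbf{w}}_{\mathcal{I}(\mathbf{c}_i)}$ a multiple of $\mathbf{h}([\mathbf{a}'_i\;\mathbf{r}_2])$. For the test input $[\mathbf{a}_i\;\mathbf{r}_2]$, the logit gain on $\mathcal{I}(\mathbf{c}_i)$ is therefore proportional to $\langle\mathbf{h}([\mathbf{a}_i\;\mathbf{r}_2]),\mathbf{h}([\mathbf{a}'_i\;\mathbf{r}_2])\rangle$. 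This inner product is large by the Stage-1 alignment (the shared $\bar{\mathbf{w}}_{\mathcal{I}(\mathbf{b}_i)}$ component plus the shared $\mathbf{V}\mathbf{r}_2$ part). It also exceeds the inner products with $\mathbf{h}([\mathbf{a}'_j\;\mathbf{r}_2])$ for $j\ne i$, since different $\mathbf{b}$-directions are nearly orthogonal.

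We would use a standard perceptron/linear-separability argument for convergence. Exhibit a reference $\mathbf{W}^*$ with margin $\Omega(\log(\kappa N))$ on all training features, and bound $\frac1{T_2}\sum_t\mathcal{L}\le \|\mathbf{W}^{(T_1)}-\mathbf{W}^*\|^2/(\eta T_2)+\mathcal{O}(\eta\cdot\text{smoothness})$. With the stated $T_2$, this yields some iterate with loss $\le 0.01/(\kappa N)$, which forces every training margin to be large. In parallel, we would show that the test margin is at least a constant fraction of the $\mathcal{S}_3$ training margin. We would track the decomposition $\bar{\mathbf{w}}_k^{(t)}=\bar{\mathbf{w}}_k^{(T_1)}+\sum_{\text{samples}}\beta\,\mathbf{h}$ and compare test logits via the feature Gram matrix. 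This gives zero test error for every draw in the high-probability event.

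\textbf{Main obstacle.} The hardest step is controlling the Gram matrix so that the test input inherits the $\mathcal{I}(\mathbf{c}_i)$ margin and not a competitor. The $\mathcal{S}_1$ updates, weighted by $\kappa$, also push $\bar{\mathbf{w}}_{\mathcal{I}(\mathbf{b}_i)}$ toward $\mathbf{h}([\mathbf{a}_i\;\mathbf{r}_1])$, which correlates with $\mathbf{h}([\mathbf{a}_i\;\mathbf{r}_2])$ through the shared entity component. One must show that the relation-token contribution ($\mathbf{r}_1$ versus $\mathbf{r}_2$), together with the coefficient accumulation on $\mathcal{S}_3$, beats this. Balancing these terms is where the lower bound $\kappa=\Omega(m^{1/5}N^{1/5}\log^{2/5}d/\lambda^{4/5})$ should come from. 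I would first try to obtain it by bounding the ratio of accumulated coefficients $\beta$ on $\mathcal{S}_3$ versus $\mathcal{S}_1$ samples through the loss-derivative self-balancing of cross-entropy.
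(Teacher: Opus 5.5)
Your Stage~1 picture matches the paper: a shared component of $\mathbf{V}\mathbf{a}_i$ and $\mathbf{V}\mathbf{a}'_i$ along $\sum_l\mathbf{w}^{(0)}_{\mathcal{I}(\mathbf{b}_i),l}$, with the $\mathcal{S}_3$-induced difference suppressed by a factor $\kappa$. Your Stage~2 convergence argument (a reference $\mathbf{W}^*$ plus the convexity/homogeneity telescoping bound) also matches.

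\textbf{The gap.} It is in the step that carries the theorem, zero error on $\mathcal{A}$. You state the right target (the test input should inherit the $\mathcal{S}_3$ margin) but leave it open. The mechanism you propose targets the wrong comparison: comparing accumulated Stage-2 coefficients on $\mathcal{S}_3$ versus $\mathcal{S}_1$ samples through the Gram matrix, and hoping cross-entropy self-balancing controls their ratio. Nothing in the convergence argument pins that ratio down (the $\mathcal{S}_1$ samples carry weight $\kappa$), and no such comparison is needed.

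\textbf{Missing idea.} Use a direct perturbation from an $\mathcal{S}_3$ training point. With balanced attention,
\[
\mathbf{h}([\mathbf{a}_i\;\mathbf{r}_2])-\mathbf{h}([\mathbf{a}'_i\;\mathbf{r}_2])\approx\tfrac{1}{2}\bigl(\mathbf{V}^{(T_1)}\mathbf{a}_i-\mathbf{V}^{(T_1)}\mathbf{a}'_i\bigr).
\]
Hence, for every competitor $k$ including $\mathcal{I}(\mathbf{b}_i)$, the test margin is the sum of three terms:
\begin{itemize}
\item the training margin on $[\mathbf{a}'_i\;\mathbf{r}_2]$, which is at least $\log(e^{-\epsilon}/(1-e^{-\epsilon}))$ once every per-sample loss is below $\epsilon$;
\item the shift $(\lambda/2)\langle\bar{\mathbf{w}}_{\mathcal{I}(\mathbf{c}_i)}-\bar{\mathbf{w}}_k,\mathbf{V}\mathbf{a}_i-\mathbf{V}\mathbf{a}'_i\rangle$;
\item an attention-imbalance error.
\end{itemize}
The obstacle you single out ($\kappa$-weighted updates pulling $\bar{\mathbf{w}}_{\mathcal{I}(\mathbf{b}_i)}$ toward the shared entity component) acts in essentially the same way at the training point, through the $\mathcal{S}_2$ sample $[\mathbf{a}'_i\;\mathbf{r}_1]$. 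There the small training loss already certifies that the relation-token difference wins.

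What is left is a perturbation bound. $\mathbf{V}\mathbf{a}_i-\mathbf{V}\mathbf{a}'_i$ consists of minus the $\mathcal{S}_3$-induced component (coefficients $\Theta(\log d/(\lambda\kappa\sigma_0\sqrt d))$ on $\mathbf{w}^{(0)}_{\mathcal{I}(\mathbf{c}_i),l}$) plus lower-order initialization terms. The paper combines this with a bound on how far $\mathbf{W}$ moves in $T_2$ steps. That is where large $\kappa$ is used, not in balancing $\mathcal{S}_1$ against $\mathcal{S}_3$ coefficients. The same $\kappa^{-1}$-suppressed component produces the $\kappa^{-2}$ term in the cosine bound behind Proposition~1.

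\textbf{Smaller point on attention.} Your heuristic that $\mathbf{Z}$ shifts attention toward entity tokens, because ``the entity is the only token informative about the label'', is false for this data. The token $\mathbf{a}'_i$ has label $\mathcal{I}(\mathbf{b}_i)$ under $\mathbf{r}_1$ and $\mathcal{I}(\mathbf{c}_i)$ under $\mathbf{r}_2$. The proof needs, and the paper shows, that attention stays balanced ($\alpha_1/\alpha_2\approx 1$) through Stage~1. If attention really concentrated on entities, the $\mathcal{S}_2$ and $\mathcal{S}_3$ samples of $\mathbf{a}'_i$ would get nearly identical features and could not both be fit, and the transfer identity above would acquire an uncontrolled error.

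Balance is also needed on the unseen input $[\mathbf{a}_i\;\mathbf{r}_2]$. It holds because $\mathbf{a}_i^\top\mathbf{Z}\mathbf{r}_2$ never receives a gradient, so only the already-controlled drift of $\mathbf{r}_2^\top\mathbf{Z}\mathbf{r}_2$ enters. Your $\mathcal{O}(\eta/\sqrt d)$ bound on the $\mathbf{Z}$-updates is the right tool; state its conclusion as balance rather than as a shift toward entities.
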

A high-level proof roadmap for the positive and negative results is provided in Appendix~\ref{app:proof-roadmap}.
Theorem~\ref{thm:joint} establishes that joint training on similarity and attribution premises is sufficient for analogical reasoning to emerge in this stylized setting.
This emergence is driven by a feature resemblance mechanism. During training, the model learns to align representations of entities with shared properties:
\begin{proposition}[Feature similarity]\label{prop: joint_fea}
    Under the same conditions as Theorem \ref{thm:joint}, for all $i\in[N]$, we have
    $$\frac{\langle\mathbf{V}^{(T_1)}\mathbf{a}_i, \mathbf{V}^{(T_1)}\mathbf{a}'_i\rangle}{\|\mathbf{V}^{(T_1)}\mathbf{a}_i\|_2\|\mathbf{V}^{(T_1)}\mathbf{a}'_i\|_2}= 1-o(1).$$
\end{proposition}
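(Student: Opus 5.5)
We track the Stage-1 dynamics of $\mathbf{V}$ restricted to the entity directions. Write $\mathbf{V}^{(t)}\mathbf{a}_i = \mathbf{V}^{(0)}\mathbf{a}_i + \Delta_i^{(t)}$ and $\mathbf{V}^{(t)}\mathbf{a}'_i = \mathbf{V}^{(0)}\mathbf{a}'_i + \Delta_i'^{(t)}$. The goal is to show that at $t=T_1$ both increments are dominated by a common vector, of norm much larger than the initialization terms $\|\mathbf{V}^{(0)}\mathbf{a}_i\|_2\approx\sigma_0\sqrt d$, and that the difference between the increments is lower order. The cosine then equals $1-O(\|\text{residual}\|^2/\|\text{common}\|^2)=1-o(1)$.

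\textbf{Step 1: the gradient with respect to $\mathbf{V}$.} For a prompt $\mathbf{X}=[\mathbf{e}\;\mathbf{r}]$ with label $y$, we have $\nabla_{\mathbf{V}}\mathcal{L} = \frac{\lambda}{m}\sum_k(\mathrm{softmax}(\mathbf{f})_k - \mathbb{1}[k=y])\,\bar{\mathbf{w}}_k\,(\mathbf{X}\boldsymbol{\alpha})^\top$, where $\bar{\mathbf{w}}_k=\sum_l\mathbf{w}_{k,l}$. By orthonormality of the embeddings, only samples containing $\mathbf{e}$ update $\mathbf{V}\mathbf{e}$, and they do so with weight $\alpha_1$ (the attention on the entity token). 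In $\mathcal{S}_1$ the entity $\mathbf{a}_i$ appears only with label $\mathcal{I}(\mathbf{b}_i)$. In $\mathcal{S}_1\cup\mathcal{S}_2$ the entity $\mathbf{a}'_i$ appears with label $\mathcal{I}(\mathbf{b}_i)$ with multiplicity $\kappa$, and in $\mathcal{S}_3$ it appears once with label $\mathcal{I}(\mathbf{c}_i)$. Since $\mathbf{W}$ is frozen in Stage 1 and the logits stay small (small $\sigma_0$, and $T_1$ is short), the softmax stays near uniform. Hence, up to $(1\pm o(1))$ factors, $\Delta_i^{(T_1)}\approx \eta T_1\frac{\lambda\kappa}{mn}\alpha\,\mathbf{u}_{\mathcal{I}(\mathbf{b}_i)}$ and $\Delta_i'^{(T_1)}\approx \eta T_1\frac{\lambda\kappa}{mn}\alpha\,\mathbf{u}_{\mathcal{I}(\mathbf{b}_i)} + \eta T_1\frac{\lambda}{mn}\alpha\,\mathbf{u}_{\mathcal{I}(\mathbf{c}_i)}$, where $\mathbf{u}_y=\bar{\mathbf{w}}_y-\frac1d\sum_k\bar{\mathbf{w}}_k$.

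\textbf{Step 2: attention stays controlled.} We need $\alpha_1^{(t)}$ to stay $\Theta(1)$ for $t\le T_1$ and to be comparable between $[\mathbf{a}_i\;\mathbf{r}_1]$ and $[\mathbf{a}'_i\;\mathbf{r}_1]$. The $\mathbf{Z}$-gradient is scaled by $1/\sqrt d$ times the logit gaps, so with $T_1$ as in Theorem~\ref{thm:joint} and $d$ large (Condition~\ref{condition:condition}), $|\mathbf{e}^\top\mathbf{Z}^{(t)}\mathbf{r}/\sqrt d|=o(1)$. This is an induction on $t$ that jointly bounds $\mathbf{Z}$, the logits, and the increments.

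\textbf{Step 3: concentration and the ratio.} With Gaussian $\mathbf{W}$, the vector $\|\mathbf{u}_y\|_2\approx\sigma_0\sqrt{md}$, the $\mathbf{u}_y$ are nearly orthogonal across $y$, and they are nearly orthogonal to $\mathbf{V}^{(0)}\mathbf{a}_i$ and $\mathbf{V}^{(0)}\mathbf{a}'_i$. A union bound with $\log(4md/\delta)$ gives this. Plugging in $T_1$, the common term has norm $\Theta(\kappa\lambda\eta T_1\sigma_0\sqrt{md}/(mn))=\Theta(\sqrt{m}\log d\cdot\sigma_0^{0})\cdot\ldots$, chosen so that it dominates $\sigma_0\sqrt d$ by a $\log d$ factor. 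The residual consists of the $\mathbf{c}_i$ term, which is a $1/\kappa$ fraction, plus the initialization terms. The cosine is therefore $1-O(\kappa^{-2}+(\sigma_0\sqrt d/\|\text{common}\|)^2)=1-o(1)$.

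The main obstacle is Step 2 together with the near-uniform softmax claim: we must rule out that $\mathbf{Z}$ grows enough to shift attention toward $\mathbf{r}_1$ differently for $\mathbf{a}_i$ and $\mathbf{a}'_i$, and that logits become large before $T_1$. We would handle this by a joint inductive hypothesis with explicit polynomial bounds, using the learning-rate condition to keep per-step changes small.
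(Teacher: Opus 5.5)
Your proposal is correct and follows essentially the same route as the paper: write $\mathbf{V}^{(t)}\mathbf{a}_i$ and $\mathbf{V}^{(t)}\mathbf{a}'_i$ as initialization plus combinations of the frozen feature weights, and keep attention balanced with $1-\mathrm{logit}_y=\Theta(1)$ throughout Stage~1. Then both vectors share a $\Theta(\log d/(\lambda\sigma_0\sqrt d))$ coefficient on the class-$\mathcal{I}(\mathbf{b}_i)$ weights, the $\mathbf{c}_i$ component is a $1/\kappa$ fraction, everything else is lower order, and the cosine follows directly. One correction to Step~3: the common component has norm $\Theta(\sqrt m\log d/\lambda)$, and it dominates $\|\mathbf{V}^{(0)}\mathbf{a}_i\|_2\approx\sigma_0\sqrt d$ because of the small-initialization condition (squared ratio $O(n/d)$), not by a $\log d$ factor coming from the choice of $T_1$.
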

This high cosine similarity ensures that the value matrix $\mathbf{V}$ maps source $\mathbf{a}'_i$ and target $\mathbf{a}_i$ to aligned representations. 
Consequently, in Stage 2 (feature-layer training), properties learned for $\mathbf{a}'_i$ from $\mathcal{S}_3$ automatically transfer to $\mathbf{a}_i$ in the test set $\mathcal{A}$, enabling zero-shot analogical inference.

\begin{remark}[On the large-$\kappa$ regime]
Empirically, trained models exhibit analogical reasoning even with $\kappa = 1$ (equal frequency for all premise types). In the infinite-time limit ($T \rightarrow \infty$), this can be explained via the max-margin implicit bias of cross-entropy loss~\cite{lyu2019gradient,soudry2018implicit}. However, finite-time analysis with polynomial iteration complexity requires precise characterization of constants and training dynamics, which becomes technically challenging for small $\kappa$. We therefore present results for the large-$\kappa$ regime and defer the small-$\kappa$ case to future work.
\end{remark}

\section{Sequential Training for Analogical Reasoning}\label{sec:sequential}
Having established that joint training enables analogical reasoning, we now investigate whether the \emph{order} in which premises are learned affects the emergence of this capability. We study two sequential training curricula:
\begin{itemize}
    \item  \textbf{Similarity-then-Attribution (S$\to$A):} The model first trains on the similarity premises $\mathcal{S}_1 \cup \mathcal{S}_2$ (establishing that $\mathbf{a}_i$ and $\mathbf{a}'_i$ share property $\mathbf{b}_i$), then continues training on the attribution premise $\mathcal{S}_3$ (establishing that $\mathbf{a}'_i$ has property $\mathbf{c}_i$).

    \item \textbf{Attribution-then-Similarity (A$\to$S):} The model first trains on $\mathcal{S}_1 \cup \mathcal{S}_3$ (learning properties without establishing similarity), then continues training on $\mathcal{S}_2$ (learning the similarity between $\mathbf{a}_i$ and $\mathbf{a}'_i$).
\end{itemize}
As we will show, only the S$\to$A curriculum successfully supports analogical reasoning, revealing a critical curriculum effect: the model must learn relational structure before specific properties.

\subsection{Training algorithm}
The sequential training procedure follows the same layer-wise gradient descent scheme as Section~\ref{sec: joint}, but with two consecutive training phases on different datasets $\mathcal{B}_1$ and $\mathcal{B}_2$:
\paragraph{Phase 1: Training on $\mathcal{B}_1$.}
For $t \in [0, T_1]$, update the attention parameters by gradient descent:
\begin{align}\nonumber
	\mathbf{Z}^{(t+1)} =& \mathbf{Z}^{(t)} - \eta\nabla_{\mathbf{Z}^{(t)}} \mathcal{L}_{\mathcal{B}_1}(\mathbf{Z}^{(t)},\mathbf{V}^{(t)},\mathbf{W}^{(t)}),\\\nonumber
	\mathbf{V}^{(t+1)} =& \mathbf{V}^{(t)} - \eta\nabla_{\mathbf{V}^{(t)}} \mathcal{L}_{\mathcal{B}_1}(\mathbf{Z}^{(t)},\mathbf{V}^{(t)},\mathbf{W}^{(t)}).
\end{align}
For $t\in (T_1, T_1+T_2]$, update the feature-layer parameters by gradient descent:
\begin{align}\nonumber
\mathbf{W}^{(t+1)} =& \mathbf{W}^{(t)} - \eta\nabla_{\mathbf{W}^{(t)}} \mathcal{L}_{\mathcal{B}_1}(\mathbf{Z}^{(t)},\mathbf{V}^{(t)},\mathbf{W}^{(t)}),
\end{align}
while keeping $\mathbf{Z}^{(t)}$ and $\mathbf{V}^{(t)}$ fixed.
\paragraph{Phase 2: Training on $\mathcal{B}_2$.}
For $t\in (T_1+T_2, T_1+T_2+T_3]$, update the feature-layer parameters by gradient descent:
\begin{align}\nonumber
\mathbf{W}^{(t+1)} =& \mathbf{W}^{(t)} - \eta\nabla_{\mathbf{W}^{(t)}} \mathcal{L}_{\mathcal{B}_2}(\mathbf{Z}^{(t)},\mathbf{V}^{(t)},\mathbf{W}^{(t)}),
\end{align}
while keeping $\mathbf{Z}^{(t)}$ and $\mathbf{V}^{(t)}$ fixed at their values from the end of Phase 1.

\begin{remark}
In Phase 2, we train only the feature layer because the attention layer has already learned the relational structure in Phase 1. This design simplifies analysis while preserving the key property that Phase 1 features dominate subsequent training, ensuring feature similarities remain stable. Further discussions are provided in Appendix \ref{sec: ene}.
\end{remark}

\subsection{Main result}
We now present our central finding: although both sequential curricula expose the model to all premises, \emph{only} the Similarity-then-Attribution (S$\to$A) order supports analogical reasoning. 
The reverse order (A$\to$S) fails, revealing a critical curriculum dependence.

\begin{theorem}[S$\to$A curriculum enables analogical reasoning]\label{thm: sequential 123}
Suppose $T_1 = \Theta(\oldfrac{mn\log(d)}{\lambda^2\eta\sqrt{d}\sigma_0})$, $T_2 = \Theta(\oldfrac{mN^2}{\lambda^2d\sigma_0^2\eta })$ and $T_3 = \Theta(\oldfrac{mN^2}{\lambda^2d\sigma_0^2\eta })$.
Under Condition \ref{condition:condition}, with probability at least $1-\delta$,
there exists $t_1 \in(T_1,T_1+T_2]$ and $t_2\in(T_1+T_2,T_1+T_2+T_3]$ such that
    \begin{itemize}
        \item  The trained model converges on both phases, i.e.,
        $$\mathcal{L}_{\mathcal{S}_1\cup\mathcal{S}_2}(\mathbf{Z}^{(t_1)},\mathbf{V}^{(t_1)},\mathbf{W}^{(t_1)}) \le \frac{0.01}{N},$$ $$\mathcal{L}_{ \mathcal{S}_3}(\mathbf{Z}^{(t_2)},\mathbf{V}^{(t_2)},\mathbf{W}^{(t_2)}) \le \frac{0.01}{N}.$$
        \item  The trained model can perfectly analogize, i.e., $$\mathcal{L}^{0-1}_\mathcal{A}(\mathbf{Z}^{(t_2)},\mathbf{V}^{(t_2)},\mathbf{W}^{(t_2)})= 0.$$
    \end{itemize}
\end{theorem}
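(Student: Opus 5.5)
The plan is to reuse the machinery behind Theorem~\ref{thm:joint} and Proposition~\ref{prop: joint_fea}, with $\mathcal{B}_1=\mathcal{S}_1\cup\mathcal{S}_2$ and $\mathcal{B}_2=\mathcal{S}_3$, and then to add a stability argument for Phase~2.

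\textbf{Phase 1, attention stage ($t\le T_1$).} Since $\sigma_0$ is small and $d$ is large, the softmax scores $\mathbf{X}^\top\mathbf{Z}\mathbf{X}[-1]/\sqrt d$ stay $o(1)$. Hence $\boldsymbol\alpha\approx(1/2,1/2)$ and $\mathbf{o}_1\approx\frac12\mathbf{V}(\mathbf{a}+\mathbf{r}_1)$. I would track the gradient of $\mathbf{V}$ projected onto the orthonormal embeddings. Because $\mathbf{W}$ is frozen, the update to $\mathbf{V}\mathbf{a}_i$ is driven by $\sum_l(\mathbf{w}_{\mathcal I(\mathbf b_i),l}-\text{softmax-weighted }\mathbf w)$, which is a direction depending only on the label $\mathcal I(\mathbf b_i)$. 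The update to $\mathbf V\mathbf a_i'$ is driven by the same label-dependent vector, since $\mathbf a_i$ and $\mathbf a_i'$ share label $\mathbf b_i$ under $\mathbf r_1$.

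Over $T_1$ steps, the gradient-driven component therefore grows to dominate the random initialization $\mathbf V^{(0)}\mathbf a_i$, whose norm is about $\sigma_0\sqrt d$. In the regime of $T_1$ stated in the theorem, this gives cosine similarity $1-o(1)$ between $\mathbf V^{(T_1)}\mathbf a_i$ and $\mathbf V^{(T_1)}\mathbf a_i'$, exactly as in Proposition~\ref{prop: joint_fea}. Here the $\kappa$-weighting is trivial, because only similarity data are present. I would also show that $\mathbf Z$ moves only $o(1)$ in the attention logits, so $\boldsymbol\alpha$ remains near uniform. Concentration of the Gaussian initialization, with probability $1-\delta$ via the $\log(4md/\delta)$ term, controls all cross terms.

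\textbf{Phase 1, feature stage ($T_1<t\le T_1+T_2$).} With $\mathbf Z,\mathbf V$ fixed, the model is linear in $\mathbf W$, so this is logistic regression on fixed features $\mathbf h(\mathbf a,\mathbf r)=\mathbf V\mathbf X\boldsymbol\alpha$. These features are near-orthogonal across different $i$ (by the large-$d$ condition) and aligned within pairs $(\mathbf a_i,\mathbf a_i')$. A standard convex gradient descent argument on a separable problem, with step size below $1/\text{smoothness}$ (Condition~\ref{condition:condition}.3), yields loss at most $0.01/N$ at some $t_1$ within $T_2=\Theta(mN^2/(\lambda^2 d\sigma_0^2\eta))$ iterations. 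This uses the usual bound $\min_{t}\mathcal L\le \|\mathbf W^*\|^2/(\eta t)+\mathcal L(\mathbf W^*)$ with a reference solution $\mathbf W^*$ of norm $\tilde O(N\sqrt m/(\lambda\|\mathbf h\|))$.

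\textbf{Phase 2 and transfer.} We continue training $\mathbf W$ on $\mathcal S_3$ with the feature $\mathbf h(\mathbf a_i',\mathbf r_2)\approx\frac12(\mathbf V\mathbf a_i'+\mathbf V\mathbf r_2)$. The same convex argument gives the second loss bound at some $t_2$. For the test prompt $[\mathbf a_i\;\mathbf r_2]$, the feature is $\frac12(\mathbf V\mathbf a_i+\mathbf V\mathbf r_2)$. By Phase~1 alignment, $\mathbf V\mathbf a_i=\mathbf V\mathbf a_i'+\boldsymbol\epsilon_i$ with $\|\boldsymbol\epsilon_i\|=o(\|\mathbf V\mathbf a_i'\|)$.

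Every Phase~2 update to $\mathbf w_{k,l}$ lies in the span of the training features $\{\mathbf h(\mathbf a_j',\mathbf r_2)\}_j$. So the accumulated change $\Delta\mathbf W$ gives the test input logit $f_k(\text{test}_i)=f_k(\text{train}_i)+\frac{\lambda}{2m}\sum_l\langle\mathbf w_{k,l},\boldsymbol\epsilon_i\rangle$. I would then show two things. First, the margin $f_{\mathcal I(\mathbf c_i)}-\max_{k\ne\mathcal I(\mathbf c_i)}f_k$ on the training prompt is $\Omega(\log(N/0.01))$ once the loss is below $0.01/N$. Second, the perturbation term is $o(1)$ of that margin. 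Pointwise positivity of the margin, holding for all $i$ on the high-probability event, gives zero test error.

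\textbf{Main obstacle.} The hardest part is bounding the perturbation $\langle\Delta\mathbf w_{k,l}+\mathbf w_{k,l}^{(T_1+T_2)},\boldsymbol\epsilon_i\rangle$. Here $\|\Delta\mathbf W\|$ grows logarithmically and $\mathbf W^{(T_1+T_2)}$ already carries Phase~1 weights. I would first try to show that $\boldsymbol\epsilon_i$ consists mostly of the random initialization residual $\mathbf V^{(0)}(\mathbf a_i-\mathbf a_i')$, which is nearly orthogonal (up to $\tilde O(1/\sqrt d)$) to all learned directions. This would let the near-orthogonality and the $d\ge m^3nN^2/\lambda^6$ condition absorb the cross term.
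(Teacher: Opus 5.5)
Your overall plan matches the paper's proof. Stage~1 aligns $\mathbf V\mathbf a_i$ and $\mathbf V\mathbf a'_i$ because both receive the same label-$\mathcal I(\mathbf b_i)$ update while attention stays balanced. Each feature-layer phase then converges by the convex/homogeneity argument against a reference $\mathbf W^*$. Finally the $\mathcal S_3$ margin is transferred from $[\mathbf a'_i\;\mathbf r_2]$ to $[\mathbf a_i\;\mathbf r_2]$.

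The gap is in the transfer step, which you correctly flag as the crux: the mechanism you propose there fails. The residual $\mathbf V^{(0)}(\mathbf a_i-\mathbf a'_i)$ is not nearly orthogonal to the learned weights. Gradient descent moves each $\mathbf w_{k,l}$ along the full training features $\mathbf V^{(T_1)}\mathbf X\boldsymbol\alpha$, and these contain $\mathbf V^{(0)}\mathbf a_j$ and $\mathbf V^{(0)}\mathbf a'_j$ (the $\beta_2,\beta_3$ coefficients of Lemma~\ref{lemma: decomposition}). In Phase~2 only $[\mathbf a'_i\;\mathbf r_2]$ is seen. So $\mathbf w_{\mathcal I(\mathbf c_i),l}$ moves along $+\mathbf V^{(0)}\mathbf a'_i$ and $\mathbf w_{\mathcal I(\mathbf b_i),l}$ along $-\mathbf V^{(0)}\mathbf a'_i$, with no compensating $\mathbf V^{(0)}\mathbf a_i$ component. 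Their inner products with $\mathbf V^{(0)}(\mathbf a_i-\mathbf a'_i)$ are therefore of order $c\,d\sigma_0^2$, where $c$ is the accumulated Phase-2 step coefficient. That is the same relative size as the idiosyncratic share of the feature, not $\tilde{\mathcal O}(d^{-1/2})$. Indeed, the paper's own Phase-2 reference $\mathbf W^*$ separates $\mathcal S_3$ using exactly $\mathbf V^{(0)}\mathbf a'_i/\|\mathbf V^{(0)}\mathbf a'_i\|_2^2$, a direction that does not transfer to $\mathbf a_i$.

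The orthogonality heuristic is also off in Phase~1. By item~4 of Lemma~\ref{lemma: s1}, $\mathbf V^{(T_1)}\mathbf r_1$ has $\Theta(\log d/(\lambda\sqrt d\sigma_0))$ coefficients on every $\mathbf w^{(0)}_{\mathcal I(\mathbf b_k),l}$ and dominates the $\mathcal S_1\cup\mathcal S_2$ features. Those features are therefore far from orthogonal across $i$. This is harmless for convergence, since only separability is needed.

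What closes the step in the paper is a size argument, not orthogonality. The alignment proposition shows that the non-shared part of $\mathbf V^{(T_1)}\mathbf a_i$ and $\mathbf V^{(T_1)}\mathbf a'_i$ has squared norm $\mathcal O(nm\log^2 d/(d\lambda^2)+d\sigma_0^2)$, against a shared part of order $m\log^2 d/\lambda^2$. Test and training logits therefore differ by this relative factor times the \emph{magnitude} of the logits. The paper bounds that magnitude explicitly, $\max_k (1/m)\sum_l\langle\mathbf w^{(t)}_{k,l},\mathbf V^{(t)}\mathbf a_i\rangle\le\mathcal O\big((\eta\lambda/(mn))\,T_3\,\|\mathbf V^{(t)}\mathbf a_i\|_2^2\big)$, and subtracts the product from the training margin.

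Some such bound is unavoidable. On both prompts, class $\mathcal I(\mathbf b_i)$ enters Phase~2 with a large logit inherited from Phase~1. The training margin on $[\mathbf a'_i\;\mathbf r_2]$ is thus a difference of two possibly large logits, and a cosine of $1-o(1)$ gives an $o(1)$ absolute error only once those logits are controlled. The same bound is needed for the attention mismatch between $[\mathbf a_i\;\mathbf r_2]$ and $[\mathbf a'_i\;\mathbf r_2]$. Your formula drops this term by using exact weights $1/2$; the paper carries it as the $(\exp(\cdot)-1)$ term.

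Finally, the guaranteed training margin is only the constant $\log\big(e^{-0.01}/(1-e^{-0.01})\big)$, not $\Omega(\log(N/0.01))$. An average loss of at most $0.01/N$ over $N$ samples bounds each per-sample loss by $0.01$, not by $0.01/N$. The perturbation must therefore be shown to be smaller than a constant.
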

Theorem~\ref{thm: sequential 123} establishes that learning similarity structure \emph{before} specific properties is sufficient for analogical reasoning to emerge. This success is driven by the same feature resemblance mechanism identified in joint training:

\begin{proposition}[Feature similarity in S$\to$A curriculum]
    Under the same conditions as Theorem \ref{thm: sequential 123}, for all $i\in[N]$, we have
    $$\frac{\langle\mathbf{V}^{(T_1)}\mathbf{a}_i, \mathbf{V}^{(T_1)}\mathbf{a}'_i\rangle}{\|\mathbf{V}^{(T_1)}\mathbf{a}_i\|_2\|\mathbf{V}^{(T_1)}\mathbf{a}'_i\|_2}= 1-o(1).$$
\end{proposition}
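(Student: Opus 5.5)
In the S$\to$A curriculum, $\mathbf{Z}$ and $\mathbf{V}$ are trained only during Phase~1 on $\mathcal{B}_1=\mathcal{S}_1\cup\mathcal{S}_2$, and stay frozen afterwards. So the claim concerns only the attention-stage dynamics on the similarity premises. The plan is to rerun the Stage~1 analysis behind Proposition~\ref{prop: joint_fea} with the attribution term removed and $\kappa=1$. The proof should in fact be cleaner than in the joint case, because $\mathcal{S}_3$ no longer perturbs $\mathbf{V}\mathbf{a}'_i$.

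\textbf{Step 1 (near-linear regime).} Under Condition~\ref{condition:condition}, small $\sigma_0$ and large $d$ give $\mathbf{X}^\top\mathbf{Z}^{(t)}\mathbf{X}[-1]/\sqrt d=o(1)$ for all $t\le T_1$. Hence the attention weights on the length-two prompts $[\mathbf{a}_i\;\mathbf{r}_1]$ and $[\mathbf{a}'_i\;\mathbf{r}_1]$ are $\tfrac12+o(1)$. The logits $\mathbf{f}$ stay $o(1)$, so the softmax gradients satisfy $\mathrm{softmax}(\mathbf{f})-\mathbf{e}_y=\tfrac1d\mathbf{1}-\mathbf{e}_y+o(\cdot)$. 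I will verify this by induction over $t\le T_1$, using the bound on $T_1$ and $\eta$.

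\textbf{Step 2 (expansion of $\mathbf{V}$).} Using orthonormality of the embeddings, write $\mathbf{V}^{(t)}\mathbf{a}_i=\mathbf{V}^{(0)}\mathbf{a}_i+\Delta_i^{(t)}$ and $\mathbf{V}^{(t)}\mathbf{a}'_i=\mathbf{V}^{(0)}\mathbf{a}'_i+\Delta'^{(t)}_i$. The gradient of the sample-$([\mathbf{a}_i\;\mathbf{r}_1],\mathcal{I}(\mathbf{b}_i))$ loss with respect to $\mathbf{V}$, applied to $\mathbf{a}_i$, is
\[
-\tfrac{\lambda}{m}\,\alpha_i\sum_k\bigl(\mathbf{e}_{\mathcal{I}(\mathbf{b}_i)}-\mathrm{softmax}(\mathbf{f})\bigr)_k\bar{\mathbf{w}}_k,
\]
where $\bar{\mathbf{w}}_k=\sum_l\mathbf{w}_{k,l}$ and $\mathbf{W}=\mathbf{W}^{(0)}$ is fixed. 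The sample built from $\mathbf{a}'_i$ has the same label $\mathcal{I}(\mathbf{b}_i)$. Up to $o(1)$ relative errors from Step~1, both increments therefore point along the common direction
\[
\mathbf{u}_i:=\bar{\mathbf{w}}_{\mathcal{I}(\mathbf{b}_i)}-\tfrac1d\textstyle\sum_k\bar{\mathbf{w}}_k.
\]
Since the embeddings are orthonormal, samples $j\neq i$ do not touch $\mathbf{V}\mathbf{a}_i$ directly. The shared token $\mathbf{r}_1$ only contributes to $\mathbf{V}\mathbf{r}_1$, which is irrelevant here. This yields $\Delta_i^{(T_1)}=\gamma\,\mathbf{u}_i(1+o(1))$ and $\Delta'^{(T_1)}_i=\gamma\,\mathbf{u}_i(1+o(1))$ with the same scalar $\gamma\asymp\eta T_1\lambda/(mn)$.

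\textbf{Step 3 (norm comparison, the main obstacle).} The cosine similarity is $1-o(1)$ provided $\|\gamma\mathbf{u}_i\|\gg\|\mathbf{V}^{(0)}\mathbf{a}_i\|\approx\sigma_0\sqrt d$ and the mismatch $\Delta_i-\Delta'_i$ is $o(\gamma\|\mathbf{u}_i\|)$. Gaussian concentration gives $\|\mathbf{u}_i\|\approx\sigma_0\sqrt{md}$ with probability $1-\delta$, via the $\log(4md/\delta)$ term in the condition. Plugging in $T_1=\Theta(mn\log d/(\lambda^2\eta\sqrt d\sigma_0))$ gives a drift of order $\log d/\lambda$ times the initial norm scale, which beats $\sigma_0\sqrt d$ by a growing ($\log d$) factor.

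The delicate point is controlling the self-consistency of Step~1 once $\mathbf{V}$ has grown. I need the growing $\mathbf{f}$ to keep the softmax residual near $\mathbf{e}_y-\tfrac1d\mathbf{1}$, and I need the $\mathbf{Z}$-updates to keep the attention weights balanced. For the latter I would bound the gradient of $\mathbf{Z}$ by $\|\mathbf{V}\|$ times the same small factors, using $\sqrt d$ in the softmax denominator. Because the label and the $\mathbf{r}_1$ token are identical across each pair, any attention deviation is common to $\mathbf{a}_i$ and $\mathbf{a}'_i$ up to initialization noise. The mismatch therefore stays a lower-order fraction, and the cosine bound follows.
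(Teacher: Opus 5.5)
Your proposal is correct and follows essentially the same route as the paper. With attention balanced and $1-\mathrm{logit}_y=\Theta(1)$ throughout Phase~1, $\mathbf{V}^{(T_1)}\mathbf{a}_i$ and $\mathbf{V}^{(T_1)}\mathbf{a}'_i$ each acquire a dominant coefficient $\Theta(\log d/(\lambda\sqrt d\sigma_0))$ on the shared directions $\mathbf{w}^{(0)}_{\mathcal{I}(\mathbf{b}_i),l}$; by orthogonality only their own samples touch them, the remaining components are lower order, and a norm comparison gives the cosine bound. Two minor calibrations: you need neither $\mathbf{f}=o(1)$ nor equal scalars for the two increments, since parallel dominant components with comparable positive coefficients already give $1-o(1)$. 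Also, the domination over $\|\mathbf{V}^{(0)}\mathbf{a}_i\|_2\approx\sigma_0\sqrt d$ comes from the $\sigma_0$ bound in Condition~\ref{condition:condition}, which makes the squared-norm ratio $O(n/d)$ rather than giving a $\log d$ factor.
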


We next investigate the alternative regime, where the model is first trained on the combined set of similarity and attribution premises ($\mathcal{S}_1 \cup \mathcal{S}_3$) and subsequently on the remaining similarity premises ($\mathcal{S}_2$).

\begin{theorem}[A$\to$S curriculum fails to enable analogical reasoning]
Suppose that $T_1 = \Theta(\oldfrac{mn\log(d)}{\lambda^2\eta\sqrt{d}\sigma_0})$, $T_2 = \Theta(\oldfrac{mN^2}{\lambda^2d\sigma_0^2\eta})$ and $T_3 = \Theta(\oldfrac{mN^2}{\lambda^2d\sigma_0^2\eta})$.
Under Condition \ref{condition:condition}, with probability at least $1-\delta$,
there exists $t_1 \in(T_1,T_1+T_2]$ and $t_2\in(T_1+T_2,T_1+T_2+T_3]$ such that
    \begin{itemize}
        \item  The trained model converges in both phases, i.e.,
        $$\mathcal{L}_{\mathcal{S}_1\cup\mathcal{S}_3}(\mathbf{Z}^{(t_1)},\mathbf{V}^{(t_1)},\mathbf{W}^{(t_1)}) \le \frac{0.01}{N},$$ $$\mathcal{L}_{ \mathcal{S}_2}(\mathbf{Z}^{(t_2)},\mathbf{V}^{(t_2)},\mathbf{W}^{(t_2)}) \le \frac{0.01}{N}.$$
        \item For all $t_3\in[0, T_1+T_2+T_3]$, the trained model fails to analogize i.e., $$\mathcal{L}^{0-1}_\mathcal{A}(\mathbf{Z}^{(t_3)},\mathbf{V}^{(t_3)},\mathbf{W}^{(t_3)})= 1- \frac{1}{N}.$$
    \end{itemize}
\end{theorem}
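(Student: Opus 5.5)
The plan is to follow the same three-stage decomposition used for Theorem~\ref{thm: sequential 123}, now with $\mathcal{B}_1=\mathcal{S}_1\cup\mathcal{S}_3$ and $\mathcal{B}_2=\mathcal{S}_2$. The crucial difference is the Phase~1 attention/value dynamics. In $\mathcal{B}_1$ the entity $\mathbf{a}_i$ appears only with label $\mathcal{I}(\mathbf{b}_i)$, and $\mathbf{a}'_i$ appears only with label $\mathcal{I}(\mathbf{c}_i)$. No single label couples $\mathbf{a}_i$ and $\mathbf{a}'_i$.

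\textbf{Step 1: the value matrix after Phase 1.} I would expand the gradient of $\mathcal{L}_{\mathcal{B}_1}$ with respect to $\mathbf{V}$. It is a sum of rank-one terms $(\sum_l \mathbf{w}_{k,l})(\text{softmax residual})\,\mathbf{X}\boldsymbol\alpha^\top$. By orthonormality, $\mathbf{V}\mathbf{a}_i$ therefore drifts only along the label directions $\bar{\mathbf{w}}_{\mathcal{I}(\mathbf{b}_i)}-\text{(avg)}$. Likewise, $\mathbf{V}\mathbf{a}'_i$ drifts only along $\bar{\mathbf{w}}_{\mathcal{I}(\mathbf{c}_i)}$-type directions. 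Using the small-initialization and large-$d$ parts of Condition~\ref{condition:condition}, I would show two things. First, the label directions $\bar{\mathbf{w}}_k=\frac1m\sum_l\mathbf{w}_{k,l}$ are nearly orthogonal. Second, as in the analysis behind Proposition~\ref{prop: joint_fea}, the attention stays near uniform, so $\mathbf{Z}$ contributes only $o(1)$ mixing. Since the labels $\mathcal{I}(\mathbf{b}_i)$ and $\mathcal{I}(\mathbf{c}_j)$ are distinct classes, the cosine of $\mathbf{V}^{(T_1)}\mathbf{a}_i$ and $\mathbf{V}^{(T_1)}\mathbf{a}'_i$ is $o(1)$. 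More precisely, $\mathbf{V}^{(T_1)}\mathbf{a}_i$ has no component in the direction $\mathbf{V}^{(T_1)}\mathbf{a}'_i$ beyond the initialization noise of order $\sigma_0\sqrt d$.

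\textbf{Step 2: convergence in both phases.} This is routine and parallels Theorem~\ref{thm: sequential 123}. With $\mathbf{Z},\mathbf{V}$ frozen, each phase is a linear logistic problem in $\mathbf{W}$ with nearly orthogonal features. The standard cross-entropy descent argument over $T_2$ and then $T_3$ iterations gives losses $\le 0.01/N$.

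\textbf{Step 3: failure on $\mathcal{A}$ at every time.} For a test prompt $[\mathbf{a}_i\ \mathbf{r}_2]$, the output is approximately $\tfrac12(\mathbf{V}\mathbf{a}_i+\mathbf{V}\mathbf{r}_2)$. I would track the $\mathbf{W}$-updates in both phases and show three facts.
\begin{itemize}
\item The logit for $\mathcal{I}(\mathbf{c}_i)$ receives positive mass only through $\mathbf{V}\mathbf{r}_2$ and $\mathbf{V}\mathbf{a}'_i$. Phase~2 never increases it.
\item The shared $\mathbf{r}_2$ component raises all $\mathcal{I}(\mathbf{c}_j)$ logits symmetrically.
\item The $\mathbf{a}_i$ component pushes toward $\mathcal{I}(\mathbf{b}_i)$. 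Phase~2 adds the $\mathbf{V}\mathbf{a}'_i$ direction to $\mathcal{I}(\mathbf{b}_i)$ as well, not to $\mathcal{I}(\mathbf{c}_i)$.
\end{itemize}
Hence $\mathcal{I}(\mathbf{c}_i)$ is never the argmax, so the error is nonzero. The exact value $1-1/N$ requires interpreting the probability over random embeddings: by symmetry among the $N$ candidate $\mathbf{c}_j$ labels, the correct one wins only with chance $1/N$. I would make this precise by an exchangeability argument over the random orthonormal embeddings and the Gaussian initialization.

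\textbf{Main obstacle.} I expect the hardest part to be Step~3 uniformly over all $t_3\in[0,T_1+T_2+T_3]$, including early times when logits are dominated by initialization noise. To handle it, I would bound the cross terms $\langle\bar{\mathbf{w}}^{(t)}_{\mathcal{I}(\mathbf{c}_i)}-\bar{\mathbf{w}}^{(t)}_{\mathcal{I}(\mathbf{c}_j)},\mathbf{V}\mathbf{a}_i\rangle$ by a martingale-free deterministic induction. The induction shows these differences stay at their symmetric (exchangeable) distribution, because no gradient step ever distinguishes $\mathcal{I}(\mathbf{c}_i)$ from $\mathcal{I}(\mathbf{c}_j)$ along $\mathbf{V}\mathbf{a}_i$.
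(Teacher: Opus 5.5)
The gap is in Step 3. Your Steps 1 and 2 match the paper: the Stage-1 coefficient lemma ($\mathbf{V}\mathbf{a}_i$ grows only along $\mathbf{w}^{(0)}_{\mathcal{I}(\mathbf{b}_i),l}$, $\mathbf{V}\mathbf{a}'_i$ only along $\mathbf{w}^{(0)}_{\mathcal{I}(\mathbf{c}_i),l}$, attention near uniform, hence cosine $o(1)$), then the $\|\mathbf{W}^{(t)}-\mathbf{W}^*\|_F$ potential argument in each phase.

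Step 3 draws two incompatible conclusions. If $\mathcal{I}(\mathbf{c}_i)$ were never the argmax, the error would be $1$, not $1-1/N$. Your bullets do not show that anyway; they only show that $\mathcal{I}(\mathbf{c}_i)$ has no systematic advantage over the other $\mathcal{I}(\mathbf{c}_j)$. Conversely, exchangeability of the $N$ logits $f_{\mathcal{I}(\mathbf{c}_j)}$ on $[\mathbf{a}_i\;\mathbf{r}_2]$ gives success probability exactly $1/N$ only if the argmax over all $d$ classes also lies in $\{\mathcal{I}(\mathbf{c}_j)\}_{j=1}^N$ almost surely. Your third bullet is precisely a reason this can fail. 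At the end of Stage 1 the $\mathcal{I}(\mathbf{b}_i)$ logit is lifted through $\tfrac12\mathbf{V}\mathbf{a}_i$ by the same order ($\gamma_{i,\mathcal{I}(\mathbf{b}_i),l}\asymp\zeta_{2,\mathcal{I}(\mathbf{c}_j),l}$) as each $\mathcal{I}(\mathbf{c}_j)$ logit is lifted through $\tfrac12\mathbf{V}\mathbf{r}_2$. Whenever $\mathcal{I}(\mathbf{b}_i)$ wins with positive probability, the success probability is strictly below $1/N$.

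At $t_3=0$, which the statement includes, the equality fails outright. The rows of $\mathbf{W}^{(0)}$ are i.i.d., so all $d$ logits are exchangeable and the error is $1-1/d\neq 1-1/N$. Your ``main obstacle'' is therefore not a bounding difficulty that an induction can overcome: the uniform-in-$t_3$ equality cannot be proved. What symmetry actually yields is $\mathcal{L}^{0-1}_{\mathcal{A}}\ge 1-1/N$, i.e.\ no better than chance among the $\mathbf{c}$-labels. The paper itself offers only a one-line symmetry assertion for this step.

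Second, the exchangeability you want to propagate by induction is exact only during Phase 1. The relabelling behind it swaps the embeddings $\mathbf{a}'_i\leftrightarrow\mathbf{a}'_j$ together with the classes $\mathcal{I}(\mathbf{c}_i)\leftrightarrow\mathcal{I}(\mathbf{c}_j)$. It preserves the law of embeddings and initialization and maps $\mathcal{S}_1\cup\mathcal{S}_3$ onto itself, so it is valid for $t\le T_1+T_2$. But it sends $([\mathbf{a}'_i\;\mathbf{r}_1],\mathcal{I}(\mathbf{b}_i))\in\mathcal{S}_2$ to $([\mathbf{a}'_j\;\mathbf{r}_1],\mathcal{I}(\mathbf{b}_i))\notin\mathcal{S}_2$.

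Concretely, in Phase 2 the update of each $\mathbf{w}_{\mathcal{I}(\mathbf{c}_i),l}$ contains the term $-\mathrm{logit}_{\mathcal{I}(\mathbf{c}_i)}([\mathbf{a}'_i\;\mathbf{r}_1])\,\mathbf{V}\Xi([\mathbf{a}'_i\;\mathbf{r}_1])$. Both pieces of it depend on $\langle\mathbf{V}\mathbf{a}_i,\mathbf{V}\mathbf{a}'_i\rangle$: the coefficient does so through the softmax competition with $\mathcal{I}(\mathbf{b}_i)$, whose weights were fitted to $\mathbf{V}\Xi([\mathbf{a}_i\;\mathbf{r}_1])$. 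Its inner product with the test feature $\mathbf{V}\Xi([\mathbf{a}_i\;\mathbf{r}_2])$ does so directly. This coupling has no counterpart for $j\neq i$. Hence your claim that no gradient step distinguishes $\mathcal{I}(\mathbf{c}_i)$ from $\mathcal{I}(\mathbf{c}_j)$ along $\mathbf{V}\mathbf{a}_i$ does not hold exactly.

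For $t>T_1+T_2$ you need a perturbative argument instead. Use the $o(1)$ cosine from Step 1 and the size of the Phase-2 movement of $\mathbf{W}$ to show that this asymmetry shifts $f_{\mathcal{I}(\mathbf{c}_i)}-f_{\mathcal{I}(\mathbf{c}_j)}$ on the test prompt by a negligible amount. That gives failure up to $o(1)$, not an exact identity.
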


Theorem~\ref{thm: sequential 132} reveals a striking asymmetry: despite achieving low training loss on both datasets, the A$\to$S curriculum completely fails at analogical reasoning (achieving only random-level performance on the test set). The root cause is the \emph{absence of feature alignment}:

\begin{proposition}[No feature alignment in A$\to$S curriculum]\label{thm: sequential 132}
    Under the same conditions as Theorem \ref{thm: sequential 132}, for all $i\in[N]$, we have
    $$\frac{\langle\mathbf{V}^{(T_1)}\mathbf{a}_i, \mathbf{V}^{(T_1)}\mathbf{a}'_i\rangle}{\|\mathbf{V}^{(T_1)}\mathbf{a}_i\|_2\|\mathbf{V}^{(T_1)}\mathbf{a}'_i\|_2}= o(1).$$
\end{proposition}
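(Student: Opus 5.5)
Under the A$\to$S curriculum, Phase 1 trains $\mathbf{Z},\mathbf{V}$ only on $\mathcal{B}_1=\mathcal{S}_1\cup\mathcal{S}_3$. So the plan is to track $\mathbf{V}^{(t)}\mathbf{a}_i$ and $\mathbf{V}^{(t)}\mathbf{a}'_i$ for $t\le T_1$ and show that no gradient signal ever couples them.

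\textbf{Step 1: decompose $\mathbf{V}^{(t)}\mathbf{a}_i$ and $\mathbf{V}^{(t)}\mathbf{a}'_i$.} For a sample $([\mathbf{x}\;\mathbf{r}],y)$, the gradient with respect to $\mathbf{V}$ is
$$\frac{\lambda}{m}\sum_k (p_k-\mathbb{1}[k=y])\Big(\sum_l \mathbf{w}_{k,l}\Big)\big(\alpha_1\mathbf{x}+\alpha_2\mathbf{r}\big)^\top .$$
Since the embeddings are orthonormal, this update changes $\mathbf{V}\mathbf{a}_i$ only through samples whose entity token is $\mathbf{a}_i$, which is the single $\mathcal{S}_1$ sample with label $\mathcal{I}(\mathbf{b}_i)$. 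Likewise, $\mathbf{V}\mathbf{a}'_i$ is changed only by the $\mathcal{S}_3$ sample with label $\mathcal{I}(\mathbf{c}_i)$. Hence
$$\mathbf{V}^{(T_1)}\mathbf{a}_i=\mathbf{V}^{(0)}\mathbf{a}_i+\mathbf{u}_i,\qquad \mathbf{u}_i\in \mathrm{span}\Big\{\textstyle\sum_l \mathbf{w}^{(0)}_{k,l}\Big\}_k ,$$
with coefficients concentrated on $+\bar{\mathbf{w}}_{\mathcal{I}(\mathbf{b}_i)}$ and a small negative mass on the other classes (weighted by softmax probabilities $\approx 1/|\text{classes}|$ early on). Here $\bar{\mathbf{w}}_k:=\sum_l \mathbf{w}^{(0)}_{k,l}$, and $\mathbf{W}$ is frozen in Stage 1. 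Symmetrically, $\mathbf{V}^{(T_1)}\mathbf{a}'_i=\mathbf{V}^{(0)}\mathbf{a}'_i+\mathbf{u}'_i$ with $\mathbf{u}'_i$ dominated by $+\bar{\mathbf{w}}_{\mathcal{I}(\mathbf{c}_i)}$. The key point is that the labels $\mathcal{I}(\mathbf{b}_i)$ and $\mathcal{I}(\mathbf{c}_i)$ are distinct classes. I would reuse the Stage-1 dynamics already established for Theorem~\ref{thm:joint} and Theorem~\ref{thm: sequential 123}: attention weights stay bounded, softmax probabilities stay near uniform, and the norm growth of $\mathbf{u}_i$ is of order $\eta T_1\lambda\|\bar{\mathbf{w}}\|/(mn)$, which dominates $\|\mathbf{V}^{(0)}\mathbf{a}_i\|\approx\sigma_0\sqrt d$ at time $T_1$. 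These bounds transfer verbatim with $\kappa=1$.

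\textbf{Step 2: bound the inner products.} I expand $\langle \mathbf{V}^{(T_1)}\mathbf{a}_i,\mathbf{V}^{(T_1)}\mathbf{a}'_i\rangle$ into four terms.
\begin{itemize}
\item $\langle\mathbf{V}^{(0)}\mathbf{a}_i,\mathbf{V}^{(0)}\mathbf{a}'_i\rangle$: the two vectors are independent Gaussian vectors, so this term is $\tilde{\mathcal{O}}(\sigma_0^2\sqrt d)$.
\item The two cross terms $\langle\mathbf{V}^{(0)}\mathbf{a}_i,\bar{\mathbf{w}}_k\rangle$: these are also of order $\tilde{\mathcal{O}}(\sigma_0\cdot\sigma_0\sqrt{md})$. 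A union bound gives the $\log(md/\delta)$ factors.
\item $\langle \mathbf{u}_i,\mathbf{u}'_i\rangle$: this reduces to $\langle\bar{\mathbf{w}}_{k},\bar{\mathbf{w}}_{k'}\rangle$. For $k\ne k'$ it is $\tilde{\mathcal{O}}(m\sigma_0^2\sqrt d)$, while for $k=k'$ it is $\approx m\sigma_0^2 d$. The diagonal contributions only enter through the small negative softmax coefficients, which are $O(1/\#\text{classes})$ relative to the leading term.
\end{itemize}
Meanwhile $\|\mathbf{u}_i\|^2\gtrsim c^2 m\sigma_0^2 d$, where $c$ is the leading coefficient. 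Dividing, the cosine is $\mathcal{O}(1/\sqrt d)+\mathcal{O}(1/N)$ plus initialization ratios, which is $o(1)$ under Condition~\ref{condition:condition}.

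\textbf{Main obstacle.} The hard part is controlling the leakage through the shared relation tokens and the attention. The sample $[\mathbf{a}_i\;\mathbf{r}_1]$ also updates $\mathbf{V}\mathbf{r}_1$, and $\mathbf{Z}$ updates alter $\alpha$. Neither affects the column $\mathbf{V}\mathbf{a}'_i$ exactly, by orthogonality, so the only coupling is through the common negative-gradient directions $\bar{\mathbf{w}}_k$ for non-target classes. I would bound these via the near-uniform softmax established in the Stage-1 analysis: each entity accumulates negative mass of relative size $O(1/N)$ on every other class. The inner product of these negative parts with the other entity's dominant positive direction is then at most $O(1/N)$ of $\|\mathbf{u}_i\|\|\mathbf{u}'_i\|$. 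This is the estimate that needs the most care, particularly keeping the probabilities near uniform up to $T_1$.
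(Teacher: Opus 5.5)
Your proposal is the paper's argument, written out in more detail. Because $\mathbf{a}_i$ occurs only in the $\mathcal{S}_1$ sample and $\mathbf{a}'_i$ only in the $\mathcal{S}_3$ sample, Stage~1 makes $\mathbf{V}^{(T_1)}\mathbf{a}_i$ and $\mathbf{V}^{(T_1)}\mathbf{a}'_i$ dominated by the nearly orthogonal directions $\bar{\mathbf{w}}_{\mathcal{I}(\mathbf{b}_i)}$ and $\bar{\mathbf{w}}_{\mathcal{I}(\mathbf{c}_i)}$. The inner product is then negligible against norms of order $m\log^2(d)/\lambda^2+d\sigma_0^2$.

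One step needs fixing: the class count. The softmax runs over all $d$ output coordinates, not over $\Theta(N)$ classes. In the bounded-logit regime the paper assumes for Stage~1, the off-target probabilities are therefore $\Theta(1/d)$, not $\mathcal{O}(1/N)$. This matters in two ways.
\begin{itemize}
\item As written, your final bound $\mathcal{O}(1/\sqrt d)+\mathcal{O}(1/N)$ is not $o(1)$, since Condition~1 does not send $N\to\infty$.
\item A per-class bound of $\mathcal{O}(1/N)$ would not control the same-sign diagonal terms. Write $\mathbf{u}_i=\sum_k c_k\bar{\mathbf{w}}_k$ and $\mathbf{u}'_i=\sum_k c'_k\bar{\mathbf{w}}_k$. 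For $k\notin\{\mathcal{I}(\mathbf{b}_i),\mathcal{I}(\mathbf{c}_i)\}$, both $c_k\le 0$ and $c'_k\le 0$, so every term of $\sum_k c_kc'_k\|\bar{\mathbf{w}}_k\|_2^2$ over these roughly $d$ classes is nonnegative. With only an $\mathcal{O}(1/N)$ per-class bound, this sum is controlled only up to relative size $d/N^2$, which is large when $d\gg N^2$.
\end{itemize}
With $p_k=\Theta(1/d)$ on both training inputs, that sum is $\mathcal{O}(d\cdot d^{-2})=\mathcal{O}(1/d)$ relative to $\|\mathbf{u}_i\|_2\|\mathbf{u}'_i\|_2$. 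The negative--positive terms are likewise $\mathcal{O}(1/d)$, and your argument closes.

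Your explicit treatment of the negative mass also goes beyond the paper's two-line proof. That proof cites only the leading coefficients. The off-target bound recorded in its Stage-1 lemma for this curriculum is only $\mathcal{O}(d^{-1/2})$ per class, relative to the leading coefficient. That bound alone would leave the same-sign sum of order one, so the near-uniform $\Theta(1/d)$ estimate is the ingredient actually needed.
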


Unlike the S$\to$A curriculum where $\mathbf{a}_i$ and $\mathbf{a}'_i$ become nearly aligned (cosine similarity $\ge 1-o(1)$), the A→S curriculum produces nearly orthogonal representations (cosine similarity $o(1)$). This occurs because Phase 1 trains the feature layer to predict properties of $\mathbf{a}_i$ without establishing similarity to $\mathbf{a}'_i$. 
This reveals a fundamental curriculum effect: relational structure should be learned before specific properties for analogical reasoning to emerge.

\section{Two-Hop Reasoning through Analogy}\label{sec: 2-hop}

Having characterized analogical reasoning in our controlled setting, we next show that a two-hop reasoning problem can be represented within the same formulation by adding an identity bridge.

\paragraph{The connection.}
Two-hop reasoning requires inferring $A \to C$ from the chain $A \to B$ and $B \to C$~\cite{feng2024extractive,lindsey2025biology}. At first glance, this appears distinct from analogical reasoning, which infers $A \to C$ from the premises $\{A \to B, A' \to B, A' \to C\}$. However, these become equivalent when we set the analogical source $A' = B$. Under this identification, the analogical premises become $\{A \to B, B \to B, B \to C\}$, revealing that two-hop reasoning is analogical reasoning with an \emph{identity bridge} $B \to B$.

\paragraph{The necessity of identity bridges.}
Crucially, the identity mapping $B \to B$ is not trivial—it must be explicitly represented in the training data. The model cannot automatically equate the output representation it generates when predicting $B$ (as the answer to $A \to ?$) with the input embedding of $B$ needed for the subsequent step ($B \to C$). Without explicit training examples of the form $B \to B$, the model fails to learn this bridge, and two-hop reasoning does not emerge. This demonstrates that, in our setting, two-hop reasoning relies on the same feature alignment mechanism as analogical reasoning.

We now formalize this connection and prove that, in our stylized setting, identity bridges are necessary for two-hop reasoning.

\subsection{Data structure}
\paragraph{Training dataset.} 
We construct training data with three components: first-hop relations $\mathcal{T}^h_1 = \{(\mathbf{a}_{i},\mathbf{r}_{1}, \mathbf{b}_{i})\}_{i=1}^N$, second-hop relations $\mathcal{T}^h_2 =  \{(\mathbf{b}_{i},\mathbf{r}_{2}, \mathbf{c}_{i})\}_{i=1}^N$, and identity bridges $\mathcal{T}^h_3=\{(\mathbf{b}_{i},\mathbf{r}_{3}, \mathbf{b}_{i})\}_{i=1}^N$. The corresponding training samples are:
$$\text{(First-Hop Relations)  } \mathcal{H}_1: \{([\mathbf{a}_i\:\:\mathbf{r}_1], \mathcal{I}(\mathbf{b}_i))\}_{i=1}^N,$$
\begin{align}\nonumber
    &\text{(Second-Hop Relations)  }\mathcal{H}_2: \{([\mathbf{b}_i\:\:\mathbf{r}_2], \mathcal{I}(\mathbf{c}_i))\}_{i=1}^N,\\\nonumber
    &\text{(Identity Bridges)  } \mathcal{IB}: \{([\mathbf{b}_i\:\:\mathbf{r}_3], \mathcal{I}(\mathbf{b}_i))\}_{i=1}^N.
\end{align}

\paragraph{Test dataset.} 
The test set $\mathcal{R}$ evaluates whether the model can compose the two hops:
\begin{align}\nonumber
    \text{(Two-Hop Reasoning Test)  } \mathcal{R}:\{([\mathbf{a}_i\:\:\mathbf{r}_2],\mathcal{I}(\mathbf{c}_i))\}_{i=1}^N.
\end{align}
As before, we assume all token embeddings are drawn as a random
orthonormal system: $\|\mathbf{z}_1\|_2 = 1$ and $\langle \mathbf{z}_1,\mathbf{z}_2 \rangle = 0$ for all distinct $\mathbf{z}_1,\mathbf{z}_2\in \{\mathbf{a}_{i},\mathbf{b}_{i}\}_{i=1}^N\cup \{\mathbf{r}_1,\mathbf{r}_2,\mathbf{r}_3\}$.

\subsection{Main results}
We now demonstrate that, under our stylized data construction and training setup, identity bridges are necessary for two-hop reasoning by comparing two training scenarios: with and without identity examples $\mathcal{IB}$.
\begin{theorem}[Identity bridges enable two-hop reasoning]\label{thm: 2h_id}
Suppose $\kappa = \Omega(\frac{m^{1/5}N^{1/5}\log^{2/5}(d)}{\lambda^{4/5}})$, $T_1 = \Theta(\oldfrac{mn\log(d)}{\lambda^2\eta\kappa\sqrt{d}\sigma_0})$ and $T_2 = \Theta(\oldfrac{\kappa mN^2}{\lambda^2d\sigma_0^2\eta })$.
Under Condition \ref{condition:condition}, with probability at least $1-\delta$, there exists an iteration $t \in (T_1,T_1+T_2]$ such that 
 
\begin{itemize}
            \item The trained model converges, i.e., $$\mathcal{L}_{\biguplus_{k=1}^{\kappa}(\mathcal{H}_1 \cup \mathcal{IB}) \biguplus\mathcal{H}_2}(\mathbf{Z}^{(t)},\mathbf{V}^{(t)},\mathbf{W}^{(t)}) \le \frac{0.01}{N\kappa}.$$
            \item The trained model can perfectly perform two-hop reasoning, i.e., $$\mathcal{L}^{0-1}_{\mathcal{R}} (\mathbf{Z}^{(t)},\mathbf{V}^{(t)},\mathbf{W}^{(t
            )}) = 0.$$
        \end{itemize}
\end{theorem}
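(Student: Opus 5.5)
The plan is to reduce this statement to Theorem~\ref{thm:joint} by a relabeling of tokens. Define the analogical dictionary
\[
\mathbf{a}_i\mapsto \mathbf{a}_i,\qquad \mathbf{a}'_i\mapsto \mathbf{b}_i,\qquad \mathbf{r}_1\mapsto\mathbf{r}_1,\qquad \mathbf{r}_2\mapsto \mathbf{r}_2,
\]
and treat $\mathbf{r}_3$ as the relation used in the second similarity premise. Under this map, $\mathcal{H}_1$ plays the role of $\mathcal{S}_1$ (entity $\mathbf{a}_i$, label $\mathcal{I}(\mathbf{b}_i)$), $\mathcal{IB}$ plays the role of $\mathcal{S}_2$ (entity $\mathbf{b}_i$, same label $\mathcal{I}(\mathbf{b}_i)$), $\mathcal{H}_2$ plays the role of $\mathcal{S}_3$, and $\mathcal{R}$ plays the role of $\mathcal{A}$.

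The only structural difference is that $\mathcal{S}_2$ uses the same relation token $\mathbf{r}_1$ as $\mathcal{S}_1$, whereas $\mathcal{IB}$ uses a fresh orthonormal token $\mathbf{r}_3$. Because the embeddings are a random orthonormal system and the initialization is rotation-invariant Gaussian, the gradient dynamics depend only on inner products among tokens and on which samples share tokens and labels. I would therefore re-run the proof of Theorem~\ref{thm:joint} and check each place where the shared relation token is used, rather than claim literal equivalence.

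The key steps follow the same order as in Theorem~\ref{thm:joint}.

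\textbf{Stage 1 (attention and value training).} I track the projections $\mathbf{V}^{(t)}\mathbf{a}_i$, $\mathbf{V}^{(t)}\mathbf{b}_i$, and $\mathbf{V}^{(t)}\mathbf{r}_k$ onto the directions $\sum_l \mathbf{w}_{k,l}$ of the fixed feature layer. The gradient of the value matrix on a sample $([\mathbf{x}\;\mathbf{r}],y)$ is $\alpha$-weighted, of the form $(\mathbf{e}_y-\mathrm{softmax})$ pushed through $\mathbf{W}$, times $\mathbf{x}^\top$ and $\mathbf{r}^\top$.

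Since $\mathbf{a}_i$ (via $\mathcal{H}_1$) and $\mathbf{b}_i$ (via $\mathcal{IB}$) both receive the label $\mathcal{I}(\mathbf{b}_i)$, each with frequency $\kappa$, both are pushed along the same direction $\bar{\mathbf{w}}_{\mathcal{I}(\mathbf{b}_i)}$. The $\mathcal{H}_2$ push on $\mathbf{b}_i$, toward $\bar{\mathbf{w}}_{\mathcal{I}(\mathbf{c}_i)}$, is only a $1/\kappa$ fraction of that. This yields the analogue of Proposition~\ref{prop: joint_fea}: at time $T_1$, the cosine between $\mathbf{V}\mathbf{a}_i$ and $\mathbf{V}\mathbf{b}_i$ is $1-o(1)$. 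The lower bound on $\kappa$ is chosen exactly so that the cross term $1/\kappa$, together with the random initialization component of size $\sqrt{d}\sigma_0$, is dominated by the learned component.

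In this stage the relation tokens $\mathbf{r}_1$ and $\mathbf{r}_3$ each absorb their own gradient. Splitting them only removes coupling that appeared in the joint proof, so the bounds on the attention matrix $\mathbf{Z}$ (keeping attention roughly balanced between the two tokens) carry over with the same constants.

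\textbf{Stage 2 (feature-layer training).} This stage is a convex-like logistic problem on the fixed features $\mathbf{o}_1$. I would reuse the convergence argument to get the loss bound $0.01/(N\kappa)$ within $T_2$ iterations.

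For the test sample $[\mathbf{a}_i\;\mathbf{r}_2]$, decompose the feature as
\[
\mathbf{o}_1 \approx \alpha\,\mathbf{V}\mathbf{a}_i + (1-\alpha)\,\mathbf{V}\mathbf{r}_2 .
\]
The $\mathbf{r}_2$ part matches the $\mathcal{H}_2$ training input exactly. The entity part is $(1-o(1))$-aligned with $\mathbf{V}\mathbf{b}_i$, so the margin learned for $\mathcal{I}(\mathbf{c}_i)$ on $[\mathbf{b}_i\;\mathbf{r}_2]$ transfers, minus an $o(1)$ perturbation.

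\textbf{Main obstacle.} The hardest step is controlling the margin in this transfer. The perturbation from the misaligned part of $\mathbf{V}\mathbf{a}_i$ must stay smaller than the gap $f_{\mathcal{I}(\mathbf{c}_i)}-\max_{k\neq}f_k$. The complication is that Stage~2 also trains $\mathbf{a}_i$ toward $\mathcal{I}(\mathbf{b}_i)$ through $\mathcal{H}_1$, and this competes with $\mathcal{I}(\mathbf{c}_i)$. That competing logit enters through $\mathbf{V}\mathbf{a}_i$ paired with $\mathbf{r}_1$, not $\mathbf{r}_2$, and $\mathbf{V}\mathbf{b}_i$ carries it too, via $\mathcal{IB}$ with $\mathbf{r}_3$. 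So on $[\mathbf{b}_i\;\mathbf{r}_2]$ the model must already have learned to override it using the $\mathbf{r}_2$ component. Showing that $\mathbf{a}_i$ inherits this override is exactly the same inequality as in Theorem~\ref{thm:joint}, and I would import that lemma with $\mathbf{a}'_i$ replaced by $\mathbf{b}_i$.
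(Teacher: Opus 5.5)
Your proposal is correct and follows the paper's route. The paper proves this theorem by the correspondence $\mathcal{S}_1\leftrightarrow\mathcal{H}_1$, $\mathcal{S}_2\leftrightarrow\mathcal{IB}$, $\mathcal{S}_3\leftrightarrow\mathcal{H}_2$ (that is, $\mathbf{a}'_i\mapsto\mathbf{b}_i$), notes that the only difference is $\mathbf{r}_3$ replacing $\mathbf{r}_1$ in the second premise, and then omits the details as ``similar to'' the joint-training proof of Theorem~\ref{thm:joint}; your explicit check that splitting the relation token only decouples the $\mathbf{Z}$ and $\mathbf{V}\mathbf{r}$ updates (so the Stage-1 bounds survive) fills in a step the paper leaves implicit.
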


The underlying mechanism is feature resemblance between the first-hop output $\mathbf{b}_i$ and its role as second-hop input:
\begin{proposition}[Feature resemblance with identity bridges]
    Under the same conditions as Theorem \ref{thm: 2h_id}, for all $i\in[N]$, we have
    $$\frac{\langle\mathbf{V}^{(T_1)}\mathbf{a}_i, \mathbf{V}^{(T_1)}\mathbf{b}_i\rangle}{\|\mathbf{V}^{(T_1)}\mathbf{a}_i\|_2\|\mathbf{V}^{(T_1)}\mathbf{b}_i\|_2}= 1-o(1).$$
\end{proposition}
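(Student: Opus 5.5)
The plan is to reduce this proposition to Proposition~\ref{prop: joint_fea} by relabelling, and then check that the one structural difference, that $\mathbf{b}_i$ is both an input token and an output label, does not affect Stage~1. Set $\mathbf{a}'_i := \mathbf{b}_i$ and map relations $\mathbf{r}_1\mapsto\mathbf{r}_1$, $\mathbf{r}_3\mapsto\mathbf{r}_1$-like similarity relation, $\mathbf{r}_2\mapsto\mathbf{r}_2$. Then $\mathcal{H}_1$ plays the role of $\mathcal{S}_1$, $\mathcal{IB}$ of $\mathcal{S}_2$ (the input $\mathbf{b}_i$ shares the label $\mathcal{I}(\mathbf{b}_i)$ with $\mathbf{a}_i$), and $\mathcal{H}_2$ of $\mathcal{S}_3$. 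The only change is that the two similarity premises use different relation tokens ($\mathbf{r}_1$ vs.\ $\mathbf{r}_3$). Since embeddings are orthonormal and labels enter only through $\mathbf{W}$, the fact that $\mathbf{b}_i$ is also the label is invisible to Stage~1 dynamics. So I would rerun the Stage~1 argument used for Proposition~\ref{prop: joint_fea} with the weights $\kappa$ on $\mathcal{H}_1\cup\mathcal{IB}$.

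Concretely, I would write the gradient of $\mathbf{V}$ at small initialization. With $\mathbf{Z}$ near $0$, the attention is approximately uniform ($\alpha\approx(1/2,1/2)$), so
\[
\nabla_{\mathbf{V}}\mathcal{L}\approx\frac{\lambda}{2m}\sum_{(\mathbf{X},y)}\sum_k\big(\mathrm{softmax}(\mathbf{f})_k-\mathbb{1}[k=y]\big)\bar{\mathbf{w}}_k(\mathbf{x}_1+\mathbf{x}_2)^\top,
\]
where $\bar{\mathbf{w}}_k=\sum_l\mathbf{w}_{k,l}$. Hence
\[
\mathbf{V}^{(t)}\mathbf{a}_i\approx\mathbf{V}^{(0)}\mathbf{a}_i+\eta\,\tfrac{\lambda\kappa}{2mn}\sum_{s<t}\big(\bar{\mathbf{w}}_{\mathcal{I}(\mathbf{b}_i)}-\textstyle\sum_k p_k^{(s)}\bar{\mathbf{w}}_k\big).
\]
The same holds for $\mathbf{V}^{(t)}\mathbf{b}_i$, but with an extra contribution from $\mathcal{H}_2$ of weight $1$ instead of $\kappa$, in direction $\bar{\mathbf{w}}_{\mathcal{I}(\mathbf{c}_i)}-\cdots$. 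Then I would track, as in the joint-training proof, the scalar projections onto $\bar{\mathbf{w}}_{\mathcal{I}(\mathbf{b}_i)}$ and $\bar{\mathbf{w}}_{\mathcal{I}(\mathbf{c}_i)}$. Because $\mathbf{W}$ is fixed and its columns $\bar{\mathbf{w}}_k$ are near-orthogonal with norm $\Theta(\sqrt{md}\sigma_0)$, the shared component grows linearly in $t$ at rate $\propto\kappa$ until $T_1$.

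The cosine computation then splits into three pieces: the common signal $\gamma_t\bar{\mathbf{w}}_{\mathcal{I}(\mathbf{b}_i)}$, the $\mathcal{H}_2$ term of size $\gamma_t/\kappa$, and the initialization noise $\|\mathbf{V}^{(0)}\mathbf{a}_i\|=\Theta(\sqrt d\sigma_0)$. With $T_1$ as in Theorem~\ref{thm: 2h_id}, the signal dominates the noise by the $\log d$ factor. The cosine is therefore $1-O(1/\kappa+\text{noise ratio})=1-o(1)$. The relation tokens $\mathbf{r}_1,\mathbf{r}_3$ receive their own updates but are orthogonal to $\mathbf{a}_i,\mathbf{b}_i$, so they do not enter $\mathbf{V}\mathbf{a}_i$ or $\mathbf{V}\mathbf{b}_i$.

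The main obstacle is controlling the $\mathbf{Z}$ dynamics and the softmax probabilities $p_k^{(s)}$ over $[0,T_1]$. I need the attention weight on the entity token to stay bounded away from $0$ and comparable across $\mathcal{H}_1$ and $\mathcal{IB}$, despite the different relation tokens. I also need $p^{(s)}$ to remain near uniform, so the effective learning rates for $\mathbf{a}_i$ and $\mathbf{b}_i$ match up to $1+o(1)$. I would handle this as in the proof of Theorem~\ref{thm:joint}. Condition~\ref{condition:condition} bounds $\|\mathbf{Z}^{(t)}\|$ growth by $\eta T_1\cdot\mathrm{poly}(\lambda,m)/\sqrt d$, keeping the attention logits $o(1)$. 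The same condition bounds $|f_k|=o(1)$ up to $T_1$, and I would invoke these bounds directly under the relabelling.
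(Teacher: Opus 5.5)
Your proposal is correct and follows the paper's route. The paper proves this proposition by exactly the relabelling $\mathcal{S}_1\leftrightarrow\mathcal{H}_1$, $\mathcal{S}_2\leftrightarrow\mathcal{IB}$, $\mathcal{S}_3\leftrightarrow\mathcal{H}_2$ (with $\mathbf{a}'_i\mapsto\mathbf{b}_i$ and $\mathbf{r}_3$ in place of $\mathbf{r}_1$ in the second premise), and then defers to the joint-training Stage-1 coefficient lemma and the dominant-common-component cosine bound.

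Two minor refinements. You do not need the two effective rates to agree up to $1+o(1)$: both $\mathbf{V}^{(T_1)}\mathbf{a}_i$ and $\mathbf{V}^{(T_1)}\mathbf{b}_i$ accumulate their signal along the same vector $\sum_l\mathbf{w}^{(0)}_{\mathcal{I}(\mathbf{b}_i),l}$, so it suffices that each coefficient is $\Theta(\log(d)/(\lambda\sqrt{d}\sigma_0))$. Also, the $\mathcal{H}_2$ component along $\sum_l\mathbf{w}^{(0)}_{\mathcal{I}(\mathbf{c}_i),l}$ costs only $O(1/\kappa^2)$ in the cosine, not $O(1/\kappa)$.
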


Training on identity bridges $\mathcal{IB}$ aligns $\mathbf{a}_i$ and $\mathbf{b}_i$ in the learned representation space. This enables the model to compose $A \to B$ and $B \to C$ by transferring the learned mapping from $\mathbf{b}_i$ to $\mathbf{c}_i$ (trained via $\mathcal{H}_2$) onto the aligned representation of $\mathbf{a}_i$.

In contrast, without identity bridges, two-hop reasoning fails despite successful training:
\begin{theorem}[Trained without the identity bridge]\label{thm: 2h_woid}
Suppose that $T_1 = \Theta(\oldfrac{mn\log(d)}{\lambda^2\eta\sqrt{d}\sigma_0})$ and $T_2 = \Theta(\oldfrac{mN^2}{\lambda^2d\sigma_0^2\eta})$. 
Under Condition \ref{condition:condition}, with probability at least $1-\delta$, there exists an iteration $t\in(T_1,T_1+T_2]$ such that
        \begin{itemize}
            \item The trained model converges, i.e., $$\mathcal{L}_{\mathcal{H}_1\cup\mathcal{H}_2}(\mathbf{Z}^{(t)},\mathbf{V}^{(t)},\mathbf{W}^{(t)}) \le \frac{0.01}{N}.$$
            \item For any $t'\in[0,T_1+T_2]$, the trained model cannot perform two-hop reasoning, i.e., $$\mathcal{L}^{0-1}_{\mathcal{R}}(\mathbf{Z}^{(t')},\mathbf{V}^{(t')},\mathbf{W}^{(t')}) = 1-\frac{1}{N}.$$
        \end{itemize}
\end{theorem}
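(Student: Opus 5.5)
The plan is to track the training dynamics on $\mathcal{H}_1\cup\mathcal{H}_2$ and show two things. First, the convergence claim follows from the same Stage-1/Stage-2 analysis used for Theorem~\ref{thm:joint}, with $\kappa=1$ and no alignment term. Second, the test error on $\mathcal{R}$ stays at the chance-level value $1-\frac{1}{N}$ for every iterate. The reason is a structural symmetry: nothing in $\mathcal{H}_1\cup\mathcal{H}_2$ ever couples the direction $\mathbf{a}_i$ to the label $\mathcal{I}(\mathbf{c}_i)$.

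\textbf{Decomposing the gradients.} I would write every gradient of $\mathcal{L}_{\mathcal{H}_1\cup\mathcal{H}_2}$ explicitly. The gradient with respect to $\mathbf{V}$ is a sum of rank-one terms $\big(\sum_l\mathbf{w}_{k,l}\big)(\mathbf{X}\boldsymbol{\alpha})^\top$, weighted by $(\mathrm{softmax}_k-\mathbb{1}[k=y])$. For a sample $[\mathbf{a}_i\ \mathbf{r}_1]$ the right factor lies in $\mathrm{span}\{\mathbf{a}_i,\mathbf{r}_1\}$ and the label is $\mathcal{I}(\mathbf{b}_i)$. For $[\mathbf{b}_i\ \mathbf{r}_2]$ it lies in $\mathrm{span}\{\mathbf{b}_i,\mathbf{r}_2\}$ and the label is $\mathcal{I}(\mathbf{c}_i)$. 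The gradient with respect to $\mathbf{W}$ has the analogous form, with left factors $\mathbf{V}\mathbf{X}\boldsymbol{\alpha}$.

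Using orthonormality of the embeddings and induction over $t$, I would show that $\mathbf{V}^{(t)}\mathbf{a}_i-\mathbf{V}^{(0)}\mathbf{a}_i$ lies in the span of the label-direction vectors $\sum_l\mathbf{w}_{k,l}$. The signed weights on these vectors favour $k=\mathcal{I}(\mathbf{b}_i)$ and are symmetric across the other labels. In particular, $\mathbf{V}\mathbf{a}_i$ receives no signal specific to $\mathcal{I}(\mathbf{c}_i)$. The analogue of Propositions~\ref{prop: joint_fea} and~\ref{thm: sequential 132} then follows: $\cos(\mathbf{V}^{(T_1)}\mathbf{a}_i,\mathbf{V}^{(T_1)}\mathbf{b}_i)=o(1)$, because the growth directions $\mathcal{I}(\mathbf{b}_i)$ for $\mathbf{a}_i$ and $\mathcal{I}(\mathbf{c}_i)$ for $\mathbf{b}_i$ are different classes. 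The initial random components are near-orthogonal by Gaussian concentration under Condition~\ref{condition:condition}.

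\textbf{Evaluating the test prompt.} On $[\mathbf{a}_i\ \mathbf{r}_2]$ the output is $f_k=\frac{\lambda}{m}\sum_l\langle\mathbf{w}_{k,l},\mathbf{V}(\alpha_1\mathbf{a}_i+\alpha_2\mathbf{r}_2)\rangle$. I would show three things.
\begin{itemize}
\item[(i)] The $\mathbf{a}_i$ term is largest at $k=\mathcal{I}(\mathbf{b}_i)$, a margin that grows during training, and is equal up to $o(1)$ across all other $k$.
\item[(ii)] The $\mathbf{r}_2$ term has been trained symmetrically over all labels $\mathcal{I}(\mathbf{c}_j)$, so it adds approximately the same amount to every $\mathcal{I}(\mathbf{c}_j)$.
\item[(iii)] Attention does not suppress $\mathbf{a}_i$ enough to break (i): $\mathbf{Z}$ moves by only $O(\eta T_1\cdot\text{small})$, so $\alpha_1$ stays bounded below.
\end{itemize}
Together these show the prediction is $\mathcal{I}(\mathbf{b}_i)$, or at least never $\mathcal{I}(\mathbf{c}_i)$ with a margin. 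This gives an error of exactly one on each test point.

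The chance level $1-\frac{1}{N}$ (rather than $1$) should come from the convention in the data construction. One case is that a $\frac{1}{N}$ fraction of test pairs coincide, e.g.\ when $\mathcal{I}(\mathbf{b}_i)=\mathcal{I}(\mathbf{c}_i)$ for one index under the shared label indexing. The other is that the label set is indexed so that exactly one test item is trivially correct. I would check this against the definition of $\mathcal{I}$ and adapt the count. Near initialization ($t'$ small) the same argument applies: $\sigma_0$ is tiny, so outputs are governed by the symmetric random part plus the learned $\mathcal{I}(\mathbf{b}_i)$ bias.

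\textbf{Main obstacle.} The hardest step is (ii): controlling the cross-term $\mathbf{r}_2$ contribution together with the random initialization so that it cannot accidentally favour $\mathcal{I}(\mathbf{c}_i)$ over the others. The error statement is deterministic ($=1-\frac1N$), so this control must hold simultaneously for all $i$ and all $t'$, with margin. My first attempt would bound all initialization-induced terms by $\tilde O(\sigma_0\sqrt{d})$ using a union bound over $md$ entries (which accounts for the $\log(4md/\delta)$ in Condition~\ref{condition:condition}). I would then show that the learned $\mathcal{I}(\mathbf{b}_i)$ margin on $\mathbf{a}_i$ dominates these terms after the early phase. Before that phase, the symmetric structure of the $\mathbf{r}_2$ updates guarantees that $\mathcal{I}(\mathbf{c}_i)$ gains no advantage over $\mathcal{I}(\mathbf{c}_j)$, $j\neq i$.
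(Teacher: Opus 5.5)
\paragraph{Where the proposal breaks.} Your convergence argument and the $o(1)$ cosine bound follow the paper's route: the Stage-1 coefficient lemma, then the $\|\mathbf{W}^{(t)}-\mathbf{W}^*\|_F^2$ potential argument. The gap is in the test-error claim. Steps (i)--(iii) form a per-realization margin argument, and you conclude from them that $[\mathbf{a}_i\ \mathbf{r}_2]$ is always classified as $\mathcal{I}(\mathbf{b}_i)$, so every test point is wrong. That gives $\mathcal{L}^{0-1}_{\mathcal{R}}=1$, not $1-1/N$. As written, your route therefore proves a statement incompatible with the one claimed. Your patch has no basis in the construction. It posits either a coincidence $\mathcal{I}(\mathbf{b}_i)=\mathcal{I}(\mathbf{c}_i)$ for one index, or an indexing convention that makes one item trivially correct. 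The paper treats $\{\mathcal{I}(\mathbf{b}_i)\}$ and $\{\mathcal{I}(\mathbf{c}_i)\}$ as distinct classes throughout; for instance, the case split defining $\mathbf{W}^*$ would be ill-posed otherwise. No deterministic margin argument of this kind can output the constant $1-1/N$.

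\paragraph{The missing idea: symmetry.} $\mathcal{L}^{0-1}$ is a probability over the random embeddings and initialization. The paper gets the value from symmetry, not margins: the $N$ second-hop answers are interchangeable at initialization and along training, so $\mathcal{I}(\mathbf{c}_i)$ wins with probability $1/N$. To make this precise, take a permutation $\pi$ of $[N]$. Replace each $\mathbf{b}_j$ by $\mathbf{b}_{\pi(j)}$ and permute the rows $\mathbf{w}^{(0)}_{\mathcal{I}(\mathbf{c}_j),l}$ accordingly. This has the following effects.
\begin{itemize}
\item It preserves the joint law of the embeddings and the initialization.
\item It maps $\mathcal{H}_2$ to itself after relabeling $\mathcal{I}(\mathbf{c}_j)\mapsto\mathcal{I}(\mathbf{c}_{\pi(j)})$.
\item It fixes $\mathcal{H}_1$ and the test prompt $[\mathbf{a}_i\ \mathbf{r}_2]$.
\end{itemize}
Since gradient descent is equivariant under class relabeling, the vector $(f_{\mathcal{I}(\mathbf{c}_1)},\dots,f_{\mathcal{I}(\mathbf{c}_N)})$ on the test prompt is exchangeable at every iterate. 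This exchangeability is exactly what $\mathcal{IB}$ destroys, since it ties the input $\mathbf{b}_j$ to the output class $\mathcal{I}(\mathbf{b}_j)$.

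\paragraph{What you would still need to show.} Exchangeability alone only yields $\mathbb{P}[\text{correct}]=\mathbb{P}[\arg\max_k f_k\in\{\mathcal{I}(\mathbf{c}_j)\}_{j}]/N$. The exact value therefore requires the winning class to lie among the second-hop labels almost surely, which is the opposite of the conclusion you draw from (i)--(iii). You would have to drop (i) and show instead that the common $\mathbf{r}_2$-driven boost to the $\mathcal{I}(\mathbf{c}_j)$ logits beats the $\mathbf{a}_i$-driven $\mathcal{I}(\mathbf{b}_i)$ logit. The paper's one-line symmetry argument leaves this point implicit. Your remark that the same argument covers $t'$ near $0$ also fails. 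At $t'=0$ there is no learned $\mathcal{I}(\mathbf{b}_i)$ bias, and the logits are exchangeable over all $d$ classes, so even the symmetry route gives success probability $1/d$ rather than $1/N$ there. The early iterates need separate treatment.
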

\begin{proposition}[No feature alignment without identity bridges]\label{prop: 2h_woid}
    Under the same conditions as Theorem \ref{thm: 2h_woid}, for all $i\in[N]$, we have
    $$\frac{\langle\mathbf{V}^{(T_1)}\mathbf{a}_i, \mathbf{V}^{(T_1)}\mathbf{b}_i\rangle}{\|\mathbf{V}^{(T_1)}\mathbf{a}_i\|_2\|\mathbf{V}^{(T_1)}\mathbf{b}_i\|_2}= o(1).$$
\end{proposition}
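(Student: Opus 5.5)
Throughout, the plan tracks how $\mathbf{V}$ acts on the orthonormal token embeddings during Stage 1. The training set is $\mathcal{H}_1\cup\mathcal{H}_2$, and without identity bridges the tokens $\mathbf{a}_i$ and $\mathbf{b}_i$ never occur in prompts with the same label. Since all embeddings are orthonormal, we decompose $\mathbf{V}^{(t)}\mathbf{a}_i=\mathbf{V}^{(0)}\mathbf{a}_i+\Delta^{(t)}\mathbf{a}_i$ and likewise for $\mathbf{b}_i$. The gradient with respect to $\mathbf{V}$ from a sample $([\mathbf{x}\;\mathbf{r}],y)$ is a rank-one update $\mathbf{g}\,(\mathbf{X}\boldsymbol{\alpha})^\top$, where $\mathbf{g}=\frac{\lambda}{m}\sum_k(\mathrm{softmax}(\mathbf{f})_k-\mathbb{1}[k=y])\sum_l\mathbf{w}_{k,l}$.

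Applied to $\mathbf{a}_i$, this update is nonzero only through the attention weight on $\mathbf{a}_i$. Hence only samples containing $\mathbf{a}_i$, namely the single $\mathcal{H}_1$ sample with label $\mathcal{I}(\mathbf{b}_i)$, move $\mathbf{V}\mathbf{a}_i$. Similarly, $\mathbf{V}\mathbf{b}_i$ is moved only by the $\mathcal{H}_2$ sample with label $\mathcal{I}(\mathbf{c}_i)$. The relation tokens $\mathbf{r}_1,\mathbf{r}_2$ are distinct and are not shared between these two inputs, so no cross term couples them.

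\textbf{Step 1 (direction of the updates).} Because $\mathbf{W}$ is frozen at initialization during Stage 1, the direction $\mathbf{g}$ is essentially $\frac{\lambda}{m}\bigl(\bar{\mathbf{w}}-\sum_l\mathbf{w}_{y,l}\bigr)$, where $\bar{\mathbf{w}}$ is the softmax-weighted average over classes. The reuse of the earlier Stage-1 analysis of Theorem~\ref{thm:joint} (the quantities defining $T_1$ and the logit growth) is limited to this: it supplies the softmax probabilities and attention weights, which the plan only needs to stay bounded and near their initial values for $t\le T_1$. Consequently $\Delta^{(T_1)}\mathbf{a}_i\approx c_{a}\,\mathbf{u}_{\mathcal{I}(\mathbf{b}_i)}$ and $\Delta^{(T_1)}\mathbf{b}_i\approx c_b\,\mathbf{u}_{\mathcal{I}(\mathbf{c}_i)}$, with $\mathbf{u}_y:=\sum_l\mathbf{w}^{(0)}_{y,l}-\bar{\mathbf{w}}$.

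\textbf{Step 2 (near-orthogonality).} The labels $\mathcal{I}(\mathbf{b}_i)$ and $\mathcal{I}(\mathbf{c}_i)$ are different classes. The vectors $\sum_l\mathbf{w}^{(0)}_{k,l}$ are independent Gaussian vectors in $\mathbb{R}^d$, so by standard concentration $|\langle\mathbf{u}_y,\mathbf{u}_{y'}\rangle|/(\|\mathbf{u}_y\|\|\mathbf{u}_{y'}\|)=\tilde{\mathcal{O}}(1/\sqrt d)$ for $y\ne y'$. The shared mean component $\bar{\mathbf{w}}$ also needs control: it is an average over $\Theta(N)$ classes, so its squared norm relative to $\|\mathbf{u}_y\|^2$ is $\mathcal{O}(1/N)$, which contributes only $o(1)$ cosine. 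The initial parts $\mathbf{V}^{(0)}\mathbf{a}_i$ and $\mathbf{V}^{(0)}\mathbf{b}_i$ are independent Gaussian vectors, so their mutual inner products are $\tilde{\mathcal{O}}(\sqrt d\,\sigma_0^2)$ against norms of order $d\sigma_0^2$. The cross terms between $\mathbf{V}^{(0)}$ and $\mathbf{u}$ also concentrate at the $1/\sqrt d$ scale, using the independence of $\mathbf{V}^{(0)}$ and $\mathbf{W}^{(0)}$. A union bound over $i\in[N]$ together with Condition~\ref{condition:condition} ($d\ge C\log(4md/\delta)$) gives all of this simultaneously with probability $1-\delta$.

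\textbf{Step 3 (assemble).} The numerator is $o(\cdot)$ of the product of norms. A lower bound on each norm follows because $\|\mathbf{V}^{(T_1)}\mathbf{a}_i\|\ge\|\Delta\mathbf{a}_i\|-\|\mathbf{V}^{(0)}\mathbf{a}_i\|$, or from the initialization term, whichever dominates. The step I expect to be the main obstacle is Step 1: $\mathbf{g}$ evolves with the logits, and the attention $\boldsymbol{\alpha}$ changes through $\mathbf{Z}$. I will handle this by writing $\Delta^{(T_1)}\mathbf{a}_i=\sum_t\eta\,\alpha^{(t)}_{a_i}\mathbf{g}^{(t)}$, noting that $\mathbf{g}^{(t)}$ always lies in the span of the frozen $\{\sum_l\mathbf{w}_{k,l}\}_k$. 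The coefficient on the label direction stays positive, and the remaining coefficients are controlled by the softmax probabilities. This reduces Step 2 to concentration over that fixed random span.
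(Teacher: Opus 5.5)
Your overall route is the paper's. Write $\mathbf{V}^{(T_1)}\mathbf{a}_i$ and $\mathbf{V}^{(T_1)}\mathbf{b}_i$ as initialization plus combinations of the frozen $\mathbf{w}^{(0)}_{k,l}$. Each is moved only by its own training sample, so the dominant positive coefficient sits on $\mathcal{I}(\mathbf{b}_i)$ for one and on $\mathcal{I}(\mathbf{c}_i)$ for the other, and near-orthogonality finishes.

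The step that fails as written is your control of the shared softmax-averaged part $\bar{\mathbf{w}}$. You bound $\|\bar{\mathbf{w}}\|_2^2/\|\mathbf{u}_y\|_2^2$ by $\mathcal{O}(1/N)$ and call this $o(1)$. But the asymptotics are in $d$, and $N$ may stay bounded. Worse, if the softmax mass does concentrate on the $N$ training labels, that ratio really is $\Theta(1/N)$, so no bound on $\|\bar{\mathbf{w}}\|_2$ alone can close the argument. What enters the numerator is an overlap. Set $\mathbf{s}_k:=\sum_{l}\mathbf{w}^{(0)}_{k,l}$, $\Delta\mathbf{a}_i=\sum_k\gamma_k\mathbf{s}_k$ and $\Delta\mathbf{b}_i=\sum_k\rho_k\mathbf{s}_k$. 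The inner product then contains $\sum_k\gamma_k\rho_k\|\mathbf{s}_k\|_2^2$. For $k\notin\{\mathcal{I}(\mathbf{b}_i),\mathcal{I}(\mathbf{c}_i)\}$ one has $\gamma_k\rho_k\propto\sum_{t,t'}\alpha^{(a,t)}\alpha^{(b,t')}p^{(a,t)}_k p^{(b,t')}_k\ge 0$, where $p^{(a,t)}$ and $p^{(b,t)}$ are the softmax vectors of $[\mathbf{a}_i\;\mathbf{r}_1]$ and $[\mathbf{b}_i\;\mathbf{r}_2]$. These terms accumulate without cancellation, so what you must show is that the two softmax vectors barely overlap.

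That holds independently of $N$. The logits of $[\mathbf{a}_i\;\mathbf{r}_1]$ are raised only on $\{\mathcal{I}(\mathbf{b}_j)\}_j$ (through $\mathbf{V}\mathbf{a}_i$ and $\mathbf{V}\mathbf{r}_1$), and those of $[\mathbf{b}_i\;\mathbf{r}_2]$ only on $\{\mathcal{I}(\mathbf{c}_j)\}_j$. These label sets are disjoint, so $p^{(b,t')}_k=\mathcal{O}(1/d)$ on the first set, $p^{(a,t)}_k=\mathcal{O}(1/d)$ on the second, and both are $\mathcal{O}(1/d)$ elsewhere. This bounds the two label terms by an $\mathcal{O}(1/d)$ fraction and gives $\sum_k p^{(a,t)}_k p^{(b,t')}_k=\mathcal{O}(1/d)$.

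For the off-diagonal terms $\sum_{k\neq k'}\gamma_k\rho_{k'}\langle\mathbf{s}_k,\mathbf{s}_{k'}\rangle$, note that each step adds a positive multiple of $\mathbf{e}_y-\mathbf{p}^{(t)}$. Hence $\sum_k|\gamma_k|=2\gamma_{\mathcal{I}(\mathbf{b}_i)}$ and $\sum_k|\rho_k|=2\rho_{\mathcal{I}(\mathbf{c}_i)}$. Combine this with $|\langle\mathbf{s}_k,\mathbf{s}_{k'}\rangle|=\tilde{\mathcal{O}}(m\sigma_0^2\sqrt{d})$ against $\|\mathbf{s}_k\|_2^2\approx md\sigma_0^2$; the off-diagonal terms then cost only $\tilde{\mathcal{O}}(d^{-1/2})$ in the cosine.

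This is also where the paper's proof is thin. It reads the inner-product bound off Lemma~\ref{lemma: h1}, whose off-label coefficients are bounded only by $\mathcal{O}(\log d/(\lambda d\sigma_0))$, a $d^{-1/2}$ fraction of the label coefficient. $\Theta(d)$ same-sign terms of that size would permit an overlap of order $m\log^2 d/\lambda^2$, the same order as the squared norms. So singling out $\bar{\mathbf{w}}$ was the right instinct; it just needs the disjoint-support overlap bound rather than a $1/N$ count.
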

Without identity bridges, $\mathbf{a}_i$ and $\mathbf{b}_i$ remain nearly orthogonal in the learned representation. The model successfully learns $A \to B$ and $B \to C$ as independent mappings, but cannot compose them because the output representation from $A \to B$ is not aligned with the input representation needed for $B \to C$. This proves that, in our stylized setting, identity bridges are \emph{necessary} for two-hop reasoning to emerge.

\section{Feature Resemblance in Deep Linear Networks}\label{sec: multi_layer}
Thus far, our analysis has focused on a minimal transformer-style block. 
We next use deep linear networks as a stylized multi-layer model to study how feature resemblance evolves across depth.

\subsection{Setup}
\paragraph{Deep linear neural networks.}
We study $L$-layer linear neural networks, where each layer is parameterized by $\mathbf{W}_i\in\mathbb{R}^{d\times d}$ for $i\in[L]$. Given input $\mathbf{x}\in\mathbb{R}^d$, the network output is:
\begin{align}
    \mathbf{f}(\mathbf{W}_L,\cdots,\mathbf{W}_1,\mathbf{x}) = \mathbf{W}_L\cdots\mathbf{W}_1\mathbf{x}.
\end{align}
\paragraph{Training data.}
The training dataset $\mathcal{S}_o = \{(\mathbf{x}_i,y_i)\}_{i=1}^n$ consists of orthogonal inputs: $\langle\mathbf{x}_i,\mathbf{x}_j\rangle = 0$ for all $i\neq j$.
Assume that \(\mathbf{x}_1,\ldots,\mathbf{x}_n\) are drawn as a uniformly random orthonormal system in \(\mathbb R^d\), independently of the labels.
At most $\mathcal{O}(1)$ samples share the same label.

\paragraph{Layer-wise training.}
Following our earlier approach, we train one layer at a time. During iterations $t\in(\sum_{k=1}^i T_k,\sum_{k=1}^{i+1} T_k]$, only the $i$-th layer is updated via gradient descent:
\begin{equation}
\begin{aligned}\nonumber
    \mathbf{W}^{(t+1)}_i =& \mathbf{W}^{(t)}_i \\
    &- \frac{\eta}{n}\sum_{j\in[n]}\nabla_{\mathbf{W}^{(t)}_i} \mathcal{L}_{\mathcal{S}_o}(\mathbf{W}^{(t)}_L,\cdots,\mathbf{W}^{(t)}_1,\mathbf{x}_j,y_j).
\end{aligned}
\end{equation}
\subsection{Main result}
Our main result shows that multi-layer networks exhibit \emph{progressive} feature resemblance: representations of inputs with the same label become increasingly aligned as they propagate through deeper layers.

\begin{theorem}\label{thm: multi_layer}
Suppose the neural network is initialized with $\mathbf{W}_1^{(0)} = \cdots = \mathbf{W}_L^{(0)} = \mathbf{I}$ and $T_i=\Theta(\frac{n}{(\eta L)})$ for $i\in[1,L-1]$.
    Under the condition that $d^{0.9} > C'L^2\log(2nd/\delta)$ with a large constant $C'$, with probability at least $1-\delta$, for $k\in[1,L-2]$ and for any two inputs $(\mathbf{x}_i,y_i),(\mathbf{x}_j,y_j)\in \mathcal{S}_o$ satisfying $y_i=y_j$: 
    \begin{equation}
    \begin{aligned}
        &\frac{\langle \prod_{l=1}^{k+1}\mathbf{W}^{(\sum_{l=1}^{k+1} T_l)}_{l}\mathbf{x}_i,\prod_{l=1}^{k+1}\mathbf{W}^{(\sum_{l=1}^{k+1} T_l)}_{l}\mathbf{x}_j \rangle}{\| \prod_{l=1}^{k+1}\mathbf{W}^{(\sum_{l=1}^{k+1} T_l)}_{l}\mathbf{x}_i\|_2\| \prod_{l=1}^{k+1}\mathbf{W}^{(\sum_{l=1}^{k+1} T_l)}_{l}\mathbf{x}_j\|_2} \ge \\&\frac{\langle \prod_{l=1}^{k}\mathbf{W}^{(\sum_{l=1}^{k} T_l)}_{l}\mathbf{x}_i,\prod_{l=1}^{k}\mathbf{W}^{(\sum_{l=1}^{k} T_l)}_{l}\mathbf{x}_j \rangle}{\|\! \prod_{l=1}^{k}\!\mathbf{W}^{(\sum_{l=1}^{k} T_l)}_{l}\mathbf{x}_i\|_2\|\! \prod_{l=1}^{k}\!\mathbf{W}^{(\sum_{l=1}^{k} T_l)}_{l}\mathbf{x}_j\|_2} \!-\!\tilde{\mathcal{O}}(d^{-0.1}), 
    \end{aligned}
    \end{equation}
    and 
    \begin{align}
        \frac{\langle \prod_{l=1}^{L-1}\mathbf{W}^{(\sum_{l=1}^{L-1} T_l)}_{l}\mathbf{x}_i,\prod_{l=1}^{L-1}\mathbf{W}^{(\sum_{l=1}^{L-1} T_l)}_{l}\mathbf{x}_j \rangle}{\| \prod_{l=1}^{L-1}\mathbf{W}^{(\sum_{l=1}^{L-1} T_l)}_{l}\mathbf{x}_i\|_2\| \prod_{l=1}^{L-1}\mathbf{W}^{(\sum_{l=1}^{L-1} T_l)}_{l}\mathbf{x}_j\|_2} =\Omega(1).
    \end{align}
\end{theorem}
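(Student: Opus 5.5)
We will analyze the layer-wise dynamics explicitly, exploiting that with identity initialization and orthonormal inputs, each trained layer acts on a low-dimensional, explicitly computable subspace. Write $\mathbf{h}^{(k)}_i := \prod_{l=1}^{k}\mathbf{W}_l\mathbf{x}_i$ for the layer-$k$ representation after layer $k$ has been trained (earlier layers frozen, later layers still equal to $\mathbf{I}$). When layer $k+1$ is trained, the output is $\mathbf{f}=\mathbf{W}_L\cdots\mathbf{W}_{k+2}\mathbf{W}_{k+1}\mathbf{h}^{(k)}_i = \mathbf{W}_{k+1}\mathbf{h}^{(k)}_i$, since the later layers are still the identity. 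The gradient is therefore a sum of rank-one terms:
\[
\nabla_{\mathbf{W}_{k+1}}\mathcal{L} = \frac{1}{n}\sum_j \mathbf{g}_j (\mathbf{h}^{(k)}_j)^\top,
\]
where $\mathbf{g}_j=\mathrm{softmax}(\mathbf{f}_j)-\mathbf{e}_{y_j}$. The key structural claim, proved by induction on $k$, is that
\[
\mathbf{W}_{k+1}=\mathbf{I}+\sum_j \mathbf{u}^{(k+1)}_j(\mathbf{h}^{(k)}_j)^\top,
\]
with $\mathbf{u}_j^{(k+1)}$ lying in the span of the label directions $\{\mathbf{e}_y\}$. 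Consequently,
\[
\mathbf{h}^{(k+1)}_i=\mathbf{h}^{(k)}_i+\sum_j\langle \mathbf{h}^{(k)}_j,\mathbf{h}^{(k)}_i\rangle\, \mathbf{u}_j^{(k+1)}.
\]

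For the first step, we use the randomness of the orthonormal system relative to the fixed label basis. With probability $1-\delta$, every projection $\langle \mathbf{x}_i,\mathbf{e}_y\rangle$ is $\tilde{\mathcal{O}}(d^{-1/2})$. This requires a union bound over $n d$ pairs, which is where $d^{0.9}>C'L^2\log(2nd/\delta)$ enters. Hence at layer $1$ the logits are nearly uniform and $\mathbf{g}_j\approx \frac{1}{d}\mathbf{1}-\mathbf{e}_{y_j}$. Over $T_1=\Theta(n/(\eta L))$ steps, $\mathbf{u}_j^{(1)}\approx -c\,\mathbf{e}_{y_j}$ (up to the common $\mathbf{1}$ direction and lower-order terms), with $c=\Theta(1/L)$. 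Since the $\mathbf{x}_j$ are orthonormal, we get $\mathbf{h}^{(1)}_i\approx \mathbf{x}_i+c\,\mathbf{e}_{y_i}$. Two same-label inputs then share the component $c\,\mathbf{e}_{y}$, so their cosine similarity is $\approx c^2/(1+c^2)$, up to $\tilde{\mathcal{O}}(d^{-1/2})$ cross terms.

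Inductively, we maintain the invariant
\[
\mathbf{h}^{(k)}_i=\alpha_k\mathbf{x}_i+\beta_k\mathbf{e}_{y_i}+\mathbf{r}^{(k)}_i,
\]
with $\|\mathbf{r}^{(k)}_i\|=\tilde{\mathcal{O}}(k\,d^{-0.1})$ and $\beta_k/\alpha_k$ nondecreasing in $k$. Substituting the invariant into the update shows that layer $k+1$ adds to $\mathbf{h}_i$ a vector $\sum_j\langle\mathbf{h}_j,\mathbf{h}_i\rangle\mathbf{u}_j$. The Gram matrix has a dominant block $\alpha_k^2\delta_{ij}+\beta_k^2\mathbb{1}[y_i=y_j]$, and each $\mathbf{u}_j$ again points toward $\mathbf{e}_{y_j}$, since logits are still $\mathcal{O}(1)$ after $\Theta(n/(\eta L))$ steps per layer. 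The added vector therefore lies (up to error) along $\mathbf{e}_{y_i}$ with a nonnegative coefficient. Because the shared-direction coefficient increases while the private $\mathbf{x}_i$ part is untouched, the cosine $\beta^2/(\alpha^2+\beta^2)$ is monotone. The only loss in each step comes from the residuals, giving the $-\tilde{\mathcal{O}}(d^{-0.1})$ slack. Finally, after $L-1$ layers, summing $L-1$ increments of order $1/L$ gives $\beta_{L-1}=\Theta(1)$ with $\alpha_{L-1}=1$, so the cosine is $\Omega(1)$. The condition $d^{0.9}\gtrsim L^2$ ensures that the accumulated residuals $L\cdot\tilde{\mathcal{O}}(d^{-1/2})$ remain $o(1)$.

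The main obstacle will be controlling the softmax nonlinearity across the $T_k$ iterations within each layer. The gradient $\mathbf{g}_j$ changes as $\mathbf{W}_{k+1}$ is updated, so we must show that the trajectory stays in a regime where $\mathbf{g}_j$ stays close to $-\mathbf{e}_{y_j}+\mathcal{O}(1)\cdot\mathbf{1}/d$. Equivalently, we must show that the probability on the correct class remains bounded away from $1$, so that increments stay of order $\eta/n$ and aligned with $\mathbf{e}_{y_j}$. We also need to handle labels shared by $\mathcal{O}(1)$ samples, which cause mild coupling in $\mathbf{g}$. We plan to handle this with a continuity/bootstrapping argument: assume the logits are bounded by a constant up to time $t$, deduce the closed-form growth of $\mathbf{u}_j$, and verify that the bound persists at $t+1$ given $T_k=\Theta(n/(\eta L))$.
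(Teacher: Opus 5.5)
Your plan follows the paper's argument. Because the untrained layers are still $\mathbf{I}$, training layer $k+1$ moves $\mathbf{h}^{(k)}_i$ by $(\eta/n)\sum_j(\mathbf{e}_{y_j}-\mathrm{softmax}(\mathbf{f}_j))\langle\mathbf{h}^{(k)}_j,\mathbf{h}^{(k)}_i\rangle$ per step (the paper's term $D$). The Gram matrix is dominated by $\delta_{ij}+\beta_k^2\mathbb{I}[y_i=y_j]$ (the paper's lemma on $D$), and induction gives $\mathbf{h}^{(k)}_i=\mathbf{x}_i+\Theta(k/L)\mathbf{e}_{y_i}+\mathbf{r}^{(k)}_i$. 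There is a small sign slip: with your convention $\mathbf{W}=\mathbf{I}+\sum_j\mathbf{u}_j\mathbf{h}_j^\top$, gradient descent gives $\mathbf{u}^{(1)}_j\approx +c\,\mathbf{e}_{y_j}$, though your $\mathbf{h}^{(1)}_i$ is correct.

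The step that fails as stated is the residual bound $\|\mathbf{r}^{(k)}_i\|_2=\tilde{\mathcal{O}}(kd^{-0.1})$, which rests on the belief that each layer adds only $\tilde{\mathcal{O}}(d^{-1/2})$ error. For $y_j\neq y_i$ the Gram entry is $\approx\beta_k(x_{j,y_i}+x_{i,y_j})$, which is individually $\tilde{\mathcal{O}}(\beta_k/\sqrt d)$. However, layer $k+1$ writes each such entry onto its own coordinate $\mathbf{e}_{y_j}$, producing the off-label vector $(\eta T_{k+1}/n)\,\beta_k\sum_{j:y_j\neq y_i}(x_{j,y_i}+x_{i,y_j})\mathbf{e}_{y_j}$. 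With $\Pi=\sum_j\mathbf{x}_j\mathbf{x}_j^\top$, its squared norm is of order $(\beta_k/L)^2(\|\Pi\mathbf{e}_{y_i}\|_2^2+\sum_j x_{i,y_j}^2)\asymp(\beta_k/L)^2\,n/d$. Its direction is the same at every layer, so these pieces add coherently. The theorem allows any $n\le d$, so for $n\asymp d$ and bounded $L$ the residual is of constant order. The condition $d^{0.9}>C'L^2\log(2nd/\delta)$ does not make it small, so your claim that residuals cost only $\tilde{\mathcal{O}}(d^{-0.1})$ in the cosine is unsupported.

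The paper has the same hole. It bounds these off-label coordinates only entrywise, $|r_{l,j,i}|=\mathcal{O}(l\log(2nd/\delta)/\sqrt d)$, and then asserts the cosine comparison, which an entrywise bound on up to $n$ coordinates does not imply. To close your argument you have two options:
\begin{itemize}
\item Add a hypothesis such as $n\le d^{0.8}$, under which your invariant does hold.
\item Stop requiring the residual to be small and account for it directly. For $y_i=y_j$ the two residuals share the component $\sum_k x_{k,y_i}\mathbf{e}_{y_k}$, so up to $\tilde{\mathcal{O}}(d^{-1/2})$ they add a nonnegative amount $\approx\rho_k^2\|\Pi\mathbf{e}_{y_i}\|_2^2$ to the cosine's numerator, where $\rho_k$ is the accumulated residual coefficient. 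Their effect on the norms changes by $\mathcal{O}(\rho_k\Delta\rho_k)$ per layer, a factor $\mathcal{O}(\rho_k)$ smaller than the first-order gain $\Theta(\beta_k\Delta\beta_k)$ from the growing $\mathbf{e}_{y_i}$ coefficient. This is negligible once the constant hidden in $T_i=\Theta(n/(\eta L))$ is small, which the paper implicitly assumes through its bound $\mathbf{x}_i^\top(\mathbf{A}_l)^\top\mathbf{A}_l\mathbf{x}_i\le 2$.
\end{itemize}
By contrast, the softmax control you single out as the main obstacle is routine. All logits stay $\mathcal{O}(1)$, so the correct-class probability is $e^{\mathcal{O}(1)}/d$, nowhere near $1$. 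The paper simply asserts $1-\mathrm{logit}_{y_i}\ge 1/2$ and $\mathrm{logit}_j\le d^{-1/3}$.
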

Starting from orthogonal inputs, alignment is approximately non-decreasing with depth, suggesting that layer-wise feature resemblance emerges in a multi-layer linear network.
Appendix~\ref{app: verf} further reports supporting evidence from GPT-2 trained on orthogonal data.

\begin{table*}[t]
	\caption{Training loss, feature similarity, and success rate: mean $\pm$ standard deviation of analogical reasoning and two-hop reasoning tasks over one-layer and GPT-2 transformers. GPT-2 is trained end-to-end with AdamW. We set $\kappa = 3$ and show the impact of $\kappa$ in Appendix \ref{app: kappa}.}
	\label{table: 3-token_1}
	\centering
    \setlength{\tabcolsep}{4pt}

	\begin{tabular}{lcccc}
		\toprule
		\multicolumn{1}{c}{\textbf{Training tasks for Analogical Reasoning}} &\multicolumn{1}{c}{\textbf{Architecture}} & \multicolumn{1}{c}{\textbf{Training loss}} & \multicolumn{1}{c}{\textbf{Feature similarity}} & \multicolumn{1}{c}{\textbf{Success rate on $\mathcal{A}$}  } \\
		\cmidrule(r){1-5} \cmidrule(lr){2-2} \cmidrule(lr){3-3} \cmidrule(l){4-4}
		Joint training on $\biguplus_{k=1}^{\kappa}(\mathcal{S}_1 \cup \mathcal{S}_2) \biguplus\mathcal{S}_3$ & GPT-2  &  0.0077$\pm$1e-4  &  0.995$\pm$6e-4  & 100\% $\pm$ 0\%\\
        Joint training on $\biguplus_{k=1}^{\kappa}(\mathcal{S}_1 \cup \mathcal{S}_2) \biguplus\mathcal{S}_3$        & One-layer  &  0.001$\pm$5e-5  &  0.9202$\pm$7e-4  & 100\% $\pm$ 0\%\\
		Late training on the attribution premise $\mathcal{S}_3$ & GPT-2 & 0.0845 $\pm$ 3e-4 & 0.7983 $\pm$ 0.027  & 100\% $\pm$ 0\%\\
		Late training on the attribution premise $\mathcal{S}_3$ & One-layer & 0.0024 $\pm$ 0 & 0.9689 $\pm$ 4e-4 & 100\% $\pm$ 0\% \\
	   Late training on the similarity premise $\mathcal{S}_2$ & GPT-2 & 0.0085 $\pm$ 1e-4 & 0.4326 $\pm$ 0.012 & 0.7\% $\pm$ 0.471\%\\
	   Late training on the similarity premise $\mathcal{S}_2$ & One-layer & 0.0061 $\pm$ 5e-5 & 0.0014 $\pm$ 3.7e-3 & 0\%$\pm$0\% \\
       \bottomrule
	\end{tabular}
    \hfill
    \begin{tabular}{lcccc}
		\toprule
		\multicolumn{1}{c}{\textbf{Training tasks for Two-Hop Reasoning}} &\multicolumn{1}{c}{\textbf{Architecture}} & \multicolumn{1}{c}{\textbf{Training loss}} & \multicolumn{1}{c}{\textbf{Feature similarity}} & \multicolumn{1}{c}{\textbf{Success rate on $\mathcal{R}$}  } \\
		\cmidrule(r){1-5} \cmidrule(lr){2-2} \cmidrule(lr){3-3} \cmidrule(l){4-4}
       Training with the identity bridge & GPT-2 & 0.0953 $\pm$ 4e-3 & 0.9642 $\pm$ 7e-4 & 99.7\% $\pm$ 0.5\%\\
       Training without the identity bridge & GPT-2 & 0.1024 $\pm$ 4e-4 & 0.016 $\pm$ 6e-3  & 0\% $\pm$ 0\%\\
       Training with the identity bridge & One-layer & 0.075 $\pm$ 4e-4 & 0.9132 $\pm$ 5e-4 & 100\% $\pm$ 0\% \\
       Training without the identity bridge & One-layer & 0.0676 $\pm$ 2e-4&0.006 $\pm$ 3e-3& 1.7\% $\pm$ 0.9\%\\
		\bottomrule
	\end{tabular}
\end{table*}
\section{Experiments}\label{sec:exp}
We conduct experiments with two complementary goals. 
First, we verify that the theoretical predictions hold within the stylized setting, using minimal transformer-style blocks and controlled synthetic datasets. 
These experiments confirm that feature alignment and property transfer emerge as predicted by our analysis. 

Second, we test whether similar qualitative patterns appear in larger pretrained language models and natural-language datasets, including models up to 8B parameters. 
While these models are not minimal abstractions, the experiments illustrate that the relationship between representational geometry and analogical reasoning observed in the theory can also be observed in practical settings.

We conduct two sets of experiments:

\textbf{Controlled setting:} One-layer transformers and GPT-2 trained on synthetic 3-token datasets to directly test our theoretical predictions.

\textbf{Natural-language setting:} Pretrained models (Llama-3-1B, Qwen-2.5-1.5B, Qwen-3-8B) evaluated on natural language benchmarks to test whether feature resemblance extends beyond the minimal theoretical setup.

We quantify feature similarity using cosine similarity between entity representations:
\begin{align}
    \text{Feature Similarity}(\mathbf{a}_i, \mathbf{a}_i') = \frac{\langle\mathbf{g}(\mathbf{a}_i), \mathbf{g}(\mathbf{a}_i')\rangle}{\|\mathbf{g}(\mathbf{a}_i)\|_2\|\mathbf{g}(\mathbf{a}_i')\|_2},
\end{align}
where $\mathbf{g}(\mathbf{a}_i) = \mathbf{V}\mathbf{a}_i$ for one-layer transformers (matching our theoretical results) and $\mathbf{g}(\mathbf{a}_i) = \mathbf{h}_L(\mathbf{a}_i)$ for multi-layer transformers (the final hidden representation before the linear prediction head at the last token).

\subsection{Controlled synthetic data experiments}

We train one-layer transformers and GPT-2 on synthetic datasets matching the structure from Section~\ref{sec: data}. For one-layer transformers, we precisely replicate the theoretical setting; training details are in Appendix~\ref{app: exp}.

Table~\ref{table: 3-token_1} shows that both architectures exhibit consistent trends across all training regimes (joint and sequential): feature alignment scores and task success rates closely follow the theoretical predictions, \emph{showing strong qualitative agreement under standard end-to-end training.}

\subsection{Controlled natural-language reasoning experiments}

We construct a factual knowledge dataset using Gemini 3 Pro (examples in Table \ref{tab:extended_data}, Appendix \ref{app: exp_nlp}). The dataset consists of three components:

\textbf{Similarity premises} express shared features between two entities using natural language templates, e.g., ``Chair is used for resting'' and ``Sofa is used for resting.''

\textbf{Attribution premises} explicitly define one entity's category, e.g., ``Sofa is classified as Furniture.''

\textbf{Reasoning task:} Given these premises, the model must predict the other entity's category (e.g., that Chair is also Furniture).

We fine-tune pre-trained Llama-3-1B and Qwen-2.5-1.5B models under three training regimes: joint training, late training on attribution, and late training on similarity. 
Table~\ref{table: real} reports both the feature similarity between the two entities and the success rate on the reasoning task.

Across both models, joint training and late training on attribution substantially outperform late training on similarity, consistent with our theoretical prediction. 
The feature-similarity measurements follow the same qualitative pattern: joint training and late-attribution training produce more aligned representations than late-similarity training. 
Notably, all learned feature similarities remain relatively high, exceeding 0.6 across all conditions. 
This stems from representation degeneration \cite{gao2019representation}, where features compress into a narrow cone during training. Despite this compression limiting geometric separation, performance gaps remain substantial, with late similarity training underperforming by 20\%-52\%.

\begin{table}[h]
    \caption{Feature similarity and success rate over Llama-3-1B and Qwen-2.5-1.5B.}
    \label{table: real}
    \centering
    \footnotesize
    \setlength{\tabcolsep}{4pt}
    \begin{tabular}{lccc}
        \toprule
        \textbf{Models} & \textbf{Tasks} & \textbf{Fea. Sim.} & \textbf{Success rate} \\
        \midrule
        Llama &Joint training & 0.8174 & 73\% \\
        &Late on attribution & 0.7990 & 86\% \\
        &Late on similarity & 0.6431 & 53\% \\
        %&Without training & 0.3379 & 15\% \\
        \midrule
        Qwen &Joint training & 0.8488 & 85\% \\
        &Late on attribution & 0.7971 & 72\% \\
        &Late on similarity & 0.7455 & 33\% \\
        %&Without training & 0.0175 & 13\% \\
        \bottomrule
    \end{tabular}
\end{table}

We also evaluate Qwen-3-8B under the same three regimes. Raw full-representation cosine similarities are less separated in this larger model, likely because global hidden states contain task-irrelevant shared components. We therefore apply a sparse top-$K$ activation-space intervention, retaining the top 20\% MLP activations at relational tokens. As shown in Table~\ref{tab:feature-sim-success}, this focused measure recovers the same ordering as the behavioral results: joint training and late-attribution training yield higher feature similarity and substantially higher success rates than late-similarity training.

\begin{table}[t]
\centering
\caption{Feature similarity and success rate over Qwen-3-8B.}
\begin{tabular}{lcc}
\toprule
\textbf{Tasks} & \textbf{Fea. Sim.} & \textbf{Success rate} \\
\midrule
Joint training & 0.3785 & 84\% \\
Late on Attribution & 0.3757 & 75\% \\
Late on Similarity & 0.2704 & 34\% \\
\bottomrule
\end{tabular}

\label{tab:feature-sim-success}
\end{table}

\section{Conclusion}
We studied feature resemblance as a mechanism for analogical reasoning in stylized transformer-style models. Our results show that joint training aligns analogous entities, sequential training succeeds only when similarity is learned before attribution, and two-hop reasoning can be supported by explicit identity bridges. Together, these results identify a geometric mechanism: shared properties align representations, enabling attributes to transfer through shared features. Experiments on synthetic and natural-language settings show qualitative agreement with the theory, suggesting that representational alignment helps explain analogical transfer beyond the stylized model.

\section*{Impact Statement}
This work advances the theoretical understanding of analogical reasoning in language models by identifying a geometric feature-alignment mechanism. The results may inform data curricula for robust reasoning, but similar mechanisms could also be used to steer model behavior through crafted training data. Such risks should be considered in high-stakes applications.

\bibliographystyle{icml2026}
\bibliography{ref}

\newpage
\appendix
\onecolumn

\section{Additional related work}\label{app:deep_rw}
\paragraph{Related work on deep linear networks.}
Deep linear networks have been widely used as analytically tractable models
for studying optimization and representation dynamics in deep learning.
\citet{saxe2014exact} derive exact solutions for the nonlinear
learning dynamics of deep linear networks under gradient flow, showing how
different input-output modes are learned over time. A related line of work
studies the optimization landscape of deep linear networks:
\citet{kawaguchi2016deep} shows that deep linear networks have no
poor local minima under squared loss, while \citet{laurent2018deep} extend such landscape guarantees to arbitrary
convex differentiable losses. Other works investigate how depth and
parameterization affect optimization and implicit bias. \citet{hardt2017identity} emphasize the role of identity
parameterization in deep linear residual networks, \citet{arora2018optimization} show that depth-induced
overparameterization can accelerate optimization in linear networks, and
\citet{arora2019implicit} study implicit regularization in
deep matrix factorization.

Our use of deep linear networks is complementary to this literature.
Rather than aiming to characterize the full optimization landscape or exact
gradient-flow dynamics, we use a layer-wise gradient-descent setting to
isolate a specific phenomenon relevant to our paper: representations of
inputs sharing the same label progressively acquire a common feature
component and therefore become increasingly aligned across layers. Thus,
the analysis in Section~7 should be viewed as a stylized multi-layer
illustration of feature resemblance, rather than as a general theory of
deep linear networks.
\section{Discussion of layer-wise training}\label{sec: ene}
In this section, we present the experimental results of \textbf{end-to-end} training of one-layer transformers with \textbf{ReLU} activations.
The results show that end-to-end training shares the same trend as layer-wise training. We set $\kappa = 1$.
\begin{table*}[h]
	\caption{Training loss, feature similarity, and success rate: mean $\pm$ standard deviation of analogical reasoning tasks over one-layer transformers. These results are under \textbf{end-to-end training}.}
	\label{table: 3-token2}
	\centering
    \setlength{\tabcolsep}{4pt}

	\begin{tabular}{lcccc}
		\toprule
		\multicolumn{1}{c}{\textbf{Training tasks for Analogical Reasoning}} &\multicolumn{1}{c}{\textbf{Architecture}} & \multicolumn{1}{c}{\textbf{Training loss}} & \multicolumn{1}{c}{\textbf{Feature similarity}} & \multicolumn{1}{c}{\textbf{Success rate on $\mathcal{A}$}  } \\
		\cmidrule(r){1-5} \cmidrule(lr){2-2} \cmidrule(lr){3-3} \cmidrule(l){4-4}
		Joint training on $\biguplus_{k=1}^{\kappa}(\mathcal{S}_1 \cup \mathcal{S}_2) \biguplus\mathcal{S}_3$         & One-layer  & 0.0133 $\pm$2e-4  &  0.9413$\pm$ 3e-4  & 100\% $\pm$ 0\%\\
		Late training on the attribution premise $\mathcal{S}_3$ & One-layer &  0.0198$\pm$1e-4  & 0.7116$\pm$2.2e-3  & 100\% $\pm$ 0\% \\
	   Late training on the similarity premise $\mathcal{S}_2$ & One-layer &  0.0138$\pm$1e-4 &  0.4367$\pm$ 1.2e-3 & 0.67\%$\pm$0.47\% \\
       \bottomrule
	\end{tabular}
\end{table*}

Table \ref{table: 3-token2} shows that one-layer transformers with end-to-end training and ReLU activations exhibit the same success rate and feature similarity trends as layer-wise training in Table \ref{table: 3-token_1}, confirming that Phase 1 features dominate Phase 2 learning. 
This dominance explains the reduced feature similarity in late training on attribution premises. Our theoretical analysis therefore simplifies Phase 2 by training only the MLP layer, preserving essential properties while enabling tractable analysis.

\section{Impact of \texorpdfstring{$\kappa$}{kappa}}\label{app: kappa}
In this section, we evaluate the impact of $\kappa$ in joint training on dataset $\biguplus_{k=1}^{\kappa}(\mathcal{S}_1 \cup \mathcal{S}_2) \biguplus\mathcal{S}_3$.
\begin{table*}[h]
	\caption{Training loss, feature similarity, and success rate: mean $\pm$ standard deviation of analogical reasoning tasks over one-layer and GPT-2 transformers. These results are under \textbf{end-to-end training}.}
	\label{table: kappa}
	\centering

	\begin{tabular}{lcccc}
		\toprule
		\multicolumn{1}{c}{\textbf{Values of $\kappa$}} &\multicolumn{1}{c}{\textbf{Architecture}} & \multicolumn{1}{c}{\textbf{Training loss}} & \multicolumn{1}{c}{\textbf{Feature similarity}} & \multicolumn{1}{c}{\textbf{Success rate on $\mathcal{A}$}  } \\
		\cmidrule(r){1-5} \cmidrule(lr){2-2} \cmidrule(lr){3-3} \cmidrule(l){4-4}
		$\kappa=1$        &GPT-2  & 0.0294$\pm$ 3e-4 & 0.9756 $\pm$ 5e-4 & 100\% $\pm$ 0\%\\
		$\kappa=1$       & One-layer  & 0.0079 $\pm$ 5e-5 &  0.6706$\pm$ 9e-4  & 100\% $\pm$ 0\%\\
        $\kappa = 3$        & GPT-2  &  0.0077$\pm$1e-4  &  0.995$\pm$6e-4  & 100\% $\pm$ 0\%\\
        $\kappa = 3$        & One-layer  &  0.001$\pm$5e-5  &  0.9202$\pm$7e-4  & 100\% $\pm$ 0\%\\
        $\kappa = 5$        & GPT-2 &  0.0039$\pm $5e-4  & 0.9979$\pm$3e-4   & 100\% $\pm$ 0\%\\
        $\kappa = 5$        & One-layer &  5e-4$\pm$1e-4  &  0.9405$\pm$2e-4  & 100\% $\pm$ 0\%\\
       \bottomrule
	\end{tabular}
\end{table*}

Table \ref{table: kappa} shows that the feature similarity increases with $\kappa$. 
This is consistent with our analysis in Proposition \ref{prop: joint_fea} that when $\kappa$ is large, the cosine similarity tends to 1.

\section{Verification of layer-wise feature resemblance}\label{app: verf}
We verify layer-wise feature resemblance in two settings:
\begin{itemize}
\item Linear neural networks under the setting defined in Section \ref{sec: multi_layer}.
\item GPT-2 models trained end-to-end on orthogonal data using AdamW.
\end{itemize}
For each setting, we compute the average cosine similarity of penultimate layer representations at the final token position for inputs sharing the same label. Figure 1 shows that both architectures exhibit layer-wise feature resemblance, confirming that this phenomenon emerges in modern multi-layer networks.

\begin{figure}[t]
    \centering
    \label{fig:multi_layer}
    \includegraphics[width=0.4\linewidth]{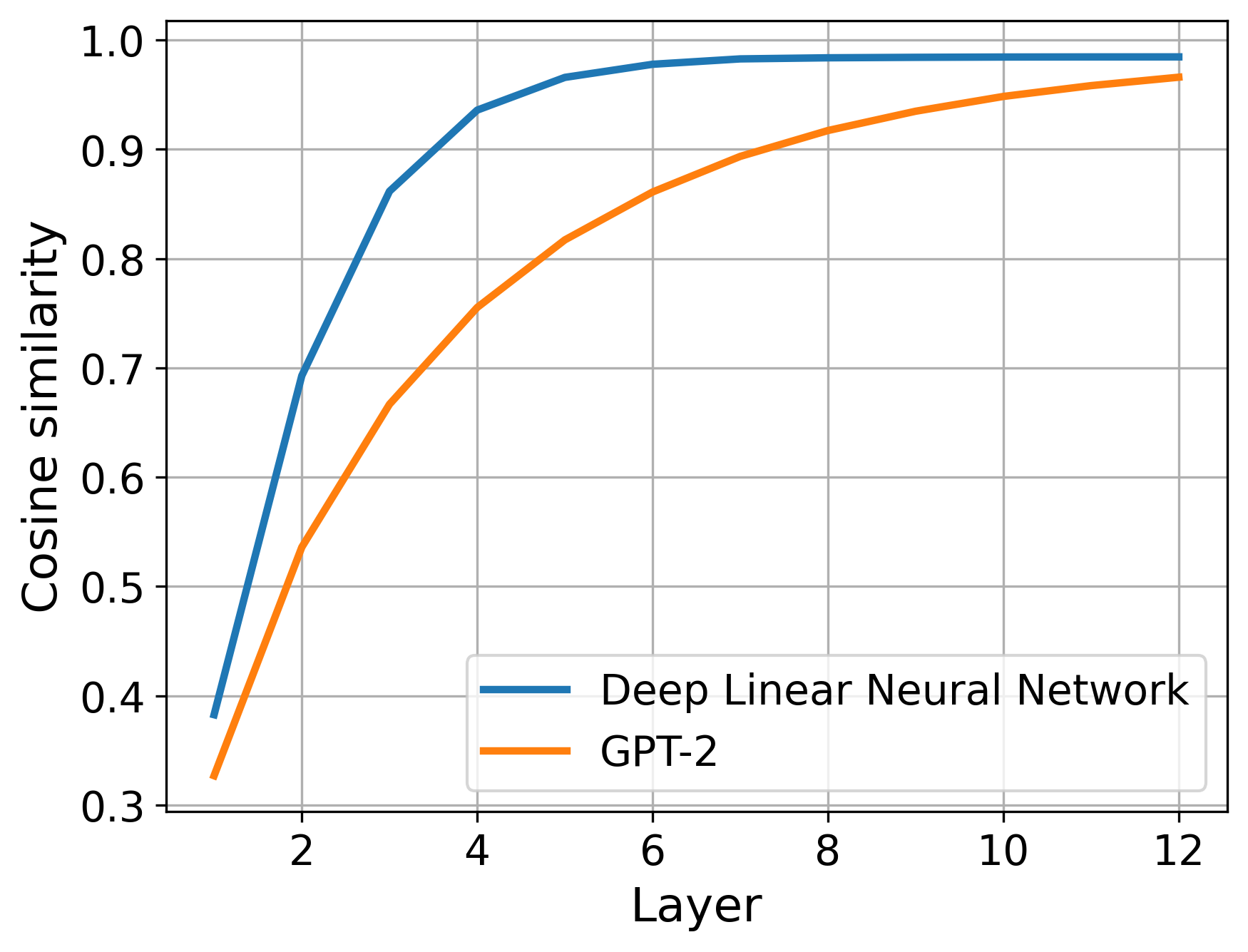}
    \caption{Feature cosine similarity of data with the same labels of deep linear neural networks and GPT-2 trained on orthogonal data.}
\end{figure}
\section{Proof Roadmap and Gradient Computations}
\subsection{Proof roadmap}
\label{app:proof-roadmap}

We first illustrate the proof strategy using Theorem~1, which is the representative positive result for analogical reasoning. The proofs of Theorems~2 and~4 follow the same alignment-and-transfer template, with different data curricula producing the required alignment. Theorems~3 and~5 follow the complementary no-alignment argument.

\paragraph{Theorem~1: joint training.}
The proof has three steps. First, during the attention/value training stage, the samples $[\mathbf{a}_i,\mathbf{r}_1]$ and $[\mathbf{a}'_i,\mathbf{r}_1]$ share the same target label $\mathcal{I}(\mathbf{b}_i)$. Early in training, the model is far from convergence, so the correct-class residual remains of constant order. Consequently, the value representations of $\mathbf{a}_i$ and $\mathbf{a}'_i$ receive gradient updates with a common component in the direction of the initial classifier weights associated with label $\mathcal{I}(\mathbf{b}_i)$. Concentration bounds show that cross-sample interactions and initialization noise are lower-order terms. Hence $\mathbf{V}^{(T_1)}\mathbf{a}_i$ and $\mathbf{V}^{(T_1)}\mathbf{a}'_i$ become aligned, yielding the feature-resemblance statement in Proposition~1.

Second, the subsequent layer is then trained to converge to a near-optimal solution. This is theoretically guaranteed by the homogeneity of linear functions combined with the convexity of the cross-entropy loss, ensuring global convergence with polynomial iterations.

Third, this margin transfers from $[\mathbf{a}'_i,\mathbf{r}_2]$ to the unseen query $[\mathbf{a}_i,\mathbf{r}_2]$. Because $\mathbf{V}^{(T_1)}\mathbf{a}_i$ and $\mathbf{V}^{(T_1)}\mathbf{a}'_i$ are aligned, replacing $\mathbf{a}'_i$ by $\mathbf{a}_i$ introduces only a smaller-order perturbation to the logits. The positive margin therefore remains, and the model predicts $\mathcal{I}(\mathbf{c}_i)$ on the test query.

\paragraph{Extensions to Theorems~2 and~4.}
Theorem~2 follows the same argument under the similarity-then-attribution curriculum. The first phase aligns $\mathbf{a}_i$ and $\mathbf{a}'_i$, and the second phase learns the attribution margin. Theorem~4 is analogous: the identity-bridge examples align $\mathbf{a}_i$ with the intermediate token $\mathbf{b}_i$, allowing the margin learned from $[\mathbf{b}_i,\mathbf{r}_2]\mapsto \mathcal{I}(\mathbf{c}_i)$ to transfer to the test query $[\mathbf{a}_i,\mathbf{r}_2]$.

\paragraph{Negative results: Theorems~3 and~5.}
The negative results use the complementary argument. In the attribution-then-similarity curriculum of Theorem~3, the model can fit the observed training queries but does not receive an early common gradient signal aligning $\mathbf{a}_i$ with $\mathbf{a}'_i$. Similarly, without identity bridges in Theorem~5, there is no systematic signal aligning $\mathbf{a}_i$ with $\mathbf{b}_i$. By symmetry and concentration, the relevant cosine similarities remain $o(1)$. Thus the positive training margin cannot be transferred to the unseen test queries, leading to failure on analogical or two-hop reasoning.

\subsection{Gradient computation}
In this subsection, we derive the closed-form expression of the model gradient.
We define 
\begin{align}
	\textbf{logit}(\mathbf{Z},\mathbf{V},\mathbf{W},\mathbf{X}) =& \text{softmax}(\mathbf{f}(\mathbf{Z},\mathbf{V},\mathbf{W},\mathbf{X})),\\
    \mathbf{x}_a(\mathbf{Z},\mathbf{X}) =& \mathbf{X}\boldsymbol{\alpha}(\mathbf{Z},\mathbf{X}),
\end{align}
and 
\begin{align}
    \mathbf{W}_2 = \begin{bmatrix}
        \oldfrac{1}{m}\mathbf{1}^\top & \mathbf{0}^\top & \cdots &\mathbf{0}^\top\\
        \mathbf{0}^\top & \oldfrac{1}{m}\mathbf{1}^\top & \cdots &\mathbf{0}^\top\\
        \vdots & \vdots &&\vdots\\
        \mathbf{0}^\top & \mathbf{0}^\top & \cdots &\oldfrac{1}{m}\mathbf{1}^\top,
    \end{bmatrix}
\end{align}
Then, we can rewrite the output as
\begin{align}
    \mathbf{f}(\mathbf{Z},\mathbf{V},\mathbf{W},\mathbf{X}) = \lambda \mathbf{W}_2\mathbf{W}\mathbf{VX}\boldsymbol{\alpha}(\mathbf{Z},\mathbf{X}).
\end{align}
\begin{lemma}
	For each $(\mathbf{X},y)\in\mathcal{S}$, the gradients of the transformer are given as 
	\begin{align}
		\nabla_{\mathbf{w}_{i,j}} \mathcal{L}(\mathbf{Z},\mathbf{V},\mathbf{W},\mathbf{X},y) = -\frac{\lambda}{m}(\mathbb{I}(i=y)-\textnormal{logit}_i(\mathbf{Z},\mathbf{V},\mathbf{W},\mathbf{X}))\mathbf{V}\mathbf{x}_a(\mathbf{Z},\mathbf{X}),
	\end{align}
	\begin{align}
	\nabla_{\mathbf{v}_{l}} \mathcal{L}(\mathbf{Z},\mathbf{V},\mathbf{W},\mathbf{X},y) = \lambda(\textbf{logit}(\mathbf{Z},\mathbf{V},\mathbf{W},\mathbf{X})-\mathbf{e}_y)^\top\mathbf{W}_2 \mathbf{W}
\begin{bmatrix} 
	\mathbf{0}\\
	\vdots\\
	\mathbf{x}_a^\top(\mathbf{Z},\mathbf{X})\\
	\vdots\\
	\mathbf{0}
\end{bmatrix},
	\end{align}
and
\begin{equation}
\begin{aligned}
	&(\nabla _{\mathbf{Z}}\mathcal{L}(\mathbf{Z},\mathbf{V},\mathbf{W},\mathbf{X},y)\mathbf{X}[-1])^\top\\
    =&\frac{\lambda}{\sqrt{d}}(\textbf{logit}(\mathbf{Z},\mathbf{V},\mathbf{W},\mathbf{X})-\mathbf{e}_y)^\top\mathbf{W}_2\mathbf{W}\mathbf{VX}\left(\textnormal{diag}(\boldsymbol{\alpha}(\mathbf{Z},\mathbf{X})) - \boldsymbol{\alpha}(\mathbf{Z},\mathbf{X})\boldsymbol{\alpha(\mathbf{Z},\mathbf{X})}^\top\right)\mathbf{X}^\top.
\end{aligned}
\end{equation}
\end{lemma}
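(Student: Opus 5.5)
This is a direct chain-rule computation on the rewritten form $\mathbf{f} = \lambda \mathbf{W}_2\mathbf{W}\mathbf{V}\mathbf{X}\boldsymbol{\alpha}(\mathbf{Z},\mathbf{X})$. The starting point is the standard cross-entropy identity
\[
\nabla_{\mathbf{f}}\mathcal{L} = \textbf{logit}(\mathbf{Z},\mathbf{V},\mathbf{W},\mathbf{X}) - \mathbf{e}_y .
\]
Every gradient is then this row vector multiplied by the Jacobian of $\mathbf{f}$ with respect to the relevant parameter. Throughout, we write $\mathbf{x}_a = \mathbf{X}\boldsymbol{\alpha}$ and $\mathbf{o}_1 = \mathbf{V}\mathbf{x}_a$.

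\textbf{Feature-layer gradient.} By definition, $f_k = \frac{\lambda}{m}\sum_l \langle \mathbf{w}_{k,l},\mathbf{o}_1\rangle$. Hence $\mathbf{w}_{i,j}$ enters only $f_i$, and
\[
\partial f_i/\partial \mathbf{w}_{i,j} = \frac{\lambda}{m}\mathbf{V}\mathbf{x}_a .
\]
Multiplying by $\partial\mathcal{L}/\partial f_i = \textnormal{logit}_i - \mathbb{I}(i=y)$ gives the first formula.

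\textbf{Value-matrix gradient.} The row $\mathbf{v}_l$ of $\mathbf{V}$ affects only the $l$-th coordinate of $\mathbf{o}_1$, namely $\langle\mathbf{v}_l,\mathbf{x}_a\rangle$. Therefore
\[
\partial \mathbf{o}_1/\partial \mathbf{v}_l = \mathbf{e}_l\mathbf{x}_a^\top,
\]
which is the matrix whose only nonzero row is $\mathbf{x}_a^\top$ in position $l$. Composing with $\partial\mathbf{f}/\partial\mathbf{o}_1 = \lambda\mathbf{W}_2\mathbf{W}$ and the loss derivative gives the second formula.

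\textbf{Attention gradient.} Let $\mathbf{s} = \mathbf{X}^\top\mathbf{Z}\mathbf{X}[-1]/\sqrt{d}$, so that $\boldsymbol{\alpha} = \textnormal{softmax}(\mathbf{s})$. The softmax Jacobian is
\[
\partial\boldsymbol{\alpha}/\partial\mathbf{s} = \textnormal{diag}(\boldsymbol{\alpha}) - \boldsymbol{\alpha}\boldsymbol{\alpha}^\top .
\]
The chain rule then gives
\[
\nabla_{\mathbf{s}}\mathcal{L}^\top = \lambda(\textbf{logit}-\mathbf{e}_y)^\top\mathbf{W}_2\mathbf{W}\mathbf{V}\mathbf{X}\bigl(\textnormal{diag}(\boldsymbol{\alpha})-\boldsymbol{\alpha}\boldsymbol{\alpha}^\top\bigr).
\]
Next, $\mathbf{s}$ is linear in $\mathbf{Z}$, with $s_j = \mathbf{X}[:,j]^\top\mathbf{Z}\mathbf{X}[-1]/\sqrt{d}$. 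This yields
\[
\nabla_{\mathbf{Z}}\mathcal{L} = \frac{1}{\sqrt{d}}\mathbf{X}\,\nabla_{\mathbf{s}}\mathcal{L}\,\mathbf{X}[-1]^\top .
\]
Right-multiplying by $\mathbf{X}[-1]$ and using $\|\mathbf{X}[-1]\|_2 = 1$, which holds by the orthonormal embedding assumption, gives
\[
\nabla_{\mathbf{Z}}\mathcal{L}\,\mathbf{X}[-1] = \frac{1}{\sqrt{d}}\mathbf{X}\,\nabla_{\mathbf{s}}\mathcal{L} .
\]
Transposing this produces exactly the third formula.

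\textbf{Main difficulty.} The only delicate point is bookkeeping, not any analytic obstacle. One must keep the orientation of row and column vectors consistent, and note that the factor $\mathbf{X}[-1]^\top\mathbf{X}[-1]$ collapses to $1$ only because the embeddings are unit norm. Without that assumption, a factor $\|\mathbf{X}[-1]\|_2^2$ would remain in the third formula.
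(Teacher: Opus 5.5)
Your proof is correct and follows essentially the same chain-rule route as the paper: the cross-entropy derivative $\textbf{logit}-\mathbf{e}_y$, the Jacobians $\lambda\mathbf{W}_2\mathbf{W}$ and $\mathbf{e}_l\mathbf{x}_a^\top$, the softmax Jacobian $\textnormal{diag}(\boldsymbol{\alpha})-\boldsymbol{\alpha}\boldsymbol{\alpha}^\top$, and the rank-one structure $\nabla_{\mathbf{Z}}\mathcal{L}=\nabla_{\mathbf{Z}\mathbf{X}[-1]}\mathcal{L}\,\mathbf{X}[-1]^\top$. Your remark that the last step needs $\|\mathbf{X}[-1]\|_2=1$ is correct: this holds because $\mathbf{X}[-1]$ is always a unit-norm relation token. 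The paper relies on this assumption without stating it.
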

\begin{proof}
To unclutter the notation, we hereafter suppress the arguments in the following proof.

Using chain rule, we have
\begin{align}
	&\nabla_{\mathbf{w}_{i,j}} \mathcal{L} = \nabla_{\mathbf{f}}
	\mathcal{L}\nabla_{\mathbf{w}_{i,j}}\mathbf{f}, \\
	&\nabla_{\mathbf{V}} \mathcal{L} = \nabla_{\mathbf{f}}
	\mathcal{L}\nabla_{\mathbf{o}_1}\mathbf{f}\nabla_{\mathbf{V}}\mathbf{o}_1,\\
	&\nabla_{\mathbf{Z}} \mathcal{L} = \nabla_{\mathbf{f}}
	\mathcal{L} \nabla_{\mathbf{o}_1}\mathbf{f}\nabla_{\mathbf{x}_a}\mathbf{o}_1\nabla_{\boldsymbol{\alpha}} \mathbf{x}_a \nabla_{\mathbf{Z}}\boldsymbol{\alpha}.
\end{align}

Each component can be computed as 
\begin{align}
	&\nabla_{\mathbf{f}}\mathcal{L} = (\textbf{logit}-\mathbf{e}_y)^\top,\\
	&\nabla_{\mathbf{w}_{i,j}}f_{i} = \frac{\lambda}{m} (\mathbf{V}\mathbf{x}_a)^\top,\\
	& \nabla_{\mathbf{o}_1}\mathbf{f} = \lambda\mathbf{W}_2\mathbf{W},\\
	&\nabla_{\mathbf{v}_l}\mathbf{o}_{1} = \begin{bmatrix} 
		\mathbf{0}\\
		\vdots\\
		\mathbf{x}_a^\top\\
		\vdots\\
		\mathbf{0}
			\end{bmatrix},\\
	& \nabla_{\mathbf{\mathbf{ZX}[-1]}}\mathbf{o}_1 = \frac{1}{\sqrt{d}}\mathbf{VX}\left(\textnormal{diag}(\boldsymbol{\alpha}(\mathbf{Z},\mathbf{X})) - \boldsymbol{\alpha}(\mathbf{Z},\mathbf{X})\boldsymbol{\alpha(\mathbf{Z},\mathbf{X})}^\top\right)\mathbf{X}^\top.
\end{align} 
Furthermore, we have
\begin{align}
	\nabla_{\mathbf{Z}}\mathcal{L}(\mathbf{Z},\mathbf{V},\mathbf{W},\mathbf{X},y) = \nabla _{\mathbf{Z}\mathbf{X}[-1]}\mathcal{L}(\mathbf{Z},\mathbf{V},\mathbf{W},\mathbf{X},y) \mathbf{X}[-1]^\top.
\end{align}
As a result, we have
\begin{align}
    \nabla_{\mathbf{Z}}\mathcal{L}(\mathbf{Z},\mathbf{V},\mathbf{W},\mathbf{X},y)\mathbf{X}[-1] = \nabla _{\mathbf{Z}\mathbf{X}[-1]}\mathcal{L}(\mathbf{Z},\mathbf{V},\mathbf{W},\mathbf{X},y).
\end{align}
Combining them finishes the proof.
\end{proof}
\section{Notations}
For simplicity and clarity, we define
\begin{align}
	&\textbf{logit}^{(t)}(\mathbf{X})=\textbf{logit}(\mathbf{Z}^{(t)},\mathbf{V}^{(t)},\mathbf{W}^{(t)},\mathbf{X}),\\
	&\boldsymbol{\tau}^{(t)} = \mathbf{W}_2\mathbf{W}^{(t)}\mathbf{V}^{(t)},\\
	&\Xi^{(t)}(\mathbf{X}) = \mathbf{X}\boldsymbol{\alpha}(\mathbf{Z}^{(t)},\mathbf{X}),\\
    &\mathbf{f}^{(t)}(\mathbf{X}) = \mathbf{f}(\mathbf{Z}^{(t)},\mathbf{V}^{(t)},\mathbf{W}^{(t)},\mathbf{X}).
\end{align}

\section{Useful lemmas and preliminaries}

\begin{lemma}\label{lemma: decomposition}
	For all $i\in[N]$, $l,h\in[m], q\in[2]$, the model parameters $\mathbf{w}$ and $\mathbf{V}$ is a linear combination of initialization vectors, i.e., there exists coefficients $\gamma^{(t)}_{i,k,l},\rho^{(t)}_{i,k,l},  \zeta_{q,k,l}^{(t)}, \beta^{(t)}_{1,k,l,q,h},\beta^{(t)}_{2,k,l,j},\beta^{(t)}_{3,k,l,j},\beta^{(t)}_{4,k,l,q}\in\mathbb{R}$ such that
	\begin{align}
		\mathbf{V}^{(t)}\mathbf{a}_i =& \sum_{k=1}^d\sum_{l=1}^{m}\gamma^{(t)}_{i,k,l}\mathbf{w}^{(0)}_{k,l} + \mathbf{V}^{(0)}\mathbf{a}_i,\\
		\mathbf{V}^{(t)}\mathbf{a}'_i =& \sum_{k=1}^d\sum_{l=1}^{m}\rho^{(t)}_{i,k,l}\mathbf{w}^{(0)}_{k,l} + \mathbf{V}^{(0)}\mathbf{a}'_i,\\
		\mathbf{V}^{(t)}\mathbf{r}_q =& \sum_{k=1}^d\sum_{l=1}^{m}
		\zeta_{q,k,l}^{(t)}\mathbf{w}^{(0)}_{k,l} +  \mathbf{V}^{(0)}\mathbf{r}_q,\\
		\mathbf{w}_{k,l}^{(t)} =& \sum_{q=1}^d\sum_{h=1}^{m}\beta^{(t)}_{1,k,l,q,h}\mathbf{w}^{(0)}_{q,h} + \sum_{j=1}^{N}\beta^{(t)}_{2,k,l,j}\mathbf{V}^{(0)}\mathbf{a}_j + \sum_{j=1}^{N}\beta^{(t)}_{3,k,l,j}\mathbf{V}^{(0)}\mathbf{a}'_j + \sum_{q\in[2]}\beta^{(t)}_{4,k,l,q}\mathbf{V}^{(0)}\mathbf{r}_q.
	\end{align}
\end{lemma}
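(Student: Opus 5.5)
The plan is to prove Lemma~\ref{lemma: decomposition} by induction on $t$, using the closed-form gradients from the gradient-computation lemma. The key observation is that every gradient update of $\mathbf{V}$ and of $\mathbf{w}_{k,l}$ lies in a span determined by the current parameters. I will show that these spans are preserved.

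The base case $t=0$ is immediate. Take all $\gamma^{(0)},\rho^{(0)},\zeta^{(0)}$ equal to zero. For the feature weights, take $\beta^{(0)}_{1,k,l,q,h}=\mathbb{I}(q=k,h=l)$ and set the remaining $\beta^{(0)}$ to zero.

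For the inductive step, assume the decomposition holds at time $t$. First consider $\mathbf{V}$. The gradient lemma gives $\nabla_{\mathbf{V}}\mathcal{L}=\lambda\,(\mathbf{W}_2\mathbf{W}^{(t)})^\top(\textbf{logit}-\mathbf{e}_y)\,\mathbf{x}_a^\top$. This is a rank-one matrix whose left factor is a linear combination of the rows of $\mathbf{W}^{(t)}$, namely the vectors $\mathbf{w}^{(t)}_{k,l}$. Applying the update to a token $\mathbf{z}\in\{\mathbf{a}_i,\mathbf{a}'_i,\mathbf{r}_q\}$ changes $\mathbf{V}^{(t)}\mathbf{z}$ by
$$-\eta\lambda\,\langle \mathbf{x}_a,\mathbf{z}\rangle\sum_{k,l}\frac{1}{m}(\textnormal{logit}_k-\mathbb{I}(k=y))\,\mathbf{w}^{(t)}_{k,l}.$$
This is a scalar multiple of a combination of the $\mathbf{w}^{(t)}_{k,l}$, averaged over the samples. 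The $\mathbf{Z}$ update does not affect $\mathbf{V}$.

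This creates a difficulty. In Stage~1 the weights $\mathbf{W}$ are frozen at $\mathbf{W}^{(0)}$, so the increments lie in $\mathrm{span}\{\mathbf{w}^{(0)}_{k,l}\}$, exactly as the lemma requires. In the later phases $\mathbf{V}$ is frozen, so its coefficients simply stay fixed. I will state this phase structure explicitly. It is the point that makes the $\mathbf{V}$ part of the lemma exact rather than requiring an enlarged span. The same reasoning applies to $\mathbf{V}^{(t)}\mathbf{a}_i$, $\mathbf{V}^{(t)}\mathbf{a}'_i$ and $\mathbf{V}^{(t)}\mathbf{r}_q$. The coefficient recursions, for example $\gamma^{(t+1)}_{i,k,l}=\gamma^{(t)}_{i,k,l}-\frac{\eta\lambda}{mn}\sum_{(\mathbf{X},y)}\langle\mathbf{x}_a,\mathbf{a}_i\rangle(\textnormal{logit}_k-\mathbb{I}(k=y))$, can be written down explicitly for later use.

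Next consider $\mathbf{w}_{k,l}$. Its gradient is a scalar multiple of $\mathbf{V}^{(t)}\mathbf{x}_a$. Because the tokens are orthonormal, $\mathbf{x}_a=\mathbf{X}\boldsymbol{\alpha}$ is a convex combination of the two tokens in the prompt. Hence $\mathbf{V}^{(t)}\mathbf{x}_a$ is a combination of $\mathbf{V}^{(t)}\mathbf{a}_j$, $\mathbf{V}^{(t)}\mathbf{a}'_j$ and $\mathbf{V}^{(t)}\mathbf{r}_q$. By the induction hypothesis each of these lies in
$$\mathrm{span}\{\mathbf{w}^{(0)}_{q,h}\}\cup\{\mathbf{V}^{(0)}\mathbf{a}_j,\mathbf{V}^{(0)}\mathbf{a}'_j,\mathbf{V}^{(0)}\mathbf{r}_q\}.$$
This closes the induction for the $\beta$ coefficients, and I will record their recursions explicitly as well.

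The main obstacle is bookkeeping rather than analysis. One has to make sure the proof is organized around the layer-wise schedule, so that $\mathbf{V}$ never receives an update while $\mathbf{W}$ has moved away from $\mathbf{W}^{(0)}$. It also has to be checked that the test and training tokens appearing in each phase's datasets are all in the listed family. For the two-hop variant the same argument goes through with $\mathbf{b}_j$ and $\mathbf{r}_3$ added to that family.
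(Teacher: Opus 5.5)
Your proposal is correct and is essentially the paper's argument, which settles the lemma in one line: every gradient step stays in the span of the initial vectors, and the softmax enters only through scalar coefficients. Your induction makes this precise, and your explicit use of the layer-wise schedule ($\mathbf{V}$ moves only while $\mathbf{W}=\mathbf{W}^{(0)}$) is exactly what makes the $\mathbf{V}$-decomposition hold with $\mathbf{w}^{(0)}_{k,l}$ rather than $\mathbf{w}^{(t)}_{k,l}$, a point the paper leaves implicit; one small correction is that $\mathbf{x}_a=\mathbf{X}\boldsymbol{\alpha}$ is a convex combination of the prompt tokens because $\boldsymbol{\alpha}$ is a softmax output, not because the tokens are orthonormal.
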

This lemma holds because the gradient updates remain within the linear span of the initialized random vectors. 
While the softmax function introduces non-linearity, it acts solely on the scalar coefficients, effectively preserving the vector subspace.

\begin{lemma}\label{lemma: inner}
	For all $i,j\in[d]$, $l,q\in[m]$ and $(j,q)\neq (i,l)$, with probability at least $1-\delta$, we have
	\begin{align}
		&\frac{\sigma_0^2d}{2} \le \|\mathbf{w}^{(0)}_{i,l}\|^2_2 \le \frac{3\sigma_0^2d}{2},\\
		&|\langle \mathbf{w}^{(0)}_{i,l},\mathbf{w}^{(0)}_{j,q} \rangle| \le  2\sigma^2_0 \sqrt{d\log(4d^2m^2/\delta)}.
	\end{align}
\end{lemma}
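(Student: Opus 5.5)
The plan is to treat each entry of $\mathbf{W}^{(0)}$ as an i.i.d. Gaussian with variance $\sigma_0^2$ and apply standard concentration, followed by a union bound over the $dm$ vectors $\mathbf{w}^{(0)}_{i,l}\in\mathbb{R}^d$ and the at most $(dm)^2$ pairs. Each pair is handled separately, and the failure probability is split evenly between the norm events and the inner-product events.

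\textbf{Norms.} For a fixed $(i,l)$, the quantity $\|\mathbf{w}^{(0)}_{i,l}\|_2^2/\sigma_0^2$ is a $\chi^2_d$ variable. By the Laurent--Massart bound, $|\chi^2_d - d| \le 2\sqrt{d x} + 2x$ with probability at least $1-2e^{-x}$. Take $x=\log(4dm/\delta)$. Condition~\ref{condition:condition} gives $d\ge C\log(4md/\delta)$ with $C$ large, so $2\sqrt{dx}+2x\le d/2$. This yields $\sigma_0^2 d/2\le\|\mathbf{w}^{(0)}_{i,l}\|_2^2\le 3\sigma_0^2 d/2$ with failure probability at most $\delta/(2dm)$ per vector. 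Over all $dm$ vectors the total failure probability is at most $\delta/2$.

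\textbf{Inner products.} For $(j,q)\ne(i,l)$, the two vectors are independent. Condition on $\mathbf{w}^{(0)}_{j,q}$; then $\langle \mathbf{w}^{(0)}_{i,l},\mathbf{w}^{(0)}_{j,q}\rangle \sim \mathcal{N}(0,\sigma_0^2\|\mathbf{w}^{(0)}_{j,q}\|_2^2)$. On the norm event, $\|\mathbf{w}^{(0)}_{j,q}\|_2^2\le 3\sigma_0^2d/2$, and the Gaussian tail gives
\[
|\langle \mathbf{w}^{(0)}_{i,l},\mathbf{w}^{(0)}_{j,q}\rangle|\le \sigma_0^2\sqrt{3d\log(8d^2m^2/\delta)}
\]
with probability at least $1-\delta/(4d^2m^2)$. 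A union bound over at most $d^2m^2$ pairs costs another $\delta/4$.

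The stated constant $2\sqrt{\log(4d^2m^2/\delta)}$ absorbs $\sqrt{3\log(8d^2m^2/\delta)/2}\cdot\sqrt2$ only up to constants. If the exact constant matters, I will instead use the product-of-Gaussians (sub-exponential) Bernstein bound directly. The sum $\sum_k w_kw'_k$ has sub-exponential parameters $(\sigma_0^2\sqrt d,\sigma_0^2)$, which gives deviation $\sigma_0^2(\sqrt{2dx}+x)\le 2\sigma_0^2\sqrt{dx}$ with $x=\log(4d^2m^2/\delta)$, since $x\le d$ by the large-$d$ condition. Running this avoids conditioning, and the union bound then totals at most $\delta$ together with the norm events.

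The only delicate step is this constant bookkeeping: matching the exact constants $1/2$, $3/2$, and $2$ inside the logarithms. The large-$d$ part of Condition~\ref{condition:condition} is what lets the linear $x$ terms be absorbed into the $\sqrt{dx}$ terms. Everything else is routine.
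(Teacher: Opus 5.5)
Your proposal is correct and follows essentially the same route as the paper's proof: a Bernstein-type ($\chi^2$) bound for the norms with failure probability $\delta/(2dm)$ per vector, a Bernstein bound for the pairwise inner products with failure probability $\delta/(4d^2m^2)$ per pair, and a union bound. Your worry about constants is unnecessary, because the conditioning route already gives the stated bound: $3\log(8d^2m^2/\delta)\le 4\log(4d^2m^2/\delta)$ as soon as $d^2m^2\ge 2\delta$. In the Bernstein route, you need $x\le(2-\sqrt{2})^2 d$, not merely $x\le d$, to get $\sqrt{2dx}+x\le 2\sqrt{dx}$; the large-$d$ condition does supply this.
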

\begin{proof}
	Suppose that $d = \Omega(\log(4md/\delta))$.
    By Bernstein's inequality, for any $i\in[d]$, $l\in[m]$, with probability at least $1-\delta/(2md)$, we have
	\begin{align}
		|\|\mathbf{w}^{(0)}_{i,l}\|_2^2 - \sigma_0^2d|= \mathcal{O}(\sigma_0^2\sqrt{d\log(4md/\delta)}).
	\end{align}
	With $d = \Omega(\log(4md/\delta))$, we have $\frac{\sigma_0^2d}{2} \le \|\mathbf{w}_i\|^2_2 \le \frac{3\sigma_0^2d}{2}$.
	Moreover, by Bernstein's inequality, for any $i,j\in[d]$, $l,q\in[m]$ and $(j,q)\neq (i,l)$, with probability at least $1- \delta/(4m^2d^2)$, we have
	\begin{align}
		|\langle \mathbf{w}^{(0)}_{i,l},\mathbf{w}^{(0)}_{j,q} \rangle| \le  2\sigma^2_0 \sqrt{d\log(2m^2d^2/\delta)}.
	\end{align}
Using union bound finishes the proof.
\end{proof}

\begin{lemma}
    Suppose that $d = \Omega(\log(4N/\delta))$. 
    In analogical reasoning tasks, for any $\mathbf{a},\mathbf{b}\in\{\mathbf{V}^{(0)}\mathbf{a}_i\}_{i=1}^N\cup\{\mathbf{V}^{(0)}\mathbf{a}'_i\}_{i=1}^N\cup\{\mathbf{V}^{(0)}\mathbf{r}_1,\mathbf{V}^{(0)}\mathbf{r}_2\}$ and $\mathbf{a}\neq\mathbf{b}$, with probability $1-\delta$, we have
    \begin{align}
        \frac{\sigma_0^2d}{2} \le \|\mathbf{a}\|^2_2 \le \frac{3\sigma_0^2d}{2},
    \end{align}
    and
    \begin{align}
        |\langle \mathbf{a},\mathbf{b}\rangle| \le  2\sigma^2_0 \sqrt{d\log(2(2N+2)^2/\delta)}.
    \end{align}
\end{lemma}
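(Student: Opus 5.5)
The plan follows Lemma~\ref{lemma: inner}, with the Gaussian matrix $\mathbf{V}^{(0)}$ applied to a fixed orthonormal family in place of the Gaussian weight vectors $\mathbf{w}^{(0)}_{k,l}$. Since the entries of $\mathbf{V}^{(0)}$ are i.i.d.\ $\mathcal{N}(0,\sigma_0^2)$ and $\{\mathbf{a}_i\}\cup\{\mathbf{a}'_i\}\cup\{\mathbf{r}_1,\mathbf{r}_2\}$ is orthonormal (and independent of $\mathbf{V}^{(0)}$, or fixed after conditioning), rotational invariance of the Gaussian ensemble shows that the $2N+2$ vectors $\mathbf{V}^{(0)}\mathbf{z}$ are i.i.d.\ $\mathcal{N}(\mathbf{0},\sigma_0^2\mathbf{I}_d)$. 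Concretely, choose an orthogonal $\mathbf{U}$ whose first $2N+2$ columns are these embeddings. Then $\mathbf{V}^{(0)}\mathbf{U}$ has the same law as $\mathbf{V}^{(0)}$, and its columns are independent. The whole problem therefore reduces to controlling the norms and pairwise inner products of $M=2N+2$ independent isotropic Gaussian vectors in $\mathbb{R}^d$.

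\textbf{Norms.} For each vector $\mathbf{a}$, the quantity $\|\mathbf{a}\|_2^2/\sigma_0^2$ is $\chi^2_d$. Bernstein's inequality (or Laurent--Massart) gives $\bigl|\|\mathbf{a}\|_2^2-\sigma_0^2 d\bigr|\le C\sigma_0^2\bigl(\sqrt{d\log(2M/\delta)}+\log(2M/\delta)\bigr)$ with probability at least $1-\delta/(2M)$. The assumption $d=\Omega(\log(4N/\delta))$, with a large enough constant, makes this deviation at most $\sigma_0^2 d/2$, which yields $[\sigma_0^2 d/2,\,3\sigma_0^2 d/2]$.

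\textbf{Inner products.} For $\mathbf{a}\neq\mathbf{b}$, the product $\langle\mathbf{a},\mathbf{b}\rangle=\sum_{k}a_kb_k$ is a sum of $d$ independent sub-exponential terms with variance $\sigma_0^4$. Bernstein's inequality gives $|\langle\mathbf{a},\mathbf{b}\rangle|\le 2\sigma_0^2\sqrt{d\log(2M^2/\delta)}$ with probability at least $1-\delta/(2M^2)$. The linear (sub-exponential) term in Bernstein's bound is absorbed because $d\gtrsim\log(M^2/\delta)$. This $\log(M^2/\delta)$ is within a constant factor of $\log(4N/\delta)$, so the assumed bound on $d$ suffices after adjusting constants. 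The stated logarithmic factor is $\log(2(2N+2)^2/\delta)$, which matches a failure budget of $\delta/(2M^2)$ per pair.

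\textbf{Union bound.} There are $M$ norm events and at most $M^2/2$ pair events. Allocating $\delta/(2M)$ and $\delta/(2M^2)$ respectively makes the total failure probability at most $\delta$.

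The only step needing care is the reduction to independence via orthogonal invariance. The remaining obstacle is bookkeeping: the constant $2$ in front of the square root, and absorbing the linear Bernstein term using the condition on $d$. Both are handled exactly as in Lemma~\ref{lemma: inner}.
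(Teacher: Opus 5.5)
Your proposal is correct and follows the same route as the paper: Bernstein's inequality for each squared norm and each pairwise inner product, the condition $d=\Omega(\log(4N/\delta))$ to absorb the deviation and linear terms, and a union bound over the $2N+2$ vectors and their pairs. You additionally make explicit the rotational-invariance step showing that the vectors $\mathbf{V}^{(0)}\mathbf{z}$ are i.i.d.\ $\mathcal{N}(\mathbf{0},\sigma_0^2\mathbf{I}_d)$, which the paper uses only implicitly; your allocation of $\delta/(2M)$ per norm event also keeps the total failure probability below $\delta$, whereas the paper's $\delta/(2N+2)$ per norm slightly overshoots before its final constant rescaling of $\delta$.
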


    \begin{proof}
        By Bernstein's inequality, for all $i\in[N]$ and $j\in[2]$, with probability at least $1-\delta/(2N+2)$, we have
	\begin{align}
		&|\|\mathbf{V}^{(0)}\mathbf{a}_{i}\|_2^2 - \sigma_0^2d|= \mathcal{O}(\sigma_0^2\sqrt{d\log(4N/\delta)})\\
        &|\|\mathbf{V}^{(0)}\mathbf{a}'_{i}\|_2^2 - \sigma_0^2d|= \mathcal{O}(\sigma_0^2\sqrt{d\log(4N/\delta)})\\
        &|\|\mathbf{V}^{(0)}\mathbf{r}_{j}\|_2^2 - \sigma_0^2d|= \mathcal{O}(\sigma_0^2\sqrt{d\log(4N/\delta)}).
	\end{align}
    With $d = \Omega(\log(4N/\delta))$, we have $\frac{\sigma_0^2d}{2} \le \|\mathbf{V}^{(0)}\mathbf{a}_i\|^2_2 \le \frac{3\sigma_0^2d}{2}$, $\frac{\sigma_0^2d}{2} \le \|\mathbf{V}^{(0)}\mathbf{a}'_i\|^2_2 \le \frac{3\sigma_0^2d}{2}$, and $\frac{\sigma_0^2d}{2} \le \|\mathbf{V}^{(0)}\mathbf{r}_j\|^2_2 \le \frac{3\sigma_0^2d}{2}$ for all $i\in[N]$ and $j\in[2]$.
    
    Moreover, by Bernstein's inequality, for any $\mathbf{a},\mathbf{b}\in\{\mathbf{V}^{(0)}\mathbf{a}_i\}_{i=1}^N\cup\{\mathbf{V}^{(0)}\mathbf{a}'_i\}_{i=1}^N\cup\{\mathbf{V}^{(0)}\mathbf{r}_1,\mathbf{V}^{(0)}\mathbf{r}_2\}$ and $\mathbf{a}\neq\mathbf{b}$, with probability at least $1- \delta/(2(2N+2)^2))$, we have
	\begin{align}
		|\langle \mathbf{a},\mathbf{b}\rangle| \le  2\sigma^2_0 \sqrt{d\log(2(2N+2)^2/\delta)}.
	\end{align}
Using union bound finishes the proof.
    \end{proof}

\begin{lemma}
    Suppose that $d = \Omega(\log(4N/\delta))$. 
    In two-hop reasoning tasks, for any $\mathbf{a},\mathbf{b}\in\{\mathbf{V}^{(0)}\mathbf{a}_i\}_{i=1}^N\cup\{\mathbf{V}^{(0)}\mathbf{b}_i\}_{i=1}^N\cup\{\mathbf{V}^{(0)}\mathbf{r}_1,\mathbf{V}^{(0)}\mathbf{r}_2,\mathbf{V}^{(0)}\mathbf{r}_3\}$ and $\mathbf{a}\neq\mathbf{b}$, with probability $1-\delta$, we have
    \begin{align}
        \frac{\sigma_0^2d}{2} \le \|\mathbf{a}\|^2_2 \le \frac{3\sigma_0^2d}{2},
    \end{align}
    and
    \begin{align}
        |\langle \mathbf{a},\mathbf{b}\rangle| \le  2\sigma^2_0 \sqrt{d\log(2(2N+3)^2/\delta)}.
    \end{align}
\end{lemma}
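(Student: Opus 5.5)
The plan mirrors the proof of the preceding lemma for the analogical-reasoning vectors. The only change is that the collection now has $2N+3$ vectors instead of $2N+2$: these are $\mathbf{V}^{(0)}\mathbf{a}_i$, $\mathbf{V}^{(0)}\mathbf{b}_i$ for $i\in[N]$, and $\mathbf{V}^{(0)}\mathbf{r}_q$ for $q\in[3]$.

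The key structural fact is that $\mathbf{V}^{(0)}$ has i.i.d.\ $\mathcal{N}(0,\sigma_0^2)$ entries and the embeddings $\{\mathbf{a}_i,\mathbf{b}_i\}_{i=1}^N\cup\{\mathbf{r}_1,\mathbf{r}_2,\mathbf{r}_3\}$ form an orthonormal system. Condition on the embeddings. By rotational invariance of the Gaussian matrix, for orthonormal $\mathbf{u}_1,\dots,\mathbf{u}_{2N+3}$ the vectors $\mathbf{V}^{(0)}\mathbf{u}_1,\dots,\mathbf{V}^{(0)}\mathbf{u}_{2N+3}$ are i.i.d.\ $\mathcal{N}(\mathbf{0},\sigma_0^2\mathbf{I}_d)$. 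To see this, extend the system to an orthonormal basis $\mathbf{U}$. Then $\mathbf{V}^{(0)}\mathbf{U}$ has the same law as $\mathbf{V}^{(0)}$, and its columns are independent Gaussian vectors. Since this law does not depend on the particular embeddings, the conclusion also holds unconditionally.

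Given independence, the two bounds are standard. For each fixed vector $\mathbf{g}$ in the collection, $\|\mathbf{g}\|_2^2/\sigma_0^2\sim\chi^2_d$. Bernstein's inequality (or the Laurent--Massart bound) gives $|\|\mathbf{g}\|_2^2-\sigma_0^2 d| = \mathcal{O}(\sigma_0^2\sqrt{d\log(2(2N+3)/\delta)})$ with probability at least $1-\delta/(2(2N+3))$. Under $d=\Omega(\log(4N/\delta))$, with the constant absorbing the difference between $2N+3$ and $4N$, this deviation is at most $\sigma_0^2 d/2$, which yields the norm bounds.

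For a distinct pair $\mathbf{g}\neq\mathbf{h}$, the inner product $\langle\mathbf{g},\mathbf{h}\rangle=\sum_{k=1}^d g_k h_k$ is a sum of $d$ independent products of independent Gaussians. Each term is sub-exponential with parameter $\mathcal{O}(\sigma_0^2)$. Bernstein's inequality then gives $|\langle\mathbf{g},\mathbf{h}\rangle|\le 2\sigma_0^2\sqrt{d\log(2(2N+3)^2/\delta)}$ with probability at least $1-\delta/(2(2N+3)^2)$. Here the $\sqrt{d\log}$ term dominates the linear-in-log term because $d\gtrsim\log$.

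Finally, take a union bound. There are at most $2N+3$ norm events and at most $(2N+3)^2$ pair events, so the total failure probability is at most $\delta/2+\delta/2=\delta$.

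The only step requiring care is the independence and Gaussianity of the $\mathbf{V}^{(0)}\mathbf{u}_j$, which rests on orthonormality of the embeddings. This also explains why the identity-bridge relation token $\mathbf{r}_3$ simply enlarges the orthonormal set. Everything else is routine bookkeeping of constants, and those constants should match the stated $2\sigma_0^2$ factor as in the previous lemma.
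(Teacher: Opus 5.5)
Your proposal is correct and is essentially the paper's argument: Bernstein-type concentration for the $\chi^2_d$ norms and for the sub-exponential inner products of the Gaussian vectors $\mathbf{V}^{(0)}\mathbf{u}$, followed by a union bound over the $2N+3$ vectors and their pairs. You add two things. First, you justify explicitly that these vectors are i.i.d.\ $\mathcal{N}(\mathbf{0},\sigma_0^2\mathbf{I}_d)$, using orthonormality of the embeddings and rotational invariance of $\mathbf{V}^{(0)}$; the paper uses this only implicitly. Second, your $\delta/2+\delta/2$ split of the failure probability is slightly cleaner bookkeeping than the paper's.
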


    \begin{proof}
        By Bernstein's inequality, for all $i\in[N]$ and $j\in[3]$, with probability at least $1-\delta/(2N+3)$, we have
	\begin{align}
		&|\|\mathbf{V}^{(0)}\mathbf{a}_{i}\|_2^2 - \sigma_0^2d|= \mathcal{O}(\sigma_0^2\sqrt{d\log(4N/\delta)})\\
        &|\|\mathbf{V}^{(0)}\mathbf{b}_{i}\|_2^2 - \sigma_0^2d|= \mathcal{O}(\sigma_0^2\sqrt{d\log(4N/\delta)})\\
        &|\|\mathbf{V}^{(0)}\mathbf{r}_{j}\|_2^2 - \sigma_0^2d|= \mathcal{O}(\sigma_0^2\sqrt{d\log(4N/\delta)}).
	\end{align}
    With $d = \Omega(\log(4N/\delta))$, we have $\frac{\sigma_0^2d}{2} \le \|\mathbf{V}^{(0)}\mathbf{a}_i\|^2_2 \le \frac{3\sigma_0^2d}{2}$, $\frac{\sigma_0^2d}{2} \le \|\mathbf{V}^{(0)}\mathbf{b}_i\|^2_2 \le \frac{3\sigma_0^2d}{2}$, and $\frac{\sigma_0^2d}{2} \le \|\mathbf{V}^{(0)}\mathbf{r}_j\|^2_2 \le \frac{3\sigma_0^2d}{2}$ for all $i\in[N]$ and $j\in[3]$.
    
    Moreover, by Bernstein's inequality, for any $\mathbf{a},\mathbf{b}\in\{\mathbf{V}^{(0)}\mathbf{a}_i\}_{i=1}^N\cup\{\mathbf{V}^{(0)}\mathbf{b}_i\}_{i=1}^N\cup\{\mathbf{V}^{(0)}\mathbf{r}_1,\mathbf{V}^{(0)}\mathbf{r}_2,\mathbf{V}^{(0)}\mathbf{r}_3\}$ and $\mathbf{a}\neq\mathbf{b}$, with probability at least $1- \delta/(2(2N+3)^2))$, we have
	\begin{align}
		|\langle \mathbf{a},\mathbf{b}\rangle| \le  2\sigma^2_0 \sqrt{d\log(2(2N+3)^2/\delta)}.
	\end{align}
Using union bound finishes the proof.
    \end{proof}
    
\section{Proofs of joint training on \texorpdfstring{$\biguplus_{k=1}^\kappa(\mathcal{S}_1\cup\mathcal{S}_2)\biguplus\mathcal{S}_3$}{S\_1 U S\_2 U S\_3}}\label{app:proof_thm1}
We perform a two-stage proof for the joint training. 
Note that the total number of data samples is $n = N(2\kappa+1)$.
We assume that Lemmas 3 and 4 hold with probability $1-2\delta'$ by union bound.
\subsection{Properties during Stage-1 training (joint training on \texorpdfstring{$\biguplus_{k=1}^\kappa(\mathcal{S}_1\cup\mathcal{S}_2)\biguplus\mathcal{S}_3$}{S\_1 U S\_2 U S\_3})}
 
By the choice of \(T_1\) and the small-initialization and learning-rate
conditions in Condition~1, the cumulative parameter movement during
Stage~1 is small enough to keep all logits uniformly bounded, i.e., for all $(\mathbf{X},y)\in\biguplus_{k=1}^\kappa(\mathcal{S}_1\cup\mathcal{S}_2)\biguplus\mathcal{S}_3$,
\[
\max_{t\le T_1}\|\mathbf{f}^{(t)}(\mathbf{X})\|_\infty=\tilde{O}(1).
\]
Consequently, there exists an absolute constant \(c_0>0\) such that
\[
1-\operatorname{logit}^{(t)}_y(\mathbf{X})\ge c_0
\]
for all \(t\le T_1\).
Equivalently, we have that $1-\text{logit}_y(\mathbf{X}) = \Theta(1)$ holds for all $(\mathbf{X},y)\in\biguplus_{k=1}^\kappa(\mathcal{S}_1\cup\mathcal{S}_2)\biguplus\mathcal{S}_3$.

Then, in Stage 1, the following properties hold.
\begin{lemma}\label{lemma: j1}
	For any $t\in[0, T_1]$ with $T_1 = \Theta(\oldfrac{mn\log(d)}{\lambda^2\eta\kappa\sigma_0\sqrt{d}})$, we have
	\begin{enumerate}
		\item The attention scores are balanced, i.e.,
		\begin{align}
			\exp\left(-\mathcal{O}\left(\frac{mn\log^2(d)}{\lambda^2d^{3/2}\sigma_0}\right)\right)\le \frac{\alpha_1(\mathbf{Z}^{(t)},\mathbf{X})}{\alpha_2(\mathbf{Z}^{(t)},\mathbf{X})} \le \exp\left(\mathcal{O}\left(\frac{mn\log^2(d)}{\lambda^2d^{3/2}\sigma_0}\right)\right).
		\end{align}
		\item For all $i\in[N]$, $l_1, l_2,l_3 \in [m]$, the coefficients satisfy
		\begin{align}
			\gamma^{(T_1)}_{i,\mathcal{I}(\mathbf{b}_i),l_1} = \Theta\left(\frac{\log(d)}{\lambda\sigma_0\sqrt{d}}\right), 	\rho^{(T_1)}_{i,\mathcal{I}(\mathbf{c}_i),l_2} = \Theta\left(\frac{\log(d)}{\lambda\kappa\sigma_0\sqrt{d}}\right),
			\rho^{(T_1)}_{i,\mathcal{I}(\mathbf{b}_i),l_3} = \Theta\left(\frac{\log(d)}{\lambda\sigma_0\sqrt{d}}\right).
		\end{align}
		\item For all $i\in[N], j_1\in[d]\backslash\{\mathcal{I}(\mathbf{b}_i),\mathcal{I}(\mathbf{c}_i)\}_{i=1}^N,j_2\in[d]\backslash\{\mathbf{b}_i\}_{i=1}^N$ and $l\in [m]$, the coefficients satisfy
		\begin{align}
			-\mathcal{O}\left(\frac{\log(d)}{\lambda \sigma_0d}\right)\le \gamma^{(T_1)}_{i,j_1,l} \le 0, -\mathcal{O}\left(\frac{\log(d)}{\lambda \sigma_0d}\right)\le \rho^{(T_1)}_{i,j_2,l} \le 0.
		\end{align}
		\item For all $k\in[N]$, $l_1\in\mathcal{B}_{1,i}\cap\mathcal{B}_{2,i}$, and $l_2\in\mathcal{B}_{3,i}$ the coefficients satisfy
		\begin{align}
			\zeta_{1,\mathcal{I}(\mathbf{b}_k),l_1}^{(T_1)} = \Theta\left(\frac{\log(d)}{\lambda\sigma_0\sqrt{d}}\right), \zeta_{2,\mathcal{I}(\mathbf{c}_k),l_2}^{(T_1)} = \Theta\left(\frac{\log(d)}{\lambda\kappa\sigma_0\sqrt{d}}\right).
		\end{align}
		\item For all $k_1\in[d]\backslash\{\mathcal{I}(\mathbf{b}_i)\}_{i=1}^N$, $k_2 \in [d]\backslash\{\mathcal{I}(\mathbf{c}_i)\}_{i=1}^N$ and $l\in[m]$, the coefficients satisfy
		\begin{align}
			-\mathcal{O}\left(\frac{\log(d)}{\lambda\sigma_0d}\right)\le\zeta_{1,k_1,l}^{(T_1)} \le 0, -\mathcal{O}\left(\frac{\log(d)}{\lambda\sigma_0d}\right)\le\zeta_{2,k_2,l}^{(T_1)} \le 0.
		\end{align}
		\item For all $k\in[d]$, $l\in[m]$, the feature-layer weights remain unchanged, i.e.,for $l\in[m],k\in[d]$,
		\begin{align}
			\mathbf{w}^{(T_1)}_{k,l} = \mathbf{w}^{(0)}_{k,l}.
		\end{align}
		
	\end{enumerate}
\end{lemma}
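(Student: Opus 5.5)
We prove the six items jointly by induction on $t\in[0,T_1]$. The inductive hypothesis is that every item holds at time $t$ with the time-$t$ versions of the bounds, with each coefficient growing at most linearly in $t$. We use the decomposition of Lemma~\ref{lemma: decomposition} and the gradient formulas of Lemma~1.

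\textbf{Item 6.} Since $\mathbf{W}$ is frozen in Stage~1, item~6 is immediate. Consequently $\mathbf{W}_2\mathbf{W}^{(t)}=\mathbf{W}_2\mathbf{W}^{(0)}$ for all $t\le T_1$, and the $\mathbf{V}$-gradient for a sample $(\mathbf{X},y)$ reads
\begin{align}
\nabla_{\mathbf{V}}\mathcal{L}=\frac{\lambda}{m}\sum_{k,l}(\textnormal{logit}_k-\mathbb{I}(k=y))\,\mathbf{w}^{(0)}_{k,l}\,\mathbf{x}_a^\top .
\end{align}
Right-multiplying by an orthonormal token $\mathbf{z}$ shows that $\mathbf{V}^{(t)}\mathbf{z}$ moves only along the directions $\mathbf{w}^{(0)}_{k,l}$. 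The increment of $\gamma_{i,k,l}$ (respectively $\rho_{i,k,l}$, $\zeta_{q,k,l}$) per step is
\begin{align}
-\frac{\eta\lambda}{nm}\sum_{\mathbf{X}\ni\mathbf{z}}\alpha_{\mathbf{z}}(\mathbf{X})\,(\textnormal{logit}_k-\mathbb{I}(k=y)).
\end{align}

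\textbf{Item 1 (balanced attention).} The $\mathbf{Z}$-gradient carries a factor $\lambda/\sqrt d$ times $\mathbf{W}_2\mathbf{W}\mathbf{V}\mathbf{X}$. This quantity has size $\tilde O(\lambda\sigma_0^2\sqrt d\,\cdot)$, controlled through Lemma~\ref{lemma: inner} and the coefficient bounds of the inductive hypothesis. We bound the per-step change of $\mathbf{X}^\top\mathbf{Z}\mathbf{X}[-1]/\sqrt d$ and sum over $T_1$ steps. The initial logit gap $\mathbf{z}^\top\mathbf{Z}^{(0)}\mathbf{r}/\sqrt d=O(\sigma_0\sqrt{\log d})$ is negligible by Condition~\ref{condition:condition}. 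Together these give the stated $\exp(\pm O(mn\log^2 d/(\lambda^2 d^{3/2}\sigma_0)))$ ratio. Hence $\alpha_1,\alpha_2=1/2\pm o(1)$, which we use in every later item.

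\textbf{Items 2--5 (signal coefficients).} By the discussion preceding the lemma, $1-\textnormal{logit}_y=\Theta(1)$ throughout Stage~1, and since logits are $\tilde O(1)$ each wrong-class $\textnormal{logit}_k=\Theta(1/d)$. Count multiplicities. Token $\mathbf{a}_i$ appears only in $\kappa$ copies of $\mathcal{S}_1$, with label $\mathcal{I}(\mathbf{b}_i)$. Token $\mathbf{a}'_i$ appears in $\kappa$ copies of $\mathcal{S}_2$ (label $\mathcal{I}(\mathbf{b}_i)$) and once in $\mathcal{S}_3$ (label $\mathcal{I}(\mathbf{c}_i)$). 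Token $\mathbf{r}_1$ appears with every $\mathcal{I}(\mathbf{b}_k)$, and $\mathbf{r}_2$ with every $\mathcal{I}(\mathbf{c}_k)$. The correct-class coefficient therefore increases by $\Theta(\eta\lambda\kappa/(nm))$ per step, or by $\Theta(\eta\lambda/(nm))$ for the $\mathbf{c}_i$ direction. Multiplying by $T_1=\Theta(mn\log d/(\lambda^2\eta\kappa\sigma_0\sqrt d))$ gives $\Theta(\log d/(\lambda\sigma_0\sqrt d))$ and $\Theta(\log d/(\lambda\kappa\sigma_0\sqrt d))$ respectively, which are items~2 and~4. The wrong-class coefficients decrease monotonically at rate $O(\eta\lambda\kappa/(nmd))$, giving items~3 and~5 with the extra $1/\sqrt d$ factor and the sign $\le 0$.

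\textbf{Closing the induction.} The main obstacle is verifying that $\|\mathbf{f}^{(t)}\|_\infty=\tilde O(1)$ persists up to $T_1$, since the constant-residual premise depends on it. We plan to compute $f_y$ from the coefficients: the diagonal term is $\frac{\lambda}{m}\sum_l\gamma\|\mathbf{w}^{(0)}\|^2\approx\lambda\cdot\frac{\log d}{\lambda\sigma_0\sqrt d}\cdot\sigma_0^2 d$. Off-diagonal terms are controlled by Lemma~\ref{lemma: inner} and the condition $d\gtrsim m^3nN^2/\lambda^6$. We then choose the constant in $T_1$ small enough that the logit stays $O(\log d)$, which keeps $1-\textnormal{logit}_y\ge c_0$ and closes the induction.
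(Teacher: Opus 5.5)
Your proposal is correct and follows essentially the same route as the paper: item~6 from freezing $\mathbf{W}$, item~1 by summing per-step drifts of the relevant entries of $\mathbf{Z}$ over $T_1$ and absorbing the small initial gap, and items~2--5 by tracking the coefficients of $\mathbf{V}^{(t)}\mathbf{z}$ along the frozen directions $\mathbf{w}^{(0)}_{k,l}$, using a $\Theta(1)$ correct-class residual, small wrong-class logits, and the $\kappa$-versus-$1$ sample multiplicities. The differences are only bookkeeping: you keep the attention weights $\alpha_{\mathbf{z}}(\mathbf{X})$ in the increments and make the bounded-logit premise part of the induction, whereas the paper drops the $\alpha$ factors and simply asserts $\|\mathbf{f}^{(t)}\|_\infty=\tilde O(1)$ before the lemma.
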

\begin{proof}
	We first prove the first statement.
The update of matrix $\mathbf{Z}$ satisfies
\begin{equation}
	\begin{aligned}
		|\tilde{Z}^{(t+1)}_{\mathcal{I}(\mathbf{a}_i),\mathcal{I}(\mathbf{r}_1)} -\tilde{Z}^{(t)}_{\mathcal{I}(\mathbf{a}_i),\mathcal{I}(\mathbf{r}_1)}|
		=& \frac{\eta\lambda\kappa}{n\sqrt{d}}\alpha_1^{(t)}([\mathbf{a}_i,\mathbf{r}_1])\left|(\mathbf{e}_{\mathcal{I}(\mathbf{b}_i)}-\textbf{logit}^{(t)}([\mathbf{a}_i\;\;\mathbf{r}_1]))^\top\boldsymbol{\tau}^{(t)}\left(\mathbf{a}_i - \Xi^{(t)}([\mathbf{a}_i\;\; \mathbf{r}_1])\right)\right|\\
		=&\mathcal{O}\left(\frac{\eta\kappa\log(d)}{n\sqrt{d}}\right),
	\end{aligned}
\end{equation}
and 
\begin{equation}
	\begin{aligned}
		|\tilde{Z}^{(t+1)}_{\mathcal{I}(\mathbf{r}_1),\mathcal{I}(\mathbf{r}_1)} -\tilde{Z}^{(t)}_{\mathcal{I}(\mathbf{r}_1),\mathcal{I}(\mathbf{r}_1)}|
		=& \sum_{i=1}^N\frac{\eta\lambda\kappa}{n\sqrt{d}}\alpha_2^{(t)}([\mathbf{a}_i,\mathbf{r}_1])\left|(\mathbf{e}_{\mathcal{I}(\mathbf{b})}-\textbf{logit}^{(t)}([\mathbf{a}_i\;\;\mathbf{r}_1]))^\top\boldsymbol{\tau}^{(t)}\left(\mathbf{r}_1-\Xi^{(t)}([\mathbf{a}_i\;\;\mathbf{r}_1])\right)\right|\\
		=&\mathcal{O}\left(\frac{\eta\kappa\log(d)}{\sqrt{d}}\right).
	\end{aligned}
\end{equation}
Therefore, we have
\begin{align}
	|\tilde{Z}^{(T_1)}_{\mathcal{I}(\mathbf{a}_i),\mathcal{I}(\mathbf{r}_1)} -\tilde{Z}^{(0)}_{\mathcal{I}(\mathbf{a}_i),\mathcal{I}(\mathbf{r}_1)}| = \mathcal{O}\left(\frac{\eta\kappa\log(d)}{n\sqrt{d}}T_1\right) = \mathcal{O}\left(\frac{\log^2(d)}{\lambda^2d\sigma_0}\right),
\end{align}
and
\begin{align}
	|\tilde{Z}^{(T_1)}_{\mathcal{I}(\mathbf{r}_1),\mathcal{I}(\mathbf{r}_1)} -\tilde{Z}^{(0)}_{\mathcal{I}(\mathbf{r}_1),\mathcal{I}(\mathbf{r}_1)}| = \mathcal{O}\left(\frac{\eta\kappa\log(d)}{\sqrt{d}}T_1\right) = \mathcal{O}\left(\frac{mn\log^2(d)}{\lambda^2d\sigma_0}\right).
\end{align}
By initialization, we have
\begin{align}
	|\tilde{Z}^{(0)}_{\mathcal{I}(\mathbf{a}_i),\mathcal{I}(\mathbf{r}_1)} -\tilde{Z}^{(0)}_{\mathcal{I}(\mathbf{r}_1),\mathcal{I}(\mathbf{r}_1)}| = \mathcal{O}(\sigma_0).
\end{align}
We have
\begin{equation}
\begin{aligned}
	|\tilde{Z}^{(T_1)}_{\mathcal{I}(\mathbf{a}_i),\mathcal{I}(\mathbf{r}_1)} -\tilde{Z}^{(T_1)}_{\mathcal{I}(\mathbf{r}_1),\mathcal{I}(\mathbf{r}_1)}| \le& |\tilde{Z}^{(T_1)}_{\mathcal{I}(\mathbf{a}_i),\mathcal{I}(\mathbf{r}_1)} -\tilde{Z}^{(0)}_{\mathcal{I}(\mathbf{a}_i),\mathcal{I}(\mathbf{r}_1)}| + |\tilde{Z}^{(0)}_{\mathcal{I}(\mathbf{a}_i),\mathcal{I}(\mathbf{r}_1)} -\tilde{Z}^{(0)}_{\mathcal{I}(\mathbf{r}_1),\mathcal{I}(\mathbf{r}_1)}| + 	|\tilde{Z}^{(T_1)}_{\mathcal{I}(\mathbf{r}_1),\mathcal{I}(\mathbf{r}_1)} -\tilde{Z}^{(0)}_{\mathcal{I}(\mathbf{r}_1),\mathcal{I}(\mathbf{r}_1)}|\\
	=& \mathcal{O}\left(\frac{mn\log^2(d)}{\lambda^2d\sigma_0}\right).
\end{aligned}
\end{equation}
Therefore, we have
\begin{align}
	\frac{\alpha_1(\Xi^{(t)}([\mathbf{a}_i\;\;\mathbf{r}_1]))}{\alpha_2(\Xi^{(t)}([\mathbf{a}_i\;\;\mathbf{r}_1]))} \le \exp\left(\mathcal{O}\left(\frac{mn\log^2(d)}{\lambda^2d^{3/2}\sigma_0}\right)\right).
\end{align}
Similarly, we have
\begin{align}
	\frac{\alpha_1(\Xi^{(t)}([\mathbf{a}_i\;\;\mathbf{r}_1]))}{\alpha_2(\Xi^{(t)}([\mathbf{a}_i\;\;\mathbf{r}_1]))} \ge \exp\left(-\mathcal{O}\left(\frac{mn\log^2(d)}{\lambda^2d^{3/2}\sigma_0}\right)\right).
\end{align}
Similarly, we can prove that
\begin{align}
    &|\tilde{Z}^{(T_1)}_{\mathcal{I}(\mathbf{a}'_i),\mathcal{I}(\mathbf{r}_1)} -\tilde{Z}^{(T_1)}_{\mathcal{I}(\mathbf{r}_1),\mathcal{I}(\mathbf{r}_1)}| = \mathcal{O}\left(\frac{mn\log^2(d)}{\lambda^2d\sigma_0}\right),\\
    &|\tilde{Z}^{(T_1)}_{\mathcal{I}(\mathbf{a}'_i),\mathcal{I}(\mathbf{r}_2)} -\tilde{Z}^{(T_1)}_{\mathcal{I}(\mathbf{r}_2),\mathcal{I}(\mathbf{r}_2)}| = \mathcal{O}\left(\frac{mn\log^2(d)}{\lambda^2d\sigma_0}\right),
\end{align}
so that 
\begin{align}
    \frac{\alpha_1(\Xi^{(t)}([\mathbf{a}'_i\;\;\mathbf{r}_1]))}{\alpha_2(\Xi^{(t)}([\mathbf{a}'_i\;\;\mathbf{r}_1]))} \le \exp\left(\mathcal{O}\left(\frac{mn\log^2(d)}{\lambda^2d^{3/2}\sigma_0}\right)\right), \frac{\alpha_1(\Xi^{(t)}([\mathbf{a}'_i\;\;\mathbf{r}_1]))}{\alpha_2(\Xi^{(t)}([\mathbf{a}'_i\;\;\mathbf{r}_1]))} \ge \exp\left(-\mathcal{O}\left(\frac{mn\log^2(d)}{\lambda^2d^{3/2}\sigma_0}\right)\right),
\end{align}
\begin{align}
    \frac{\alpha_1(\Xi^{(t)}([\mathbf{a}'_i\;\;\mathbf{r}_2]))}{\alpha_2(\Xi^{(t)}([\mathbf{a}'_i\;\;\mathbf{r}_2]))} \le \exp\left(\mathcal{O}\left(\frac{mn\log^2(d)}{\lambda^2d^{3/2}\sigma_0}\right)\right), \frac{\alpha_1(\Xi^{(t)}([\mathbf{a}'_i\;\;\mathbf{r}_2]))}{\alpha_2(\Xi^{(t)}([\mathbf{a}'_i\;\;\mathbf{r}_2]))} \ge \exp\left(-\mathcal{O}\left(\frac{mn\log^2(d)}{\lambda^2d^{3/2}\sigma_0}\right)\right).
\end{align}

	Next, we prove the second, third and fourth statements.
	Based on the gradient form,	the updates of $\mathbf{V}^{(t)}\mathbf{a}_i,\mathbf{V}^{(t)}\mathbf{a}'_i, \mathbf{V}^{(t)}\mathbf{r}_1$ for all $t\in [0,T_1]$ satisfy
\begin{equation}
	\begin{aligned}
		\mathbf{V}^{(t+1)}\mathbf{a}_i =& \mathbf{V}^{(t)}\mathbf{a}_i + \frac{\lambda\eta\kappa}{nm}\sum_{k=1}^d\sum_{l=1}^{m}(\mathbb{I}(k=\mathcal{I}(\mathbf{b}_i))-\text{logit}^{(t)}_{k}(\Xi^{(t)}([\mathbf{a}_i\;\;\mathbf{r}_1])))\mathbf{w}^{(0)}_{k,l}\\
		=& \mathbf{V}^{(t)}\mathbf{a}_i + \frac{\lambda\eta\kappa}{mn}\sum_{l=1}^m(1-\text{logit}^{(t)}_{\mathcal{I}(\mathbf{b}_i)}(\Xi^{(t)}([\mathbf{a}_i\;\;\mathbf{r}_1]))) \mathbf{w}^{(0)}_{\mathcal{I}(\mathbf{b}_i),l}\\
		&-\frac{\lambda\eta\kappa}{mn}\sum_{l=1}^m\sum_{k\neq\mathcal{I}(\mathbf{b}_i)}\text{logit}_{k}^{(t)}(\Xi^{(t)}([\mathbf{a}_i\;\;\mathbf{r}_1]))\mathbf{w}_{k,l}^{(0)},
	\end{aligned}
\end{equation}
\begin{equation}
	\begin{aligned}
		\mathbf{V}^{(t+1)}\mathbf{a}'_i =& \mathbf{V}^{(t)}\mathbf{a}'_i + \frac{\lambda\eta}{nm}\sum_{k=1}^d\sum_{l=1}^{m}(\mathbb{I}(k=\mathcal{I}(\mathbf{c}_i))-\text{logit}^{(t)}_{k}(\Xi^{(t)}([\mathbf{a}'_i\;\;\mathbf{r}_2])))\mathbf{w}^{(0)}_{k,l}\\&+\frac{\lambda\eta\kappa}{nm}\sum_{k=1}^d\sum_{l=1}^{m}(\mathbb{I}(k=\mathcal{I}(\mathbf{b}_i))-\text{logit}^{(t)}_{k}(\Xi^{(t)}([\mathbf{a}'_i\;\;\mathbf{r}_1])))\mathbf{w}^{(0)}_{k,l},
	\end{aligned}
\end{equation}
\begin{equation}
	\begin{aligned}
		\mathbf{V}^{(t+1)}\mathbf{r}_1
		= \mathbf{V}^{(t)}\mathbf{r}_1 &+ \frac{\lambda\eta\kappa}{nm}\sum_{i=1}^N\sum_{k=1}^d\sum_{l=1}^{m}(\mathbb{I}(k=\mathcal{I}(\mathbf{b}_i))-\text{logit}^{(t)}_{k}(\Xi^{(t)}([\mathbf{a}_i\;\;\mathbf{r}_1])))\mathbf{w}^{(0)}_{k,l}\\
        &+ \frac{\lambda\eta\kappa}{nm}\sum_{i=1}^N\sum_{k=1}^d\sum_{l=1}^{m}(\mathbb{I}(k=\mathcal{I}(\mathbf{b}_i))-\text{logit}^{(t)}_{k}(\Xi^{(t)}([\mathbf{a}'_i\;\;\mathbf{r}_1])))\mathbf{w}^{(0)}_{k,l},
	\end{aligned}
\end{equation}
and
\begin{equation}
	\begin{aligned}
		&\mathbf{V}^{(t+1)}\mathbf{r}_2
		= \mathbf{V}^{(t)}\mathbf{r}_2 + \frac{\lambda\eta}{nm}\sum_{i=1}^N\sum_{k=1}^d\sum_{l=1}^{m}(\mathbb{I}(k=\mathcal{I}(\mathbf{c}_i))-\text{logit}^{(t)}_{k}(\Xi^{(t)}([\mathbf{a}'_i\;\;\mathbf{r}_2])))\mathbf{w}^{(0)}_{k,l}.
	\end{aligned}
\end{equation}
Then, for all $t\in[0,T_1-1]$, we can simply conclude that
\begin{itemize}
	\item for all $i\in[N]$ and $l\in [m]$,
    \begin{equation}
	\begin{aligned}
		\gamma^{(t+1)}_{i,\mathcal{I}(\mathbf{b}_i),l} =& \gamma^{(t)}_{i,\mathcal{I}(\mathbf{b}_i),l} +  \frac{\lambda\eta \kappa}{mn}(1-\text{logit}^{(t)}_{\mathcal{I}(\mathbf{b}_i)}(\Xi^{(t)}([\mathbf{a}_i\;\;\mathbf{r}_1])))\\
        =& \gamma^{(t)}_{i,\mathcal{I}(\mathbf{b}_i),l} + \Theta(\frac{\lambda\eta\kappa}{mn}),\\
 	\end{aligned}
    \end{equation}
    \begin{equation}
	\begin{aligned}   
    &\rho^{(t+1)}_{i,\mathcal{I}(\mathbf{b}_i),l} = \rho^{(t)}_{i,\mathcal{I}(\mathbf{b}_i),l} +  \frac{\lambda\eta\kappa}{mn}(1-\text{logit}^{(t)}_{\mathcal{I}(\mathbf{b}_i)}(\Xi^{(t)}([\mathbf{a}'_i\;\;\mathbf{r}_1])))   =\rho^{(t)}_{i,\mathcal{I}(\mathbf{b}_i),l} + \Theta(\frac{\lambda\eta\kappa}{mn}).
	\end{aligned}
    \end{equation} 
    \begin{equation}
	\begin{aligned}   
    &\rho^{(t+1)}_{i,\mathcal{I}(\mathbf{c}_i),l} = \rho^{(t)}_{i,\mathcal{I}(\mathbf{c}_i),l} +  \frac{\lambda\eta\kappa}{mn}(1-\text{logit}^{(t)}_{\mathcal{I}(\mathbf{c}_i)}(\Xi^{(t)}([\mathbf{a}'_i\;\;\mathbf{r}_2])))   =\rho^{(t)}_{i,\mathcal{I}(\mathbf{c}_i),l} + \Theta(\frac{\lambda\eta\kappa}{mn}).
	\end{aligned}
    \end{equation}
	\item for all $i,j\in[N]$ and $j\neq i$,
	\begin{align}
		&\gamma^{(t+1)}_{i,\mathcal{I}(\mathbf{b}_j),l} = \gamma^{(t)}_{i,,\mathcal{I}(\mathbf{b}_j),l} -\frac{\lambda\eta\kappa}{mn}\text{logit}_{\mathcal{I}(\mathbf{b}_i)}^{(t)}(\Xi^{(t)}([\mathbf{a}_j\;\;\mathbf{r}_1])),\\
        &\rho^{(t+1)}_{i,\mathcal{I}(\mathbf{b}_j),l} = \rho^{(t)}_{i,\mathcal{I}(\mathbf{b}_j),l} -\frac{\lambda\eta\kappa}{mn}\text{logit}_{\mathcal{I}(\mathbf{b}_i)}^{(t)}(\Xi^{(t)}([\mathbf{a}'_j\;\;\mathbf{r}_2])).
	\end{align} 
	\item for all $i\in[N]$ and $l\in[m]$,
	\begin{align}
		\zeta_{1,\mathcal{I}(\mathbf{b}_i),l}^{(t+1)} = \zeta_{1,\mathcal{I}(\mathbf{b}_i),l}^{(t)} +& \frac{\lambda\eta\kappa}{mn}(1-\text{logit}^{(t)}_{\mathcal{I}(\mathbf{b}_i)}(\Xi^{(t)}([\mathbf{a}_i\;\;\mathbf{r}_1]))) - \frac{\lambda\eta\kappa}{mn}\sum_{k\neq \mathcal{I}(\mathbf{b}_i)} \text{logit}^{(t)}_{\mathcal{I}(\mathbf{b}_i)}(\Xi^{(t)}([\mathbf{a}_k\;\;\mathbf{r}_1]))\\
        +& \frac{\lambda\eta\kappa}{mn}(1-\text{logit}^{(t)}_{\mathcal{I}(\mathbf{b}_i)}(\Xi^{(t)}([\mathbf{a}'_i\;\;\mathbf{r}_1]))) - \frac{\lambda\eta\kappa}{mn}\sum_{k\neq \mathcal{I}(\mathbf{b}_i)} \text{logit}^{(t)}_{\mathcal{I}(\mathbf{b}_i)}(\Xi^{(t)}([\mathbf{a}'_k\;\;\mathbf{r}_1])).
	\end{align}
	\begin{align}
		&\zeta_{1,\mathcal{I}(\mathbf{c}_i),l}^{(t+1)} = \zeta_{1,\mathcal{I}(\mathbf{c}_i),l}^{(t)} + \frac{\lambda\eta\kappa}{mn}(1-\text{logit}^{(t)}_{\mathcal{I}(\mathbf{c}_i)}(\Xi^{(t)}([\mathbf{a}'_i\;\;\mathbf{r}_2]))) - \frac{\lambda\eta\kappa}{mn}\sum_{k\neq \mathcal{I}(\mathbf{c}_i)} \text{logit}^{(t)}_{\mathcal{I}(\mathbf{c}_i)}(\Xi^{(t)}([\mathbf{a}'_k\;\;\mathbf{r}_2])).
	\end{align}
\end{itemize}
As $\text{logit}_{k}(\Xi^{(t)}([\mathbf{a}_i\;\;\mathbf{r}_1])) <\frac{2}{\sqrt{d}}$ for all $k\in[d]\backslash\{\mathcal{I}(\mathbf{b}_i)\}$ due to the fact that $\text{logit}_{k}(\Xi^{(t)}([\mathbf{a}_i\;\;\mathbf{r}_1]))\le \text{logit}_{\mathcal{I}(\mathbf{b}_i)}(\Xi^{(t)}([\mathbf{a}_i\;\;\mathbf{r}_1]))/2$, we have
\begin{itemize}
	\item for all $i\in[N]$ and $l\in[m]$,
	\begin{align}
		&\gamma^{(T_1)}_{i,\mathcal{I}(\mathbf{b}_i),l} =  \Theta\left(\frac{\lambda\eta\kappa}{mn}T_1\right) = \Theta\left(\frac{\log(d)}{\lambda\sqrt{d}\sigma_0}\right),\\
        &\rho^{(T_1)}_{i,\mathcal{I}(\mathbf{b}_i),l} =  \Theta\left(\frac{\lambda\eta}{mn}T_1\right) = \Theta\left(\frac{\log(d)}{\kappa\lambda\sqrt{d}\sigma_0}\right),\\
        &\rho^{(T_1)}_{i,\mathcal{I}(\mathbf{c}_i),l} =  \Theta\left(\frac{\lambda\eta}{mn}T_1\right) = \Theta\left(\frac{\log(d)}{\kappa\lambda\sqrt{d}\sigma_0}\right),
	\end{align}
	\item for all $i\in[N]$ and $j_1\notin [d]\backslash\{\mathcal{I}(\mathbf{b}_i)\}_{i=1}^N, j_2 \notin [d]\backslash\{\mathcal{I}(\mathbf{c}_i)\}_{i=1}^N$,
	\begin{align}
		&-\mathcal{O}\left(\frac{\lambda\eta}{mn\sqrt{d}}T_1\right) = -\mathcal{O}\left(\frac{\log(d)}{\lambda d\sigma_0}\right) \le\gamma^{(T_1)}_{i,j_1,l} \le \gamma^{(0)}_{i,j_1,l} = 0,\\
        &-\mathcal{O}\left(\frac{\lambda\eta}{mn\sqrt{d}}T_1\right) = -\mathcal{O}\left(\frac{\log(d)}{\lambda d\sigma_0}\right) \le\rho^{(T_1)}_{i,j_2,l} \le \rho^{(0)}_{i,j_2,l} = 0,
	\end{align} 
	\item for all $i\in[N]$ and $l\in[m]$,
	\begin{align}
		&\zeta_{1,\mathcal{I}(\mathbf{b}_i),l}^{(T_1)} = \Theta\left(\frac{\eta\lambda\kappa}{mn}T_1\right) = \Theta\left(\frac{\log(d)}{\lambda\sqrt{d}\sigma_0}\right),\\
        &\zeta_{2,\mathcal{I}(\mathbf{c}_i),l}^{(T_1)} = \Theta\left(\frac{\eta\lambda}{mn}T_1\right) = \Theta\left(\frac{\log(d)}{\lambda\sqrt{d}\sigma_0\kappa}\right).
	\end{align}
    \item for all $j_1\notin [d]\backslash\{\mathcal{I}(\mathbf{b}_i)\}_{i=1}^N$ and $j_2 \notin [d]\backslash\{\mathcal{I}(\mathbf{c}_i)\}_{i=1}^N$, 
    \begin{align}
        &-\mathcal{O}\left(\frac{\lambda\eta}{mn\sqrt{d}}T_1\right) = -\mathcal{O}\left(\frac{\log(d)}{\lambda d\sigma_0}\right) \le\zeta_{1,j_1,l}^{(T_1)} \le \zeta^{(0)}_{1,j_1,l} = 0,\\
        &-\mathcal{O}\left(\frac{\lambda\eta}{mn\sqrt{d}}T_1\right) = -\mathcal{O}\left(\frac{\log(d)}{\lambda d\sigma_0}\right) \le\zeta_{2,j_2,l}^{(T_1)}\le \zeta^{(0)}_{2,j_2,l} = 0.
    \end{align}
\end{itemize}

The sixth statement holds as the feature layer is not updated.

This completes the proof.
\end{proof}

\subsection{Convergence of Stage-2 training (joint training on \texorpdfstring{$\biguplus_{k=1}^\kappa(\mathcal{S}_1\cup\mathcal{S}_2)\biguplus\mathcal{S}_3$}{S\_1 U S\_2 U S\_3})}

\begin{lemma}
	There exists an iteration $t \in(T_1, T_1+T_2]$, where $T_2 = \Theta(\oldfrac{mN^2}{\lambda^2d\sigma_0^2\eta })$ such that
	\begin{align}
		\mathcal{L}_{\biguplus_{k=1}^\kappa(\mathcal{S}_1 \cup \mathcal{S}_2) \biguplus \mathcal{S}_3}(\mathbf{Z}^{(t)},\mathbf{V}^{(t)},\mathbf{W}^{(t)}) \le \frac{0.01}{n}.
	\end{align}
\end{lemma}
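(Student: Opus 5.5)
In Stage 2 the attention matrix $\mathbf{Z}^{(T_1)}$ and the value matrix $\mathbf{V}^{(T_1)}$ are frozen. The model is then linear in $\mathbf{W}$: $\mathbf{f}^{(t)}(\mathbf{X})=\lambda\mathbf{W}_2\mathbf{W}^{(t)}\mathbf{u}(\mathbf{X})$, where $\mathbf{u}(\mathbf{X}):=\mathbf{V}^{(T_1)}\Xi^{(T_1)}(\mathbf{X})$ is a fixed feature vector for each training prompt. Cross-entropy composed with a linear map is convex in $\mathbf{W}$, so I would use the standard convex gradient-descent argument with a reference point. I would compare the iterates against a comparator $\mathbf{W}^\ast$ and use the telescoping inequality
\begin{align}
\frac{1}{T_2}\sum_{t=T_1}^{T_1+T_2-1}\mathcal{L}_{\mathcal{S}}(\mathbf{W}^{(t)})\le \mathcal{L}_{\mathcal{S}}(\mathbf{W}^\ast)+\frac{\|\mathbf{W}^{(T_1)}-\mathbf{W}^\ast\|_F^2}{2\eta T_2}+\frac{\eta}{T_2}\sum_{t}\|\nabla\mathcal{L}_{\mathcal{S}}(\mathbf{W}^{(t)})\|_F^2 .
\end{align}
The last term is absorbed by the usual smoothness/descent bound. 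This needs the step size to satisfy $\eta\lesssim 1/(\lambda^2\max_{\mathbf{X}}\|\mathbf{u}(\mathbf{X})\|^2/m)$, which follows from Condition~\ref{condition:condition} together with the bound $\|\mathbf{u}(\mathbf{X})\|_2^2=\tilde{\mathcal{O}}(\sigma_0^2 dm\cdot\text{coeffs}^2)$ obtained from Lemma~\ref{lemma: j1} and Lemma~\ref{lemma: inner}. The minimum over $t$ is then at most the average, which gives an iteration with small loss.

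The key step is constructing $\mathbf{W}^\ast$. First I would show the fixed features are well separated: by Lemma~\ref{lemma: j1}(1) the attention weights are nearly $1/2$. By Lemma~\ref{lemma: decomposition} and items (2)–(5) of Lemma~\ref{lemma: j1}, each $\mathbf{u}(\mathbf{X})$ equals $\tfrac12$ times a large common component $\Theta(\log d/(\lambda\sigma_0\sqrt d))\sum_l\mathbf{w}^{(0)}_{y,l}$ along its label direction, plus the random initial parts $\mathbf{V}^{(0)}\mathbf{a}_i$, $\mathbf{V}^{(0)}\mathbf{r}_q$, plus small negative coefficients on the other label directions. For $\mathcal{S}_3$ samples the label component is smaller by a factor $\kappa$. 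I would then take $\mathbf{W}^\ast=\mathbf{W}^{(T_1)}+\mathbf{\Delta}$ with
\begin{align}
\mathbf{w}^\ast_{k,l}-\mathbf{w}^{(T_1)}_{k,l}=\frac{\theta}{\lambda}\sum_{(\mathbf{X},y)\in\mathcal{S}:\,y=k}\frac{\mathbf{u}(\mathbf{X})}{\|\mathbf{u}(\mathbf{X})\|_2^2}.
\end{align}
This is a sum over the distinct prompts, each of which has a single label. Using the near-orthogonality of the features, which follows from the concentration bounds in Lemmas~\ref{lemma: inner} and 3 and the large-$d$ condition $d\gtrsim m^3nN^2/\lambda^6$, the margin $f_y-\max_{k\ne y}f_k$ on every training prompt is at least $\theta-\mathcal{O}(\text{cross terms})$. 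Choosing $\theta=\Theta(\log(\kappa N))$ makes $\mathcal{L}_{\mathcal{S}}(\mathbf{W}^\ast)\le 0.005/n$. Note that the initial logits are only $\tilde{\mathcal{O}}(1)$, so they are dominated by $\theta$.

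Finally, $\|\mathbf{\Delta}\|_F^2\approx \theta^2\sum_{\mathbf{X}}1/(\lambda^2\|\mathbf{u}(\mathbf{X})\|^2)$. Since $\|\mathbf{u}(\mathbf{X})\|^2\gtrsim\sigma_0^2 d$, counting prompts by the multiplicity structure gives $\|\mathbf{\Delta}\|_F^2=\tilde{\mathcal{O}}(mN^2/(\lambda^2 d\sigma_0^2))$ up to the logarithmic factor. The stated $T_2$ then makes $\|\mathbf{\Delta}\|_F^2/(2\eta T_2)\le 0.005/n$, with the constants hidden in $\Theta$.

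The main obstacle I expect is the margin computation for $\mathbf{W}^\ast$. The cross terms $\langle\mathbf{u}(\mathbf{X}),\mathbf{u}(\mathbf{X}')\rangle$ are \emph{not} small for pairs that share a relation token (all prompts with $\mathbf{r}_1$ share $\mathbf{V}^{(T_1)}\mathbf{r}_1$), and they are not small for $[\mathbf{a}_i,\mathbf{r}_1]$ versus $[\mathbf{a}'_i,\mathbf{r}_1]$, which are aligned by design. The second case is harmless because both prompts carry the same label $\mathcal{I}(\mathbf{b}_i)$. For the shared relation component, I would first try replacing $\mathbf{u}(\mathbf{X})$ in the construction by its projection orthogonal to $\mathrm{span}\{\mathbf{V}^{(T_1)}\mathbf{r}_1,\mathbf{V}^{(T_1)}\mathbf{r}_2\}$. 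Alternatively, I would take the least-squares dual solution $\mathbf{\Delta}=\mathbf{Y}G^{-1}U^\top$, where $G$ is the Gram matrix of the $2N+N$ distinct features with coinciding-label rows merged, and bound $\lambda_{\min}(G)\gtrsim\sigma_0^2 d$ via Gershgorin using the concentration lemmas. This keeps the norm bound above intact.
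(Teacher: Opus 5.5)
There is a genuine gap. Your outer argument matches the paper's: frozen features, convexity of the loss in $\mathbf{W}$, telescoping against a comparator (recall $\mathbf{W}^{(T_1)}=\mathbf{W}^{(0)}$), and absorbing the $\eta^2\|\nabla\mathcal{L}\|_F^2$ term. But the step you call the key one is never delivered: none of your constructions gives a comparator $\mathbf{W}^*$ with small loss and $\|\mathbf{W}^*-\mathbf{W}^{(0)}\|_F^2=\mathcal{O}(mN\log^2(1/\epsilon)/(\lambda^2 d\sigma_0^2))$.

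\textbf{Why your comparators fail.} For the plain per-label sum of normalized features, the cross terms are not a correction but the leading order. By item 4 of Lemma~\ref{lemma: j1}, $\mathbf{V}^{(T_1)}\mathbf{r}_1$ has weight $\Theta(\log(d)/(\lambda\sigma_0\sqrt{d}))$ on $\mathbf{w}^{(0)}_{\mathcal{I}(\mathbf{b}_k),l}$ for every $k$. Hence $\|\mathbf{V}^{(T_1)}\mathbf{r}_1\|_2^2=\Theta(Nm\log^2(d)/\lambda^2)$, which is $N$ times the label-specific part. Any two $\mathbf{r}_1$-features then have cosine $1-\mathcal{O}(1/N)$. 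The margin against competing $\mathcal{I}(\mathbf{b}_j)$ labels is only $\mathcal{O}(\theta/N)$, and its sign depends on constants that Lemma~\ref{lemma: j1} leaves unspecified.

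The projection fix removes exactly the information you need. Let $P^\perp$ project out $\mathrm{span}\{\mathbf{V}^{(T_1)}\mathbf{r}_1,\mathbf{V}^{(T_1)}\mathbf{r}_2\}$. The features of $[\mathbf{a}'_i\;\mathbf{r}_1]$ (label $\mathcal{I}(\mathbf{b}_i)$) and $[\mathbf{a}'_i\;\mathbf{r}_2]$ (label $\mathcal{I}(\mathbf{c}_i)$) both become multiples of $P^\perp\mathbf{V}^{(T_1)}\mathbf{a}'_i$. On the attribution prompt your comparator therefore gives $\mathcal{I}(\mathbf{b}_i)$ essentially the same logit as $\mathcal{I}(\mathbf{c}_i)$. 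Adding the $[\mathbf{a}_i\;\mathbf{r}_1]$ term, which is aligned with $\mathbf{a}'_i$ by Proposition~\ref{prop: joint_fea}, makes it strictly larger, so $\mathcal{S}_3$ is not fit.

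The dual least-squares fix fails structurally. Each of the $3N$ distinct features combines two of the $2N+2$ vectors $\mathbf{V}^{(T_1)}\mathbf{a}_j,\mathbf{V}^{(T_1)}\mathbf{a}'_j,\mathbf{V}^{(T_1)}\mathbf{r}_q$. So $\mathrm{rank}\,G\le 2N+2<3N$ and $G^{-1}$ does not exist. Worse, eliminating $\mathbf{V}^{(T_1)}\mathbf{a}'_i$ puts the $N$ vectors $\mathbf{u}([\mathbf{a}'_i\;\mathbf{r}_1])/\alpha_i-\mathbf{u}([\mathbf{a}'_i\;\mathbf{r}_2])/\alpha'_i$ (with $\alpha_i,\alpha'_i$ the entity attention weights) in a two-dimensional space. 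For $N\ge 3$ this forces linear relations incompatible with one-hot targets, so no exact interpolant exists. Gershgorin could not give $\lambda_{\min}(G)\gtrsim\sigma_0^2 d$ in any case: every off-diagonal entry between two $\mathbf{r}_1$-prompts carries the shared term $\approx\|\mathbf{V}^{(T_1)}\mathbf{r}_1\|_2^2/4$. Merging the two $\mathcal{I}(\mathbf{b}_i)$ rows does not help, because the loss needs a margin on each prompt separately.

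\textbf{The missing idea.} Build the comparator at the token level from the random initial value vectors, as the paper does. Set $\hat{\mathbf{a}}_i=\mathbf{V}^{(0)}\mathbf{a}_i/\|\mathbf{V}^{(0)}\mathbf{a}_i\|_2^2$, and define $\hat{\mathbf{a}}'_i$ and $\hat{\mathbf{r}}_q$ likewise. Take $\mathbf{w}^*_{k,l}-\mathbf{w}^{(0)}_{k,l}$ equal to
\begin{itemize}
\item $(C\log(1/\epsilon)/\lambda)(\hat{\mathbf{a}}_i+\hat{\mathbf{a}}'_i+\hat{\mathbf{r}}_1)$ for $k=\mathcal{I}(\mathbf{b}_i)$;
\item $(C\log(1/\epsilon)/\lambda)(\hat{\mathbf{a}}'_i+\hat{\mathbf{r}}_2)$ for $k=\mathcal{I}(\mathbf{c}_i)$;
\item $\mathbf{0}$ otherwise.
\end{itemize}
By Lemma~\ref{lemma: decomposition}, each frozen feature is $\alpha_1\mathbf{V}^{(0)}\mathbf{x}_1+\alpha_2\mathbf{V}^{(0)}\mathbf{x}_2$ plus a combination of the $\mathbf{w}^{(0)}_{k,l}$. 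By item 1 of Lemma~\ref{lemma: j1}, $\alpha_1,\alpha_2\approx 1/2$. Each hatted vector pairs to $\approx 1$ with its own token's random vector. It interacts with everything else only through small cross terms, by near-orthogonality of the $\mathbf{V}^{(0)}$ vectors and independence of $\mathbf{V}^{(0)}$ from $\mathbf{W}^{(0)}$.

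So each token casts half a vote. The correct label receives $\approx C\log(1/\epsilon)$ and every competitor at most $\approx C\log(1/\epsilon)/2$. For example, on $[\mathbf{a}'_i\;\mathbf{r}_2]$ the label $\mathcal{I}(\mathbf{b}_i)$ collects only the $\mathbf{a}'_i$ half, and $\mathcal{I}(\mathbf{c}_j)$ only the $\mathbf{r}_2$ half. The relation token is what separates the $\mathbf{b}$-labels from the $\mathbf{c}$-labels. The large shared Stage-1 components that sink your sample-level constructions never enter the margin.

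Only $2Nm$ rows move, each by $\mathcal{O}(\log(1/\epsilon)/(\lambda\sigma_0\sqrt{d}))$. This gives $\|\mathbf{W}^*-\mathbf{W}^{(0)}\|_F^2=\mathcal{O}(mN\log^2(1/\epsilon)/(\lambda^2 d\sigma_0^2))$, which is the norm bound your telescoping argument needs for the stated $T_2$.
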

\begin{proof}
	Let 
	\begin{equation}
		\mathbf{w}^*_{k,l} = \mathbf{w}^{(0)}_{k,l} + 
	\begin{cases}
 \oldfrac{C}{\lambda}\cdot\log(1/\epsilon)(\oldfrac{\mathbf{V}^{(0)}\mathbf{a}_i}{\|\mathbf{V}^{(0)}\mathbf{a}_i\|_2^2} + \oldfrac{\mathbf{V}^{(0)}\mathbf{a}'_i}{\|\mathbf{V}^{(0)}\mathbf{a}'_i\|_2^2}+ \oldfrac{\mathbf{V}^{(0)}\mathbf{r}_1}{\|\mathbf{V}^{(0)}\mathbf{r}_1\|_2^2}) & \text{if } k=\mathcal{I}(\mathbf{b}_i),\\
   \oldfrac{C}{\lambda}\cdot\log(1/\epsilon)(\oldfrac{\mathbf{V}^{(0)} \mathbf{a}'_i}{\|\mathbf{V}^{(0)} \mathbf{a}'_i\|_2^2}+\oldfrac{\mathbf{V}^{(0)} \mathbf{r}_2}{\|\mathbf{V}^{(0)} \mathbf{r}_2\|_2^2}) & \text{if } k=\mathcal{I}(\mathbf{c}_i),\\
  0& k\notin \{\mathcal{I}(\mathbf{b}_i), \mathcal{I}(\mathbf{c}_i)\}_{i=1}^N.
	\end{cases}
	\end{equation}
	 for all $k\in[d], i\in[N]$ and $l\in[m]$.
     Although \(\mathbf{W}^*\) is written in terms of the initialization vectors, this is sufficient for the Stage-2 argument since \(\mathbf{W}^*\) separates the frozen Stage-1 features with large margins. 

	 For simplicity, we denote $\mathcal{S} = \biguplus_{k=1}^\kappa(\mathcal{S}_1\cup\mathcal{S}_2)\biguplus\mathcal{S}_3$ and $n = |\mathcal{S}|$.
	Then, we have
	\begin{equation}\label{equ: increment1}
		\begin{aligned}
			&\left\|\mathbf{W}^{(t)} - \mathbf{W}^*\right\|_F^2 - \left\|\mathbf{W}^{(t+1)} - \mathbf{W}^*\right\|_F^2\\
			=&  2\eta\langle\nabla \mathcal{L}_{\mathcal{S}}(\mathbf{Z}^{(t)},\mathbf{V}^{(t)},\mathbf{W}^{(t)}), \mathbf{W}^{(t)} - \mathbf{W}^*\rangle - \eta^2 \left\|\nabla \mathcal{L}_{\mathcal{S}}(\mathbf{Z}^{(t)},\mathbf{V}^{(t)},\mathbf{W}^{(t)})\right\|_F^2\\
			=& \frac{2\eta}{n} \sum_{(\mathbf{X},y)\in\mathcal{S}}\sum_{k=1}^d \frac{\partial \mathcal{L}}{\partial f_k} \langle\nabla f_k(\mathbf{Z}^{(t)},\mathbf{V}^{(t)},\mathbf{W}^{(t)},\mathbf{X}),\mathbf{W}^{(t)}\rangle -
			\frac{2\eta}{n} \sum_{(\mathbf{X},y)\in\mathcal{S}}\sum_{k=1}^d \frac{\partial \mathcal{L}}{\partial f_k} \underbrace{\langle \nabla f_k(\mathbf{Z}^{(t)},\mathbf{V}^{(t)},\mathbf{W}^{(t)},\mathbf{X}), \mathbf{W}^*\rangle}_{A}\\
			&- \underbrace{\eta^2 \left\|\nabla \mathcal{L}_\mathcal{S}(\mathbf{Z}^{(t)},\mathbf{V}^{(t)},\mathbf{W}^{(t)})\right\|_F^2}_{B}.
		\end{aligned}
	\end{equation}
	Next, we bound the terms $A$ and $B$.
	For the term $A$, we have
	\begin{align}\label{equ: A1}
		A 
		\begin{cases}\ge \frac{C\log(1/\epsilon)}{\lambda} - \mathcal{O}(\oldfrac{\sqrt{N}\log(d)\log(1/\epsilon)}{\lambda^2\sqrt{md}\sqrt{d\sigma_0^2}})& \text{if } k = y, \\
			\le \mathcal{O}(\oldfrac{\sqrt{N}\log(d)\log(1/\epsilon)}{\lambda^2\sqrt{md}\sqrt{d\sigma_0^2}}) & \text{if } k \neq y.
		\end{cases}
	\end{align}
	For the term $B$, we have
	\begin{equation}\label{equ: B1}
	\begin{aligned}
		B \le& \eta^2 \left[ \frac{1}{n} \sum_{(\mathbf{X},y)\in\mathcal{S}}\sum_{k\in[d]}\left|\frac{\partial \mathcal{L}(\mathbf{Z}^{(t)},\mathbf{V}^{(t)},\mathbf{W}^{(t)},\mathbf{X},y)}{\partial f_k}\right| \left\| \nabla f_k(\mathbf{Z}^{(t)},\mathbf{V}^{(t)},\mathbf{W}^{(t)},\mathbf{X})\right\|_F\right]^2\\
		\le& 2\eta^2\max_{(\mathbf{X},y)\in\mathcal{S}}\|\mathbf{V}^{(t)}\Xi^{(t)}(\mathbf{X})\|_2^2\cdot\mathcal{L}_\mathcal{S}(\mathbf{Z},\mathbf{V},\mathbf{W}).
	\end{aligned}
	\end{equation}
	Next, we need to bound $\max_{(\mathbf{X},y)\in\mathcal{S}}\|\mathbf{V}^{(t)}\Xi^{(t)}(\mathbf{X})\|_2^2$. 
	By Lemma \ref{lemma: j1}, We have
	\begin{align}
		\max_{(\mathbf{X},y)\in\mathcal{S}}\|\mathbf{V}^{(t)}\Xi^{(t)}(\mathbf{X})\|_2^2 = \max_{(\mathbf{X},y)\in\mathcal{S}}\|\mathbf{V}^{(T_1)}\Xi^{(T_1)}(\mathbf{X})\|_2^2 = \mathcal{O}(\oldfrac{Nm\log^2(d)}{\lambda^2}).
	\end{align}
	
	Combining (\ref{equ: A1}), (\ref{equ: B1}), and (\ref{equ: increment1}), we have
	\begin{equation}\label{equ: descent1}
		\begin{aligned}
			&\left\|\mathbf{W}^{(t)} - \mathbf{W}^*\right\|_F^2 - \left\|\mathbf{W}^{(t+1)} - \mathbf{W}^*\right\|_F^2\\
			=&  \frac{2\eta}{n} \sum_{(\mathbf{X},y)\in\mathcal{S}}\left(\sum_{k=1}^d \frac{\partial \mathcal{L}}{\partial f_k} \langle\nabla f_k(\mathbf{Z}^{(t)},\mathbf{V}^{(t)},\mathbf{W}^{(t)},\mathbf{X}),\mathbf{W}^{(t)}\rangle - \sum_{k=1}^d \frac{\partial \mathcal{L}}{\partial f_k} \langle \nabla f_k(\mathbf{Z}^{(t)},\mathbf{V}^{(t)},\mathbf{W}^{(t)},\mathbf{X}), \mathbf{W}^*\rangle\right)\\
			&- \eta^2 \left\|\nabla \mathcal{L}_\mathcal{S}(\mathbf{Z}^{(t)},\mathbf{V}^{(t)},\mathbf{W}^{(t)})\right\|_F^2\\
			\overset{(a)}{\ge}& \frac{2\eta}{n}\sum_{(\mathbf{X},y)\in\mathcal{S}}\left(\mathcal{L}(\mathbf{Z}^{(t)},\mathbf{V}^{(t)},\mathbf{W}^{(t)},\mathbf{X}_i,y_i)-\log(1/\epsilon)\right) - \eta\mathcal{L}_\mathcal{S}(\mathbf{Z}^{(t)},\mathbf{V}^{(t)},\mathbf{W}^{(t)})\\
			=&\eta \mathcal{L}_\mathcal{S}(\mathbf{Z}^{(t)},\mathbf{V}^{(t)},\mathbf{W}^{(t)}) - 2\eta \epsilon,
		\end{aligned}
	\end{equation}
	where $(a)$ is by the homogeneity of the feature layer and convexity of the cross-entropy function.
	
	Rearranging (\ref{equ: descent1}), we have
	\begin{align}
		\frac{1}{T_2+1}\sum_{t'=0}^{T_2}\mathcal{L}_\mathcal{S}(\mathbf{Z}^{(T_1+t')},\mathbf{V}^{(T_1+t')},\mathbf{W}^{(T_1+t')}) \le \frac{\left\|\mathbf{W}^{(0)} - \mathbf{W}^*\right\|_F^2}{\eta (T_2+1)} + 2\epsilon.
	\end{align}
	Here, we have
	\begin{align}
		\left\|\mathbf{W}^{(0)} - \mathbf{W}^*\right\|_F^2 = \mathcal{O}\left(\frac{mN\log^2(1/\epsilon)}{\lambda^2d\sigma_0^2}\right).
	\end{align}
	As a result, we have
	\begin{align}
		\frac{1}{T_2+1}\sum_{t'=0}^{T_2}\mathcal{L}_\mathcal{S}(\mathbf{Z}^{(T_1+t')},\mathbf{V}^{(T_1+t')},\mathbf{W}^{(T_1+t')}) \le 3\epsilon.
	\end{align}
    Letting $\epsilon = 0.003/n$ finishes the proof.
\end{proof}

\subsection{Feature resemblance (joint training on \texorpdfstring{$\biguplus_{k=1}^\kappa(\mathcal{S}_1\cup\mathcal{S}_2)\biguplus\mathcal{S}_3$}{S\_1 U S\_2 U S\_3})}
Proposition 1 follows from the following Proposition 6 under the large-\(\kappa\) condition.
\begin{proposition}[Feature resemblance]
After the Stage-1 training, for all $i\in[N]$, we have
    $$\frac{\langle\mathbf{V}^{(T_1)}\mathbf{a}_i, \mathbf{V}^{(T_1)}\mathbf{a}'_i\rangle}{\|\mathbf{V}^{(T_1)}\mathbf{a}_i\|_2\|\mathbf{V}^{(T_1)}\mathbf{a}'_i\|_2}\ge 1 - 2\cdot\frac{\mathcal{O}(\frac{nm\log^2(d)}{(d\lambda^2)}+\frac{m\log^2(d)}{(\kappa^2\lambda^2)} +d\sigma_0^2)}{\mathcal{O}(\frac{nm\log^2(d)}{(d\lambda^2)}+d\sigma_0^2 +\frac{m\log^2(d)}{(\kappa^2\lambda^2)})+\Theta(\frac{m\log^2(d)}{\lambda^2})}.$$
\end{proposition}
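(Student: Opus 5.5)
We use the decomposition of Lemma~\ref{lemma: decomposition} together with the coefficient bounds of Lemma~\ref{lemma: j1} at time $T_1$. Write
\[
\mathbf{V}^{(T_1)}\mathbf{a}_i=\mathbf{u}_i+\mathbf{p}_i,\qquad \mathbf{V}^{(T_1)}\mathbf{a}'_i=\mathbf{u}_i+\mathbf{q}_i,
\]
where the \emph{shared signal} is $\mathbf{u}_i:=\gamma\sum_{l=1}^m\mathbf{w}^{(0)}_{\mathcal{I}(\mathbf{b}_i),l}$. Here $\gamma=\Theta(\log(d)/(\lambda\sigma_0\sqrt d))$ is the common order of $\gamma^{(T_1)}_{i,\mathcal{I}(\mathbf{b}_i),l}$ and $\rho^{(T_1)}_{i,\mathcal{I}(\mathbf{b}_i),l}$ from statement~2. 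Since the updates of these coefficients are driven by the same factor $\frac{\lambda\eta\kappa}{mn}(1-\mathrm{logit})$ with $1-\mathrm{logit}=\Theta(1)$, any residual difference between the two coefficients is absorbed into $\mathbf{p}_i,\mathbf{q}_i$. The remainders $\mathbf{p}_i,\mathbf{q}_i$ then collect three pieces: the initialization vectors $\mathbf{V}^{(0)}\mathbf{a}_i,\mathbf{V}^{(0)}\mathbf{a}'_i$; the small negative coefficients on all other $\mathbf{w}^{(0)}_{k,l}$ (statement~3); and, for $\mathbf{q}_i$ only, the attribution component $\sum_l\rho^{(T_1)}_{i,\mathcal{I}(\mathbf{c}_i),l}\mathbf{w}^{(0)}_{\mathcal{I}(\mathbf{c}_i),l}$, which has coefficient of order $\log(d)/(\kappa\lambda\sigma_0\sqrt d)$.

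Next we compute the squared norms using Lemma~\ref{lemma: inner}. These give $\|\mathbf{w}^{(0)}_{k,l}\|^2=\Theta(\sigma_0^2d)$ and cross inner products $\tilde{\mathcal{O}}(\sigma_0^2\sqrt d)$, so
\[
\|\mathbf{u}_i\|^2=\gamma^2\bigl(m\Theta(\sigma_0^2 d)+m^2\tilde{\mathcal O}(\sigma_0^2\sqrt d)\bigr)=\Theta\!\left(\frac{m\log^2 d}{\lambda^2}\right),
\]
using $d\gtrsim m^3$ from Condition~\ref{condition:condition} to kill the cross terms. For the remainder we bound the three pieces separately.
\begin{itemize}
\item The initialization piece contributes $\mathcal O(d\sigma_0^2)$.
\item The $\mathbf{c}_i$ piece contributes $\mathcal O\bigl(m\log^2 d/(\kappa^2\lambda^2)\bigr)$.
\item The off-target piece sums over at most $d$ labels and $m$ neurons with coefficients $\mathcal O(\log d/(\lambda\sigma_0 d))$. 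Its diagonal terms give $dm\cdot\log^2 d/(\lambda^2\sigma_0^2d^2)\cdot\sigma_0^2 d=\mathcal O(m\log^2 d/\lambda^2)$ in the worst case, which is too big. We refine this by noting that only labels that actually receive gradient (those of the $n$ training samples) carry nonzero off-target mass. We therefore count $n$ active labels rather than $d$, giving $\mathcal O(nm\log^2 d/(d\lambda^2))$. Cross terms between distinct $\mathbf{w}^{(0)}$'s are handled with the $\sqrt{d\log}$ bound and are dominated.
\end{itemize}
Hence $\|\mathbf{p}_i\|^2,\|\mathbf{q}_i\|^2\le E:=\mathcal O\bigl(nm\log^2 d/(d\lambda^2)+m\log^2 d/(\kappa^2\lambda^2)+d\sigma_0^2\bigr)$. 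The inner products $\langle\mathbf{u}_i,\mathbf{p}_i\rangle$ and $\langle\mathbf{u}_i,\mathbf{q}_i\rangle$ involve only distinct random vectors, so they are of lower order, as is $\langle\mathbf{p}_i,\mathbf{q}_i\rangle$ after subtracting the shared off-target part.

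Finally we use the elementary inequality: if $\|\mathbf{x}-\mathbf{u}\|^2,\|\mathbf{y}-\mathbf{u}\|^2\le E$, then
\[
\cos(\mathbf{x},\mathbf{y})\ge 1-\frac{\|\mathbf{x}-\mathbf{y}\|^2}{2\|\mathbf{x}\|\|\mathbf{y}\|}\ge 1-\frac{2E}{E+\|\mathbf{u}\|^2}
\]
up to constants. Concretely, $\|\mathbf{x}-\mathbf{y}\|^2\le 4E$ and $\|\mathbf{x}\|\|\mathbf{y}\|\ge\|\mathbf{u}\|^2-\mathcal O(\sqrt E\|\mathbf{u}\|)$, which after rearrangement gives the displayed form of Proposition~6. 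Proposition~\ref{prop: joint_fea} then follows because each term of $E$ is $o(m\log^2 d/\lambda^2)$: the first since $d\gg n$, the second since $\kappa$ is large, and the third since $d\sigma_0^2\lambda^2/(m\log^2 d)\le nm/(d\lambda^2)\to0$ by the small-initialization condition.

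The main obstacle is the off-target term. Statement~3 of Lemma~\ref{lemma: j1} alone gives only a per-coefficient bound, and summing it naively over all $d$ output labels is not small. We would first try to show that off-target coefficients are supported on the $\mathcal O(n)$ training labels, or are uniformly proportional to $\mathrm{logit}_k\le 2/\sqrt d$ as in the proof of Lemma~\ref{lemma: j1}. In the latter case, the sum over $k$ of $\mathrm{logit}_k$ is at most $1$ per step, so the total off-target mass is bounded by the target-direction mass times $1/\sqrt d$, which yields the $n/d$ factor.
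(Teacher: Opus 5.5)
Your skeleton is the paper's. Its proof also isolates the common component along $\{\mathbf{w}^{(0)}_{\mathcal{I}(\mathbf{b}_i),l}\}_{l\in[m]}$ and asserts that everything else has squared norm $\mathcal{O}(nm\log^2(d)/(d\lambda^2)+m\log^2(d)/(\kappa^2\lambda^2)+d\sigma_0^2)$. It then concludes by the double-angle formula, which your inequality $\cos(\mathbf{x},\mathbf{y})\ge 1-\|\mathbf{x}-\mathbf{y}\|_2^2/(2\|\mathbf{x}\|_2\|\mathbf{y}\|_2)$ reproduces up to constants. The paper gives no derivation of the off-target part of that bound, and the justification in your main text is false. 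It is not true that only the $\mathcal{O}(n)$ training labels carry off-target mass. The gradient for $[\mathbf{a}_i\;\;\mathbf{r}_1]$ is $\mathbf{e}_{\mathcal{I}(\mathbf{b}_i)}$ minus the full softmax vector, whose entries are strictly positive on all $d$ classes. Hence every $\gamma^{(T_1)}_{i,k,l}$ with $k\neq\mathcal{I}(\mathbf{b}_i)$ is strictly negative, and statement~3 of Lemma~\ref{lemma: j1} indeed ranges over all of $[d]$. With $d$ labels, the per-coefficient bound alone gives $\Theta(m\log^2(d)/\lambda^2)$, the order of the signal, as you observed.

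Your fallback (the $\ell_1$ argument) is the right repair, but it does not give what you say it gives. Write $y=\mathcal{I}(\mathbf{b}_i)$. Each off-target coefficient decreases per step by $\lambda\eta\kappa\,\mathrm{logit}^{(t)}_k/(mn)$, while the target one increases by $\lambda\eta\kappa(1-\mathrm{logit}^{(t)}_y)/(mn)$, and $1-\mathrm{logit}^{(t)}_y=\sum_{k\neq y}\mathrm{logit}^{(t)}_k$. So $\sum_{k\neq y}|\gamma^{(T_1)}_{i,k,l}|=\gamma^{(T_1)}_{i,y,l}$ exactly. The bound $\mathrm{logit}_k<2/\sqrt{d}$ used in the proof of Lemma~\ref{lemma: j1} gives $\max_{k\neq y}|\gamma^{(T_1)}_{i,k,l}|=\mathcal{O}(\gamma/\sqrt{d})$. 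Hence $\sum_{k\neq y}(\gamma^{(T_1)}_{i,k,l})^2=\mathcal{O}(\gamma^2/\sqrt{d})$, and the diagonal contribution is $\mathcal{O}(m\log^2(d)/(\lambda^2\sqrt{d}))$. The same $\ell_1$ control bounds the cross terms by $(m\gamma)^2\cdot\mathcal{O}(\sigma_0^2\sqrt{d\log(dm/\delta)})$, a relative size $\mathcal{O}(m\sqrt{\log(dm/\delta)/d})$. This is a $d^{-1/2}$-type factor, not $n/d$. It yields $1-o(1)$, and hence Proposition~\ref{prop: joint_fea}, whenever $m^2\log(dm/\delta)=o(d)$, which you already need for $\|\mathbf{u}_i\|_2^2$. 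It does not literally yield the displayed first term.

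There is also a second, smaller error. Do not absorb the difference between the target coefficients of $\mathbf{V}^{(T_1)}\mathbf{a}_i$ and $\mathbf{V}^{(T_1)}\mathbf{a}'_i$ into $\mathbf{p}_i,\mathbf{q}_i$. The factors $1-\mathrm{logit}$ are evaluated at different inputs and agree only up to constants, so that difference can be as large as the signal, and $E$ would no longer be small. Instead, note that each target coefficient is independent of $l$ and that cosine is scale-invariant. Apply your inequality to the two vectors divided by their respective target coefficients; they then share $\sum_l\mathbf{w}^{(0)}_{\mathcal{I}(\mathbf{b}_i),l}$ exactly. With these two repairs your argument becomes a complete version of the paper's sketch.
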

\begin{proof}
    Based on Lemma \ref{lemma: j1}, both $\mathbf{V}^{(T_1)}\mathbf{a}$ and $\mathbf{V}^{(T_1)}\mathbf{a}'$ have large coefficients on the components $\mathbf{w}^{(0)}_{\mathcal{I}(\mathbf{b}_i),l}$ for all $l\in[m]$.
    Furthermore, other components together have a squared norm $\mathcal{O}(\frac{nm\log^2(d)}{(d\lambda^2)} + \frac{m\log^2(d)}{(\kappa^2\lambda^2)}+d\sigma_0^2)$.
    Therefore, under Condition \ref{condition:condition} and large-$\kappa$ regime, by the double-angle formula, we have 
    \begin{equation}
    \begin{aligned}
        \frac{\langle\mathbf{V}^{(T_1)}\mathbf{a}_i, \mathbf{V}^{(T_1)}\mathbf{a}'_i\rangle}{\|\mathbf{V}^{(T_1)}\mathbf{a}_i\|_2\|\mathbf{V}^{(T_1)}\mathbf{a}'_i\|_2} =& 1 - 2\cdot\frac{\mathcal{O}(\frac{nm\log^2(d)}{(d\lambda^2)}+\frac{m\log^2(d)}{(\kappa^2\lambda^2)}+d\sigma_0^2)}{\mathcal{O}(\frac{nm\log^2(d)}{(d\lambda^2)} + \frac{m\log^2(d)}{(\kappa^2\lambda^2)}+d\sigma_0^2)+\Theta(\frac{m\log^2(d)}{\lambda^2})}\\
        =& 1-o(1). 
    \end{aligned}
    \end{equation}
    This finishes the proof.
\end{proof}

\subsection{Proof of success rate bound (joint training on \texorpdfstring{$\biguplus_{k=1}^\kappa(\mathcal{S}_1\cup\mathcal{S}_2)\biguplus\mathcal{S}_3$}{S\_1 U S\_2 U S\_3})}

\begin{proof}
After training, there exists an iteration that $t\in(T_1, T_1+T_2]$ such that
\begin{align}
   \mathcal{L}_{\mathcal{S}}(\mathbf{Z}^{(t)}, \mathbf{V}^{(t)},\mathbf{W}^{(t)}) \le 0.01/n,
\end{align}
and 
\begin{align}
    \max_k\frac{1}{m}\sum_{l=1}^m\langle\mathbf{w}^{(t)}_{k,l},\mathbf{V}^{(t)}\mathbf{a}_i\rangle\le \mathcal{O}(\frac{\eta\lambda}{mn}T_2\|\mathbf{V}^{(t)}\mathbf{a}_i\|_2^2) = \mathcal{O}\left(mN\frac{\log^2(d)}{\lambda^3d\sigma_0^2}\right).
\end{align}
Let $\epsilon = 0.01$.
   After training, we have
    \begin{align}
        \mathcal{L}(\mathbf{Z}^{(t)}, \mathbf{V}^{(t)},\mathbf{W}^{(t)},\Xi^{(t)}([\mathbf{a}'_i\;\;\mathbf{r}_2]),\mathcal{I}(\mathbf{c}_i)) \le \epsilon,
    \end{align}
    implying that 
    \begin{align}
        \frac{\lambda}{m} \sum_{l=1}^m\langle\mathbf{w}^{(t)}_{\mathcal{I}(\mathbf{c}_i),l}, \mathbf{V}^{(t)}\Xi^{(t)}([\mathbf{a}'_i\;\;\mathbf{r}_2]))\rangle -\max_{k\neq \mathcal{I}(\mathbf{c}_i)}\frac{\lambda}{m} \sum_{l=1}^m\langle\mathbf{w}^{(t)}_{k,l}, \mathbf{V}^{(t)}\Xi^{(t)}([\mathbf{a}'_i\;\;\mathbf{r}_2]))\rangle\ge \log(\frac{\exp(-\epsilon)}{(1-\exp(-\epsilon)}).
    \end{align}
Additionally, by the stability bound for Stage 2, we have
\begin{equation}
\begin{aligned}
    &\frac{\lambda}{m} \sum_{l=1}^m\langle\mathbf{w}^{(t)}_{\mathcal{I}(\mathbf{c}_i),l}, \mathbf{V}^{(t)}\Xi^{(t)}([\mathbf{a}_i\;\;\mathbf{r}_2]))\rangle -\max_{k\neq \mathcal{I}(\mathbf{c}_i)}\frac{\lambda}{m} \sum_{l=1}^m\langle\mathbf{w}^{(t)}_{k,l}, \mathbf{V}^{(t)}\Xi^{(t)}([\mathbf{a}_i\;\;\mathbf{r}_2]))\rangle\\ \ge& \log(\frac{\exp(-\epsilon)}{(1-\exp(-\epsilon)}) - \mathcal{O}\left(\frac{mN\log^2(d)}{\kappa^2\lambda^2(d\sigma_0^2)^3}\right) - \mathcal{O}\left(\oldfrac{Nm\log^2(d)}{\lambda^2}\right)\cdot\left(\exp\left(\mathcal{O}\left(\frac{mn\log^2(d)}{\lambda^2d^{3/2}\sigma_0} \right)\right)-1\right).
\end{aligned}
\end{equation}
Therefore, under Condition 1 and large $\kappa$ regime, we have
\begin{align}
    \frac{\lambda}{m} \sum_{l=1}^m\langle\mathbf{w}^{(t)}_{\mathcal{I}(\mathbf{c}_i),l}, \mathbf{V}^{(t)}\Xi^{(t)}([\mathbf{a}_i\;\;\mathbf{r}_2]))\rangle > \max_{k\neq \mathcal{I}(\mathbf{c}_i)}\frac{\lambda}{m} \sum_{l=1}^m\langle\mathbf{w}^{(t)}_{k,l}, \mathbf{V}^{(t)}\Xi^{(t)}([\mathbf{a}_i\;\;\mathbf{r}_2]))\rangle.
\end{align}
Therefore, we have
\begin{align}
    \mathcal{L}^{0-1}_\mathcal{A}(\mathbf{Z}^{(t)},\mathbf{V}^{(t)},\mathbf{W}^{(t)}) = 0.
\end{align}
This finishes the proof.
\end{proof}

\section{Proofs of sequential training (first on \texorpdfstring{$\mathcal{S}_1\cup\mathcal{S}_2$}{S\_1 U S\_2}, then on \texorpdfstring{$\mathcal{S}_3$}{S\_3})}
We assume that Lemmas 3 and 4 hold with probability $1-2\delta'$ by the union bound.
\subsection{Properties during Stage-1 training (first on \texorpdfstring{$\mathcal{S}_1\cup\mathcal{S}_2$}{S\_1 U S\_2}, then on \texorpdfstring{$\mathcal{S}_3$}{S\_3})}
By the choice of $T_1$, Stage 1 stops before the logits saturate. Hence, we have that $1-\text{logit}_y(\mathbf{X}) = \Theta(1)$ holds for all $(\mathbf{X},y)\in\mathcal{S}_1\cup\mathcal{S}_2$. The following properties hold.
\begin{lemma}\label{lemma: s1}
	For any $t\in[0, T_1]$ with $T_1 = \Theta(\oldfrac{mn\log(d)}{\lambda^2\eta\sqrt{d}\sigma_0})$, we have
	\begin{enumerate}
		\item For all $(\mathbf{X},y) \in \biguplus_{k=1}^\kappa(\mathcal{S}_1\cup\mathcal{S}_2)\biguplus\mathcal{S}_3$, the attention scores are balanced, i.e.,
		\begin{align}
			\exp\left(-\mathcal{O}\left(\frac{n\log^2(d)}{\lambda^2d^{3/2}\sigma_0}\right)\right)\le \frac{\alpha_1(\mathbf{Z}^{(t)},\mathbf{X})}{\alpha_2(\mathbf{Z}^{(t)},\mathbf{X})} \le \exp\left(\mathcal{O}\left(\frac{n\log^2(d)}{\lambda^2d^{3/2}\sigma_0}\right)\right).
		\end{align}
		\item For all $i\in[N]$, $l_1, l_2 \in [m]$, the coefficients satisfy
		\begin{align}
			\gamma^{(T_1)}_{i,\mathcal{I}(\mathbf{b}_i),l_1} = \Theta\left(\frac{\log(d)}{\lambda\sqrt{d}\sigma_0}\right),
			\rho^{(T_1)}_{i,\mathcal{I}(\mathbf{b}_i),l_2} = \Theta\left(\frac{\log(d)}{\lambda\sqrt{d}\sigma_0}\right).
		\end{align}
		\item For all $i\in[N], j\in[d]\backslash\{\mathcal{I}(\mathbf{b}_i)\}_{i=1}^N$ and $l\in [m]$, the coefficients satisfy
		\begin{align}
			-\mathcal{O}\left(\frac{\log(d)}{\lambda d\sigma_0}\right)\le \gamma^{(T_1)}_{i,j,l} \le 0, -\mathcal{O}\left(\frac{\log(d)}{\lambda d\sigma_0}\right)\le \rho^{(T_1)}_{i,j,l} \le 0.
		\end{align}
		\item For all $k\in[N]$, $l\in[m]$ the coefficients satisfy
		\begin{align}
			\zeta_{1,\mathcal{I}(\mathbf{b}_k),l}^{(T_1)} = \Theta\left(\frac{\log(d)}{\lambda\sqrt{d}\sigma_0}\right).
		\end{align}
		\item For all $k\in[d]\backslash\{\mathcal{I}(\mathbf{b}_i)\}_{i=1}^N$, and $l\in[m]$, the coefficients satisfy
		\begin{align}
			-\mathcal{O}\left(\frac{\log(d)}{\lambda d\sigma_0}\right)\le\zeta_{1,k,l}^{(T_1)} \le 0.
		\end{align}
		\item For all $k\in[d]$, $l\in[m]$, the feature-layer weights remain unchanged, i.e.,for $l\in[m],k\in[d]$,
		\begin{align}
			\mathbf{w}^{(T_1)}_{k,l} = \mathbf{w}^{(0)}_{k,l}.
		\end{align}
	\end{enumerate}
\end{lemma}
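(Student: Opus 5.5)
The argument mirrors Lemma~\ref{lemma: j1}, with $\kappa$ removed and the training set reduced to $\mathcal{B}_1=\mathcal{S}_1\cup\mathcal{S}_2$, so $n=2N$. The starting point is the standing fact that Stage~1 stops before saturation: for $t\le T_1$ all logits are $\tilde{\mathcal{O}}(1)$, hence $1-\text{logit}_y(\mathbf{X})=\Theta(1)$ for every training pair. I also work on the high-probability event of Lemmas~3 and~4, which controls the norms and inner products of $\mathbf{w}^{(0)}_{k,l}$ and of $\mathbf{V}^{(0)}\mathbf{z}$.

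\textbf{Statement 6 and the coefficient recursions.} Statement~6 is immediate, since $\mathbf{W}$ is frozen in Stage~1. The decomposition of Lemma~\ref{lemma: decomposition} is therefore valid with fixed basis vectors $\mathbf{w}^{(0)}_{k,l}$. From the gradient formula for $\mathbf{v}_l$, the update of $\mathbf{V}^{(t)}\mathbf{a}_i$ comes only from the sample $[\mathbf{a}_i\;\mathbf{r}_1]$. It equals $\frac{\lambda\eta}{nm}\alpha_1\sum_{k,l}(\mathbb{I}(k=\mathcal{I}(\mathbf{b}_i))-\text{logit}_k)\mathbf{w}^{(0)}_{k,l}$, and the same form holds for $\mathbf{a}'_i$ with $[\mathbf{a}'_i\;\mathbf{r}_1]$. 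The update of $\mathbf{V}\mathbf{r}_1$ collects all $2N$ samples. Reading off the coefficients gives the following per-step increments:
\begin{itemize}
\item $\gamma_{i,\mathcal{I}(\mathbf{b}_i),l}$ and $\rho_{i,\mathcal{I}(\mathbf{b}_i),l}$ increase by $\Theta(\lambda\eta/(mn))$. This uses $\alpha_1=\Theta(1)$, which follows from statement~1.
\item $\gamma_{i,j,l}$ and $\rho_{i,j,l}$ for $j\ne\mathcal{I}(\mathbf{b}_i)$ decrease by $\frac{\lambda\eta}{mn}\text{logit}_j$.
\item $\zeta_{1,\mathcal{I}(\mathbf{b}_k),l}$ increases by $\Theta(\lambda\eta/(mn))$, coming from the two samples labeled $\mathcal{I}(\mathbf{b}_k)$, minus a cross term of size $N\cdot\mathcal{O}(1/d)$.
\end{itemize}

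\textbf{Summing over $T_1$ steps.} Summing the increments with $T_1=\Theta(mn\log d/(\lambda^2\eta\sqrt d\sigma_0))$ gives statements~2 and~4, with value $\Theta(\log d/(\lambda\sqrt d\sigma_0))$. For statements~3 and~5, I use that non-target logits stay $\mathcal{O}(1/d)$. This holds because all logits are $\tilde{\mathcal{O}}(1)$ while there are $d$ classes, and it is where the paper's $2/\sqrt d$-type bound enters. The decrements therefore sum to $-\mathcal{O}(\log d/(\lambda d\sigma_0))$, and they are monotone nonpositive starting from $0$. For label indices $\mathcal{I}(\mathbf{b}_k)$ with $k\ne i$, the negative drift $\frac{\lambda\eta}{mn}\text{logit}$ is of the same order and is absorbed in the same way. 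For $\zeta_{1,\mathcal{I}(\mathbf{b}_k),l}$, the cross terms total $N\cdot\mathcal{O}(1/d)\ll\Theta(1)$ by Condition~\ref{condition:condition}, since $d\gg N^2$.

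\textbf{Statement 1 (the main obstacle).} Following Lemma~\ref{lemma: j1}, I bound the drift of the effective scores $\tilde Z_{\mathcal{I}(\mathbf{a}_i),\mathcal{I}(\mathbf{r}_1)}$ and $\tilde Z_{\mathcal{I}(\mathbf{r}_1),\mathcal{I}(\mathbf{r}_1)}$ through the $\mathbf{Z}$-gradient formula. The per-step change is $\frac{\eta\lambda}{n\sqrt d}\,|(\mathbf{e}_y-\text{logit})^\top\boldsymbol\tau(\mathbf{z}-\Xi)|$, and bounding it requires $|(\mathbf{e}_y-\text{logit})^\top\boldsymbol\tau^{(t)}\mathbf{v}|=\mathcal{O}(\log d/\lambda)$ uniformly over $t\le T_1$. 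This is the delicate point, because $\boldsymbol\tau^{(t)}=\mathbf{W}_2\mathbf{W}^{(0)}\mathbf{V}^{(t)}$ grows with the coefficients from statements~2--5. I would handle it by a joint induction over $t$, carrying statements~1--5 together. At each step I expand $\mathbf{V}^{(t)}\mathbf{z}$ in the $\mathbf{w}^{(0)}$ basis and use the Lemma~3 inner-product bounds, which give diagonal terms of order $\sigma_0^2 d$ and off-diagonal terms of order $\sigma_0^2\sqrt{d\log}$, to bound $\boldsymbol\tau\mathbf{z}$ by $\tilde{\mathcal{O}}(1/\lambda)$. Given that bound, summing over $T_1$ steps yields score drifts $\mathcal{O}(n\log^2 d/(\lambda^2 d\sigma_0))$ for the $\mathbf{r}_1$ row; the $\mathbf{r}_1$ row aggregates all $n$ samples, while the entity rows drift only $\mathcal{O}(\log^2 d/(\lambda^2 d\sigma_0))$. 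The initial gap is $\mathcal{O}(\sigma_0)$. Dividing by $\sqrt d$ in the softmax then gives the claimed $\exp(\pm\mathcal{O}(n\log^2 d/(\lambda^2 d^{3/2}\sigma_0)))$ ratio. The same computation applies to the $\mathcal{S}_3$ prompts $[\mathbf{a}'_i\;\mathbf{r}_2]$, which are not trained on in this stage, so only initialization and the $\mathbf{r}_1$-independent drift of zero enter.
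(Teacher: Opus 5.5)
Your proposal follows essentially the same route as the paper's proof. The shared steps are: read off the coefficient recursions from the $\mathbf{V}$-gradient, sum them over $T_1$ steps, and bound the per-step drift of $\tilde Z_{\mathcal{I}(\mathbf{a}_i),\mathcal{I}(\mathbf{r}_1)}$ and $\tilde Z_{\mathcal{I}(\mathbf{r}_1),\mathcal{I}(\mathbf{r}_1)}$ through the $\mathbf{Z}$-gradient. In places you are more careful than the paper. You keep the $\alpha_1,\alpha_2$ factors that the paper's $\mathbf{V}$-updates omit. You make explicit the induction behind the uniform bound on $\boldsymbol{\tau}^{(t)}$, which the paper simply asserts. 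You also handle the untrained $\mathcal{S}_3$ prompts, for which $\mathbf{Z}\mathbf{r}_2$ never moves.

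There are two caveats. First, with your per-step bound and this $T_1$, the drifts are $\mathcal{O}(m\log^2 d/(\lambda^2 d\sigma_0))$ and $\mathcal{O}(mn\log^2 d/(\lambda^2 d\sigma_0))$, not the $m$-free values you wrote. So the exponent you actually get is $mn\log^2 d/(\lambda^2 d^{3/2}\sigma_0)$. That is also all the paper's own proof gives; the printed statement has lost the $m$.

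Second, deducing $\alpha_1=\Theta(1)$ from statement~1 tacitly requires that exponent to be $\mathcal{O}(1)$. That amounts to a lower bound on $\sigma_0$, which Condition~\ref{condition:condition} does not supply. You can avoid this by keeping the factor $\|\mathbf{w}^{(0)}_{k,l}\|_2^2\approx\sigma_0^2 d$ when bounding $\boldsymbol{\tau}^{(t)}\mathbf{z}$. The bound then carries a factor $\sigma_0\sqrt{d}$, which cancels the $1/\sigma_0$ in $T_1$, and the dependence on $\sigma_0$ disappears.
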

\begin{proof}
	We first prove the first statement.
The update of matrix $\mathbf{Z}$ satisfies
\begin{equation}
	\begin{aligned}
		|\tilde{Z}^{(t+1)}_{\mathcal{I}(\mathbf{a}_i),\mathcal{I}(\mathbf{r}_1)} -\tilde{Z}^{(t)}_{\mathcal{I}(\mathbf{a}_i),\mathcal{I}(\mathbf{r}_1)}|
		=& \frac{\eta\lambda}{n\sqrt{d}}\alpha_1^{(t)}([\mathbf{a}_i,\mathbf{r}_1])\left|(\mathbf{e}_{\mathcal{I}(\mathbf{b}_i)}-\textbf{logit}^{(t)}([\mathbf{a}_i\;\;\mathbf{r}_1]))^\top\boldsymbol{\tau}^{(t)}\left(\mathbf{a}_i - \Xi^{(t)}([\mathbf{a}_i\;\; \mathbf{r}_1])\right)\right|\\
        =&\mathcal{O}\left(\frac{\eta\log(d)}{n\sqrt{d}}\right),
	\end{aligned}
\end{equation}
and 
\begin{equation}
	\begin{aligned}
		|\tilde{Z}^{(t+1)}_{\mathcal{I}(\mathbf{r}_1),\mathcal{I}(\mathbf{r}_1)} -\tilde{Z}^{(t)}_{\mathcal{I}(\mathbf{r}_1),\mathcal{I}(\mathbf{r}_1)}|
		=& \sum_{i=1}^N\frac{\eta\lambda}{n\sqrt{d}}\alpha_2^{(t)}([\mathbf{a}_i,\mathbf{r}_1])\left|(\mathbf{e}_{\mathcal{I}(\mathbf{b})}-\textbf{logit}^{(t)}([\mathbf{a}_i\;\;\mathbf{r}_1]))^\top\boldsymbol{\tau}^{(t)}\left(\mathbf{r}_1-\Xi^{(t)}([\mathbf{a}_i\;\;\mathbf{r}_1])\right)\right|\\
		=&\mathcal{O}\left(\frac{\eta\log(d)}{\sqrt{d}}\right).
	\end{aligned}
\end{equation}
Therefore, we have
\begin{align}
	|\tilde{Z}^{(T_1)}_{\mathcal{I}(\mathbf{a}_i),\mathcal{I}(\mathbf{r}_1)} -\tilde{Z}^{(0)}_{\mathcal{I}(\mathbf{a}_i),\mathcal{I}(\mathbf{r}_1)}| = \mathcal{O}\left(\frac{\eta\log(d)}{n\sqrt{d}}T_1\right) = \mathcal{O}\left(\frac{m\log^2(d)}{\lambda^2d\sigma_0}\right),
\end{align}
and
\begin{align}
	|\tilde{Z}^{(T_1)}_{\mathcal{I}(\mathbf{r}_1),\mathcal{I}(\mathbf{r}_1)} -\tilde{Z}^{(0)}_{\mathcal{I}(\mathbf{r}_1),\mathcal{I}(\mathbf{r}_1)}| = \mathcal{O}\left(\frac{\eta\log(d)}{\sqrt{d}}T_1\right) = \mathcal{O}\left(\frac{mn\log^2(d)}{\lambda^2d\sigma_0}\right).
\end{align}
By initialization, we have
\begin{align}
	|\tilde{Z}^{(0)}_{\mathcal{I}(\mathbf{a}_i),\mathcal{I}(\mathbf{r}_1)} -\tilde{Z}^{(0)}_{\mathcal{I}(\mathbf{r}_1),\mathcal{I}(\mathbf{r}_1)}| = \mathcal{O}(\sigma_0).
\end{align}
We have
\begin{equation}
\begin{aligned}
	|\tilde{Z}^{(T_1)}_{\mathcal{I}(\mathbf{a}_i),\mathcal{I}(\mathbf{r}_1)} -\tilde{Z}^{(T_1)}_{\mathcal{I}(\mathbf{r}_1),\mathcal{I}(\mathbf{r}_1)}| \le& |\tilde{Z}^{(T_1)}_{\mathcal{I}(\mathbf{a}_i),\mathcal{I}(\mathbf{r}_1)} -\tilde{Z}^{(0)}_{\mathcal{I}(\mathbf{a}_i),\mathcal{I}(\mathbf{r}_1)}| + |\tilde{Z}^{(0)}_{\mathcal{I}(\mathbf{a}_i),\mathcal{I}(\mathbf{r}_1)} -\tilde{Z}^{(0)}_{\mathcal{I}(\mathbf{r}_1),\mathcal{I}(\mathbf{r}_1)}| + 	|\tilde{Z}^{(T_1)}_{\mathcal{I}(\mathbf{r}_1),\mathcal{I}(\mathbf{r}_1)} -\tilde{Z}^{(0)}_{\mathcal{I}(\mathbf{r}_1),\mathcal{I}(\mathbf{r}_1)}|\\
	=& \mathcal{O}\left(\frac{mn\log^2(d)}{\lambda^2d\sigma_0}\right).
\end{aligned}
\end{equation}
Therefore, we have
\begin{align}
	\frac{\alpha_1(\Xi^{(t)}([\mathbf{a}_i\;\;\mathbf{r}_1]))}{\alpha_2(\Xi^{(t)}([\mathbf{a}_i\;\;\mathbf{r}_1]))} \le \exp\left(\mathcal{O}\left(\frac{mn\log^2(d)}{\lambda^2d^{3/2}\sigma_0}\right)\right).
\end{align}
Similarly, we have
\begin{align}
	\frac{\alpha_1(\Xi^{(t)}([\mathbf{a}_i\;\;\mathbf{r}_1]))}{\alpha_2(\Xi^{(t)}([\mathbf{a}_i\;\;\mathbf{r}_1]))} \ge \exp\left(-\mathcal{O}\left(\frac{mn\log(d)}{\lambda^2d^{3/2}\sigma_0}\right)\right).
\end{align}
Similarly, we can prove that
\begin{align}
    |\tilde{Z}^{(T_1)}_{\mathcal{I}(\mathbf{a}'_i),\mathcal{I}(\mathbf{r}_1)} -\tilde{Z}^{(T_1)}_{\mathcal{I}(\mathbf{r}_1),\mathcal{I}(\mathbf{r}_1)}| = \mathcal{O}\left(\frac{n\log^2(d)}{\lambda^2d\sigma_0}\right),
\end{align}
so that 
\begin{align}
    \frac{\alpha_1(\Xi^{(t)}([\mathbf{a}'_i\;\;\mathbf{r}_1]))}{\alpha_2(\Xi^{(t)}([\mathbf{a}'_i\;\;\mathbf{r}_1]))} \le \exp\left(\mathcal{O}\left(\frac{n\log(d)}{\lambda^2d^{3/2}\sigma_0}\right)\right), \frac{\alpha_1(\Xi^{(t)}([\mathbf{a}'_i\;\;\mathbf{r}_1]))}{\alpha_2(\Xi^{(t)}([\mathbf{a}'_i\;\;\mathbf{r}_1]))} \ge \exp\left(-\mathcal{O}\left(\frac{n\log^2(d)}{\lambda^2d^{3/2}\sigma_0}\right)\right).
\end{align}
	Next, we prove the second, third and fourth statements.
	Based on the gradient form,	the updates of $\mathbf{V}^{(t)}\mathbf{a}_i,\mathbf{V}^{(t)}\mathbf{a}'_i, \mathbf{V}^{(t)}\mathbf{r}_1$ for all $t\in [0,T_1]$ satisfy
\begin{equation}
	\begin{aligned}
		\mathbf{V}^{(t+1)}\mathbf{a}_i =& \mathbf{V}^{(t)}\mathbf{a}_i + \frac{\lambda\eta}{nm}\sum_{k=1}^d\sum_{l=1}^{m}(\mathbb{I}(k=\mathcal{I}(\mathbf{b}_i))-\text{logit}^{(t)}_{k}(\Xi^{(t)}([\mathbf{a}_i\;\;\mathbf{r}_1])))\mathbf{w}^{(0)}_{k,l}\\
		=& \mathbf{V}^{(t)}\mathbf{a}_i + \frac{\lambda\eta}{mn}\sum_{l=1}^m(1-\text{logit}^{(t)}_{\mathcal{I}(\mathbf{b}_i)}(\Xi^{(t)}([\mathbf{a}_i\;\;\mathbf{r}_1]))) \mathbf{w}^{(0)}_{\mathcal{I}(\mathbf{b}_i),l}\\
		&-\frac{\lambda\eta}{mn}\sum_{l=1}^m\sum_{k\neq\mathcal{I}(\mathbf{b}_i)}\text{logit}_{k}^{(t)}(\Xi^{(t)}([\mathbf{a}_i\;\;\mathbf{r}_1]))\mathbf{w}_{k,l}^{(0)},
	\end{aligned}
\end{equation}
\begin{equation}
	\begin{aligned}
		\mathbf{V}^{(t+1)}\mathbf{a}'_i =& \mathbf{V}^{(t)}\mathbf{a}'_i + \frac{\lambda\eta}{nm}\sum_{k=1}^d\sum_{l=1}^{m}(\mathbb{I}(k=\mathcal{I}(\mathbf{b}_i))-\text{logit}^{(t)}_{k}(\Xi^{(t)}([\mathbf{a}'_i\;\;\mathbf{r}_1])))\mathbf{w}^{(0)}_{k,l}\\
		=& \mathbf{V}^{(t)}\mathbf{a}'_i + \frac{\lambda\eta}{mn}\sum_{l=1}^m(1-\text{logit}^{(t)}_{\mathcal{I}(\mathbf{b}_i)}(\Xi^{(t)}([\mathbf{a}'_i\;\;\mathbf{r}_1]))) \mathbf{w}^{(0)}_{\mathcal{I}(\mathbf{b}_i),l}\\
		&-\frac{\lambda\eta}{mn}\sum_{l=1}^m\sum_{k\neq\mathcal{I}(\mathbf{b}_i)}\text{logit}_{k}^{(t)}(\Xi^{(t)}([\mathbf{a}'_i\;\;\mathbf{r}_1]))\mathbf{w}_{k,l}^{(0)},
	\end{aligned}
\end{equation}
and 
\begin{equation}
	\begin{aligned}
		\mathbf{V}^{(t+1)}\mathbf{r}_1
		= \mathbf{V}^{(t)}\mathbf{r}_1 &+ \frac{\lambda\eta}{nm}\sum_{i=1}^N\sum_{k=1}^d\sum_{l=1}^{m}(\mathbb{I}(k=\mathcal{I}(\mathbf{b}_i))-\text{logit}^{(t)}_{k}(\Xi^{(t)}([\mathbf{a}_i\;\;\mathbf{r}_1])))\mathbf{w}^{(0)}_{k,l}\\
        &+ \frac{\lambda\eta}{nm}\sum_{i=1}^N\sum_{k=1}^d\sum_{l=1}^{m}(\mathbb{I}(k=\mathcal{I}(\mathbf{b}_i))-\text{logit}^{(t)}_{k}(\Xi^{(t)}([\mathbf{a}'_i\;\;\mathbf{r}_1])))\mathbf{w}^{(0)}_{k,l},
	\end{aligned}
\end{equation}

Then, for all $t\in[0,T_1-1]$, we can simply conclude that
\begin{itemize}
	\item for all $i\in[N]$ and $l\in [m]$,
    \begin{equation}
	\begin{aligned}
		\gamma^{(t+1)}_{i,\mathcal{I}(\mathbf{b}_i),l} =& \gamma^{(t)}_{i,\mathcal{I}(\mathbf{b}_i),l} +  \frac{\lambda\eta}{mn}(1-\text{logit}^{(t)}_{\mathcal{I}(\mathbf{b}_i)}(\Xi^{(t)}([\mathbf{a}_i\;\;\mathbf{r}_1])))\\
        =& \gamma^{(t)}_{i,\mathcal{I}(\mathbf{b}_i),l} + \Theta(\frac{\lambda\eta}{mn}),\\
 	\end{aligned}
    \end{equation}
    \begin{equation}
	\begin{aligned}   
    &\rho^{(t+1)}_{i,\mathcal{I}(\mathbf{b}_i),l} = \rho^{(t)}_{i,\mathcal{I}(\mathbf{b}_i),l} +  \frac{\lambda\eta}{mn}(1-\text{logit}^{(t)}_{\mathcal{I}(\mathbf{b}_i)}(\Xi^{(t)}([\mathbf{a}'_i\;\;\mathbf{r}_1])))   =\rho^{(t)}_{i,\mathcal{I}(\mathbf{b}_i),l} + \Theta(\frac{\lambda\eta}{mn}).
	\end{aligned}
    \end{equation}
	\item for all $i,j\in[N]$ and $j\neq i$,
	\begin{align}
		&\gamma^{(t+1)}_{i,\mathcal{I}(\mathbf{b}_j),l} = \gamma^{(t)}_{i,,\mathcal{I}(\mathbf{b}_j),l} -\frac{\lambda\eta}{mn}\text{logit}_{\mathcal{I}(\mathbf{b}_i)}^{(t)}(\Xi^{(t)}([\mathbf{a}_j\;\;\mathbf{r}_1])),\\
        &\rho^{(t+1)}_{i,\mathcal{I}(\mathbf{b}_j),l} = \rho^{(t)}_{i,\mathcal{I}(\mathbf{b}_j),l} -\frac{\lambda\eta}{mn}\text{logit}_{\mathcal{I}(\mathbf{b}_i)}^{(t)}(\Xi^{(t)}([\mathbf{a}'_j\;\;\mathbf{r}_2])).
	\end{align} 
	\item for all $i\in[N]$ and $l\in[m]$,
	\begin{align}
		\zeta_{1,\mathcal{I}(\mathbf{b}_i),l}^{(t+1)} =& \zeta_{1,\mathcal{I}(\mathbf{b}_i),l}^{(t)} + \frac{\lambda\eta}{mn}(1-\text{logit}^{(t)}_{\mathcal{I}(\mathbf{b}_i)}(\Xi^{(t)}([\mathbf{a}_i\;\;\mathbf{r}_1]))) - \frac{\lambda\eta}{mn}\sum_{k\neq i} \text{logit}^{(t)}_{\mathcal{I}(\mathbf{b}_i)}(\Xi^{(t)}([\mathbf{a}_k\;\;\mathbf{r}_1])),\\
        &+ \frac{\lambda\eta}{mn}(1-\text{logit}^{(t)}_{\mathcal{I}(\mathbf{b}_i)}(\Xi^{(t)}([\mathbf{a}'_i\;\;\mathbf{r}_1]))) - \frac{\lambda\eta}{mn}\sum_{k\neq i}\text{logit}^{(t)}_{\mathcal{I}(\mathbf{b}_i)}(\Xi^{(t)}([\mathbf{a}'_k\;\;\mathbf{r}_1])) .
	\end{align}
\end{itemize}
As $\text{logit}_{k}(\mathbf{a}_i\mathbf{r}_1) <\frac{2}{\sqrt{d}}$ for all $k\in[d]\backslash\{\mathcal{I}(\mathbf{b}_i)\}$ due to the fact that $\text{logit}_{k}(\Xi^{(t)}([\mathbf{a}_i\;\;\mathbf{r}_1]))\le \text{logit}_{\mathcal{I}(\mathbf{b}_i)}(\Xi^{(t)}([\mathbf{a}_i\;\;\mathbf{r}_1]))/2$, we have
\begin{itemize}
	\item for all $i\in[N]$ and $l\in[m]$,
	\begin{align}
		&\gamma^{(T_1)}_{i,\mathcal{I}(\mathbf{b}_i),l} =  \Theta\left(\frac{\lambda\eta}{mn}T_1\right) = \Theta\left(\frac{\log(d)}{\lambda\sqrt{d}\sigma_0}\right),\\
        &\rho^{(T_1)}_{i,\mathcal{I}(\mathbf{b}_i),l} =  \Theta\left(\frac{\lambda\eta}{mn}T_1\right) = \Theta\left(\frac{\log(d)}{\lambda \sqrt{d}\sigma_0}\right),
	\end{align}
	\item for all $i\in[N]$ and $j\notin [d]\backslash\{\mathcal{I}(\mathbf{b}_i)\}_{i=1}^N$,
	\begin{align}
		&-\mathcal{O}\left(\frac{\lambda\eta}{mn\sqrt{d}}T_1\right) = -\mathcal{O}\left(\frac{\log(d)}{\lambda d\sigma_0}\right) \le\gamma^{(T_1)}_{i,j,l} \le \gamma^{(0)}_{i,j,l} = 0,\\
        &-\mathcal{O}\left(\frac{\lambda\eta}{mn\sqrt{d}}T_1\right) = -\mathcal{O}\left(\frac{\log(d)}{\lambda d\sigma_0}\right) \le\rho^{(T_1)}_{i,j,l} \le \rho^{(0)}_{i,j,l} = 0,
	\end{align} 
	\item for all $i\in[N]$ and $l\in[m]$,
	\begin{align}
		&\zeta_{1,\mathcal{I}(\mathbf{b}_i),l}^{(T_1)} = \Theta\left(\frac{\eta\lambda}{mn}T_1\right) = \Theta\left(\frac{\log(d)}{\lambda\sqrt{d}\sigma_0}\right).
	\end{align}
    \item for all $j\notin [d]\backslash\{\mathcal{I}(\mathbf{b}_i)\}_{i=1}^N$, 
    \begin{align}
        &-\mathcal{O}\left(\frac{\lambda\eta}{mn\sqrt{d}}T_1\right) = -\mathcal{O}\left(\frac{\log(d)}{\lambda d\sigma_0}\right) \le\zeta_{1,j,l}^{(T_1)} \le \zeta^{(0)}_{1,j,l} = 0.
    \end{align}
\end{itemize}

The sixth statement holds as the feature layer is not updated.

This completes the proof.
\end{proof}

\subsection{Convergence of Stage-2 training (first on \texorpdfstring{$\mathcal{S}_1\cup\mathcal{S}_2$}{S\_1 U S\_2}, then on \texorpdfstring{$\mathcal{S}_3$}{S\_3})}

\begin{lemma}
	There exists an iteration $t \in(T_1, T_2]$, where $T_2 = \Theta(\oldfrac{mN^2}{\lambda^2d\sigma_0^2\eta})$ such that
	\begin{align}
		\mathcal{L}_{\mathcal{S}_1\cup \mathcal{S}_2}(\mathbf{Z}^{(t)},\mathbf{V}^{(t)},\mathbf{W}^{(t)}) \le \frac{0.01}{N}.
	\end{align}
\end{lemma}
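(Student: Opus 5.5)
I will follow the same convex-comparison argument used for the Stage-2 convergence lemma in the joint-training case. The training set is now $\mathcal{S}=\mathcal{S}_1\cup\mathcal{S}_2$ with $n=2N$. In Stage 2, $\mathbf{Z}$ and $\mathbf{V}$ are frozen at their values at time $T_1$. Hence the output $\mathbf{f}$ is linear (degree-one homogeneous) in $\mathbf{W}$, and the per-sample cross-entropy is convex in $\mathbf{W}$.

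\textbf{Step 1: a reference solution.} I fix a comparison point
\begin{align}
\mathbf{w}^*_{k,l}=\mathbf{w}^{(0)}_{k,l}+\frac{C}{\lambda}\log(1/\epsilon)\Big(\frac{\mathbf{V}^{(0)}\mathbf{a}_i}{\|\mathbf{V}^{(0)}\mathbf{a}_i\|_2^2}+\frac{\mathbf{V}^{(0)}\mathbf{a}'_i}{\|\mathbf{V}^{(0)}\mathbf{a}'_i\|_2^2}+\frac{\mathbf{V}^{(0)}\mathbf{r}_1}{\|\mathbf{V}^{(0)}\mathbf{r}_1\|_2^2}\Big)\quad\text{for } k=\mathcal{I}(\mathbf{b}_i),
\end{align}
and $\mathbf{w}^*_{k,l}=\mathbf{w}^{(0)}_{k,l}$ otherwise. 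Only the $\mathbf{r}_1$ and $\mathbf{b}$-label components are needed, since $\mathcal{S}_3$ is absent.

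\textbf{Step 2: margin of the reference solution.} I need the analogue of the bound (\ref{equ: A1}) on the term $A$. For a sample $([\mathbf{a}_i\;\mathbf{r}_1],\mathcal{I}(\mathbf{b}_i))$, the frozen feature $\mathbf{V}^{(T_1)}\Xi^{(T_1)}(\mathbf{X})$ is a near-balanced mixture of $\mathbf{V}^{(T_1)}\mathbf{a}_i$ and $\mathbf{V}^{(T_1)}\mathbf{r}_1$. This follows from statement 1 of Lemma~\ref{lemma: s1}, since the attention ratio is $1+o(1)$. By Lemma~\ref{lemma: decomposition}, each of these vectors equals its $\mathbf{V}^{(0)}$-part plus a combination of the $\mathbf{w}^{(0)}_{k,l}$ with coefficients controlled by statements 2–5 of Lemma~\ref{lemma: s1}.

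Pairing with $\mathbf{w}^*-\mathbf{w}^{(0)}$, the $\mathbf{V}^{(0)}$-parts give the main contribution, of order $C\log(1/\epsilon)/\lambda$ on the correct class. The cross terms are handled with Lemmas 3–4: the inner products $\langle\mathbf{V}^{(0)}\cdot,\mathbf{w}^{(0)}\rangle$ and the inner products between distinct $\mathbf{V}^{(0)}$-vectors are of order $\sigma_0^2\sqrt{d\log}$. Combined with the coefficient sizes, this gives the error $\mathcal{O}\big(\sqrt{N}\log(d)\log(1/\epsilon)/(\lambda^2\sqrt{md}\sqrt{d\sigma_0^2})\big)$ for both correct and incorrect classes. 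Here I also use that $\mathbf{W}^{(0)}$ contributes equally to the difference $\mathbf{W}^{(t)}-\mathbf{W}^*$ and cancels in the homogeneity identity. Under Condition~\ref{condition:condition} this error is dominated, so $\mathbf{W}^*$ attains loss at most $\epsilon$ on every sample of $\mathcal{S}$.

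\textbf{Step 3: the descent inequality.} I expand $\|\mathbf{W}^{(t)}-\mathbf{W}^*\|_F^2-\|\mathbf{W}^{(t+1)}-\mathbf{W}^*\|_F^2$ exactly as in (\ref{equ: increment1}). Convexity plus homogeneity give
\[
\langle\nabla\mathcal{L},\mathbf{W}^{(t)}-\mathbf{W}^*\rangle\ge\mathcal{L}_\mathcal{S}(\mathbf{W}^{(t)})-\epsilon .
\]
For the term $B$, I use
\[
B\le 2\eta^2\max_{\mathbf{X}}\|\mathbf{V}^{(T_1)}\Xi^{(T_1)}(\mathbf{X})\|_2^2\,\mathcal{L}_\mathcal{S},
\]
together with the bound $\mathcal{O}(Nm\log^2(d)/\lambda^2)$ that follows from Lemma~\ref{lemma: s1}. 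The learning-rate condition then ensures $B\le\eta\mathcal{L}_\mathcal{S}$.

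Telescoping over $T_2$ steps gives
\[
\min_t\mathcal{L}_\mathcal{S}\le\frac{\|\mathbf{W}^{(T_1)}-\mathbf{W}^*\|_F^2}{\eta T_2}+2\epsilon .
\]
The numerator is $\mathcal{O}\big(mN\log^2(1/\epsilon)/(\lambda^2 d\sigma_0^2)\big)$. With $T_2=\Theta\big(mN^2/(\lambda^2 d\sigma_0^2\eta)\big)$, absorbing the $\log^2(1/\epsilon)$ into the constant as the paper does, and choosing $\epsilon=\Theta(0.01/N)$, the right-hand side is at most $0.01/N$.

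\textbf{Main obstacle.} The delicate step is Step 2: showing that the cross terms from the Stage-1 components $\gamma,\rho,\zeta$ do not destroy the margin of $\mathbf{W}^*$. This is where the large-$d$ and small-$\sigma_0$ conditions are actually consumed. I would bound these cross terms sum by sum, using the incoherence estimates and the $\Theta(\log d/(\lambda\sqrt d\sigma_0))$ sizes of the coefficients.
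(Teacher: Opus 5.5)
Your proposal is the paper's proof in all essentials: a reference point $\mathbf{W}^*$ built from the $\mathbf{V}^{(0)}$ directions, the convexity-plus-homogeneity descent inequality with the same $A$/$B$ decomposition, telescoping, and $\epsilon=0.003/N$. On the log factor you flag, the paper's proof keeps $\log^2(1/\epsilon)$ explicit in $T_2$, so the stated $T_2$ does hide a $\log^2 N$ factor, as you note. The differences are minor. First, the paper's $\mathbf{w}^*_{\mathcal{I}(\mathbf{b}_i),l}$ here omits your $\mathbf{V}^{(0)}\mathbf{r}_1$ term, which is unnecessary because it raises every $\mathcal{I}(\mathbf{b}_j)$ logit by the same amount; it is harmless for the margin, though it makes the incorrect-class part of $A$ main-order rather than error-sized for the other $\mathbf{b}$-labels. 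Second, the paper bounds the frozen features by $\mathcal{O}(Nmd\sigma_0^2\log^2(d)/\lambda^2)$, which is what Condition~\ref{condition:condition}'s step size is tuned to. Your bound $\mathcal{O}(Nm\log^2(d)/\lambda^2)$ is the order actually forced by statement 4 of Lemma~\ref{lemma: s1}, but with it the step $B\le\eta\mathcal{L}_{\mathcal{S}}$ requires $\eta\lesssim\lambda^2/(Nm\log^2 d)$, which Condition~\ref{condition:condition} does not literally give when $d\sigma_0^2\ll 1$.
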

\begin{proof}
	Let 
	\begin{equation}
		\mathbf{w}^*_{k,l} = 
        \mathbf{w}^{(0)}_{k,l} + 
    \begin{cases}
C\cdot\log(1/\epsilon)\left(\frac{\mathbf{V}^{(0)}\mathbf{a}_i}{\|\mathbf{V}^{(0)}\mathbf{a}_i\|^2_2}+ \frac{\mathbf{V}^{(0)}\mathbf{a}'_i}{\|\mathbf{V}^{(0)}\mathbf{a}'_i\|^2_2}\right)/\lambda, & \text{ if $k=\mathcal{I}(\mathbf{b}_i)$},\\
0 & \text{ if $k\neq\mathcal{I}(\mathbf{b}_i), \forall i\notin [N]$}.\\
\end{cases}
	\end{equation}
	 for all $k\in[d], i\in[N]$ and $l\in[m]$.
	 For simplicity, we denote $\mathcal{S} = \mathcal{S}_1\cup \mathcal{S}_2$ and $n = |\mathcal{S}| = 2N$.
	Then, we have
	\begin{equation}\label{equ: increment3}
		\begin{aligned}
			&\left\|\mathbf{W}^{(t)} - \mathbf{W}^*\right\|_F^2 - \left\|\mathbf{W}^{(t+1)} - \mathbf{W}^*\right\|_F^2\\
			=&  2\eta\langle\nabla \mathcal{L}_{\mathcal{S}}(\mathbf{Z}^{(t)},\mathbf{V}^{(t)},\mathbf{W}^{(t)}), \mathbf{W}^{(t)} - \mathbf{W}^*\rangle - \eta^2 \left\|\nabla \mathcal{L}_{\mathcal{S}}(\mathbf{Z}^{(t)},\mathbf{V}^{(t)},\mathbf{W}^{(t)})\right\|_F^2\\
			=& \frac{2\eta}{n} \sum_{(\mathbf{X},y)\in\mathcal{S}}\sum_{k=1}^d \frac{\partial \mathcal{L}}{\partial f_k} \langle\nabla f_k(\mathbf{Z}^{(t)},\mathbf{V}^{(t)},\mathbf{W}^{(t)},\mathbf{X}),\mathbf{W}^{(t)}\rangle -
			\frac{2\eta}{n} \sum_{(\mathbf{X},y)\in\mathcal{S}}\sum_{k=1}^d \frac{\partial \mathcal{L}}{\partial f_k} \underbrace{\langle \nabla f_k(\mathbf{Z}^{(t)},\mathbf{V}^{(t)},\mathbf{W}^{(t)},\mathbf{X}), \mathbf{W}^*\rangle}_{A}\\
			&- \underbrace{\eta^2 \left\|\nabla \mathcal{L}_\mathcal{S}(\mathbf{Z}^{(t)},\mathbf{V}^{(t)},\mathbf{W}^{(t)})\right\|_F^2}_{B}.
		\end{aligned}
	\end{equation}
	Next, we bound the terms $A$ and $B$.
	For the term $A$, we have
	\begin{align}\label{equ: A3}
		A 
		\begin{cases}\ge \frac{C\log(1/\epsilon)}{\lambda} - \mathcal{O}(\oldfrac{\sqrt{N}\log(d)\log(1/\epsilon)}{\lambda^2\sqrt{md}\sqrt{d\sigma_0^2}})& \text{if } k = y, \\
			\le \mathcal{O}(\oldfrac{\sqrt{N}\log(d)\log(1/\epsilon)}{\lambda^2\sqrt{md}\sqrt{d\sigma_0^2}}) & \text{if } k \neq y.
		\end{cases}
	\end{align}
	For the term $B$, we have
	\begin{equation}\label{equ: B3}
	\begin{aligned}
		B \le& \eta^2 \left[ \frac{1}{n} \sum_{(\mathbf{X},y)\in\mathcal{S}}\sum_{k\in[d]}\left|\frac{\partial \mathcal{L}(\mathbf{Z}^{(t)},\mathbf{V}^{(t)},\mathbf{W}^{(t)},\mathbf{X},y)}{\partial f_k}\right| \left\| \nabla f_k(\mathbf{Z}^{(t)},\mathbf{V}^{(t)},\mathbf{W}^{(t)},\mathbf{X})\right\|_F\right]^2\\
		\le& 2\eta^2\max_{(\mathbf{X},y)\in\mathcal{S}}\|\mathbf{V}^{(t)}\Xi^{(t)}(\mathbf{X})\|_2^2\cdot\mathcal{L}_\mathcal{S}(\mathbf{Z},\mathbf{V},\mathbf{W}).
	\end{aligned}
	\end{equation}
	Next, we need to bound $\max_{(\mathbf{X},y)\in\mathcal{S}}\|\mathbf{V}^{(t)}\Xi^{(t)}(\mathbf{X})\|_2^2$. 
	By Lemma \ref{lemma: s1}, We have
	\begin{align}
		\max_{(\mathbf{X},y)\in\mathcal{S}}\|\mathbf{V}^{(t)}\Xi^{(t)}(\mathbf{X})\|_2^2 = \max_{(\mathbf{X},y)\in\mathcal{S}}\|\mathbf{V}^{(T_1)}\Xi^{(T_1)}(\mathbf{X})\|_2^2 = \mathcal{O}(Nmd\sigma_0^2\log^2(d)/\lambda^2).
	\end{align}
	
	Combining (\ref{equ: A3}), (\ref{equ: B3}), and (\ref{equ: increment3}), with $\eta = \mathcal{O}(\oldfrac{\lambda^2}{Nmd\sigma_0^2\log^2(d)})$, we have
	\begin{equation}\label{equ: descent5}
		\begin{aligned}
			&\left\|\mathbf{W}^{(t)} - \mathbf{W}^*\right\|_F^2 - \left\|\mathbf{W}^{(t+1)} - \mathbf{W}^*\right\|_F^2\\
			=&  \frac{2\eta}{n} \sum_{(\mathbf{X},y)\in\mathcal{S}}\left(\sum_{k=1}^d \frac{\partial \mathcal{L}}{\partial f_k} \langle\nabla f_k(\mathbf{Z}^{(t)},\mathbf{V}^{(t)},\mathbf{W}^{(t)},\mathbf{X}),\mathbf{W}^{(t)}\rangle - \sum_{k=1}^d \frac{\partial \mathcal{L}}{\partial f_k} \langle \nabla f_k(\mathbf{Z}^{(t)},\mathbf{V}^{(t)},\mathbf{W}^{(t)},\mathbf{X}), \mathbf{W}^*\rangle\right)\\
			&- \eta^2 \left\|\nabla \mathcal{L}_\mathcal{S}(\mathbf{Z}^{(t)},\mathbf{V}^{(t)},\mathbf{W}^{(t)})\right\|_F^2\\
			\overset{(a)}{\ge}& \frac{2\eta}{n}\sum_{(\mathbf{X},y)\in\mathcal{S}}\left(\mathcal{L}(\mathbf{Z}^{(t)},\mathbf{V}^{(t)},\mathbf{W}^{(t)},\mathbf{X}_i,y_i)-\log(1/\epsilon)\right) - \eta\mathcal{L}_\mathcal{S}(\mathbf{Z}^{(t)},\mathbf{V}^{(t)},\mathbf{W}^{(t)})\\
			=&\eta \mathcal{L}_\mathcal{S}(\mathbf{Z}^{(t)},\mathbf{V}^{(t)},\mathbf{W}^{(t)}) - 2\eta \epsilon,
		\end{aligned}
	\end{equation}
	where $(a)$ is by the homogeneity of the feature layer and convexity of the cross-entropy function.
	
	As a result, we have
	\begin{align}
		\frac{1}{T_2+1}\sum_{t'=0}^{T_2}\mathcal{L}_{\mathcal{S}_1\cup\mathcal{S}_2}(\mathbf{Z}^{(T_1+t')},\mathbf{V}^{(T_1+t')},\mathbf{W}^{(T_1+t')}) \le \frac{\left\|\mathbf{W}^{(0)} - \mathbf{W}^*\right\|_F^2}{\eta (T_2+1)} + 2\epsilon.
	\end{align}
	Here, we have
	\begin{align}
		\left\|\mathbf{W}^{(0)} - \mathbf{W}^*\right\|_F^2 = \mathcal{O}\left(\frac{mN\log^2(1/\epsilon)}{\lambda^2d\sigma_0^2}\right).
	\end{align}
    With $T_2 = \Theta(\oldfrac{mN\log^2(1/\epsilon)}{\lambda^2d\sigma_0^2\eta \epsilon})$, we have
    \begin{align}
        \frac{1}{T_2+1}\sum_{t'=0}^{T_2}\mathcal{L}_{\mathcal{S}_1\cup\mathcal{S}_2}(\mathbf{Z}^{(T_1+t')},\mathbf{V}^{(T_1+t')},\mathbf{W}^{(T_1+t')}) \le 3 \epsilon.
    \end{align}
    Letting $\epsilon_ = 0.003/N$ finishes the proof.
\end{proof}
\subsection{Convergence of Stage-3 training (first on \texorpdfstring{$\mathcal{S}_1\cup\mathcal{S}_2$}{S\_1 U S\_2}, then on \texorpdfstring{$\mathcal{S}_3$}{S\_3})}
\begin{lemma}\label{lemma: ss3}
	There exists an iteration $t \in(T_1+T_2, T_1 + T_2 + T_3]$, where $T_3 = \Theta(\oldfrac{mN^2}{\lambda^2d\sigma_0^2\eta})$ such that
	\begin{align}
		\mathcal{L}_{\mathcal{S}_3}(\mathbf{Z}^{(t)},\mathbf{V}^{(t)},\mathbf{W}^{(t)}) \le \frac{0.01}{N}.
	\end{align}
\end{lemma}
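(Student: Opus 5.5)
The plan follows the template of the two preceding Stage-2 convergence lemmas: a comparator (reference-point) argument for gradient descent on the feature layer, with $\mathbf{Z},\mathbf{V}$ frozen at their values from the end of Stage 1. The one new ingredient is that the starting point is now $\mathbf{W}^{(T_1+T_2)}$, the output of Phase-1 training on $\mathcal{S}_1\cup\mathcal{S}_2$, not $\mathbf{W}^{(0)}$.

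\textbf{Step 1: the comparator.} Set $\mathcal{S}=\mathcal{S}_3$, so $n=N$, and define
\[
\mathbf{w}^*_{k,l}=\mathbf{w}^{(T_1+T_2)}_{k,l}+\frac{C\log(1/\epsilon)}{\lambda}\Big(\frac{\mathbf{V}^{(0)}\mathbf{a}'_i}{\|\mathbf{V}^{(0)}\mathbf{a}'_i\|_2^2}+\frac{\mathbf{V}^{(0)}\mathbf{r}_2}{\|\mathbf{V}^{(0)}\mathbf{r}_2\|_2^2}\Big)
\]
if $k=\mathcal{I}(\mathbf{c}_i)$, with no added term otherwise.

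\textbf{Step 2: the term $A$.} Bound $\langle\nabla f_k,\mathbf{W}^*\rangle$ as in (\ref{equ: A3}). The added direction contributes at least $C\log(1/\epsilon)/\lambda$ to the correct class, up to cross terms. The cross terms are controlled by three facts:
\begin{itemize}
\item Lemma 3 gives near-orthogonality of the initial vectors $\mathbf{V}^{(0)}\mathbf{a}'_i,\mathbf{V}^{(0)}\mathbf{r}_2$.
\item Lemma \ref{lemma: s1} shows that $\mathbf{V}^{(T_1)}\mathbf{r}_2=\mathbf{V}^{(0)}\mathbf{r}_2$, since $\mathbf{r}_2$ is absent from Phase 1, and that $\mathbf{V}^{(T_1)}\mathbf{a}'_i$ consists of its initialization plus $\Theta(\log d/(\lambda\sigma_0\sqrt d))$ multiples of $\mathbf{w}^{(0)}_{\mathcal{I}(\mathbf{b}_i),l}$.
\item Lemma \ref{lemma: inner} controls the inner products of $\mathbf{w}^{(0)}$ with these vectors.
\end{itemize}
The attention balance from Lemma \ref{lemma: s1} keeps $\Xi([\mathbf{a}'_i\;\mathbf{r}_2])$ close to $(\mathbf{a}'_i+\mathbf{r}_2)/2$.

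\textbf{Step 3: the term $B$ and the descent inequality.} Bound $B$ via $\max\|\mathbf{V}^{(T_1)}\Xi\|_2^2=\mathcal{O}(Nmd\sigma_0^2\log^2 d/\lambda^2)$, combined with the learning-rate condition. Together with homogeneity of $\mathbf{f}$ in $\mathbf{W}$ and convexity of cross-entropy, this gives the per-step descent inequality
\[
\|\mathbf{W}^{(t)}-\mathbf{W}^*\|_F^2-\|\mathbf{W}^{(t+1)}-\mathbf{W}^*\|_F^2\ge \eta\mathcal{L}_{\mathcal{S}_3}-2\eta\epsilon.
\]

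\textbf{Step 4: telescoping.} Sum over $t\in(T_1+T_2,T_1+T_2+T_3]$. Since $\|\mathbf{W}^{(T_1+T_2)}-\mathbf{W}^*\|_F^2=\mathcal{O}(mN\log^2(1/\epsilon)/(\lambda^2 d\sigma_0^2))$, the choice of $T_3$ and $\epsilon=0.003/N$ make the average loss at most $3\epsilon$. Some iterate in the window therefore achieves loss at most $0.01/N$.

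\textbf{Main obstacle.} The hard step is the homogeneity step (a). In the earlier lemmas the comparator was built around $\mathbf{W}^{(0)}$; here it is built around $\mathbf{W}^{(T_1+T_2)}$, which carries large weights on classes $\mathcal{I}(\mathbf{b}_j)$ from Phase 1. I need to check that these weights do not produce sizable logits on the Phase-3 inputs $[\mathbf{a}'_i\;\mathbf{r}_2]$. The concern is real, because $\mathbf{V}\mathbf{a}'_i$ is aligned with the $\mathcal{I}(\mathbf{b}_i)$ directions, so class $\mathcal{I}(\mathbf{b}_i)$ starts with a positive logit of order $\log(1/\epsilon)$.

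I would first try to absorb this into the comparator: increase the constant $C$ so that the margin $C\log(1/\epsilon)$ dominates the Phase-1 logit of $\mathcal{I}(\mathbf{b}_i)$. That logit is bounded using the Phase-2 stability bound $\mathcal{O}(\eta\lambda T_2\|\mathbf{V}\mathbf{a}'_i\|^2/(mn))$. Equivalently, the comparator would also subtract a multiple of the $\mathcal{I}(\mathbf{b}_i)$ component. This only changes $\|\mathbf{W}^*-\mathbf{W}^{(T_1+T_2)}\|_F^2$ by constant factors, so the rate $T_3$ is preserved.
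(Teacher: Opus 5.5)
Your Steps 1--4 are the right template, and you have found the real issue: Phase 3 starts at $\mathbf{W}^{(T_1+T_2)}$, which already puts a large logit on class $\mathcal{I}(\mathbf{b}_i)$ for the input $[\mathbf{a}'_i\;\mathbf{r}_2]$. Your resolution of it, however, does not close.

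\textbf{Where the gap is.} Because you anchor $\mathbf{W}^*$ at $\mathbf{W}^{(T_1+T_2)}$, step (a) needs every $\mathcal{S}_3$ sample to have loss at most $\epsilon$ under $\mathbf{W}^*$. So the added margin $C\log(1/\epsilon)$ must beat that inherited logit. The only bound you cite for it is the stability bound. Multiply it by $\lambda$ to get a logit, and evaluate at $T_2=\Theta(mN^2/(\lambda^2 d\sigma_0^2\eta))$ and $\|\mathbf{V}^{(T_1)}\mathbf{a}'_i\|_2^2=\Theta(m\log^2(d)/\lambda^2)$. The result is of order $mN\log^2(d)/(\lambda^2 d\sigma_0^2)$. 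The small-initialization part of Condition~\ref{condition:condition} gives $d\sigma_0^2\le C^{-2}nm\log^2(d)/(d\lambda^2)$, so this is at least of order $dN/n=\Omega(d)$.

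Consequently $C$ would have to be polynomially large, and $\|\mathbf{W}^*-\mathbf{W}^{(T_1+T_2)}\|_F^2$ would grow by $C^2$. The claim that only constants change is therefore false. Nothing else available bounds the inherited logit by $\mathcal{O}(\log(1/\epsilon))$ either: gradient descent keeps enlarging the Phase-1 margins for all $T_2$ steps, not only until the loss first drops below $0.01/N$.

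Your ``subtract the $\mathcal{I}(\mathbf{b}_i)$ component'' variant has the same cost, since the amount to cancel scales with that logit. If you instead cancel by resetting those rows to $\mathbf{W}^{(0)}$, you incur $\|\mathbf{W}^{(T_1+T_2)}-\mathbf{W}^{(0)}\|_F$, which the proposal never bounds.

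\textbf{The missing idea.} Anchor the comparator at $\mathbf{W}^{(0)}$, as the paper does: for $k=\mathcal{I}(\mathbf{c}_i)$, set $\mathbf{w}^*_{k,l}=\mathbf{w}^{(0)}_{k,l}+\frac{C}{\lambda}\log(1/\epsilon)\,\mathbf{V}^{(0)}\mathbf{a}'_i/\|\mathbf{V}^{(0)}\mathbf{a}'_i\|_2^2$. Your extra $\mathbf{V}^{(0)}\mathbf{r}_2$ term is harmless. At $(\mathbf{Z}^{(T_1)},\mathbf{V}^{(T_1)},\mathbf{W}^{(0)})$ the logits on $[\mathbf{a}'_i\;\mathbf{r}_2]$ are $\tilde{\mathcal{O}}(1)$ by the same bound as in Stage~1. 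So a constant $C$ already gives comparator loss at most $\epsilon$, and no Phase-1 logit enters step (a).

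The price is that the telescoped bound must use the true starting distance $\|\mathbf{W}^{(T_1+T_2)}-\mathbf{W}^*\|_F^2$, not $\|\mathbf{W}^{(0)}-\mathbf{W}^*\|_F^2$. The paper's written proof silently uses the latter, so this step is missing there too. Bound the displacement with the Phase-1 descent inequality:
\begin{itemize}
\item Its right-hand side is at least $-2\eta\epsilon$ at every step, and $\mathbf{W}^{(T_1)}=\mathbf{W}^{(0)}$. Writing $\mathbf{W}^*_1$ for the Phase-1 comparator, this gives $\|\mathbf{W}^{(T_1+T_2)}-\mathbf{W}^*_1\|_F^2\le\|\mathbf{W}^{(0)}-\mathbf{W}^*_1\|_F^2+2\eta\epsilon T_2$.
\item With the proof's choice $T_2=\Theta(mN\log^2(1/\epsilon)/(\lambda^2 d\sigma_0^2\eta\epsilon))$, both terms are $\mathcal{O}(mN\log^2(1/\epsilon)/(\lambda^2 d\sigma_0^2))$.
\item The triangle inequality then puts $\|\mathbf{W}^{(T_1+T_2)}-\mathbf{W}^*\|_F^2$ at the same order.
\end{itemize}
With this, your Steps 3--4 yield the lemma with the stated $T_3$.
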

\begin{proof}
	Let 
	\begin{equation}
		\mathbf{w}^*_{k,l} = 
        \mathbf{w}^{(0)}_{k,l} + 
    \begin{cases}
C/\lambda\cdot\log(1/\epsilon)\frac{\mathbf{V}^{(0)}\mathbf{a}'_i}{\|\mathbf{V}^{(0)}\mathbf{a}'_i\|^2_2}, & \text{ if $k=\mathcal{I}(\mathbf{c}_i)$},\\
0 & \text{ if $k\neq\mathcal{I}(\mathbf{c}_i), \forall i\notin [N]$}.\\
\end{cases}
	\end{equation}
	 for all $k\in[d], i\in[N]$ and $l\in[m]$.
	 For simplicity, we denote $\mathcal{S} = \mathcal{S}_3$ and $n = |\mathcal{S}| = N$.
	Then, we have
	\begin{equation}\label{equ: increment4}
		\begin{aligned}
			&\left\|\mathbf{W}^{(t)} - \mathbf{W}^*\right\|_F^2 - \left\|\mathbf{W}^{(t+1)} - \mathbf{W}^*\right\|_F^2\\
			=&  2\eta\langle\nabla \mathcal{L}_{\mathcal{S}}(\mathbf{Z}^{(t)},\mathbf{V}^{(t)},\mathbf{W}^{(t)}), \mathbf{W}^{(t)} - \mathbf{W}^*\rangle - \eta^2 \left\|\nabla \mathcal{L}_{\mathcal{S}}(\mathbf{Z}^{(t)},\mathbf{V}^{(t)},\mathbf{W}^{(t)})\right\|_F^2\\
			=& \frac{2\eta}{n} \sum_{(\mathbf{X},y)\in\mathcal{S}}\sum_{k=1}^d \frac{\partial \mathcal{L}}{\partial f_k} \langle\nabla f_k(\mathbf{Z}^{(t)},\mathbf{V}^{(t)},\mathbf{W}^{(t)},\mathbf{X}),\mathbf{W}^{(t)}\rangle -
			\frac{2\eta}{n} \sum_{(\mathbf{X},y)\in\mathcal{S}}\sum_{k=1}^d \frac{\partial \mathcal{L}}{\partial f_k} \underbrace{\langle \nabla f_k(\mathbf{Z}^{(t)},\mathbf{V}^{(t)},\mathbf{W}^{(t)},\mathbf{X}), \mathbf{W}^*\rangle}_{A}\\
			&- \underbrace{\eta^2 \left\|\nabla \mathcal{L}_\mathcal{S}(\mathbf{Z}^{(t)},\mathbf{V}^{(t)},\mathbf{W}^{(t)})\right\|_F^2}_{B}.
		\end{aligned}
	\end{equation}
	Next, we bound the terms $A$ and $B$.
	For the term $A$, we have
	\begin{align}\label{equ: A4}
		A 
		\begin{cases}\ge \frac{C\log(1/\epsilon)}{\lambda} - \mathcal{O}(\oldfrac{\sqrt{N}\log(d)\log(1/\epsilon)}{\lambda^2\sqrt{md}\sqrt{d\sigma_0^2}})& \text{if } k = y, \\
			\le \mathcal{O}(\oldfrac{\sqrt{N}\log(d)\log(1/\epsilon)}{\lambda^2\sqrt{md}\sqrt{d\sigma_0^2}}) & \text{if } k \neq y.
		\end{cases}
	\end{align}
	For the term $B$, we have
	\begin{equation}\label{equ: B4}
	\begin{aligned}
		B \le& \eta^2 \left[ \frac{1}{n} \sum_{(\mathbf{X},y)\in\mathcal{S}}\sum_{k\in[d]}\left|\frac{\partial \mathcal{L}(\mathbf{Z}^{(t)},\mathbf{V}^{(t)},\mathbf{W}^{(t)},\mathbf{X},y)}{\partial f_k}\right| \left\| \nabla f_k(\mathbf{Z}^{(t)},\mathbf{V}^{(t)},\mathbf{W}^{(t)},\mathbf{X})\right\|_F\right]^2\\
		\le& 2\eta^2\max_{(\mathbf{X},y)\in\mathcal{S}}\|\mathbf{V}^{(t)}\Xi^{(t)}(\mathbf{X})\|_2^2\cdot\mathcal{L}_\mathcal{S}(\mathbf{Z},\mathbf{V},\mathbf{W}).
	\end{aligned}
	\end{equation}
	Next, we need to bound $\max_{(\mathbf{X},y)\in\mathcal{S}}\|\mathbf{V}^{(t)}\Xi^{(t)}(\mathbf{X})\|_2^2$. 
	By Lemma \ref{lemma: s1}, We have
	\begin{align}
		\max_{(\mathbf{X},y)\in\mathcal{S}}\|\mathbf{V}^{(t)}\Xi^{(t)}(\mathbf{X})\|_2^2 = \max_{(\mathbf{X},y)\in\mathcal{S}}\|\mathbf{V}^{(T_1)}\Xi^{(T_1)}(\mathbf{X})\|_2^2 = \mathcal{O}(Nmd\sigma_0^2\log^2(d)/\lambda^2).
	\end{align}
	
	Combining (\ref{equ: A4}), (\ref{equ: B4}), and (\ref{equ: increment4}), with $\eta = \Theta(\oldfrac{\lambda^2}{Nmd\sigma_0^2\log^2(d)})$, we have
	\begin{equation}\label{equ: descent}
		\begin{aligned}
			&\left\|\mathbf{W}^{(t)} - \mathbf{W}^*\right\|_F^2 - \left\|\mathbf{W}^{(t+1)} - \mathbf{W}^*\right\|_F^2\\
			=&  \frac{2\eta}{n} \sum_{(\mathbf{X},y)\in\mathcal{S}}\left(\sum_{k=1}^d \frac{\partial \mathcal{L}}{\partial f_k} \langle\nabla f_k(\mathbf{Z}^{(t)},\mathbf{V}^{(t)},\mathbf{W}^{(t)},\mathbf{X}),\mathbf{W}^{(t)}\rangle - \sum_{k=1}^d \frac{\partial \mathcal{L}}{\partial f_k} \langle \nabla f_k(\mathbf{Z}^{(t)},\mathbf{V}^{(t)},\mathbf{W}^{(t)},\mathbf{X}), \mathbf{W}^*\rangle\right)\\
			&- \eta^2 \left\|\nabla \mathcal{L}_\mathcal{S}(\mathbf{Z}^{(t)},\mathbf{V}^{(t)},\mathbf{W}^{(t)})\right\|_F^2\\
			\overset{(a)}{\ge}& \frac{2\eta}{n}\sum_{(\mathbf{X},y)\in\mathcal{S}}\left(\mathcal{L}(\mathbf{Z}^{(t)},\mathbf{V}^{(t)},\mathbf{W}^{(t)},\mathbf{X}_i,y_i)-\log(1/\epsilon)\right) - \eta\mathcal{L}_\mathcal{S}(\mathbf{Z}^{(t)},\mathbf{V}^{(t)},\mathbf{W}^{(t)})\\
			=&\eta \mathcal{L}_\mathcal{S}(\mathbf{Z}^{(t)},\mathbf{V}^{(t)},\mathbf{W}^{(t)}) - 2\eta \epsilon,
		\end{aligned}
	\end{equation}
	where $(a)$ is by the homogeneity of the feature layer and convexity of the cross-entropy function.
	
	Then, we have
	\begin{align}
		\frac{1}{T_3+1}\sum_{t'=0}^{T_3}\mathcal{L}_{\mathcal{S}_3}(\mathbf{Z}^{(T_1+T_2+t')},\mathbf{V}^{(T_1+T_2+t')},\mathbf{W}^{(T_1+T_2+t')}) \le \frac{\left\|\mathbf{W}^{(0)} - \mathbf{W}^*\right\|_F^2}{\eta (T_3+1)} + 2\epsilon.
	\end{align}
	Here, we have
	\begin{align}
		\left\|\mathbf{W}^{(0)} - \mathbf{W}^*\right\|_F^2 = \mathcal{O}\left(\frac{mN\log^2(1/\epsilon)}{\lambda^2d\sigma_0^2}\right)
	\end{align}
	As a result, when $T_3 =\Theta(\oldfrac{mN\log^2(1/\epsilon)}{\lambda^2d\sigma_0^2\eta \epsilon})$ we have
	\begin{align}
		\frac{1}{T_3+1}\sum_{t'=0}^{T_3}\mathcal{L}_\mathcal{S}(\mathbf{Z}^{(T_1+T_2+t')},\mathbf{V}^{(T_1+T_2+t')},\mathbf{W}^{(T_1+T_2+t')}) \le 3\epsilon.
	\end{align}
    
    Letting $\epsilon = 0.003/N$ finishes the proof.
\end{proof}
\subsection{Feature resemblance (first on \texorpdfstring{$\mathcal{S}_1\cup\mathcal{S}_2$}{S\_1 U S\_2}, then on \texorpdfstring{$\mathcal{S}_3$}{S\_3})}

\begin{proposition}[Feature resemblance]
After the Stage-1 training, for all $i\in[N]$, we have
    $$\frac{\langle\mathbf{V}^{(T_1)}\mathbf{a}_i, \mathbf{V}^{(T_1)}\mathbf{a}'_i\rangle}{\|\mathbf{V}^{(T_1)}\mathbf{a}_i\|_2\|\mathbf{V}^{(T_1)}\mathbf{a}'_i\|_2}\ge 1 - 2\cdot\frac{\mathcal{O}(\frac{nm\log^2(d)}{(d\lambda^2)}+d\sigma_0^2)}{\mathcal{O}(\frac{nm\log^2(d)}{(d\lambda^2)}+d\sigma_0^2)+\Theta(\frac{m\log^2(d)}{\lambda^2})}.$$
\end{proposition}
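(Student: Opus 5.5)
The proof will mirror the joint-training case (Proposition 6) and rest entirely on the coefficient description in Lemma~\ref{lemma: s1} together with the decomposition of Lemma~\ref{lemma: decomposition}. First I will write
\[
\mathbf{V}^{(T_1)}\mathbf{a}_i = \mathbf{u}_i + \mathbf{e}_i,\qquad \mathbf{V}^{(T_1)}\mathbf{a}'_i = \mathbf{u}'_i + \mathbf{e}'_i,
\]
where the \emph{shared} parts are $\mathbf{u}_i=\sum_{l}\gamma^{(T_1)}_{i,\mathcal{I}(\mathbf{b}_i),l}\mathbf{w}^{(0)}_{\mathcal{I}(\mathbf{b}_i),l}$ and $\mathbf{u}'_i=\sum_{l}\rho^{(T_1)}_{i,\mathcal{I}(\mathbf{b}_i),l}\mathbf{w}^{(0)}_{\mathcal{I}(\mathbf{b}_i),l}$. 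The residuals $\mathbf{e}_i,\mathbf{e}'_i$ collect the terms on $\mathbf{w}^{(0)}_{j,l}$ with $j\neq\mathcal{I}(\mathbf{b}_i)$, together with the initial components $\mathbf{V}^{(0)}\mathbf{a}_i$ and $\mathbf{V}^{(0)}\mathbf{a}'_i$.

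Next I will estimate the shared parts. By item 2 of Lemma~\ref{lemma: s1}, every coefficient $\gamma,\rho$ on the label direction is $\Theta(\log d/(\lambda\sqrt d\sigma_0))$. Lemma~\ref{lemma: inner} gives $\|\mathbf{w}^{(0)}_{k,l}\|_2^2=\Theta(\sigma_0^2 d)$ and near-orthogonality across $(k,l)$. Expanding the squared norm, the diagonal terms give $\|\mathbf{u}_i\|_2^2,\|\mathbf{u}'_i\|_2^2,\langle\mathbf{u}_i,\mathbf{u}'_i\rangle=\Theta(m\log^2 d/\lambda^2)$. The off-diagonal terms are of relative order $m\sqrt{\log(dm/\delta)/d}$, which is negligible under Condition~\ref{condition:condition}. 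Moreover, all shared coefficients are positive with comparable magnitudes, since both accumulate the same $\Theta(\lambda\eta/(mn))$ increment per step. This makes $\mathbf{u}_i$ and $\mathbf{u}'_i$ nearly parallel. I will control their ratio by showing that the increments per step differ only through $1-\mathrm{logit}$ factors that are each $\Theta(1)$. If those factors are not identical, I will instead keep $\mathbf{u}_i$ and $\mathbf{u}'_i$ as arbitrary positive combinations of the same nearly orthogonal family and bound their angle directly. This yields an inner product of at least $(1-o(1))\|\mathbf{u}_i\|\|\mathbf{u}'_i\|$ up to the ratio spread, which I will absorb into the stated $\Theta$.

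For the residuals, item 3 of Lemma~\ref{lemma: s1} bounds each off-label coefficient by $\mathcal{O}(\log d/(\lambda d\sigma_0))$. There are at most $dm$ such terms, each of squared norm $\Theta(\sigma_0^2 d)$. Because these vectors are nearly orthogonal, the squared norm of their combination is at most a sum over terms, up to cross terms controlled by Lemma~\ref{lemma: inner}. The paper's bound on the count effectively used is $\mathcal{O}(nm\log^2 d/(d\lambda^2))$, and I will reproduce it by noting that only labels that appear in training ($\mathcal{O}(n)$ of them) receive nonzero updates. The initialization components contribute $\|\mathbf{V}^{(0)}\mathbf{a}_i\|_2^2\le 3\sigma_0^2 d/2$ by the lemma on initial value norms. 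Altogether $\|\mathbf{e}_i\|^2,\|\mathbf{e}'_i\|^2=\mathcal{O}(nm\log^2 d/(d\lambda^2)+d\sigma_0^2)$.

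Finally I will combine the two estimates. Writing $E$ for the residual bound and $U=\Theta(m\log^2 d/\lambda^2)$ for the shared energy, Cauchy--Schwarz on the cross terms gives the following. The inner product is at least $U - \mathcal{O}(\sqrt{UE}) - E$, and each norm is at most $\sqrt{U+E}$. A half-angle style rearrangement (the paper's ``double-angle'' step) then gives cosine at least $1-2E/(E+U)$. Since the ambiguity in the residual bound is absorbed into the $\mathcal{O}$ in the numerator, this matches the displayed bound. Under Condition~\ref{condition:condition}, namely large $d$ and $\sigma_0\le C^{-1}\sqrt{nm}\log d/(d\lambda)$, we get $E/U=o(1)$.

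The main obstacle is the cross-term control. I must verify that $\sum_{l\neq l'}\gamma\rho\langle\mathbf{w}^{(0)}_{\cdot,l},\mathbf{w}^{(0)}_{\cdot,l'}\rangle$ and the shared--residual inner products do not exceed the $\mathcal{O}(E)$ budget. My first attempt will use the $2\sigma_0^2\sqrt{d\log(\cdot)}$ bound of Lemma~\ref{lemma: inner} summed over at most $m^2$ (respectively $m\cdot dm$) pairs, and check this against the $d\ge m^3nN^2/\lambda^6$ requirement.
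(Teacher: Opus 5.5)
Your decomposition matches the paper's: a shared component on the class-$\mathcal{I}(\mathbf{b}_i)$ initial weights plus a small residual, followed by a half-angle estimate. Two of your steps, however, fail as written.

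\emph{Parallelism of the shared parts.} Positivity and comparable magnitudes of $\gamma^{(T_1)}_{i,\mathcal{I}(\mathbf{b}_i),l}$ and $\rho^{(T_1)}_{i,\mathcal{I}(\mathbf{b}_i),l}$ do not make $\mathbf{u}_i$ and $\mathbf{u}'_i$ nearly parallel. For nearly orthogonal, equal-norm vectors, coefficient patterns $(1,2,1,2,\dots)$ and $(2,1,2,1,\dots)$ give cosine about $4/5$. So your fallback of ``arbitrary positive combinations'' leaves a $\Theta(1)$ cosine deficit, which cannot be absorbed into a bound of the form $1-2E/(E+U)$ with $E/U=o(1)$.

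The missing fact comes from the Stage-1 update rule, not from the $\Theta$-statements of Lemma~\ref{lemma: s1}. Since $\mathbf{W}=\mathbf{W}^{(0)}$ is frozen and $f_k$ averages its class block through $\mathbf{W}_2$, the $\mathbf{V}$-gradient applied to $\mathbf{a}_i$ adds to every $\mathbf{w}^{(0)}_{k,l}$ a coefficient proportional to $\mathbb{I}(k=y)-\mathrm{logit}_k$. That coefficient does \emph{not} depend on $l$. Hence $\mathbf{u}_i=\gamma_i\mathbf{s}_i$ and $\mathbf{u}'_i=\rho_i\mathbf{s}_i$ with $\mathbf{s}_i=\sum_l\mathbf{w}^{(0)}_{\mathcal{I}(\mathbf{b}_i),l}$ and $\gamma_i,\rho_i>0$. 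The two shared parts are exactly parallel, and the ratio $\gamma_i/\rho_i$ plays no role; that ratio is where the differing $1-\mathrm{logit}$ factors enter.

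This also repairs your final step. Lower-bounding the inner product by $U-\mathcal{O}(\sqrt{UE})-E$ while bounding the norms separately loses the first-order cancellation and gives only $1-\mathcal{O}(\sqrt{E/U})$. Instead, bound the angle $\theta$ of each vector to $\mathbf{s}_i$ by $\sin\theta\le\|\mathbf{e}_i\|/\|\mathbf{V}^{(T_1)}\mathbf{a}_i\|=\mathcal{O}(\sqrt{E/U})$. Both vectors have positive projection on $\mathbf{s}_i$, so the angle between them is at most $\theta+\theta'$. Then $1-\cos\angle(\mathbf{V}^{(T_1)}\mathbf{a}_i,\mathbf{V}^{(T_1)}\mathbf{a}'_i)\le 2\sin^2((\theta+\theta')/2)=\mathcal{O}(E/U)$.

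\emph{The residual bound.} Your justification is false: it is not true that only the $\mathcal{O}(n)$ training labels receive updates. The softmax term $-\mathrm{logit}_k\,\mathbf{w}^{(0)}_{k,l}$ is present for every $k\in[d]$, which is why item~3 of Lemma~\ref{lemma: s1} is an interval ending at $0$. With item~3 alone, the $dm$ residual terms have squared coefficients $\mathcal{O}(\log^2 d/(\lambda^2d^2\sigma_0^2))$ and squared norms $\Theta(\sigma_0^2 d)$. They sum to $\mathcal{O}(m\log^2 d/\lambda^2)$, the same order as $U$, and the bound collapses.

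What works is to track that the class-$k$ off-target coefficient is proportional to $\sum_{t<T_1}\mathrm{logit}^{(t)}_k$. Cauchy--Schwarz in $t$ then bounds the residual energy, up to near-orthogonality cross terms, by $\mathcal{O}(U)\cdot\max_t\sum_{k\neq y}(\mathrm{logit}^{(t)}_k)^2$. This is small because the off-target softmax mass is spread out: $\sum_{k\neq y}\mathrm{logit}_k^2\le\max_{k\neq y}\mathrm{logit}_k$, which is $\mathcal{O}(1/d)$ while the Stage-1 logits stay bounded. That recovers the $\mathcal{O}(nm\log^2 d/(d\lambda^2))$ budget. The paper only asserts this bound, so this is the argument you would need to supply.
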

\begin{proof}
    Based on Lemma \ref{lemma: s1}, both $\mathbf{V}^{(T_1)}\mathbf{a}$ and $\mathbf{V}^{(T_1)}\mathbf{a}'$ have large coefficients on the components $\mathbf{w}^{(0)}_{\mathcal{I}(\mathbf{b}_i),l}$ for all $l\in[m]$.
    Furthermore, other components together have a squared norm $\mathcal{O}(\frac{n\log^2(d)}{(d\lambda^2)}+d\sigma_0^2)$.
    Therefore, by the double-angle formula and Condition \ref{condition:condition}, we have 
    \begin{equation}
    \begin{aligned}
        \frac{\langle\mathbf{V}^{(T_1)}\mathbf{a}_i, \mathbf{V}^{(T_1)}\mathbf{a}'_i\rangle}{\|\mathbf{V}^{(T_1)}\mathbf{a}_i\|_2\|\mathbf{V}^{(T_1)}\mathbf{a}'_i\|_2} =& 1 - 2\cdot\frac{\mathcal{O}(\frac{nm\log^2(d)}{(d\lambda^2)}+d\sigma_0^2)}{\mathcal{O}(\frac{nm\log^2(d)}{(d\lambda^2)}+d\sigma_0^2)+\Theta(\frac{m\log^2(d)}{\lambda^2})}\\
        =& 1- \mathcal{O}(\frac{n}{d})\\
        =& 1-o(1). 
    \end{aligned}
    \end{equation}
    This finishes the proof.
\end{proof}

\subsection{Proof of success rate bound (first on \texorpdfstring{$\mathcal{S}_1\cup\mathcal{S}_2$}{S\_1 U S\_2}, then on \texorpdfstring{$\mathcal{S}_3$}{S\_3})}
\begin{proof}
Based on Lemma \ref{lemma: ss3}, for $T_3 = \Theta(\oldfrac{mN^2}{\lambda^2d\sigma_0^2\eta })$, there exists $t\in(T_1+T_2,T_1+T_2+T_3]$, we have
\begin{align}\label{equ:loss}
   \mathcal{L}_{\mathcal{S}_3}(\mathbf{Z}^{(t)}, \mathbf{V}^{(t)},\mathbf{W}^{(t)}) \le \frac{0.01}{N},
\end{align}
and 
\begin{align}
    \max_k\frac{1}{m}\sum_{l=1}^m\langle\mathbf{w}^{(t)}_{k,l},\mathbf{V}^{(t)}\mathbf{a}_i\rangle\le \mathcal{O}(\frac{\eta\lambda}{mn}T_3\|\mathbf{V}^{(t)}\mathbf{a}_i\|_2^2) = \mathcal{O}\left(mN\frac{\log^2(d)}{\lambda^3d\sigma_0^2}\right).
\end{align}
Let $\epsilon = 0.01$.
    After training, from equation (\ref{equ:loss}), we have
    \begin{align}
        \mathcal{L}(\mathbf{Z}^{(t)}, \mathbf{V}^{(t)},\mathbf{W}^{(t)},\Xi^{(t)}([\mathbf{a}'_i\,\,\mathbf{r}_2]),\mathcal{I}(\mathbf{c}_i)) \le \epsilon,
    \end{align}
    implying that 
    \begin{align}
        \frac{\lambda}{m} \sum_{l=1}^m\langle\mathbf{w}_{\mathcal{I}(\mathbf{c}_i),l}, \mathbf{V}^{(t)}\Xi^{(t)}([\mathbf{a}'_i\;\;\mathbf{r}_2]))\rangle -\max_{k\neq \mathcal{I}(\mathbf{c}_i)}\frac{\lambda}{m} \sum_{l=1}^m\langle\mathbf{w}_{k,l}, \mathbf{V}^{(t)}\Xi^{(t)}([\mathbf{a}'_i\;\;\mathbf{r}_2]))\rangle\ge \log(\frac{\exp(-\epsilon)}{(1-\exp(-\epsilon)}).
    \end{align}
As a result, we have
\begin{equation}
\begin{aligned}
    &\frac{\lambda}{m} \sum_{l=1}^m\langle\mathbf{w}_{\mathcal{I}(\mathbf{c}_i),l}, \mathbf{V}^{(t)}\Xi^{(t)}([\mathbf{a}_i\;\;\mathbf{r}_2]))\rangle -\max_{k\neq \mathcal{I}(\mathbf{c}_i)}\frac{\lambda}{m} \sum_{l=1}^m\langle\mathbf{w}_{k,l}, \mathbf{V}^{(t)}\Xi^{(t)}([\mathbf{a}_i\;\;\mathbf{r}_2]))\rangle\\
    \ge& \log(\frac{\exp(-\epsilon)}{(1-\exp(-\epsilon)}) - \mathcal{O}\left(\frac{nN\log^2(d)\log^2(1/\epsilon)}{d\epsilon\lambda^2d\sigma_0^2}\right)- \mathcal{O}\left(\exp\left(\tilde{\mathcal{O}}\left(\frac{n\log(d)}{\lambda^2\sqrt{d}}\right)\right)-1\right) \frac{Nd\sigma_0^2\log^2(d)}{\lambda^2}.
\end{aligned}
\end{equation}
Therefore, under Condition 1, we have
\begin{align}
    \frac{\lambda}{m} \sum_{l=1}^m\langle\mathbf{w}_{\mathcal{I}(\mathbf{c}_i),l}, \mathbf{V}^{(t)}\Xi^{(t)}([\mathbf{a}_i\;\;\mathbf{r}_2]))\rangle > \max_{k\neq \mathcal{I}(\mathbf{c}_i)}\frac{\lambda}{m} \sum_{l=1}^m\langle\mathbf{w}_{k,l}, \mathbf{V}^{(t)}\Xi^{(t)}([\mathbf{a}_i\;\;\mathbf{r}_2]))\rangle.
\end{align}
Therefore, we have
\begin{align}
    \mathcal{L}^{0-1}_\mathcal{A}(\mathbf{Z}^{(t)},\mathbf{V}^{(t)},\mathbf{W}^{(t)}) = 0.
\end{align}

This finishes the proof.
\end{proof}

\section{Proof of sequential training (first on \texorpdfstring{$\mathcal{S}_1\cup\mathcal{S}_3$}{S\_1 U S\_3}, then on \texorpdfstring{$\mathcal{S}_2$}{S\_2})}
We assume that Lemmas 3 and 4 hold with probability $1-2\delta'$ by the union bound.
\subsection{Properties during Stage-1 training (first on \texorpdfstring{$\mathcal{S}_1\cup\mathcal{S}_3$}{S\_1 U S\_3}, then on \texorpdfstring{$\mathcal{S}_2$}{S\_2})}
By the choice of $T_1$, Stage 1 stops before the logits saturate. Hence, we have that $1-\text{logit}_y(\mathbf{X}) = \Theta(1)$ holds for all $(\mathbf{X},y)\in\mathcal{S}_1\cup\mathcal{S}_3$. The following properties hold.
\begin{lemma}\label{lemma: s2}
	For any $t\in[0, T_1]$ with $T_1 = \Theta(\oldfrac{mn\log(d)}{\lambda^2\eta \sqrt{d}\sigma_0})$, we have
	\begin{enumerate}
		\item For all data $(\mathbf{X},y)\in\mathcal{S}_1\cup\mathcal{S}_3$, the attention scores are balanced, i.e.,
		\begin{align}
			0.7\le \frac{\alpha_1(\mathbf{Z}^{(t)},\mathbf{X})}{\alpha_2(\mathbf{Z}^{(t)},\mathbf{X})} \le 1.5.
		\end{align}
		\item For all $i\in[N]$, $l_1, l_2 \in [m]$, the coefficients satisfy
		\begin{align}
			\gamma^{(T_1)}_{i,\mathcal{I}(\mathbf{b}_i),l_1} = \Theta\left(\frac{\log(d)}{\lambda \sqrt{d}\sigma_0}\right),
			\rho^{(T_1)}_{i,\mathcal{I}(\mathbf{c}_i),l_2} = \Theta\left(\frac{\log(d)}{\lambda \sqrt{d}\sigma_0}\right).
		\end{align}
		\item For all $i\in[N], j_1\in[d]\backslash\{\mathcal{I}(\mathbf{b}_i)\}_{i=1}^N, j_2 \in [d]\backslash\{\mathcal{I}(\mathbf{c}_i)\}_{i=1}^N$ and $l\in [m]$, the coefficients satisfy
		\begin{align}
			-\mathcal{O}\left(\frac{\log(d)}{\lambda d\sigma_0}\right)\le \gamma^{(T_1)}_{i,j_1,l} \le 0, -\mathcal{O}\left(\frac{\log(d)}{\lambda d\sigma_0}\right)\le \rho^{(T_1)}_{i,j_2,l} \le 0.
		\end{align}
		\item For all $k\in[N]$, $l\in[m]$ the coefficients satisfy
		\begin{align}
			\zeta_{1,\mathcal{I}(\mathbf{b}_k),l}^{(T_1)} = \Theta\left(\frac{\log(d)}{\lambda \sqrt{d}\sigma_0}\right),\zeta_{2,\mathcal{I}(\mathbf{c}_k),l}^{(T_1)} = \Theta\left(\frac{\log(d)}{\lambda \sqrt{d}\sigma_0}\right).
		\end{align}
		\item For all $k_1\in[d]\backslash\{\mathcal{I}(\mathbf{b}_i)\}_{i=1}^N$, $k_2\in[d]\backslash\{\mathcal{I}(\mathbf{c}_i)\}_{i=1}^N$, and $l\in[m]$, the coefficients satisfy
		\begin{align}
			-\mathcal{O}\left(\frac{\log(d)}{\lambda d\sigma_0}\right)\le\zeta_{1,k_1,l}^{(T_1)} \le 0, -\mathcal{O}\left(\frac{\log(d)}{\lambda d\sigma_0}\right)\le\zeta_{2,k_2,l}^{(T_1)} \le 0.
		\end{align}
		\item For all $k\in[d]$, $l\in[m]$, the feature-layer weights remain unchanged, i.e.,for $l\in[m],k\in[d]$,
		\begin{align}
			\mathbf{w}^{(T_1)}_{k,l} = \mathbf{w}^{(0)}_{k,l}.
		\end{align}
	\end{enumerate}
\end{lemma}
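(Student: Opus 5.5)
I will follow the same template used for Lemma~\ref{lemma: j1} and Lemma~\ref{lemma: s1}, now with training set $\mathcal{S}_1\cup\mathcal{S}_3$ ($n=2N$). The backbone is Lemma~\ref{lemma: decomposition}: since $\mathbf{W}$ is frozen during Stage~1 (which gives the sixth statement immediately), every update of $\mathbf{V}$ lies in the span of $\{\mathbf{w}^{(0)}_{k,l}\}$. Thus it suffices to track the scalar coefficients $\gamma,\rho,\zeta$ and the entries of $\mathbf{Z}$ along token directions. As in the earlier lemmas, I will use the standing fact that $T_1$ stops before saturation. This gives $\max_{t\le T_1}\|\mathbf{f}^{(t)}(\mathbf{X})\|_\infty=\tilde{\mathcal{O}}(1)$, and hence $1-\mathrm{logit}^{(t)}_y(\mathbf{X})=\Theta(1)$ for every $(\mathbf{X},y)\in\mathcal{S}_1\cup\mathcal{S}_3$.

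\textbf{Step 1 (attention balance).} Using the $\mathbf{Z}$-gradient formula, I bound the per-step change of $\tilde Z_{\mathcal{I}(\mathbf{a}_i),\mathcal{I}(\mathbf{r}_1)}$ and $\tilde Z_{\mathcal{I}(\mathbf{a}'_i),\mathcal{I}(\mathbf{r}_2)}$ by $\mathcal{O}(\eta\log d/(n\sqrt d))$. The change of the shared entries $\tilde Z_{\mathcal{I}(\mathbf{r}_q),\mathcal{I}(\mathbf{r}_q)}$, which receive a sum over $N$ samples, is bounded by $\mathcal{O}(\eta\log d/\sqrt d)$. Multiplying by $T_1$ and adding the $\mathcal{O}(\sigma_0)$ initialization gap gives a logit difference of $\mathcal{O}(mn\log^2 d/(\lambda^2 d\sigma_0))$. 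Dividing by $\sqrt d$ as in the softmax, the exponent is $\mathcal{O}(mn\log^2 d/(\lambda^2 d^{3/2}\sigma_0))$. Condition~\ref{condition:condition} ($d\gtrsim m^3nN^2/\lambda^6$ together with the bound on $\sigma_0$) makes this exponent at most a small constant, which yields the explicit window $[0.7,1.5]$.

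\textbf{Step 2 (coefficient recursions).} Next I write the $\mathbf{V}$-updates. The vector $\mathbf{V}\mathbf{a}_i$ appears only in $[\mathbf{a}_i\ \mathbf{r}_1]$ (label $\mathcal{I}(\mathbf{b}_i)$). The vector $\mathbf{V}\mathbf{a}'_i$ appears only in $[\mathbf{a}'_i\ \mathbf{r}_2]$ (label $\mathcal{I}(\mathbf{c}_i)$); this is the key difference from the S$\to$A case, since $\mathbf{a}'_i$ never sees $\mathbf{b}_i$. The vector $\mathbf{V}\mathbf{r}_1$ aggregates over $\mathcal{S}_1$, and $\mathbf{V}\mathbf{r}_2$ aggregates over $\mathcal{S}_3$. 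Each gets the factor $\alpha\in[\Theta(1),1]$ from Step~1. The correct-label coefficients then increase by $\Theta(\lambda\eta/(mn))$ per step, so after $T_1$ steps they are $\Theta(\log d/(\lambda\sqrt d\sigma_0))$; this gives statements 2 and 4. The wrong-label coefficients decrease monotonically. Each step contributes at most $\frac{\lambda\eta}{mn}\max_{k\ne y}\mathrm{logit}_k$, and $\max_{k\ne y}\mathrm{logit}_k\le 2/\sqrt d$ because the logits are $\tilde{\mathcal{O}}(1)$ over $d$ classes. This keeps them in $[-\mathcal{O}(\log d/(\lambda d\sigma_0)),0]$, giving statements 3 and 5.

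\textbf{Main obstacle.} I expect the hard part to be justifying $\max_{k\neq y}\mathrm{logit}_k=\mathcal{O}(1/\sqrt d)$ uniformly over $t\le T_1$. This requires a bootstrap: assume the coefficient bounds hold up to time $t$, then use Lemma~\ref{lemma: inner} to show the cross terms $\langle\mathbf{w}^{(0)}_{k,l},\mathbf{w}^{(0)}_{k',l'}\rangle$ are only $\tilde{\mathcal{O}}(\sigma_0^2\sqrt d)$ against diagonal terms $\Theta(\sigma_0^2 d)$. That keeps every incorrect logit within $\mathcal{O}(1)$ of the others, so the bounds propagate to time $t+1$ by induction.
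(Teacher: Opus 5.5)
Your plan follows the paper's proof of this lemma step for step. It uses the same per-step bounds on the $\tilde Z$ entries multiplied by $T_1$, the same linear recursions for $\gamma,\rho,\zeta$ driven by $1-\mathrm{logit}_y=\Theta(1)$, and gets item~6 from the frozen $\mathbf{W}$. You are also right that the uniform bound on incorrect logits needs a bootstrap, which the paper simply asserts.

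The step that fails as written is the end of Step~1. Your per-step drifts $\mathcal{O}(\eta\log d/(n\sqrt d))$ and $\mathcal{O}(\eta\log d/\sqrt d)$, together with $T_1\propto 1/\sigma_0$, give the exponent $\mathcal{O}(mn\log^2 d/(\lambda^2 d^{3/2}\sigma_0))$, which blows up as $\sigma_0\to 0$. Condition~\ref{condition:condition} contains only an \emph{upper} bound on $\sigma_0$. Appealing to ``the bound on $\sigma_0$'' to make this exponent small therefore runs in the wrong direction, and the window $[0.7,1.5]$ is not established.

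The paper's proof instead ends with the $\sigma_0$-free exponent $\mathcal{O}(mn\log^2 d/(\lambda^2 d))$, through drifts $\mathcal{O}(m\log^2 d/(\lambda^2\sqrt d))$ and $\mathcal{O}(mn\log^2 d/(\lambda^2\sqrt d))$. You obtain these by bounding the contraction $\lambda(\mathbf{e}_y-\textbf{logit}^{(t)})^\top\boldsymbol{\tau}^{(t)}(\mathbf{a}_i-\mathbf{r}_1)$ with the actual Stage-1 logit scale rather than the crude $\tilde{\mathcal{O}}(1)$ logit bound. By your Step~2 coefficients and Lemma~\ref{lemma: inner}, the dominant term is $\lambda\gamma^{(t)}\,\Theta(d\sigma_0^2)\lesssim\sqrt d\,\sigma_0\log d$. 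The per-step drifts then become $\mathcal{O}(\eta\sigma_0\log d/n)$ and $\mathcal{O}(\eta\sigma_0\log d)$. The factor $\sigma_0$ cancels against $T_1$, and only largeness of $d$ is needed. Since this uses Step~2 at time $t$, fold Steps~1 and~2 into a single induction.

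The same scale bound is what actually makes your bootstrap work. $\mathbf{V}\mathbf{r}_1$ (resp.\ $\mathbf{V}\mathbf{r}_2$) lifts every label $\mathcal{I}(\mathbf{b}_j)$ (resp.\ $\mathcal{I}(\mathbf{c}_j)$) by a diagonal term of order $\lambda\zeta^{(t)} d\sigma_0^2$, the same order as the correct logit. So small cross terms alone do not keep the incorrect logits within $\mathcal{O}(1)$ of one another. What does is that every logit is $\mathcal{O}(\sqrt d\,\sigma_0\log d)$, which the upper bound on $\sigma_0$ keeps small for large $d$.
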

\begin{proof}
	We first prove the first statement.
The update of matrix $\mathbf{Z}$ satisfies
\begin{equation}
	\begin{aligned}
		|\tilde{Z}^{(t+1)}_{\mathcal{I}(\mathbf{a}_i),\mathcal{I}(\mathbf{r}_1)} -\tilde{Z}^{(t)}_{\mathcal{I}(\mathbf{a}_i),\mathcal{I}(\mathbf{r}_1)}|
		=& \frac{\eta\lambda}{n\sqrt{d}}\alpha_1^{(t)}([\mathbf{a}_i,\mathbf{r}_1])\left|(\mathbf{e}_{\mathcal{I}(\mathbf{b}_i)}-\textbf{logit}^{(t)}([\mathbf{a}_i\;\;\mathbf{r}_1]))^\top\boldsymbol{\tau}^{(t)}\left(\mathbf{a}_i - \Xi^{(t)}([\mathbf{a}_i\;\; \mathbf{r}_1])\right)\right|\\
		=&\mathcal{O}\left(\frac{\eta\log(d)}{n\sqrt{d}}\right),
	\end{aligned}
\end{equation}
and 
\begin{equation}
	\begin{aligned}
		|\tilde{Z}^{(t+1)}_{\mathcal{I}(\mathbf{r}_1),\mathcal{I}(\mathbf{r}_1)} -\tilde{Z}^{(t)}_{\mathcal{I}(\mathbf{r}_1),\mathcal{I}(\mathbf{r}_1)}|
		=& \sum_{i=1}^N\frac{\eta\lambda}{n\sqrt{d}}\alpha_2^{(t)}([\mathbf{a}_i,\mathbf{r}_1])\left|(\mathbf{e}_{\mathcal{I}(\mathbf{b})}-\textbf{logit}^{(t)}([\mathbf{a}_i\;\;\mathbf{r}_1]))^\top\boldsymbol{\tau}^{(t)}\left(\mathbf{r}_1-\Xi^{(t)}([\mathbf{a}_i\;\;\mathbf{r}_1])\right)\right|\\
		=&\mathcal{O}\left(\frac{\eta\log(d)}{\sqrt{d}}\right).
	\end{aligned}
\end{equation}
Therefore, we have
\begin{align}
	|\tilde{Z}^{(T_1)}_{\mathcal{I}(\mathbf{a}_i),\mathcal{I}(\mathbf{r}_1)} -\tilde{Z}^{(0)}_{\mathcal{I}(\mathbf{a}_i),\mathcal{I}(\mathbf{r}_1)}| = \tilde{\mathcal{O}}\left(\frac{\eta}{n\sqrt{d}}T_1\right) = \mathcal{O}\left(\frac{m\log^2(d)}{\lambda^2\sqrt{d}}\right),
\end{align}
and
\begin{align}
	|\tilde{Z}^{(T_1)}_{\mathcal{I}(\mathbf{r}_1),\mathcal{I}(\mathbf{r}_1)} -\tilde{Z}^{(0)}_{\mathcal{I}(\mathbf{r}_1),\mathcal{I}(\mathbf{r}_1)}| = \tilde{\mathcal{O}}\left(\frac{\eta}{\sqrt{d}}T_1\right) = \mathcal{O}\left(\frac{mn\log^2(d)}{\lambda^2\sqrt{d}}\right).
\end{align}
By initialization, we have
\begin{align}
	|\tilde{Z}^{(0)}_{\mathcal{I}(\mathbf{a}_i),\mathcal{I}(\mathbf{r}_1)} -\tilde{Z}^{(0)}_{\mathcal{I}(\mathbf{r}_1),\mathcal{I}(\mathbf{r}_1)}| = \mathcal{O}(\sigma_0).
\end{align}
We have
\begin{equation}
\begin{aligned}
	|\tilde{Z}^{(T_1)}_{\mathcal{I}(\mathbf{a}_i),\mathcal{I}(\mathbf{r}_1)} -\tilde{Z}^{(T_1)}_{\mathcal{I}(\mathbf{r}_1),\mathcal{I}(\mathbf{r}_1)}| \le& |\tilde{Z}^{(T_1)}_{\mathcal{I}(\mathbf{a}_i),\mathcal{I}(\mathbf{r}_1)} -\tilde{Z}^{(0)}_{\mathcal{I}(\mathbf{a}_i),\mathcal{I}(\mathbf{r}_1)}| + |\tilde{Z}^{(0)}_{\mathcal{I}(\mathbf{a}_i),\mathcal{I}(\mathbf{r}_1)} -\tilde{Z}^{(0)}_{\mathcal{I}(\mathbf{r}_1),\mathcal{I}(\mathbf{r}_1)}| + 	|\tilde{Z}^{(T_1)}_{\mathcal{I}(\mathbf{r}_1),\mathcal{I}(\mathbf{r}_1)} -\tilde{Z}^{(0)}_{\mathcal{I}(\mathbf{r}_1),\mathcal{I}(\mathbf{r}_1)}|\\
	=& \mathcal{O}\left(\frac{mn\log^2(d)}{\lambda^2\sqrt{d}}\right).
\end{aligned}
\end{equation}
Therefore, we have
\begin{align}
	\frac{\alpha_1(\Xi^{(t)}([\mathbf{a}_i\;\;\mathbf{r}_1]))}{\alpha_2(\Xi^{(t)}([\mathbf{a}_i\;\;\mathbf{r}_1]))} \le \exp\left(\mathcal{O}\left(\frac{mn\log^2(d)}{\lambda^2d}\right)\right).
\end{align}
Similarly, we have
\begin{align}
	\frac{\alpha_1(\Xi^{(t)}([\mathbf{a}_i\;\;\mathbf{r}_1]))}{\alpha_2(\Xi^{(t)}([\mathbf{a}_i\;\;\mathbf{r}_1]))} \ge \exp\left(-\mathcal{O}\left(\frac{mn\log^2(d)}{\lambda^2d}\right)\right).
\end{align}
Similarly, we can prove that
\begin{align}
    |\tilde{Z}^{(T_1)}_{\mathcal{I}(\mathbf{a}'_i),\mathcal{I}(\mathbf{r}_2)} -\tilde{Z}^{(T_1)}_{\mathcal{I}(\mathbf{r}_2),\mathcal{I}(\mathbf{r}_2)}| = \mathcal{O}\left(\frac{mn\log^2(d)}{\lambda^2d}\right),
\end{align}
so that 
\begin{align}
    \frac{\alpha_1(\Xi^{(t)}([\mathbf{a}'_i\;\;\mathbf{r}_2]))}{\alpha_2(\Xi^{(t)}([\mathbf{a}'_i\;\;\mathbf{r}_2]))} \le \exp\left(\mathcal{O}\left(\frac{mn\log^2(d)}{\lambda^2d}\right)\right), \frac{\alpha_1(\Xi^{(t)}([\mathbf{a}'_i\;\;\mathbf{r}_2]))}{\alpha_2(\Xi^{(t)}([\mathbf{a}'_i\;\;\mathbf{r}_2]))} \ge \exp\left(-\mathcal{O}\left(\frac{mn\log^2(d)}{\lambda^2d}\right)\right).
\end{align}
	Next, we prove the second, third and fourth statements.
	Based on the gradient form,	the updates of $\mathbf{V}^{(t)}\mathbf{a}_i,\mathbf{V}^{(t)}\mathbf{a}'_i, \mathbf{V}^{(t)}\mathbf{r}_1,\mathbf{V}^{(t)}\mathbf{r}_2$ for all $t\in [0,T_1]$ satisfy
\begin{equation}
	\begin{aligned}
		\mathbf{V}^{(t+1)}\mathbf{a}_i =& \mathbf{V}^{(t)}\mathbf{a}_i + \frac{\lambda\eta}{nm}\sum_{k=1}^d\sum_{l=1}^{m}(\mathbb{I}(k=\mathcal{I}(\mathbf{b}_i))-\text{logit}^{(t)}_{k}(\Xi^{(t)}([\mathbf{a}_i\;\;\mathbf{r}_1])))\mathbf{w}^{(0)}_{k,l}\\
		=& \mathbf{V}^{(t)}\mathbf{a}_i + \frac{\lambda\eta}{mn}\sum_{l=1}^m(1-\text{logit}^{(t)}_{\mathcal{I}(\mathbf{b}_i)}(\Xi^{(t)}([\mathbf{a}_i\;\;\mathbf{r}_1]))) \mathbf{w}^{(0)}_{\mathcal{I}(\mathbf{b}_i),l}\\
		&-\frac{\lambda\eta}{mn}\sum_{l=1}^m\sum_{k\neq\mathcal{I}(\mathbf{b}_i)}\text{logit}_{k}^{(t)}(\Xi^{(t)}([\mathbf{a}_i\;\;\mathbf{r}_1]))\mathbf{w}_{k,l}^{(0)},
	\end{aligned}
\end{equation}
\begin{equation}
	\begin{aligned}
		\mathbf{V}^{(t+1)}\mathbf{a}'_i =& \mathbf{V}^{(t)}\mathbf{a}'_i + \frac{\lambda\eta}{nm}\sum_{k=1}^d\sum_{l=1}^{m}(\mathbb{I}(k=\mathcal{I}(\mathbf{c}_i))-\text{logit}^{(t)}_{k}(\Xi^{(t)}([\mathbf{a}'_i\;\;\mathbf{r}_2])))\mathbf{w}^{(0)}_{k,l}\\
		=& \mathbf{V}^{(t)}\mathbf{a}'_i + \frac{\lambda\eta}{mn}\sum_{l=1}^m(1-\text{logit}^{(t)}_{\mathcal{I}(\mathbf{c}_i)}(\Xi^{(t)}([\mathbf{a}'_i\;\;\mathbf{r}_2]))) \mathbf{w}^{(0)}_{\mathcal{I}(\mathbf{c}_i),l}\\
		&-\frac{\lambda\eta}{mn}\sum_{l=1}^m\sum_{k\neq\mathcal{I}(\mathbf{c}_i)}\text{logit}_{k}^{(t)}(\Xi^{(t)}([\mathbf{a}'_i\;\;\mathbf{r}_2]))\mathbf{w}_{k,l}^{(0)},
	\end{aligned}
\end{equation}
and 
\begin{equation}
	\begin{aligned}
		&\mathbf{V}^{(t+1)}\mathbf{r}_1
		= \mathbf{V}^{(t)}\mathbf{r}_1 + \frac{\lambda\eta}{nm}\sum_{i=1}^N\sum_{k=1}^d\sum_{l=1}^{m}(\mathbb{I}(k=\mathcal{I}(\mathbf{b}_i))-\text{logit}^{(t)}_{k}(\Xi^{(t)}([\mathbf{a}_i\;\;\mathbf{r}_1])))\mathbf{w}^{(0)}_{k,l},
	\end{aligned}
\end{equation}
\begin{equation}
	\begin{aligned}
		&\mathbf{V}^{(t+1)}\mathbf{r}_2
		= \mathbf{V}^{(t)}\mathbf{r}_2 + \frac{\lambda\eta}{nm}\sum_{i=1}^N\sum_{k=1}^d\sum_{l=1}^{m}(\mathbb{I}(k=\mathcal{I}(\mathbf{c}_i))-\text{logit}^{(t)}_{k}(\Xi^{(t)}([\mathbf{a}'_i\;\;\mathbf{r}_2])))\mathbf{w}^{(0)}_{k,l}.
	\end{aligned}
\end{equation}
Then, for all $t\in[0,T_1-1]$, we can simply conclude that
\begin{itemize}
	\item for all $i\in[N]$ and $l\in [m]$,
	\begin{align}
		&\gamma^{(t+1)}_{i,\mathcal{I}(\mathbf{b}_i),l} = \gamma^{(t)}_{i,\mathcal{I}(\mathbf{b}_i),l} +  \frac{\lambda\eta}{mn}(1-\text{logit}^{(t)}_{\mathcal{I}(\mathbf{b}_i)}(\Xi^{(t)}([\mathbf{a}_i\;\;\mathbf{r}_1]))) = \gamma^{(t)}_{i,\mathcal{I}(\mathbf{b}_i),l} + \Theta(\frac{\lambda\eta}{mn}),\\
        &\rho^{(t+1)}_{i,\mathcal{I}(\mathbf{c}_i),l} = \rho^{(t)}_{i,\mathcal{I}(\mathbf{c}_i),l} +  \frac{\lambda\eta}{mn}(1-\text{logit}^{(t)}_{\mathcal{I}(\mathbf{c}_i)}(\Xi^{(t)}([\mathbf{a}'_i\;\;\mathbf{r}_2]))) = \rho^{(t)}_{i,\mathcal{I}(\mathbf{c}_i),l} + \Theta(\frac{\lambda\eta}{mn}).
	\end{align}
	\item for all $i,j\in[N]$ and $j\neq i$,
	\begin{align}
		&\gamma^{(t+1)}_{i,\mathcal{I}(\mathbf{b}_j),l} = \gamma^{(t)}_{i,,\mathcal{I}(\mathbf{b}_j),l} -\frac{\lambda\eta}{mn}\text{logit}_{\mathcal{I}(\mathbf{b}_i)}^{(t)}(\Xi^{(t)}([\mathbf{a}_j\;\;\mathbf{r}_1])),\\
        &\rho^{(t+1)}_{i,\mathcal{I}(\mathbf{c}_j),l} = \rho^{(t)}_{i,\mathcal{I}(\mathbf{c}_j),l} -\frac{\lambda\eta}{mn}\text{logit}_{\mathcal{I}(\mathbf{c}_i)}^{(t)}(\Xi^{(t)}([\mathbf{a}'_j\;\;\mathbf{r}_2])).
	\end{align} 
	\item for all $i\in[N]$ and $l\in[m]$,
	\begin{align}
		&\zeta_{1,i,l}^{(t+1)} = \zeta_{1,i,l}^{(t)} + \frac{\lambda\eta}{mn}(1-\text{logit}^{(t)}_{\mathcal{I}(\mathbf{b}_i)}(\Xi^{(t)}([\mathbf{a}_i\;\;\mathbf{r}_1]))) - \frac{\lambda\eta}{mn}\sum_{k\neq i} \text{logit}^{(t)}_{\mathcal{I}(\mathbf{b}_i)}(\Xi^{(t)}([\mathbf{a}_k\;\;\mathbf{r}_1])),\\
        &\zeta_{2,i,l}^{(t+1)} = \zeta_{2,i,l}^{(t)} + \frac{\lambda\eta}{mn}(1-\text{logit}^{(t)}_{\mathcal{I}(\mathbf{c}_i)}(\Xi^{(t)}([\mathbf{a}'_i\;\;\mathbf{r}_2]))) - \frac{\lambda\eta}{mn}\sum_{k\neq i}\text{logit}^{(t)}_{\mathcal{I}(\mathbf{c}_i)}(\Xi^{(t)}([\mathbf{a}'_k\;\;\mathbf{r}_2])).
	\end{align}
\end{itemize}
As $\text{logit}_{k}(\mathbf{a}_i\mathbf{r}_1) <\frac{2}{\sqrt{d}}$ for all $k\in[d]\backslash\{\mathcal{I}(\mathbf{b}_i)\}$, we have
\begin{itemize}
	\item for all $i\in[N]$ and $l\in[m]$,
	\begin{align}
		&\gamma^{(T_1)}_{i,\mathcal{I}(\mathbf{b}_i),l} =  \Theta\left(\frac{\lambda\eta}{mn}T_1\right) = \Theta\left(\frac{\log(d)}{\lambda \sqrt{d}\sigma_0}\right),\\
        &\rho^{(T_1)}_{i,\mathcal{I}(\mathbf{c}_i),l} =  \Theta\left(\frac{\lambda\eta}{mn}T_1\right) = \Theta\left(\frac{\log(d)}{\lambda \sqrt{d}\sigma_0}\right),
	\end{align}
	\item for all $i\in[N]$ and $j_1\notin [d]\backslash\{\mathcal{I}(\mathbf{b}_i)\}_{i=1}^N, j_2 \notin [d]\backslash\{\mathcal{I}(\mathbf{c}_i)\}_{i=1}^N$,
	\begin{align}
		&-\mathcal{O}\left(\frac{\lambda\eta}{mn\sqrt{d}}T_1\right) = -\mathcal{O}\left(\frac{\log(d)}{\lambda d\sigma_0}\right) \le\gamma^{(T_1)}_{i,j_1,l} \le \gamma^{(0)}_{i,j_1,l} = 0,\\
        &-\mathcal{O}\left(\frac{\lambda\eta}{mn\sqrt{d}}T_1\right) = -\mathcal{O}\left(\frac{\log(d)}{\lambda d\sigma_0}\right) \le\rho^{(T_1)}_{i,j_2,l} \le \rho^{(0)}_{i,j_2,l} = 0,
	\end{align} 
	\item for all $i\in[N]$ and $l\in[m]$,
	\begin{align}
		&\zeta_{1,\mathcal{I}(\mathbf{b}_i),l}^{(T_1)} = \Theta\left(\frac{\eta\lambda}{mn}T_1\right) = \Theta\left(\frac{\log(d)}{\lambda\sqrt{d}\sigma_0}\right),\\
        &\zeta_{2,\mathcal{I}(\mathbf{c}_i),l}^{(T_1)} = \Theta\left(\frac{\eta\lambda}{mn}T_1\right) = \Theta\left(\frac{\log(d)}{\lambda \sqrt{d}\sigma_0}\right).
	\end{align}
    \item for all $j_1\notin [d]\backslash\{\mathcal{I}(\mathbf{b}_i)\}_{i=1}^N$ and $j_2 \notin [d]\backslash\{\mathcal{I}(\mathbf{c}_i)\}_{i=1}^N$, 
    \begin{align}
        &-\mathcal{O}\left(\frac{\lambda\eta}{mn\sqrt{d}}T_1\right) = -\mathcal{O}\left(\frac{\log(d)}{\lambda d\sigma_0}\right) \le\zeta_{1,j_1,l}^{(T_1)} \le \zeta^{(0)}_{1,j_1,l} = 0,\\
        &-\mathcal{O}\left(\frac{\lambda\eta}{mn\sqrt{d}}T_1\right) = -\mathcal{O}\left(\frac{\log(d)}{\lambda d\sigma_0}\right) \le\zeta_{2,j_2,l}^{(T_1)}\le \zeta^{(0)}_{2,j_2,l} = 0.
    \end{align}
\end{itemize}

The sixth statement holds as the feature layer is not updated.
This completes the proof.
\end{proof}

\subsection{Convergence of Stage-2 training (first on \texorpdfstring{$\mathcal{S}_1\cup\mathcal{S}_3$}{S\_1 U S\_3}, then on \texorpdfstring{$\mathcal{S}_2$}{S\_2})}
\begin{lemma}
	There exists an iteration $t \in(T_1, T_1 + T_2]$, where $T_2 = \Theta(\oldfrac{mN^2}{\lambda^2d\sigma_0^2\eta })$ such that
	\begin{align}
		\mathcal{L}_{\mathcal{S}_1\cup\mathcal{S}_3}(\mathbf{Z}^{(t)},\mathbf{V}^{(t)},\mathbf{W}^{(t)}) \le \frac{0.01}{N}.
	\end{align}
\end{lemma}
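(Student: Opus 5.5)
We follow the same convex-surrogate argument used for the Stage-2 lemmas of the joint and S$\to$A curricula. During $(T_1,T_1+T_2]$ only $\mathbf{W}$ moves, and $\mathbf{Z}^{(T_1)},\mathbf{V}^{(T_1)}$ are frozen. The model is then linear in $\mathbf{W}$ and the per-sample cross-entropy is convex in $\mathbf{W}$. It therefore suffices to exhibit a comparator $\mathbf{W}^*$ and run the standard distance-to-comparator telescoping.

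\textbf{Comparator.} With $\mathcal{S}=\mathcal{S}_1\cup\mathcal{S}_3$ and $n=2N$, we set $\mathbf{w}^*_{k,l}=\mathbf{w}^{(0)}_{k,l}+\frac{C}{\lambda}\log(1/\epsilon)\,\mathbf{u}_k$. Here
\[
\mathbf{u}_k=\frac{\mathbf{V}^{(0)}\mathbf{a}_i}{\|\mathbf{V}^{(0)}\mathbf{a}_i\|_2^2}+\frac{\mathbf{V}^{(0)}\mathbf{r}_1}{\|\mathbf{V}^{(0)}\mathbf{r}_1\|_2^2}\quad\text{if } k=\mathcal{I}(\mathbf{b}_i),
\qquad
\mathbf{u}_k=\frac{\mathbf{V}^{(0)}\mathbf{a}'_i}{\|\mathbf{V}^{(0)}\mathbf{a}'_i\|_2^2}+\frac{\mathbf{V}^{(0)}\mathbf{r}_2}{\|\mathbf{V}^{(0)}\mathbf{r}_2\|_2^2}\quad\text{if } k=\mathcal{I}(\mathbf{c}_i),
\]
and $\mathbf{u}_k=\mathbf{0}$ otherwise. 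Two facts make this comparator separate the training inputs with margin of order $\log(1/\epsilon)$. First, Lemma~\ref{lemma: s2} (item 1) gives balanced attention, $0.7\le\alpha_1/\alpha_2\le1.5$, so each $\Xi^{(T_1)}(\mathbf{X})$ puts constant weight on both tokens. Second, by Lemma~\ref{lemma: decomposition} and items 2--5 of Lemma~\ref{lemma: s2}, the vectors $\mathbf{V}^{(T_1)}\mathbf{a}_i,\mathbf{V}^{(T_1)}\mathbf{a}'_i,\mathbf{V}^{(T_1)}\mathbf{r}_q$ equal $\mathbf{V}^{(0)}(\cdot)$ plus combinations of $\mathbf{w}^{(0)}_{k,l}$.

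\textbf{Bounding the inner products.} The cross inner products are controlled by the near-orthogonality lemmas (Lemma~\ref{lemma: inner} and the $\mathbf{V}^{(0)}$ analogue): they are of size $\sigma_0^2\sqrt{d\log}$ against norms of order $\sigma_0^2 d$. This yields term $A\ge C\log(1/\epsilon)/\lambda-\mathcal{O}\!\left(\frac{\sqrt N\log d\log(1/\epsilon)}{\lambda^2\sqrt{md}\sqrt{d\sigma_0^2}}\right)$ when $k=y$, and $A\le$ the same error when $k\ne y$, matching \eqref{equ: A3}.

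\textbf{Main obstacle.} The hardest step is checking these cross terms. The $\mathbf{w}^{(0)}$-components of $\mathbf{V}^{(T_1)}(\cdot)$ have coefficients as large as $\Theta(\log d/(\lambda\sqrt d\sigma_0))$. Their inner products with the $\mathbf{V}^{(0)}$-directions in $\mathbf{u}_k$, after normalizing by $\|\mathbf{V}^{(0)}\cdot\|^2\approx d\sigma_0^2$ and summing over $m$ copies, must be $o(1)$ relative to the designed logit gap. Condition~\ref{condition:condition} (large $d$) is what absorbs them. I would verify this before anything else, bounding each sum by Cauchy--Schwarz with the Bernstein estimates.

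\textbf{Descent inequality and conclusion.} Next I bound the gradient-norm term $B\le 2\eta^2\max\|\mathbf{V}^{(T_1)}\Xi^{(T_1)}(\mathbf{X})\|_2^2\,\mathcal{L}_{\mathcal{S}}$, using $\max\|\mathbf{V}^{(T_1)}\Xi\|^2=\mathcal{O}(Nmd\sigma_0^2\log^2 d/\lambda^2)$ from Lemma~\ref{lemma: s2}. The learning-rate condition then makes $B\le\eta\mathcal{L}_{\mathcal{S}}$. By homogeneity of $\mathbf{f}$ in $\mathbf{W}$ and convexity, each step satisfies $\|\mathbf{W}^{(t)}-\mathbf{W}^*\|_F^2-\|\mathbf{W}^{(t+1)}-\mathbf{W}^*\|_F^2\ge\eta\mathcal{L}_{\mathcal{S}}-2\eta\epsilon$. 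Telescoping over $T_2$ steps gives an average loss of at most $\|\mathbf{W}^{(0)}-\mathbf{W}^*\|_F^2/(\eta T_2)+2\epsilon$. Since $\|\mathbf{W}^{(0)}-\mathbf{W}^*\|_F^2=\mathcal{O}(mN\log^2(1/\epsilon)/(\lambda^2 d\sigma_0^2))$, choosing $T_2=\Theta(mN^2/(\lambda^2 d\sigma_0^2\eta))$ (up to the $\log^2(1/\epsilon)$ factor) and $\epsilon=0.003/N$ makes the average at most $0.01/N$. Hence some iterate $t\in(T_1,T_1+T_2]$ attains $\mathcal{L}_{\mathcal{S}_1\cup\mathcal{S}_3}\le 0.01/N$.
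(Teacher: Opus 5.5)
Your proposal follows the paper's proof step for step. It uses the same distance-to-comparator telescoping with the $A$/$B$ split, the same bound on $B$ through $\max\|\mathbf{V}^{(T_1)}\Xi^{(T_1)}(\mathbf{X})\|_2^2$ and the learning-rate condition, the same $\|\mathbf{W}^{(0)}-\mathbf{W}^*\|_F^2=\mathcal{O}(mN\log^2(1/\epsilon)/(\lambda^2 d\sigma_0^2))$, and $\epsilon=0.003/N$. You also make explicit the $\log^2(1/\epsilon)$ factor in $T_2$ that the paper suppresses.

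The one real difference is the comparator. You copied the joint-training comparator, which carries the extra $\mathbf{V}^{(0)}\mathbf{r}_q/\|\mathbf{V}^{(0)}\mathbf{r}_q\|_2^2$ terms, whereas the paper here uses only $\mathbf{V}^{(0)}\mathbf{a}_i$ and $\mathbf{V}^{(0)}\mathbf{a}'_i$. With your choice, every class $\mathcal{I}(\mathbf{b}_j)$ picks up the same $\alpha_2$-weighted contribution on $[\mathbf{a}_i\;\;\mathbf{r}_1]$ through the shared $\mathbf{V}^{(0)}\mathbf{r}_1$ direction. So your claim that $A$ is small for $k\neq y$ is false as stated. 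The argument still goes through, though: that contribution also enters the true class and cancels in the logit differences, leaving a margin of at least order $\alpha_1 C\log(1/\epsilon)$, which is positive by the balanced-attention bound and is all the convexity step needs.
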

\begin{proof}
	\begin{equation}
		\mathbf{w}^*_{k,l} = 
        \mathbf{w}^{(0)}_{k,l} + 
    \begin{cases}
\oldfrac{C}{\lambda}\cdot\log(1/\epsilon)\oldfrac{\mathbf{V}^{(0)}\mathbf{a}_i}{\|\mathbf{V}^{(0)}\mathbf{a}_i\|_2^2}, & \text{ if $k=\mathcal{I}(\mathbf{b}_i)$},\\
\oldfrac{C}{\lambda}\cdot\log(1/\epsilon)\oldfrac{\mathbf{V}^{(0)}\mathbf{a}'_i}{\|\mathbf{V}^{(0)}\mathbf{a}'_i\|_2^2}, & \text{ if $k=\mathcal{I}(\mathbf{c}_i)$},\\
0, & \text{ if $k\notin\{\mathcal{I}(\mathbf{b}_i),\mathcal{I}(\mathbf{c}_i)\}_{i=1}^n$}.
\end{cases}
	\end{equation}
	 for all $k\in[d], i\in[N]$ and $l\in[m]$.
	 For simplicity, we denote $\mathcal{S} = \mathcal{S}_1\cup \mathcal{S}_3$ and $n = |\mathcal{S}|$.
	Then, we have
	\begin{equation}\label{equ: increment2}
		\begin{aligned}
			&\left\|\mathbf{W}^{(t)} - \mathbf{W}^*\right\|_F^2 - \left\|\mathbf{W}^{(t+1)} - \mathbf{W}^*\right\|_F^2\\
			=&  2\eta\langle\nabla \mathcal{L}_{\mathcal{S}}(\mathbf{Z}^{(t)},\mathbf{V}^{(t)},\mathbf{W}^{(t)}), \mathbf{W}^{(t)} - \mathbf{W}^*\rangle - \eta^2 \left\|\nabla \mathcal{L}_{\mathcal{S}}(\mathbf{Z}^{(t)},\mathbf{V}^{(t)},\mathbf{W}^{(t)})\right\|_F^2\\
			=& \frac{2\eta}{n} \sum_{(\mathbf{X},y)\in\mathcal{S}}\sum_{k=1}^d \frac{\partial \mathcal{L}}{\partial f_k} \langle\nabla f_k(\mathbf{Z}^{(t)},\mathbf{V}^{(t)},\mathbf{W}^{(t)},\mathbf{X}),\mathbf{W}^{(t)}\rangle -
			\frac{2\eta}{n} \sum_{(\mathbf{X},y)\in\mathcal{S}}\sum_{k=1}^d \frac{\partial \mathcal{L}}{\partial f_k} \underbrace{\langle \nabla f_k(\mathbf{Z}^{(t)},\mathbf{V}^{(t)},\mathbf{W}^{(t)},\mathbf{X}), \mathbf{W}^*\rangle}_{A}\\
			&- \underbrace{\eta^2 \left\|\nabla \mathcal{L}_\mathcal{S}(\mathbf{Z}^{(t)},\mathbf{V}^{(t)},\mathbf{W}^{(t)})\right\|_F^2}_{B}.
		\end{aligned}
	\end{equation}
	Next, we bound the terms $A$ and $B$.
	For the term $A$, we have
	\begin{align}\label{equ: A2}
		A 
		\begin{cases}\ge \frac{C\log(1/\epsilon)}{\lambda} - \mathcal{O}(\oldfrac{\sqrt{N}\log(d)\log(1/\epsilon)}{\lambda^2\sqrt{md}\sqrt{d\sigma_0^2}})& \text{if } k = y, \\
			\le \mathcal{O}(\oldfrac{\sqrt{N}\log(d)\log(1/\epsilon)}{\lambda^2\sqrt{md}\sqrt{d\sigma_0^2}}) & \text{if } k \neq y.
		\end{cases}
	\end{align}
	For the term $B$, we have
	\begin{equation}\label{equ: B2}
	\begin{aligned}
		B \le& \eta^2 \left[ \frac{1}{n} \sum_{(\mathbf{X},y)\in\mathcal{S}}\sum_{k\in[d]}\left|\frac{\partial \mathcal{L}(\mathbf{Z}^{(t)},\mathbf{V}^{(t)},\mathbf{W}^{(t)},\mathbf{X},y)}{\partial f_k}\right| \left\| \nabla f_k(\mathbf{Z}^{(t)},\mathbf{V}^{(t)},\mathbf{W}^{(t)},\mathbf{X})\right\|_F\right]^2\\
		\le& 2\eta^2\max_{(\mathbf{X},y)\in\mathcal{S}}\|\mathbf{V}^{(t)}\Xi^{(t)}(\mathbf{X})\|_2^2\cdot\mathcal{L}_\mathcal{S}(\mathbf{Z},\mathbf{V},\mathbf{W}).
	\end{aligned}
	\end{equation}
	Next, we need to bound $\max_{(\mathbf{X},y)\in\mathcal{S}}\|\mathbf{V}^{(t)}\Xi^{(t)}(\mathbf{X})\|_2^2$. 
	By Lemma \ref{lemma: s2}, We have
	\begin{align}
		\max_{(\mathbf{X},y)\in\mathcal{S}}\|\mathbf{V}^{(t)}\Xi^{(t)}(\mathbf{X})\|_2^2 = \max_{(\mathbf{X},y)\in\mathcal{S}}\|\mathbf{V}^{(T_1)}\Xi^{(T_1)}(\mathbf{X})\|_2^2 = \mathcal{O}(Nmd\sigma_0^2\log^2(d)/\lambda^2).
	\end{align}
	
	Combining (\ref{equ: A2}), (\ref{equ: B2}), and (\ref{equ: increment2}), with $\eta = \Theta(\oldfrac{\lambda^2}{mNd\sigma_0^2\log^2(d)})$, we have
	\begin{equation}\label{equ: descent2}
		\begin{aligned}
			&\left\|\mathbf{W}^{(t)} - \mathbf{W}^*\right\|_F^2 - \left\|\mathbf{W}^{(t+1)} - \mathbf{W}^*\right\|_F^2\\
			=&  \frac{2\eta}{n} \sum_{(\mathbf{X},y)\in\mathcal{S}}\left(\sum_{k=1}^d \frac{\partial \mathcal{L}}{\partial f_k} \langle\nabla f_k(\mathbf{Z}^{(t)},\mathbf{V}^{(t)},\mathbf{W}^{(t)},\mathbf{X}),\mathbf{W}^{(t)}\rangle - \sum_{k=1}^d \frac{\partial \mathcal{L}}{\partial f_k} \langle \nabla f_k(\mathbf{Z}^{(t)},\mathbf{V}^{(t)},\mathbf{W}^{(t)},\mathbf{X}), \mathbf{W}^*\rangle\right)\\
			&- \eta^2 \left\|\nabla \mathcal{L}_\mathcal{S}(\mathbf{Z}^{(t)},\mathbf{V}^{(t)},\mathbf{W}^{(t)})\right\|_F^2\\
			\overset{(a)}{\ge}& \frac{2\eta}{n}\sum_{(\mathbf{X},y)\in\mathcal{S}}\left(\mathcal{L}(\mathbf{Z}^{(t)},\mathbf{V}^{(t)},\mathbf{W}^{(t)},\mathbf{X}_i,y_i)-\log(1/\epsilon)\right) - \eta\mathcal{L}_\mathcal{S}(\mathbf{Z}^{(t)},\mathbf{V}^{(t)},\mathbf{W}^{(t)})\\
			=&\eta \mathcal{L}_\mathcal{S}(\mathbf{Z}^{(t)},\mathbf{V}^{(t)},\mathbf{W}^{(t)}) - 2\eta \epsilon,
		\end{aligned}
	\end{equation}
	where $(a)$ is by the homogeneity of the feature layer and convexity of the cross-entropy function.
	
	Rearranging (\ref{equ: descent2}), we have
	\begin{align}
		\frac{1}{T_2+1}\sum_{t'=0}^{T_2}\mathcal{L}_{\mathcal{S}_1\cup\mathcal
        S_3}(\mathbf{Z}^{(T_1+t')},\mathbf{V}^{(T_1+t')},\mathbf{W}^{(T_1+t')}) \le \frac{\left\|\mathbf{W}^{(0)} - \mathbf{W}^*\right\|_F^2}{\eta (T_2+1)} + 2\epsilon.
	\end{align}
	Here, we have
	\begin{align}
		\left\|\mathbf{W}^{(0)} - \mathbf{W}^*\right\|_F^2 = \mathcal{O}(\frac{mN\log^2(1/\epsilon)}{\lambda^2 d\sigma_0^2})
	\end{align}
	As a result, we have
	\begin{align}
		\frac{1}{T_2+1}\sum_{t'=0}^{T_2}\mathcal{L}_{\mathcal{S}_1\cup\mathcal
        S_3}(\mathbf{Z}^{(T_1 + t')},\mathbf{V}^{(T_1+ t')},\mathbf{W}^{(T_1+t')}) \le \frac{\left\|\mathbf{W}^{(0)} - \mathbf{W}^*\right\|_F^2}{\eta (T_2+1)} + 2\epsilon.
	\end{align}
    This finishes the proof.
\end{proof}

\subsection{Convergence of Stage-3 training (first on \texorpdfstring{$\mathcal{S}_1\cup\mathcal{S}_3$}{S\_1 U S\_3}, then on \texorpdfstring{$\mathcal{S}_2$}{S\_2})}
\begin{lemma}
	There exists an iteration $t \in(T_1+T_2, T_1 + T_2 + T_3]$, where $T_3 = \Theta(\oldfrac{mN^2}{\lambda^2d\sigma_0^2\eta })$ such that
	\begin{align}
		\mathcal{L}_{\mathcal{S}_2}(\mathbf{Z}^{(t)},\mathbf{V}^{(t)},\mathbf{W}^{(t)}) \le \frac{0.01}{N}.
	\end{align}
\end{lemma}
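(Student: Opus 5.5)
We would follow the same template as the earlier Stage-2/Stage-3 convergence lemmas: a comparator argument for gradient descent on the feature layer alone. During Phase 2 only $\mathbf{W}$ moves, while $\mathbf{Z}^{(T_1)}$ and $\mathbf{V}^{(T_1)}$ stay frozen. So the per-sample loss is a convex function (cross-entropy composed with a linear map) of $\mathbf{W}$, and the features $\mathbf{V}^{(T_1)}\Xi^{(T_1)}([\mathbf{a}'_i\;\;\mathbf{r}_1])$ are fixed vectors.

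\textbf{Comparator.} Starting from the current iterate $\mathbf{W}^{(T_1+T_2)}$ rather than from $\mathbf{W}^{(0)}$, we set
\[
\mathbf{w}^*_{k,l}=\mathbf{w}^{(T_1+T_2)}_{k,l}+\frac{C}{\lambda}\log(1/\epsilon)\,\frac{\mathbf{V}^{(0)}\mathbf{a}'_i}{\|\mathbf{V}^{(0)}\mathbf{a}'_i\|_2^2}
\]
if $k=\mathcal{I}(\mathbf{b}_i)$, and $\mathbf{w}^*_{k,l}=\mathbf{w}^{(T_1+T_2)}_{k,l}$ otherwise.

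\textbf{Term $A$ (margin).} We need a lower bound on the correct-class margin and an upper bound on the wrong-class contributions. By Lemma~\ref{lemma: s2}, $\mathbf{V}^{(T_1)}\mathbf{a}'_i$ equals $\mathbf{V}^{(0)}\mathbf{a}'_i$ plus a combination of $\mathbf{w}^{(0)}_{\mathcal{I}(\mathbf{c}_i),l}$ with coefficients $\Theta(\log d/(\lambda\sqrt d\sigma_0))$ and small negative remainders. The attention ratio on $[\mathbf{a}'_i\;\;\mathbf{r}_1]$ stays balanced, since $\mathbf{Z}$ barely moved and this prompt was never trained. Using the near-orthogonality in Lemma~\ref{lemma: inner} and the analogous concentration lemma for $\mathbf{V}^{(0)}$, the inner product of the added direction with the frozen feature is $\ge C\log(1/\epsilon)/\lambda$ minus a term of order $\sqrt N\log d\log(1/\epsilon)/(\lambda^2\sqrt{md}\sqrt{d\sigma_0^2})$. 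The same order bounds the wrong-class contributions, exactly as in (\ref{equ: A2}).

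\textbf{Main obstacle.} The delicate point is that the Phase-1 weights $\mathbf{W}^{(T_1+T_2)}$ already produce logits on $[\mathbf{a}'_i\;\;\mathbf{r}_1]$: the shared $\mathbf{r}_1$ token pushes toward the labels $\mathcal{I}(\mathbf{b}_j)$, and $\mathbf{a}'_i$ pushes toward $\mathcal{I}(\mathbf{c}_i)$. The comparator must dominate these existing logits. I would first bound them by the stability estimate used in the success-rate proofs, namely
\[
\max_k\frac{1}{m}\sum_l\langle\mathbf{w}_{k,l},\mathbf{V}\mathbf{x}\rangle=\mathcal{O}\!\left(\frac{mN\log^2 d}{\lambda^3 d\sigma_0^2}\right).
\]
I would then pick $C$ large enough, absorbing this into the $\log(1/\epsilon)$ scaling, so that $\mathbf{W}^*$ achieves per-sample loss at most $\epsilon$ on every sample of $\mathcal{S}_2$. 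If absorbing it into $C$ fails, the fallback is to inflate the coefficient in front of the added direction by an additive amount equal to this bound. That only increases $\|\mathbf{W}^{(T_1+T_2)}-\mathbf{W}^*\|_F^2$ by a constant factor under Condition~\ref{condition:condition}.

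\textbf{Term $B$ and telescoping.} Bound $B\le 2\eta^2\max\|\mathbf{V}^{(T_1)}\Xi^{(T_1)}(\mathbf{X})\|_2^2\,\mathcal{L}_{\mathcal{S}_2}$, using the norm bound $\mathcal{O}(Nmd\sigma_0^2\log^2 d/\lambda^2)$ from Lemma~\ref{lemma: s2}. The small step size in Condition~\ref{condition:condition} then gives $B\le\eta\mathcal{L}_{\mathcal{S}_2}$. Homogeneity plus convexity yields the one-step inequality
\[
\|\mathbf{W}^{(t)}-\mathbf{W}^*\|_F^2-\|\mathbf{W}^{(t+1)}-\mathbf{W}^*\|_F^2\ge\eta\mathcal{L}_{\mathcal{S}_2}^{(t)}-2\eta\epsilon .
\]
Telescoping over $T_3$ steps gives an average loss of at most $\|\mathbf{W}^{(T_1+T_2)}-\mathbf{W}^*\|_F^2/(\eta T_3)+2\epsilon$. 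The numerator is $\mathcal{O}(mN\log^2(1/\epsilon)/(\lambda^2 d\sigma_0^2))$, so with $T_3=\Theta(mN^2/(\lambda^2 d\sigma_0^2\eta))$ and $\epsilon=0.003/N$ some iterate in the window has loss $\le 0.01/N$.
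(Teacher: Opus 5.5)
Your template (a comparator for the convex feature-layer problem plus telescoping) is the paper's. You are also right that Phase 2 starts from $\mathbf{W}^{(T_1+T_2)}$: the paper's telescoped bound writes $\|\mathbf{W}^{(0)}-\mathbf{W}^*\|_F^2$ and glosses over this. The gap is in how you handle the obstacle that re-anchoring creates. Your $\mathbf{W}^*$ sits on top of $\mathbf{W}^{(T_1+T_2)}$, so its logits on $[\mathbf{a}'_i\;\;\mathbf{r}_1]$ inherit the Phase-1 push of $\mathbf{a}'_i$ toward $\mathcal{I}(\mathbf{c}_i)$. You propose to beat that push with the crude stability estimate, which in logit units is $G=\mathcal{O}(mN\log^2 d/(\lambda^2 d\sigma_0^2))$.

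\emph{Why that fails.} Condition~\ref{condition:condition} only bounds $\sigma_0$ from above. At the largest admissible $\sigma_0$ you have $d\sigma_0^2\approx nm\log^2 d/(C^2 d\lambda^2)$, so $G$ is of order $C^2Nd/n$, i.e.\ of order $d$. It only grows as $\sigma_0$ shrinks.
\begin{itemize}
\item $G$ is not $\mathcal{O}(\log(1/\epsilon))$, so it cannot be absorbed into the constant $C$.
\item The fallback fails for the same reason. Adding $G$ (divided by $\alpha_1$) to the coefficient multiplies $\|\mathbf{W}^{(T_1+T_2)}-\mathbf{W}^*\|_F^2$ by roughly $(G/\log(1/\epsilon))^2$, not by a constant.
\item Dividing by $\eta T_3=\Theta(mN^2/(\lambda^2 d\sigma_0^2))$ then gives an average-loss bound of order $G^2/N$, far above $0.01/N$.
\end{itemize}
So the step ``only increases the squared distance by a constant factor under Condition~\ref{condition:condition}'' is false.

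\emph{How the paper avoids it.} The paper anchors the comparator at $\mathbf{W}^{(0)}$. Then $f_k(\mathbf{W}^*)$ on $[\mathbf{a}'_i\;\;\mathbf{r}_1]$ is the Stage-1 logit (with $\mathbf{V},\mathbf{Z}$ frozen at $T_1$, and of the same $\tilde{\mathcal{O}}(1)$ size as on training prompts). To this is added the margin $\approx C\alpha_1\log(1/\epsilon)$ plus near-orthogonal cross terms. Your Term-$A$ computation then goes through essentially verbatim. To make this route rigorous you still need the true starting distance, which the triangle inequality bounds by $\|\mathbf{W}^{(T_1+T_2)}-\mathbf{W}^*\|_F\le\|\mathbf{W}^{(T_1+T_2)}-\mathbf{W}^{(0)}\|_F+\|\mathbf{W}^{(0)}-\mathbf{W}^*\|_F$.

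For the first term, use the Phase-1 one-step inequality. It shows the squared distance to the Phase-1 comparator grows by at most $2\eta\epsilon$ per step. Over $T_2$ steps this gives $\|\mathbf{W}^{(T_1+T_2)}-\mathbf{W}^{(0)}\|_F^2=\mathcal{O}(mN\log^2(1/\epsilon)/(\lambda^2 d\sigma_0^2))$, the same order as before. With that replacement, your Term~$B$ bound, the telescoping, and the choice $\epsilon=0.003/N$ go through as in the paper. Keeping your anchor instead would require a genuinely logarithmic bound on the Phase-1 logits, which the stability estimate does not provide.
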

\begin{proof}
	Let 
	\begin{equation}
		\mathbf{w}^*_{k,l} = 
        \mathbf{w}^{(0)}_{k,l} + 
    \begin{cases}
\frac{C}{\lambda}\cdot\log(1/\epsilon)\mathbf{V}^{(0)}\mathbf{a}'_i/\left\|\mathbf{V}^{(0)}\mathbf{a}'_i\right\|_2^2, & \text{ if $k=\mathcal{I}(\mathbf{b}_i)$},\\
0, & \text{ if $k \neq \mathcal{I}(\mathbf{b}_i), \forall i\in[N]$}.
\end{cases}
	\end{equation}
	 for all $k\in[d], i\in[N]$ and $l\in[m]$.
	 For simplicity, we denote $\mathcal{S} = \mathcal{S}_2$ and $n = |\mathcal{S}|$.
	Then, we have
	\begin{equation}\label{equ: increment}
		\begin{aligned}
			&\left\|\mathbf{W}^{(t)} - \mathbf{W}^*\right\|_F^2 - \left\|\mathbf{W}^{(t+1)} - \mathbf{W}^*\right\|_F^2\\
			=&  2\eta\langle\nabla \mathcal{L}_{\mathcal{S}}(\mathbf{Z}^{(t)},\mathbf{V}^{(t)},\mathbf{W}^{(t)}), \mathbf{W}^{(t)} - \mathbf{W}^*\rangle - \eta^2 \left\|\nabla \mathcal{L}_{\mathcal{S}}(\mathbf{Z}^{(t)},\mathbf{V}^{(t)},\mathbf{W}^{(t)})\right\|_F^2\\
			=& \frac{2\eta}{n} \sum_{(\mathbf{X},y)\in\mathcal{S}}\sum_{k=1}^d \frac{\partial \mathcal{L}}{\partial f_k} \langle\nabla f_k(\mathbf{Z}^{(t)},\mathbf{V}^{(t)},\mathbf{W}^{(t)},\mathbf{X}),\mathbf{W}^{(t)}\rangle -
			\frac{2\eta}{n} \sum_{(\mathbf{X},y)\in\mathcal{S}}\sum_{k=1}^d \frac{\partial \mathcal{L}}{\partial f_k} \underbrace{\langle \nabla f_k(\mathbf{Z}^{(t)},\mathbf{V}^{(t)},\mathbf{W}^{(t)},\mathbf{X}), \mathbf{W}^*\rangle}_{A}\\
			&- \underbrace{\eta^2 \left\|\nabla \mathcal{L}_\mathcal{S}(\mathbf{Z}^{(t)},\mathbf{V}^{(t)},\mathbf{W}^{(t)})\right\|_F^2}_{B}.
		\end{aligned}
	\end{equation}
	Next, we bound the terms $A$ and $B$.
	For the term $A$, we have
	\begin{align}\label{equ: A}
		A 
		\begin{cases}\ge \frac{C\log(1/\epsilon)}{\lambda} - \mathcal{O}(\oldfrac{\sqrt{N}\log(d)\log(1/\epsilon)}{\lambda^2\sqrt{md}\sqrt{d\sigma_0^2}})& \text{if } k = y, \\
			\le \mathcal{O}(\oldfrac{\sqrt{N}\log(d)\log(1/\epsilon)}{\lambda^2\sqrt{md}\sqrt{d\sigma_0^2}}) & \text{if } k \neq y.
		\end{cases}
	\end{align}
	For the term $B$, we have
	\begin{equation}\label{equ: B}
	\begin{aligned}
		B \le& \eta^2 \left[ \frac{1}{n} \sum_{(\mathbf{X},y)\in\mathcal{S}}\sum_{k\in[d]}\left|\frac{\partial \mathcal{L}(\mathbf{Z}^{(t)},\mathbf{V}^{(t)},\mathbf{W}^{(t)},\mathbf{X},y)}{\partial f_k}\right| \left\| \nabla f_k(\mathbf{Z}^{(t)},\mathbf{V}^{(t)},\mathbf{W}^{(t)},\mathbf{X})\right\|_F\right]^2\\
		\le& 2\eta^2\max_{(\mathbf{X},y)\in\mathcal{S}}\|\mathbf{V}^{(t)}\Xi^{(t)}(\mathbf{X})\|_2^2\cdot\mathcal{L}_\mathcal{S}(\mathbf{Z},\mathbf{V},\mathbf{W}).
	\end{aligned}
	\end{equation}
	Next, we need to bound $\max_{(\mathbf{X},y)\in\mathcal{S}}\|\mathbf{V}^{(t)}\Xi^{(t)}(\mathbf{X})\|_2^2$. 
	By Lemma \ref{lemma: s2}, we have
	\begin{align}
		\max_{(\mathbf{X},y)\in\mathcal{S}}\|\mathbf{V}^{(t)}\Xi^{(t)}(\mathbf{X})\|_2^2 = \max_{(\mathbf{X},y)\in\mathcal{S}}\|\mathbf{V}^{(T_1)}\Xi^{(T_1)}(\mathbf{X})\|_2^2 = \mathcal{O}(Nd\sigma_0^2\log^2(d)/\lambda^2).
	\end{align}
	
	Combining (\ref{equ: A}), (\ref{equ: B}), and (\ref{equ: increment}), with $\eta = \Theta(\frac{\lambda^2}{Nd\sigma_0^2\log^2(d)})$, we have
	\begin{equation}\label{equ: descent3}
		\begin{aligned}
			&\left\|\mathbf{W}^{(t)} - \mathbf{W}^*\right\|_F^2 - \left\|\mathbf{W}^{(t+1)} - \mathbf{W}^*\right\|_F^2\\
			=&  \frac{2\eta}{n} \sum_{(\mathbf{X},y)\in\mathcal{S}}\left(\sum_{k=1}^d \frac{\partial \mathcal{L}}{\partial f_k} \langle\nabla f_k(\mathbf{Z}^{(t)},\mathbf{V}^{(t)},\mathbf{W}^{(t)},\mathbf{X}),\mathbf{W}^{(t)}\rangle - \sum_{k=1}^d \frac{\partial \mathcal{L}}{\partial f_k} \langle \nabla f_k(\mathbf{Z}^{(t)},\mathbf{V}^{(t)},\mathbf{W}^{(t)},\mathbf{X}), \mathbf{W}^*\rangle\right)\\
			&- \eta^2 \left\|\nabla \mathcal{L}_\mathcal{S}(\mathbf{Z}^{(t)},\mathbf{V}^{(t)},\mathbf{W}^{(t)})\right\|_F^2\\
			\overset{(a)}{\ge}& \frac{2\eta}{n}\sum_{(\mathbf{X},y)\in\mathcal{S}}\left(\mathcal{L}(\mathbf{Z}^{(t)},\mathbf{V}^{(t)},\mathbf{W}^{(t)},\mathbf{X}_i,y_i)-\log(1/\epsilon)\right) - \eta\mathcal{L}_\mathcal{S}(\mathbf{Z}^{(t)},\mathbf{V}^{(t)},\mathbf{W}^{(t)})\\
			=&\eta \mathcal{L}_\mathcal{S}(\mathbf{Z}^{(t)},\mathbf{V}^{(t)},\mathbf{W}^{(t)}) - 2\eta \epsilon,
		\end{aligned}
	\end{equation}
	where $(a)$ is by the homogeneity of the feature layer and convexity of the cross-entropy function.
	
	Rearranging (\ref{equ: descent3}), we have
	\begin{align}
		\frac{1}{T_3+1}\sum_{t'=0}^{T_3}\mathcal{L}_{\mathcal{S}_2}(\mathbf{Z}^{(T_1+T_2+t')},\mathbf{V}^{(T_1+T_2+t')},\mathbf{W}^{(T_1+T_2+t')}) \le \frac{\left\|\mathbf{W}^{(0)} - \mathbf{W}^*\right\|_F^2}{\eta (T_3+1)} + 2\epsilon.
	\end{align}
	Here, we have
	\begin{align}
		\left\|\mathbf{W}^{(0)} - \mathbf{W}^*\right\|_F^2 = \mathcal{O}(\frac{mN\log^2(1/\epsilon)}{\lambda^2d\sigma_0^2}).
	\end{align}
	As a result, we have
	\begin{align}
		\frac{1}{T_3+1}\sum_{t'=0}^{T_3}\mathcal{L}_{\mathcal{S}_2}(\mathbf{Z}^{(T_1+T_2+t')},\mathbf{V}^{(T_1+T_2+t')},\mathbf{W}^{(T_1+T_2+t')}) \le 3\epsilon.
	\end{align}
    Letting $\epsilon = 0.003/N$ finishes the proof.
\end{proof}
\subsection{Feature similarity (first on \texorpdfstring{$\mathcal{S}_1\cup\mathcal{S}_3$}{S\_1 U S\_3}, then on \texorpdfstring{$\mathcal{S}_2$}{S\_2})}
\begin{proposition}
    After the Stage-1 training, for all $i\in[N]$, we have
    $$\frac{\langle\mathbf{V}^{(T_1)}\mathbf{a}_i, \mathbf{V}^{(T_1)}\mathbf{a}'_i\rangle}{\|\mathbf{V}^{(T_1)}\mathbf{a}_i\|_2\|\mathbf{V}^{(T_1)}\mathbf{a}'_i\|_2} = o(1).$$
\end{proposition}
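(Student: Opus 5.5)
We read off the Stage-1 coefficient structure from Lemma~\ref{lemma: s2} and compute the inner product and the norms in the decomposition of Lemma~\ref{lemma: decomposition}. Write
\begin{align*}
\mathbf{V}^{(T_1)}\mathbf{a}_i &= \sum_{l}\gamma^{(T_1)}_{i,\mathcal{I}(\mathbf{b}_i),l}\mathbf{w}^{(0)}_{\mathcal{I}(\mathbf{b}_i),l} + \mathbf{u}_i + \mathbf{V}^{(0)}\mathbf{a}_i,\\
\mathbf{V}^{(T_1)}\mathbf{a}'_i &= \sum_{l}\rho^{(T_1)}_{i,\mathcal{I}(\mathbf{c}_i),l}\mathbf{w}^{(0)}_{\mathcal{I}(\mathbf{c}_i),l} + \mathbf{u}'_i + \mathbf{V}^{(0)}\mathbf{a}'_i.
\end{align*}
Here $\mathbf{u}_i,\mathbf{u}'_i$ collect the small coefficients, which are nonpositive and of size $\mathcal{O}(\log(d)/(\lambda d\sigma_0))$. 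The key structural point is that in the A$\to$S curriculum, $\mathbf{a}'_i$ never appears with relation $\mathbf{r}_1$ in Phase 1. Its dominant direction is therefore the block $\{\mathbf{w}^{(0)}_{\mathcal{I}(\mathbf{c}_i),l}\}_l$. By contrast, the dominant direction of $\mathbf{a}_i$ is $\{\mathbf{w}^{(0)}_{\mathcal{I}(\mathbf{b}_i),l}\}_l$, and $\mathcal{I}(\mathbf{b}_i)\neq\mathcal{I}(\mathbf{c}_i)$.

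For the norms we use Lemma~\ref{lemma: s2}(2) together with Lemma~\ref{lemma: inner}. The main block of each vector has squared norm $\Theta(m\cdot \frac{\log^2 d}{\lambda^2 d\sigma_0^2}\cdot d\sigma_0^2)=\Theta(\frac{m\log^2 d}{\lambda^2})$. The cross terms within a block are controlled by $m^2\frac{\log^2 d}{\lambda^2 d\sigma_0^2}\sigma_0^2\sqrt{d\log(\cdot)}$, which is lower order once $d\gg m^2$. So both denominators are $\Theta(\frac{m\log^2 d}{\lambda^2})$, matching the normalization in the positive-case propositions.

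For the numerator we expand $\langle\cdot,\cdot\rangle$ into several types of pairings, each bounded using Lemma~\ref{lemma: inner} and the lemma on $\mathbf{V}^{(0)}$ inner products.
\begin{itemize}
\item The main-main term pairs $\mathbf{w}^{(0)}_{\mathcal{I}(\mathbf{b}_i),l}$ with $\mathbf{w}^{(0)}_{\mathcal{I}(\mathbf{c}_i),l'}$, which are distinct initialization vectors. Its magnitude is at most $m^2\cdot\frac{\log^2 d}{\lambda^2 d\sigma_0^2}\cdot 2\sigma_0^2\sqrt{d\log(4d^2m^2/\delta)}=\tilde{\mathcal{O}}(\frac{m^2\log^2 d}{\lambda^2\sqrt d})$.
\item The main-small terms have small coefficients across up to $dm$ vectors. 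These are bounded by Cauchy--Schwarz with a Gram-matrix bound: the $\mathbf{w}^{(0)}$'s are nearly orthogonal, so $\|\sum_{k,l}c_{k,l}\mathbf{w}^{(0)}_{k,l}\|^2\lesssim d\sigma_0^2\|c\|^2(1+\tilde{\mathcal{O}}(dm/\sqrt d))$. This yields a squared norm $\mathcal{O}(\frac{nm\log^2 d}{d\lambda^2})$ for $\mathbf{u}_i,\mathbf{u}'_i$, exactly as in the earlier propositions.
\item The small-small terms are bounded the same way as the main-small terms.
\item The terms involving $\mathbf{V}^{(0)}\mathbf{a}_i,\mathbf{V}^{(0)}\mathbf{a}'_i$ are at most $\mathcal{O}(\sqrt{d}\sigma_0)$ in norm. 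Against a main block they contribute $\tilde{\mathcal{O}}(\sqrt{m}\log d/\lambda\cdot\sqrt{d}\sigma_0)$ via independence of $\mathbf{V}^{(0)}$ and $\mathbf{W}^{(0)}$, and $\langle\mathbf{V}^{(0)}\mathbf{a}_i,\mathbf{V}^{(0)}\mathbf{a}'_i\rangle=\tilde{\mathcal{O}}(\sigma_0^2\sqrt d)$.
\end{itemize}
Dividing by $\Theta(\frac{m\log^2 d}{\lambda^2})$ gives a cosine of order
\[
\tilde{\mathcal{O}}\!\left(\frac{m}{\sqrt d}+\sqrt{\frac{n}{d}}+\frac{\lambda\sigma_0\sqrt d}{\sqrt m\log d}\right).
\]
Condition~\ref{condition:condition} makes every term $o(1)$: $d\ge m^3nN^2/\lambda^6$ handles the first two, and the small-$\sigma_0$ condition handles the last.

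The main obstacle is controlling the aggregate of the many small negative coefficients $\gamma_{i,j,l},\rho_{i,j,l}$ spread over $d\cdot m$ directions. A naive triangle inequality would give $dm\cdot\frac{\log d}{\lambda d\sigma_0}\cdot\sqrt{d}\sigma_0$, which is not small. Our first approach is to exploit that these coefficients are nearly uniform across $k$ and $l$, since they come from the softmax mass $\mathrm{logit}_k\approx 1/d$. This reduces $\mathbf{u}_i$ to approximately a multiple of $\sum_{k,l}\mathbf{w}^{(0)}_{k,l}$, a single near-Gaussian vector of norm $\sqrt{dm}\,\sqrt d\sigma_0$ times the coefficient, i.e.\ $\mathcal{O}(\sqrt{m}\log d/\lambda)\cdot d^{-1/2}$. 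We then bound its inner products using Gram-matrix concentration rather than entrywise bounds. As a fallback, we can follow the earlier propositions and bound $\|\mathbf{u}_i\|^2$ directly by $\mathcal{O}(\frac{nm\log^2 d}{d\lambda^2})$, then apply Cauchy--Schwarz.
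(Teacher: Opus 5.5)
You follow the paper's route. Lemma~\ref{lemma: s2} places the dominant part of $\mathbf{V}^{(T_1)}\mathbf{a}_i$ on $\{\mathbf{w}^{(0)}_{\mathcal{I}(\mathbf{b}_i),l}\}_l$ and that of $\mathbf{V}^{(T_1)}\mathbf{a}'_i$ on $\{\mathbf{w}^{(0)}_{\mathcal{I}(\mathbf{c}_i),l}\}_l$; near-orthogonality controls the cross terms; and one divides by norms of order $\sqrt{m}\log d/\lambda$. Your handling of the main blocks and of the $\mathbf{V}^{(0)}$ terms is fine. The paper's proof consists of exactly this main-block computation and never controls the small coefficients, so the obstacle you flag at the end is real and is left open there too.

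Your second bullet, however, does not deliver what you claim. The Gershgorin-type estimate is valid, but with $\|c\|^2\lesssim dm\,(\log d/(\lambda d\sigma_0))^2$ it gives $\|\mathbf{u}_i\|^2\lesssim (m\log^2 d/\lambda^2)(1+\tilde{\mathcal{O}}(m\sqrt d))$. That is larger than the main block, not $\mathcal{O}(nm\log^2 d/(d\lambda^2))$; $dm$ vectors in $\mathbb{R}^d$ are not nearly orthogonal as a family. This matters, because $\mathbf{u}_i$ and $\mathbf{u}'_i$ are both nonpositive combinations of essentially the same vectors. If every small coefficient had the size $\log d/(\lambda d\sigma_0)$ that Lemma~\ref{lemma: s2}(3) permits, both would be close to the same multiple of $\sum_{k,l}\mathbf{w}^{(0)}_{k,l}$, with norm comparable to the main blocks, and the cosine would be $\Theta(1)$. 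Your first remedy presupposes $\mathrm{logit}_k\approx 1/d$, which is stronger than what the lemma records (it only uses $\mathrm{logit}_k<2/\sqrt d$). Your fallback cites a bound the paper asserts without proof.

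The missing idea is the softmax mass constraint. The Stage-1 coefficients do not depend on $l$, because $\mathbf{W}_2$ averages over $l$. So with $\bar{\mathbf{w}}_k:=\sum_{l}\mathbf{w}^{(0)}_{k,l}$ and $\bar{\mathbf{W}}:=[\bar{\mathbf{w}}_1,\dots,\bar{\mathbf{w}}_d]$ you can write $\mathbf{u}_i=\bar{\mathbf{W}}\boldsymbol{\gamma}_i$. The token $\mathbf{a}_i$ appears in a single Phase-1 sample and $\sum_{k\neq y}\mathrm{logit}^{(t)}_k\le 1$, so $\|\boldsymbol{\gamma}_i\|_1\le \lambda\eta T_1/(nm)=\Theta(\log d/(\lambda\sqrt d\,\sigma_0))$. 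Combining this with the $\ell_\infty$ bound of Lemma~\ref{lemma: s2}(3) gives $\|\boldsymbol{\gamma}_i\|_2^2\le\|\boldsymbol{\gamma}_i\|_\infty\|\boldsymbol{\gamma}_i\|_1=\mathcal{O}(\log^2 d/(\lambda^2 d^{3/2}\sigma_0^2))$. Since $\bar{\mathbf{W}}$ is a $d\times d$ Gaussian matrix with entry variance $m\sigma_0^2$, $\|\bar{\mathbf{W}}\|_{\mathrm{op}}=\mathcal{O}(\sqrt{dm}\,\sigma_0)$ with high probability. Hence $\|\mathbf{u}_i\|^2=\mathcal{O}(m\log^2 d/(\lambda^2\sqrt d))$, so $\|\mathbf{u}_i\|$ is $\mathcal{O}(d^{-1/4})$ times the main-block norm, and likewise for $\mathbf{u}'_i$. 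Cauchy--Schwarz then handles all main--small and small--small terms, the norms remain $\Theta(\sqrt m\log d/\lambda)$, and your final bound holds with $\sqrt{n/d}$ replaced by $d^{-1/4}$. The same $l$-independence also sharpens your main--main term to $\tilde{\mathcal{O}}(m\log^2 d/(\lambda^2\sqrt d))$, matching the paper.
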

\begin{proof}
    Based on Lemma \ref{lemma: s2}, $\mathbf{V}^{(T_1)}\mathbf{a}$ and $\mathbf{V}^{(T_1)}\mathbf{a}'$ have large coefficients on two parts of nearly orthogonal components $\mathbf{w}^{(0)}_{\mathcal{I}(\mathbf{b}_i),l}$ and $\mathbf{w}^{(0)}_{\mathcal{I}(\mathbf{c}_i),l}$ for all $l\in[m]$.
    As a result, we have
    \begin{align}
        \left|\langle\mathbf{V}^{(T_1)}\mathbf{a}_i, \mathbf{V}^{(T_1)}\mathbf{a}'_i\rangle\right| = \tilde{\mathcal{O}}(m\frac{\log^2(d)}{\sqrt{d}\lambda^2} + \frac{1}{\sqrt{d}})
    \end{align}
    Therefore, we have 
    \begin{align}       \frac{|\langle\mathbf{V}^{(T_1)}\mathbf{a}_i, \mathbf{V}^{(T_1)}\mathbf{a}'_i\rangle|}{\|\mathbf{V}^{(T_1)}\mathbf{a}_i\|_2\|\mathbf{V}^{(T_1)}\mathbf{a}'_i\|_2} = \frac{\tilde{\mathcal{O}}(m\frac{\log^2(d)}{\sqrt{d}\lambda^2} + \frac{1}{\sqrt{d}})}{\Theta(m\log^2(d)/\lambda^2 + d\sigma_0^2)} = o(1).
    \end{align}
    This finishes the proof.
\end{proof}
\subsection{Proof of success rate bound (first on \texorpdfstring{$\mathcal{S}_1\cup\mathcal{S}_3$}{S\_1 U S\_3}, then on \texorpdfstring{$\mathcal{S}_2$}{S\_2})}
As at initialization and during training, all the embedding distributions of $i,j \in [N]$ and the model initialization are symmetric.
Then, for all $(\mathbf{X},y)\in\mathcal{A}$ and $t\in[T_1+T_2+T_3]$, we have
\begin{align}
    \mathbb{P}[f_{\mathcal{I}(\mathbf{c}_i)}(\mathbf{Z}^{(t)},\mathbf{V}^{(t)},\mathbf{W}^{(t)},\mathbf{X})>\max_{j\neq \mathcal{I}(\mathbf{c}_i)} f_j(\mathbf{Z}^{(t)},\mathbf{V}^{(t)},\mathbf{W}^{(t)},\mathbf{X}) ] = \frac{1}{N}.
\end{align}
Therefore, we have
\begin{align}
    \mathcal{L}^{0-1}_\mathcal{A}(\mathbf{Z}^{(t)},\mathbf{V}^{(t)},\mathbf{W}^{(t)}) = 1-\frac{1}{N}.
\end{align}
This finishes the proof.

\section{Proofs of two-hop reasoning with identity bridges}

The correspondence of two-hop reasoning construction with the analogical-reasoning construction is
\[
\mathcal{S}_1 \leftrightarrow \mathcal{H}_1,\qquad
\mathcal{S}_2 \leftrightarrow \mathcal{IB},\qquad
\mathcal{S}_3 \leftrightarrow \mathcal{H}_2.
\]
Equivalently,
\[
(\mathbf{a}_i,\mathbf{r}_1,\mathbf{b}_i),\quad (\mathbf{a}'_i,\mathbf{r}_1,\mathbf{b}_i),\quad (\mathbf{a}'_i,\mathbf{r}_2,\mathbf{c}_i)
\]
is replaced by
\[
(\mathbf{a}_i,\mathbf{r}_1,\mathbf{b}_i),\quad (\mathbf{b}_i,\mathbf{r}_3,\mathbf{b}_i),\quad (\mathbf{b}_i,\mathbf{r}_2,\mathbf{c}_i).
\]
The only notational difference is that the second premise uses the identity relation $\mathbf{r}_3$ instead of $\mathbf{r}_1$. 
The proofs of two-hop reasoning with the identity bridge are similar to those of joint training. 
We omit it here.

\section{Proofs of two-hop reasoning without the identity bridge}
We assume that Lemmas 3 and 5 hold with probability $1-2\delta'$ by union bound. We have the following lemma.
\begin{lemma}\label{lemma: decomposition2}
	For all $t\in[T]$, $i\in[N]$, $k\in[d]$, $l,h\in[m], q\in[2]$, the model updates $\mathbf{w}_{k,l}^{(t)}$ and $\mathbf{V}^{(t)}\mathbf{a}_i$ are linear combinations of initialization vectors, i.e., there exists coefficients $\gamma^{(t)}_{i,k,l},\rho^{(t)}_{i,k,l}, \zeta^{(t)}_{q,k,l},\beta^{(t)}_{1,k,l,q,h},\beta^{(t)}_{2,k,l,j}, \beta_{3,k,l,j}^{(t)},\beta^{(t)}_{4,k,l,q}\in\mathbb{R}$ such that
	\begin{align}
		\mathbf{V}^{(t)}\mathbf{a}_i =& \sum_{k=1}^d\sum_{l=1}^{m}\gamma^{(t)}_{i,k,l}\mathbf{w}^{(0)}_{k,l} + \mathbf{V}^{(0)}\mathbf{a}_i,\\
		\mathbf{V}^{(t)}\mathbf{b}_i =& \sum_{k=1}^d\sum_{l=1}^{m}\rho^{(t)}_{i,k,l}\mathbf{w}^{(0)}_{k,l} + \mathbf{V}^{(0)}\mathbf{b}_i,\\
		\mathbf{V}^{(t)}\mathbf{r}_q =& \sum_{k=1}^d\sum_{l=1}^{m}
		\zeta_{q,k,l}^{(t)}\mathbf{w}^{(0)}_{k,l} +  \mathbf{V}^{(0)}\mathbf{r}_q,\\
		\mathbf{w}_{k,l}^{(t)} =& \sum_{q=1}^d\sum_{h=1}^{m}\beta^{(t)}_{1,k,l,q,h}\mathbf{w}^{(0)}_{q,h} + \sum_{j=1}^{N}\beta^{(t)}_{2,k,l,j}\mathbf{V}^{(0)}\mathbf{a}_j + \sum_{j=1}^{N}\beta^{(t)}_{3,k,l,j}\mathbf{V}^{(0)}\mathbf{b}_j +\sum_{q\in[2]}\beta^{(t)}_{4,k,l,q}\mathbf{V}^{(0)}\mathbf{r}_q.
	\end{align}
\end{lemma}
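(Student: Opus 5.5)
The plan is an induction on $t$, following the argument behind Lemma \ref{lemma: decomposition}, with $\mathbf{a}'_i$ replaced by $\mathbf{b}_i$ and the relation set $\{\mathbf{r}_1,\mathbf{r}_2\}$ being the one present in $\mathcal{H}_1\cup\mathcal{H}_2$. The inductive claim is that the column space of $\mathbf{V}^{(t)}-\mathbf{V}^{(0)}$ lies in $\mathcal{W}_0:=\mathrm{span}\{\mathbf{w}^{(0)}_{k,l}\}$. It is also that every $\mathbf{w}^{(t)}_{k,l}$ lies in $\mathcal{W}_0+\mathrm{span}\{\mathbf{V}^{(0)}\mathbf{a}_j,\mathbf{V}^{(0)}\mathbf{b}_j,\mathbf{V}^{(0)}\mathbf{r}_q\}$. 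At $t=0$ the claim holds trivially, with all $\gamma,\rho,\zeta$ equal to zero, $\beta_{1,k,l,k,l}=1$, and all other coefficients equal to zero.

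For the inductive step I split into the two training stages. In Stage 1 only $\mathbf{Z}$ and $\mathbf{V}$ move, and the gradient lemma gives $\nabla_{\mathbf{v}_l}\mathcal{L}=\lambda(\textbf{logit}-\mathbf{e}_y)^\top\mathbf{W}_2\mathbf{W}$ placed in row $l$ against $\mathbf{x}_a^\top$. Hence $\nabla_{\mathbf{V}}\mathcal{L}=\lambda\,\mathbf{W}^\top\mathbf{W}_2^\top(\textbf{logit}-\mathbf{e}_y)\,\mathbf{x}_a^\top$. This is a rank-one matrix whose left factor $\mathbf{W}^\top\mathbf{W}_2^\top(\cdot)=\frac1m\sum_{k,l}(\cdot)_k\mathbf{w}_{k,l}$ lies in $\mathrm{span}\{\mathbf{w}^{(t)}_{k,l}\}$, and during Stage 1 $\mathbf{w}^{(t)}_{k,l}=\mathbf{w}^{(0)}_{k,l}$. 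The right factor is $\mathbf{x}_a=\alpha_1\mathbf{z}_1+\alpha_2\mathbf{z}_2$, with tokens taken from $\{\mathbf{a}_j,\mathbf{b}_j,\mathbf{r}_1,\mathbf{r}_2\}$. Applying the update to each orthonormal token $\mathbf{z}$ therefore adds a vector in $\mathcal{W}_0$, scaled by the scalar $\langle\mathbf{x}_a,\mathbf{z}\rangle$ (a softmax weight, or $0$). This yields scalar recursions for $\gamma^{(t+1)}_{i,k,l}$, $\rho^{(t+1)}_{i,k,l}$, and $\zeta^{(t+1)}_{q,k,l}$ in terms of logits and attention scores. The $\mathbf{Z}$ update is irrelevant here, because it only changes the scalars $\alpha$.

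In Stage 2, $\mathbf{V}$ is frozen and $\nabla_{\mathbf{w}_{k,l}}\mathcal{L}=-\frac{\lambda}{m}(\mathbb{I}(k=y)-\mathrm{logit}_k)\mathbf{V}^{(T_1)}\mathbf{x}_a$. By the inductive claim, $\mathbf{V}^{(T_1)}\mathbf{x}_a=\sum_s\alpha_s\mathbf{V}^{(T_1)}\mathbf{z}_s$ expands as (coefficients)$\cdot\mathbf{w}^{(0)}$ plus $\mathbf{V}^{(0)}\mathbf{z}_s$. The $\mathbf{w}^{(0)}$ part updates $\beta_1$, and the $\mathbf{V}^{(0)}\mathbf{a}_j$, $\mathbf{V}^{(0)}\mathbf{b}_j$, $\mathbf{V}^{(0)}\mathbf{r}_q$ parts update $\beta_2$, $\beta_3$, and $\beta_4$ respectively. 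Summing over the samples of $\mathcal{H}_1\cup\mathcal{H}_2$ preserves membership in the spans, which closes the induction.

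The only real subtlety is bookkeeping. One must check that the tokens appearing in the data are exactly $\{\mathbf{a}_j,\mathbf{b}_j\}\cup\{\mathbf{r}_1,\mathbf{r}_2\}$ (there is no $\mathbf{r}_3$ without identity bridges, so $q\in[2]$ suffices). One must also check that the nonlinearity, namely the softmax in both attention and output, enters only through scalar coefficients and never produces new vector directions. I expect no genuine obstacle beyond this.
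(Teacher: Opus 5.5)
Your induction is correct and follows the paper's reasoning: the paper simply asserts that the $\mathbf{V}$-updates (with $\mathbf{W}$ frozen) and the $\mathbf{W}$-updates (with $\mathbf{V}$ frozen) stay in the span of the initialization vectors, since the softmax nonlinearities enter only through scalar coefficients, and you make the Stage-1/Stage-2 bookkeeping explicit. As a side remark, the bare existence claim holds automatically because the $dm\ge d$ Gaussian vectors $\mathbf{w}^{(0)}_{k,l}$ almost surely span $\mathbb{R}^d$, so the real content of both arguments is the specific recursively defined coefficients that are tracked downstream.
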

This lemma holds as the gradient updates are linear combinations of these randomly initialized vectors.
In the setting without identity bridges, the training set is $\mathcal H_1\cup \mathcal H_2$, and hence only the relation tokens $\mathbf{r}_1$ and $\mathbf{r}_2$ appear in training. The identity relation $\mathbf{r}_3$ is not used and is omitted from the following coefficient decompositions.
\subsection{Properties during Stage-1 training (two-hop training w/o identity bridge)}
By the choice of $T_1$, Stage 1 stops before the logits saturate. Hence, we have that $1-\text{logit}_y(\mathbf{X}) = \Theta(1)$ holds for all $(\mathbf{X},y)\in\mathcal{H}_1\cup\mathcal
{H}_2$. The following properties hold.
\begin{lemma}\label{lemma: h1}
	For any $t\in[0, T_1]$ with $T_1 = \Theta(\oldfrac{mn\log(d)}{\lambda^2\eta\sqrt{d}\sigma_0})$, we have
	\begin{enumerate}
		\item For all $(\mathbf{X},y)\in \mathcal{H}_1\cup\mathcal{H}_2$, the attention scores are balanced, i.e.,
		\begin{align}
			0.7\le \frac{\alpha_1(\mathbf{Z}^{(t)},\mathbf{X})}{\alpha_2(\mathbf{Z}^{(t)},\mathbf{X})} \le 1.5.
		\end{align}
		\item For all $i\in[N]$, $l_1, l_2 \in [m]$, the coefficients satisfy
		\begin{align}
			\gamma^{(T_1)}_{i,\mathcal{I}(\mathbf{b}_i),l_1} = \Theta\left(\frac{\log(d)}{\lambda \sqrt{d}\sigma_0}\right),
			\rho^{(T_1)}_{i,\mathcal{I}(\mathbf{c}_i),l_2} = \Theta\left(\frac{\log(d)}{\lambda \sqrt{d}\sigma_0}\right).
		\end{align}
		\item For all $i\in[N], j_1\in[d]\backslash\{\mathcal{I}(\mathbf{b}_i)\}_{i=1}^N, j_2 \in [d]\backslash\{\mathcal{I}(\mathbf{c}_i)\}_{i=1}^N$ and $l\in [m]$, the coefficients satisfy
		\begin{align}
			-\mathcal{O}\left(\frac{\log(d)}{\lambda d\sigma_0}\right)\le \gamma^{(T_1)}_{i,j_1,l} \le 0, -\mathcal{O}\left(\frac{\log(d)}{\lambda d\sigma_0}\right)\le \rho^{(T_1)}_{i,j_2,l} \le 0.
		\end{align}
		\item For all $k\in[N]$, $l\in[m]$ the coefficients satisfy
		\begin{align}
			\zeta_{1,\mathcal{I}(\mathbf{b}_k),l}^{(T_1)} = \Theta\left(\frac{\log(d)}{\lambda \sqrt{d}\sigma_0}\right),\zeta_{2,\mathcal{I}(\mathbf{c}_k),l}^{(T_1)} = \Theta\left(\frac{\log(d)}{\lambda \sqrt{d}\sigma_0}\right).
		\end{align}
		\item For all $k_1\in[d]\backslash\{\mathcal{I}(\mathbf{b}_i)\}_{i=1}^N$, $k_2\in[d]\backslash\{\mathcal{I}(\mathbf{c}_i)\}_{i=1}^N$, and $l\in[m]$, the coefficients satisfy
		\begin{align}
			-\mathcal{O}\left(\frac{\log(d)}{\lambda d\sigma_0}\right)\le\zeta_{1,k_1,l}^{(T_1)} \le 0, -\mathcal{O}\left(\frac{\log(d)}{\lambda d\sigma_0}\right)\le\zeta_{2,k_2,l}^{(T_1)} \le 0.
		\end{align}
		\item For all $k\in[d]$, $l\in[m]$, the feature-layer weights remain unchanged, i.e.,
		\begin{align}
			\mathbf{w}^{(T_1)}_{k,l} = \mathbf{w}^{(0)}_{k,l}.
		\end{align}
	\end{enumerate}
\end{lemma}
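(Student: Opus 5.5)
The proof will mirror Lemma~\ref{lemma: s2}, since the training set $\mathcal{H}_1\cup\mathcal{H}_2$ has the same structure as $\mathcal{S}_1\cup\mathcal{S}_3$. We replace $\mathbf{a}'_i$ by $\mathbf{b}_i$ and keep $\mathbf{r}_2$ as the second relation. Throughout we work under the event of Lemmas 3 and 5 and the decomposition of Lemma~\ref{lemma: decomposition2}. We also use the standing fact that, because $T_1$ stops before saturation and the logits stay $\tilde{\mathcal{O}}(1)$, we have $1-\text{logit}_y(\mathbf{X})=\Theta(1)$ and $\text{logit}_k(\mathbf{X})=\mathcal{O}(1/\sqrt d)$ for $k\neq y$ on every training sample.

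Statement 6 is immediate, since $\mathbf{W}$ is frozen in Stage 1. For statement 1, we use the gradient formula for $\mathbf{Z}$ from the gradient lemma. Each per-step change of the relevant bilinear entries $\tilde Z_{\mathcal{I}(\mathbf{a}_i),\mathcal{I}(\mathbf{r}_1)}$ and $\tilde Z_{\mathcal{I}(\mathbf{b}_i),\mathcal{I}(\mathbf{r}_2)}$ is $\mathcal{O}(\eta\log d/(n\sqrt d))$, because $\|\boldsymbol{\tau}^{(t)}(\cdot)\|$ is controlled by the coefficient bounds. The diagonal entries $\tilde Z_{\mathcal{I}(\mathbf{r}_q),\mathcal{I}(\mathbf{r}_q)}$ aggregate $N$ samples and move by $\mathcal{O}(\eta\log d/\sqrt d)$ per step. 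Multiplying by $T_1$, adding the $\mathcal{O}(\sigma_0)$ initialization gap, and dividing by the $\sqrt d$ in the softmax gives an exponent $\mathcal{O}(mn\log^2 d/(\lambda^2 d))$. This is a small constant under Condition~\ref{condition:condition}, which yields $0.7\le\alpha_1/\alpha_2\le1.5$.

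For statements 2--5, we write the $\mathbf{V}$ updates explicitly. Since $\mathbf{a}_i$ appears only in $[\mathbf{a}_i\;\mathbf{r}_1]$ with label $\mathcal{I}(\mathbf{b}_i)$, and $\mathbf{b}_i$ appears only in $[\mathbf{b}_i\;\mathbf{r}_2]$ with label $\mathcal{I}(\mathbf{c}_i)$, the updates take the following form. The coefficient $\gamma_{i,\mathcal{I}(\mathbf{b}_i),l}$ grows by $\frac{\lambda\eta}{mn}(1-\text{logit}_{\mathcal{I}(\mathbf{b}_i)})\alpha_1=\Theta(\lambda\eta/(mn))$ per step, using the balanced attention from statement 1. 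Similarly, $\rho_{i,\mathcal{I}(\mathbf{c}_i),l}$ grows at rate $\Theta(\lambda\eta/(mn))$. Every other coefficient receives only the nonpositive increment $-\frac{\lambda\eta}{mn}\text{logit}_k\alpha_1=-\mathcal{O}(\lambda\eta/(mn\sqrt d))$. Summing over $T_1=\Theta(mn\log d/(\lambda^2\eta\sqrt d\sigma_0))$ steps gives $\Theta(\log d/(\lambda\sqrt d\sigma_0))$ for the signal coefficients and $[-\mathcal{O}(\log d/(\lambda d\sigma_0)),0]$ for the rest. The same bookkeeping applies to $\mathbf{V}\mathbf{r}_1$, which is hit by all $[\mathbf{a}_k\;\mathbf{r}_1]$ and has positive increments only on $\mathcal{I}(\mathbf{b}_k)$, and to $\mathbf{V}\mathbf{r}_2$, which is hit by all $[\mathbf{b}_k\;\mathbf{r}_2]$ and has positive increments only on $\mathcal{I}(\mathbf{c}_k)$. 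Each such coefficient collects one positive term from its own sample plus $N-1$ negative terms of size $\mathcal{O}(\text{logit})$. This gives the $\zeta$ bounds.

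The main obstacle is making the bootstrap rigorous. The claims that the logits stay bounded, that $1-\text{logit}_y=\Theta(1)$, and that off-label logits are $\mathcal{O}(1/\sqrt d)$ all rely on the coefficient bounds being proven, and those bounds in turn rely on the logit claims. I would set this up as an induction over $t\le T_1$. Assume all six statements hold up to time $t$. Then bound $\mathbf{f}^{(t)}$ using Lemma 3, where the near-orthogonality of the $\mathbf{w}^{(0)}_{k,l}$ and the $\mathbf{V}^{(0)}$ vectors makes cross terms $\tilde{\mathcal{O}}(\sigma_0^2\sqrt d)$. This verifies the logit conditions at time $t$, which then closes the step from $t$ to $t+1$. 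We also need that the $N-1$ cross-sample negative terms in $\zeta$ do not cancel the positive one; this needs $N/\sqrt d\ll1$, which is ensured by the large-$d$ condition.
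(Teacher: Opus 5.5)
Your proposal follows the paper's own proof essentially step for step: statement 6 from the frozen $\mathbf{W}$, statement 1 from bounding the drift of $\tilde Z_{\mathcal{I}(\mathbf{a}_i),\mathcal{I}(\mathbf{r}_1)}$, $\tilde Z_{\mathcal{I}(\mathbf{b}_i),\mathcal{I}(\mathbf{r}_2)}$ and the diagonal relation entries, and statements 2--5 from the explicit $\mathbf{V}$-update recursions for $\gamma,\rho,\zeta$ summed over $T_1$. Your induction for the non-saturation bootstrap and your remark on the $N-1$ cross-sample terms in $\zeta$ make explicit what the paper only asserts.

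One arithmetic point you inherit from the paper: a per-step bound of $\mathcal{O}(\eta\log d/\sqrt d)$ on the diagonal entry, multiplied by $T_1$ and divided by $\sqrt d$, gives only $\mathcal{O}(mn\log^2 d/(\lambda^2 d^{3/2}\sigma_0))$. That need not be small, since Condition~\ref{condition:condition} puts no lower bound on $\sigma_0$. To reach the stated $\mathcal{O}(mn\log^2 d/(\lambda^2 d))$, use the sharper per-step bounds $\mathcal{O}(\eta\sigma_0\log d)$ (diagonal) and $\mathcal{O}(\eta\sigma_0\log d/n)$ (off-diagonal). These hold because the $\boldsymbol{\tau}^{(t)}$-factor in the $\mathbf{Z}$-gradient is only $\mathcal{O}(\sigma_0\sqrt d\log d/\lambda)$ in Stage 1: a coefficient of order $\log d/(\lambda\sqrt d\sigma_0)$ times $\|\mathbf{w}^{(0)}_{k,l}\|_2^2=\Theta(\sigma_0^2 d)$.
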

\begin{proof}
	We first prove the first statement.
The update of matrix $\mathbf{Z}$ satisfies
\begin{equation}
	\begin{aligned}
		|\tilde{Z}^{(t+1)}_{\mathcal{I}(\mathbf{a}_i),\mathcal{I}(\mathbf{r}_1)} -\tilde{Z}^{(t)}_{\mathcal{I}(\mathbf{a}_i),\mathcal{I}(\mathbf{r}_1)}|
		=& \frac{\eta\lambda}{n\sqrt{d}}\alpha_1^{(t)}([\mathbf{a}_i,\mathbf{r}_1])\left|(\mathbf{e}_{\mathcal{I}(\mathbf{b}_i)}-\textbf{logit}^{(t)}([\mathbf{a}_i\;\;\mathbf{r}_1]))^\top\boldsymbol{\tau}^{(t)}\left(\mathbf{a}_i - \Xi^{(t)}([\mathbf{a}_i\;\; \mathbf{r}_1])\right)\right|\\
		=&\mathcal{O}\left(\frac{\eta\log(d)}{n\sqrt{d}}\right),
	\end{aligned}
\end{equation}
and 
\begin{equation}
	\begin{aligned}
		|\tilde{Z}^{(t+1)}_{\mathcal{I}(\mathbf{r}_1),\mathcal{I}(\mathbf{r}_1)} -\tilde{Z}^{(t)}_{\mathcal{I}(\mathbf{r}_1),\mathcal{I}(\mathbf{r}_1)}|
		=& \sum_{i=1}^N\frac{\eta\lambda}{n\sqrt{d}}\alpha_2^{(t)}([\mathbf{a}_i,\mathbf{r}_1])\left|(\mathbf{e}_{\mathcal{I}(\mathbf{b})}-\textbf{logit}^{(t)}([\mathbf{a}_i\;\;\mathbf{r}_1]))^\top\boldsymbol{\tau}^{(t)}\left(\mathbf{r}_1-\Xi^{(t)}([\mathbf{a}_i\;\;\mathbf{r}_1])\right)\right|\\
		=&\mathcal{O}\left(\frac{\eta\log(d)}{\sqrt{d}}\right).
	\end{aligned}
\end{equation}
Therefore, we have
\begin{align}
	|\tilde{Z}^{(T_1)}_{\mathcal{I}(\mathbf{a}_i),\mathcal{I}(\mathbf{r}_1)} -\tilde{Z}^{(0)}_{\mathcal{I}(\mathbf{a}_i),\mathcal{I}(\mathbf{r}_1)}| = \mathcal{O}\left(\frac{\eta\log(d)}{n\sqrt{d}}T_1\right) = \mathcal{O}\left(\frac{m\log^2(d)}{\lambda^2\sqrt{d}}\right),
\end{align}
and
\begin{align}
	|\tilde{Z}^{(T_1)}_{\mathcal{I}(\mathbf{r}_1),\mathcal{I}(\mathbf{r}_1)} -\tilde{Z}^{(0)}_{\mathcal{I}(\mathbf{r}_1),\mathcal{I}(\mathbf{r}_1)}| = \mathcal{O}\left(\frac{\eta\log(d)}{m\sqrt{d}}T_1\right) = \mathcal{O}\left(\frac{mn\log^2(d)}{\lambda^2\sqrt{d}}\right).
\end{align}
By initialization, we have
\begin{align}
	|\tilde{Z}^{(0)}_{\mathcal{I}(\mathbf{a}_i),\mathcal{I}(\mathbf{r}_1)} -\tilde{Z}^{(0)}_{\mathcal{I}(\mathbf{r}_1),\mathcal{I}(\mathbf{r}_1)}| = \mathcal{O}(\sigma_0).
\end{align}
We have
\begin{equation}
\begin{aligned}
	|\tilde{Z}^{(T_1)}_{\mathcal{I}(\mathbf{a}_i),\mathcal{I}(\mathbf{r}_1)} -\tilde{Z}^{(T_1)}_{\mathcal{I}(\mathbf{r}_1),\mathcal{I}(\mathbf{r}_1)}| \le& |\tilde{Z}^{(T_1)}_{\mathcal{I}(\mathbf{a}_i),\mathcal{I}(\mathbf{r}_1)} -\tilde{Z}^{(0)}_{\mathcal{I}(\mathbf{a}_i),\mathcal{I}(\mathbf{r}_1)}| + |\tilde{Z}^{(0)}_{\mathcal{I}(\mathbf{a}_i),\mathcal{I}(\mathbf{r}_1)} -\tilde{Z}^{(0)}_{\mathcal{I}(\mathbf{r}_1),\mathcal{I}(\mathbf{r}_1)}| + 	|\tilde{Z}^{(T_1)}_{\mathcal{I}(\mathbf{r}_1),\mathcal{I}(\mathbf{r}_1)} -\tilde{Z}^{(0)}_{\mathcal{I}(\mathbf{r}_1),\mathcal{I}(\mathbf{r}_1)}|\\
	=& \mathcal{O}\left(\frac{mn\log^2(d)}{\lambda^2\sqrt{d}}\right).
\end{aligned}
\end{equation}
Therefore, we have
\begin{align}
	\frac{\alpha_1(\Xi^{(t)}([\mathbf{a}_i\;\;\mathbf{r}_1]))}{\alpha_2(\Xi^{(t)}([\mathbf{a}_i\;\;\mathbf{r}_1]))} \le 1.2.
\end{align}
Similarly, we have
\begin{align}
	\frac{\alpha_1(\Xi^{(t)}([\mathbf{a}_i\;\;\mathbf{r}_1]))}{\alpha_2(\Xi^{(t)}([\mathbf{a}_i\;\;\mathbf{r}_1]))} \ge 0.8.
\end{align}
Similarly, we can prove that
\begin{align}
    |\tilde{Z}^{(T_1)}_{\mathcal{I}(\mathbf{b}_i),\mathcal{I}(\mathbf{r}_2)} -\tilde{Z}^{(T_1)}_{\mathcal{I}(\mathbf{r}_2),\mathcal{I}(\mathbf{r}_2)}| = \mathcal{O}\left(\frac{mn\log^2(d)}{\lambda^2\sqrt{d}}\right),
\end{align}
so that 
\begin{align}
    \frac{\alpha_1(\Xi^{(t)}([\mathbf{b}_i\;\;\mathbf{r}_2]))}{\alpha_2(\Xi^{(t)}([\mathbf{b}_i\;\;\mathbf{r}_2]))} \le 1.2, \frac{\alpha_1(\Xi^{(t)}([\mathbf{b}_i\;\;\mathbf{r}_2]))}{\alpha_2(\Xi^{(t)}([\mathbf{b}_i\;\;\mathbf{r}_2]))} \ge 0.8.
\end{align}
	Next, we prove the second, third and fourth statements.
	Based on the gradient form,	the updates of $\mathbf{V}^{(t)}\mathbf{a}_i,\mathbf{V}^{(t)}\mathbf{b}_i, \mathbf{V}^{(t)}\mathbf{r}_1,\mathbf{V}^{(t)}\mathbf{r}_2$ for all $t\in [0,T_1]$ satisfy
\begin{equation}
	\begin{aligned}
		\mathbf{V}^{(t+1)}\mathbf{a}_i =& \mathbf{V}^{(t)}\mathbf{a}_i + \frac{\lambda\eta}{nm}\sum_{k=1}^d\sum_{l=1}^{m}(\mathbb{I}(k=\mathcal{I}(\mathbf{b}_i))-\text{logit}^{(t)}_{k}(\Xi^{(t)}([\mathbf{a}_i\;\;\mathbf{r}_1])))\mathbf{w}^{(0)}_{k,l}\\
		=& \mathbf{V}^{(t)}\mathbf{a}_i + \frac{\lambda\eta}{mn}\sum_{l=1}^m(1-\text{logit}^{(t)}_{\mathcal{I}(\mathbf{b}_i)}(\Xi^{(t)}([\mathbf{a}_i\;\;\mathbf{r}_1]))) \mathbf{w}^{(0)}_{\mathcal{I}(\mathbf{b}_i),l}\\
		&-\frac{\lambda\eta}{mn}\sum_{l=1}^m\sum_{k\neq\mathcal{I}(\mathbf{b}_i)}\text{logit}_{k}^{(t)}(\Xi^{(t)}([\mathbf{a}_i\;\;\mathbf{r}_1]))\mathbf{w}_{k,l}^{(0)},
	\end{aligned}
\end{equation}
\begin{equation}
	\begin{aligned}
		\mathbf{V}^{(t+1)}\mathbf{b}_i =& \mathbf{V}^{(t)}\mathbf{b}_i + \frac{\lambda\eta}{nm}\sum_{k=1}^d\sum_{l=1}^{m}(\mathbb{I}(k=\mathcal{I}(\mathbf{c}_i))-\text{logit}^{(t)}_{k}(\Xi^{(t)}([\mathbf{b}_i\;\;\mathbf{r}_2])))\mathbf{w}^{(0)}_{k,l}\\
		=& \mathbf{V}^{(t)}\mathbf{b}_i + \frac{\lambda\eta}{mn}\sum_{l=1}^m(1-\text{logit}^{(t)}_{\mathcal{I}(\mathbf{c}_i)}(\Xi^{(t)}([\mathbf{b}_i\;\;\mathbf{r}_2]))) \mathbf{w}^{(0)}_{\mathcal{I}(\mathbf{c}_i),l}\\
		&-\frac{\lambda\eta}{mn}\sum_{l=1}^m\sum_{k\neq\mathcal{I}(\mathbf{c}_i)}\text{logit}_{k}^{(t)}(\Xi^{(t)}([\mathbf{b}_i\;\;\mathbf{r}_2]))\mathbf{w}_{k,l}^{(0)},
	\end{aligned}
\end{equation}
and 
\begin{equation}
	\begin{aligned}
		&\mathbf{V}^{(t+1)}\mathbf{r}_1
		= \mathbf{V}^{(t)}\mathbf{r}_1 + \frac{\lambda\eta}{nm}\sum_{i=1}^N\sum_{k=1}^d\sum_{l=1}^{m}(\mathbb{I}(k=\mathcal{I}(\mathbf{b}_i))-\text{logit}^{(t)}_{k}(\Xi^{(t)}([\mathbf{a}_i\;\;\mathbf{r}_1])))\mathbf{w}^{(0)}_{k,l},
	\end{aligned}
\end{equation}
\begin{equation}
	\begin{aligned}
		&\mathbf{V}^{(t+1)}\mathbf{r}_2
		= \mathbf{V}^{(t)}\mathbf{r}_2 + \frac{\lambda\eta}{nm}\sum_{i=1}^N\sum_{k=1}^d\sum_{l=1}^{m}(\mathbb{I}(k=\mathcal{I}(\mathbf{c}_i))-\text{logit}^{(t)}_{k}(\Xi^{(t)}([\mathbf{b}_i\;\;\mathbf{r}_2])))\mathbf{w}^{(0)}_{k,l}.
	\end{aligned}
\end{equation}
Then, for all $t\in[0,T_1-1]$, we can simply conclude that
\begin{itemize}
	\item for all $i\in[N]$ and $l\in [m]$,
	\begin{align}
		&\gamma^{(t+1)}_{i,\mathcal{I}(\mathbf{b}_i),l} = \gamma^{(t)}_{i,\mathcal{I}(\mathbf{b}_i),l} +  \frac{\lambda\eta}{mn}(1-\text{logit}^{(t)}_{\mathcal{I}(\mathbf{b}_i)}(\Xi^{(t)}([\mathbf{a}_i\;\;\mathbf{r}_1]))) = \gamma^{(t)}_{i,\mathcal{I}(\mathbf{b}_i),l} + \Theta(\frac{\lambda\eta}{mn}),\\
        &\rho^{(t+1)}_{i,\mathcal{I}(\mathbf{c}_i),l} = \rho^{(t)}_{i,\mathcal{I}(\mathbf{c}_i),l} +  \frac{\lambda\eta}{mn}(1-\text{logit}^{(t)}_{\mathcal{I}(\mathbf{c}_i)}(\Xi^{(t)}([\mathbf{b}_i\;\;\mathbf{r}_2]))) = \rho^{(t)}_{i,\mathcal{I}(\mathbf{c}_i),l} + \Theta(\frac{\lambda\eta}{mn}).
	\end{align}
	\item for all $i,j\in[N]$ and $j\neq i$,
	\begin{align}
		&\gamma^{(t+1)}_{i,\mathcal{I}(\mathbf{b}_j),l} = \gamma^{(t)}_{i,,\mathcal{I}(\mathbf{b}_j),l} -\frac{\lambda\eta}{mn}\text{logit}_{\mathcal{I}(\mathbf{b}_i)}^{(t)}(\Xi^{(t)}([\mathbf{a}_j\;\;\mathbf{r}_1])),\\
        &\rho^{(t+1)}_{i,\mathcal{I}(\mathbf{c}_j),l} = \rho^{(t)}_{i,\mathcal{I}(\mathbf{c}_j),l} -\frac{\lambda\eta}{mn}\text{logit}_{\mathcal{I}(\mathbf{c}_i)}^{(t)}(\Xi^{(t)}(\Xi^{(t)}([\mathbf{b}_j\;\;\mathbf{r}_2]))).
	\end{align} 
	\item for all $i\in[N]$ and $l\in[m]$,
	\begin{align}
		&\zeta_{1,i,l}^{(t+1)} = \zeta_{1,i,l}^{(t)} + \frac{\lambda\eta}{mn}(1-\text{logit}^{(t)}_{\mathcal{I}(\mathbf{b}_i)}(\Xi^{(t)}([\mathbf{a}_i\;\;\mathbf{r}_1]))) - \frac{\lambda\eta}{mn}\sum_{k\neq i} \text{logit}^{(t)}_{\mathcal{I}(\mathbf{b}_i)}(\Xi^{(t)}([\mathbf{a}_k\;\;\mathbf{r}_1])),\\
        &\zeta_{2,i,l}^{(t+1)} = \zeta_{2,i,l}^{(t)} + \frac{\lambda\eta}{mn}(1-\text{logit}^{(t)}_{\mathcal{I}(\mathbf{c}_i)}(\Xi^{(t)}([\mathbf{b}_i\;\;\mathbf{r}_2]))) - \frac{\lambda\eta}{mn}\sum_{k\neq i}\text{logit}^{(t)}_{\mathcal{I}(\mathbf{c}_i)}(\Xi^{(t)}([\mathbf{b}_k\;\;\mathbf{r}_2])).
	\end{align}
\end{itemize}
As $\text{logit}_{k}(\mathbf{a}_i\mathbf{r}_1) <\frac{2}{\sqrt{d}}$ for all $k\in[d]\backslash\{\mathcal{I}(\mathbf{b}_i)\}$ due to the fact that $\text{logit}_{k}(\Xi^{(t)}([\mathbf{a}_i\;\;\mathbf{r}_1]))\le \text{logit}_{\mathcal{I}(\mathbf{b}_i)}(\Xi^{(t)}([\mathbf{a}_i\;\;\mathbf{r}_1]))/2$, we have
\begin{itemize}
	\item for all $i\in[N]$ and $l\in[m]$,
	\begin{align}
		&\gamma^{(T_1)}_{i,\mathcal{I}(\mathbf{b}_i),l} =  \Theta\left(\frac{\lambda\eta}{mn}T_1\right) = \Theta\left(\frac{\log(d)}{\lambda\sqrt{d}\sigma_0}\right),\\
        &\rho^{(T_1)}_{i,\mathcal{I}(\mathbf{c}_i),l} =  \Theta\left(\frac{\lambda\eta}{mn}T_1\right) = \Theta\left(\frac{\log(d)}{\lambda\sqrt{d}\sigma_0}\right),
	\end{align}
	\item for all $i\in[N]$ and $j_1\notin [d]\backslash\{\mathcal{I}(\mathbf{b}_i)\}_{i=1}^N, j_2 \notin [d]\backslash\{\mathcal{I}(\mathbf{c}_i)\}_{i=1}^N$,
	\begin{align}
		&-\mathcal{O}\left(\frac{\lambda\eta}{mn\sqrt{d}}T_1\right) = -\mathcal{O}\left(\frac{\log(d)}{\lambda d\sigma_0}\right) \le\gamma^{(T_1)}_{i,j_1,l} \le \gamma^{(0)}_{i,j_1,l} = 0,\\
        &-\mathcal{O}\left(\frac{\lambda\eta}{mn\sqrt{d}}T_1\right) = -\mathcal{O}\left(\frac{\log(d)}{\lambda d\sigma_0}\right) \le\rho^{(T_1)}_{i,j_2,l} \le \rho^{(0)}_{i,j_2,l} = 0,
	\end{align} 
	\item for all $i\in[N]$ and $l\in[m]$,
	\begin{align}
		&\zeta_{1,\mathcal{I}(\mathbf{b}_i),l}^{(T_1)} = \Theta\left(\frac{\eta\lambda}{mn}T_1\right) = \Theta\left(\frac{\log(d)}{\lambda\sqrt{d}\sigma_0}\right),\\
        &\zeta_{2,\mathcal{I}(\mathbf{c}_i),l}^{(T_1)} = \Theta\left(\frac{\eta\lambda}{mn}T_1\right) = \Theta\left(\frac{\log(d)}{\lambda\sqrt{d}\sigma_0}\right).
	\end{align}
    \item for all $j_1\notin [d]\backslash\{\mathcal{I}(\mathbf{b}_i)\}_{i=1}^N$ and $j_2 \notin [d]\backslash\{\mathcal{I}(\mathbf{c}_i)\}_{i=1}^N$, 
    \begin{align}
        &-\mathcal{O}\left(\frac{\lambda\eta}{mn\sqrt{d}}T_1\right) = -\mathcal{O}\left(\frac{\log(d)}{\lambda d\sigma_0}\right) \le\zeta_{1,j_1,l}^{(T_1)} \le \zeta^{(0)}_{1,j_1,l} = 0,\\
        &-\mathcal{O}\left(\frac{\lambda\eta}{mn\sqrt{d}}T_1\right) = -\mathcal{O}\left(\frac{\log(d)}{\lambda d\sigma_0}\right) \le\zeta_{2,j_2,l}^{(T_1)}\le \zeta^{(0)}_{2,j_2,l} = 0.
    \end{align}
\end{itemize}

The sixth statement holds as the feature layer is not updated.

This completes the proof.
\end{proof}

\subsection{Convergence of Stage-2 training (two-hop training w/o identity bridge)}
\begin{lemma}
	There exists an iteration $t \in(T_1, T_1 + T_2]$, where $T_2 = \Theta(\oldfrac{mN^2}{\lambda^2d\sigma_0^2\eta })$ such that
    \begin{align}\mathcal{L}_{\mathcal{H}_1\cup\mathcal{H}_2}(\mathbf{Z}^{(t)},\mathbf{V}^{(t)},\mathbf{W}^{(t)}) \le \frac{0.01}{N}.
    \end{align}
\end{lemma}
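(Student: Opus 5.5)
We follow the same template as the Stage-2 convergence lemmas for the analogical-reasoning curricula: a reference-point (online convex optimization) argument for gradient descent on $\mathbf{W}$, with $\mathbf{Z}^{(T_1)},\mathbf{V}^{(T_1)}$ frozen. Since only $\mathbf{W}$ moves and $\mathbf{f}$ is linear in $\mathbf{W}$, the per-sample loss is convex in $\mathbf{W}$. We then compare the iterates against a comparator $\mathbf{W}^*$ that attains small loss on every sample of $\mathcal{H}_1\cup\mathcal{H}_2$.

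\textbf{Comparator.} Set $\mathbf{w}^*_{k,l}=\mathbf{w}^{(0)}_{k,l}+\frac{C}{\lambda}\log(1/\epsilon)\,\mathbf{V}^{(0)}\mathbf{a}_i/\|\mathbf{V}^{(0)}\mathbf{a}_i\|_2^2$ if $k=\mathcal{I}(\mathbf{b}_i)$. Set $\mathbf{w}^*_{k,l}=\mathbf{w}^{(0)}_{k,l}+\frac{C}{\lambda}\log(1/\epsilon)\,\mathbf{V}^{(0)}\mathbf{b}_i/\|\mathbf{V}^{(0)}\mathbf{b}_i\|_2^2$ if $k=\mathcal{I}(\mathbf{c}_i)$. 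Otherwise set $\mathbf{w}^*_{k,l}=\mathbf{w}^{(0)}_{k,l}$.

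\textbf{Margin bound.} We verify the analogue of the bound on $A$:
\begin{itemize}
\item for $k=y$, $\langle\nabla f_k,\mathbf{W}^*\rangle\ge C\log(1/\epsilon)/\lambda$ minus a lower-order term;
\item for $k\ne y$, $\langle\nabla f_k,\mathbf{W}^*\rangle$ is lower order.
\end{itemize}
This uses four ingredients: the decomposition in Lemma~\ref{lemma: decomposition2}; the balanced attention (ratio in $[0.7,1.5]$) from Lemma~\ref{lemma: h1}; the near-orthogonality of $\{\mathbf{V}^{(0)}\mathbf{a}_j,\mathbf{V}^{(0)}\mathbf{b}_j,\mathbf{V}^{(0)}\mathbf{r}_q\}$ and of the $\mathbf{w}^{(0)}_{k,l}$ (Lemmas 3 and 5); and the coefficient sizes in Lemma~\ref{lemma: h1}, which control the cross terms $\langle \mathbf{w}^{(0)}_{k,l},\mathbf{V}^{(0)}\mathbf{a}_i\rangle$.

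\textbf{Gradient norm and descent inequality.} For the gradient-norm term $B$, we bound $\max\|\mathbf{V}^{(T_1)}\Xi^{(T_1)}(\mathbf{X})\|_2^2=\mathcal{O}(Nmd\sigma_0^2\log^2(d)/\lambda^2)$ using the coefficient bounds. Then $B\le 2\eta^2(\cdot)\mathcal{L}$, which the learning-rate condition in Condition~\ref{condition:condition} absorbs into $\eta\mathcal{L}$. Expanding $\|\mathbf{W}^{(t)}-\mathbf{W}^*\|_F^2-\|\mathbf{W}^{(t+1)}-\mathbf{W}^*\|_F^2$ and using homogeneity plus convexity of cross-entropy (the comparator achieves per-sample loss at most $\epsilon$) gives the one-step inequality $\ge \eta\mathcal{L}_{\mathcal{H}_1\cup\mathcal{H}_2}-2\eta\epsilon$.

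\textbf{Telescoping.} Summing over $T_2$ steps yields an average-loss bound of $\|\mathbf{W}^{(T_1)}-\mathbf{W}^*\|_F^2/(\eta T_2)+2\epsilon$. Here $\mathbf{W}^{(T_1)}=\mathbf{W}^{(0)}$ and $\|\mathbf{W}^{(0)}-\mathbf{W}^*\|_F^2=\mathcal{O}(mN\log^2(1/\epsilon)/(\lambda^2 d\sigma_0^2))$. Choosing $T_2=\Theta(mN^2/(\lambda^2 d\sigma_0^2\eta))$ and $\epsilon=0.003/N$ makes the average at most $0.01/N$, so some iterate in $(T_1,T_1+T_2]$ attains it.

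\textbf{Main obstacle.} The delicate step is the margin bound. The frozen features $\mathbf{V}^{(T_1)}\Xi(\mathbf{X})$ are dominated by the learned components along $\mathbf{w}^{(0)}_{\mathcal{I}(\mathbf{b}_i),l}$ and $\mathbf{w}^{(0)}_{\mathcal{I}(\mathbf{c}_i),l}$, not by $\mathbf{V}^{(0)}\mathbf{a}_i$. We must therefore check two things: that the comparator's inner product with the $\mathbf{V}^{(0)}$ part still yields the $\Theta(1/\lambda)$ margin after scaling by $\lambda$ and the attention weight; and that the cross-terms with the large $\gamma,\zeta$ components are controlled by $\sqrt{\log/d}$ near-orthogonality under the large-$d$ condition. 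If this fails, we would instead build the comparator from the trained features $\mathbf{V}^{(T_1)}\mathbf{a}_i$ and $\mathbf{V}^{(T_1)}\mathbf{b}_i$ directly, at the cost of a slightly different norm bound on $\mathbf{W}^*-\mathbf{W}^{(0)}$.
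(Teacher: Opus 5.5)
\textbf{Verdict: your main route has a genuine gap, which the paper's own proof shares, but your fallback can repair it.} Your plan is the paper's proof essentially line by line: the same comparator built from $\mathbf{V}^{(0)}\mathbf{a}_i$ and $\mathbf{V}^{(0)}\mathbf{b}_i$, the same bounds on $A$ and $B$, and the same telescoping with $\epsilon=0.003/N$. The step you flag as the main obstacle does fail, and large $d$ does not close it.

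\textbf{Why the comparator has no margin.} By Lemma~\ref{lemma: h1}, $\mathbf{V}^{(T_1)}\mathbf{r}_1$ has coefficient $\Theta(\log d/(\lambda\sqrt{d}\sigma_0))$ on every $\mathbf{w}^{(0)}_{\mathcal{I}(\mathbf{b}_j),l}$, $j\in[N]$, $l\in[m]$. So its norm is about $\sqrt{Nm}\log d/\lambda$. By contrast, $\mathbf{V}^{(0)}\mathbf{b}_j\sim\mathcal{N}(\mathbf{0},\sigma_0^2\mathbf{I})$ has norm about $\sigma_0\sqrt{d}$ and is independent of $\mathbf{W}^{(0)}$. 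Condition~1 gives $\lambda d\sigma_0\le C^{-1}\sqrt{nm}\log d$ with $n=2N$, so the learned part is larger by a factor at least $(C/\sqrt{2})\sqrt{d}$. That exactly cancels the $1/\sqrt{d}$ you gain from near-orthogonality.

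Concretely, take the sample $([\mathbf{a}_i\;\mathbf{r}_1],\mathcal{I}(\mathbf{b}_i))$. Your $\mathbf{W}^*$ gives the competitor $\mathcal{I}(\mathbf{c}_j)$ an extra logit $C\log(1/\epsilon)\,\alpha_2\langle\mathbf{V}^{(0)}\mathbf{b}_j,\mathbf{V}^{(T_1)}\mathbf{r}_1\rangle/\|\mathbf{V}^{(0)}\mathbf{b}_j\|_2^2$. This is a mean-zero fluctuation. Relative to the target's gain $\alpha_1C\log(1/\epsilon)$, its typical size is about $\sqrt{Nm}\log d/(\lambda d\sigma_0)$, which is of order at least $C/\sqrt{2}$, not $o(1)$. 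With about $2N$ roughly independent competitors per sample, $\mathbf{W}^*$ misclassifies training samples with high probability. So $\mathcal{L}(\mathbf{W}^*)\le\epsilon$ fails, and the one-step inequality cannot be invoked. The paper's stated error term for $A$ has the same problem: it scales like $1/(d\sigma_0)$, which Condition~1 bounds only from below.

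\textbf{Telescoping arithmetic.} Separately, your final step is off by a log factor. With $\|\mathbf{W}^{(0)}-\mathbf{W}^*\|_F^2=O(mN\log^2(1/\epsilon)/(\lambda^2 d\sigma_0^2))$ and the stated $T_2$, the first term is only $O(\log^2(1/\epsilon)/N)=O(\log^2 N/N)$, not $0.01/N$. You would need an extra $\log^2 N$ in $T_2$; compare the paper's own $T_2=\Theta(mN\log^2(1/\epsilon)/(\lambda^2 d\sigma_0^2\eta\epsilon))$ in the S$\to$A lemma.

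\textbf{Repairing it with your fallback.} Your fallback is the right idea, but it needs one more ingredient. Up to lower-order residuals, the frozen features $\Phi(\mathbf{X})=\mathbf{V}^{(T_1)}\Xi^{(T_1)}(\mathbf{X})$ consist of two parts. The entity part is $\alpha_1\gamma_i\mathbf{s}_{\mathcal{I}(\mathbf{b}_i)}$ (resp.\ $\alpha_1\rho_i\mathbf{s}_{\mathcal{I}(\mathbf{c}_i)}$), where $\mathbf{s}_k:=\sum_l\mathbf{w}^{(0)}_{k,l}$. The relation part is the shared term $\alpha_2\mathbf{V}^{(T_1)}\mathbf{r}_q$, which overlaps every $\mathbf{s}_{\mathcal{I}(\mathbf{b}_j)}$ (resp.\ $\mathbf{s}_{\mathcal{I}(\mathbf{c}_j)}$) at leading order.

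Because of this shared part, the naive choice $\mathbf{w}^*_{k,l}-\mathbf{w}^{(0)}_{k,l}\propto\mathbf{V}^{(T_1)}\mathbf{a}_j$ for $k=\mathcal{I}(\mathbf{b}_j)$ does not work directly. It gives every same-relation competitor an offset as large as the target's own gain. That offset cancels only if $\zeta_{1,\mathcal{I}(\mathbf{b}_j),l}/\gamma_{j,\mathcal{I}(\mathbf{b}_j),l}$ is uniform in $j$ up to lower order, which is more than the $\Theta(\cdot)$ statements of Lemma~\ref{lemma: h1} give.

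A cleaner construction is $\mathbf{w}^*_{k,l}-\mathbf{w}^{(0)}_{k,l}=(M/\lambda)\mathbf{u}_k$, with the following choices:
\begin{itemize}
\item $M\asymp\log(d/\epsilon)$, since there are $d$ classes.
\item $\{\mathbf{u}_k\}$ is the dual system of the $2N$ training features, $\langle\mathbf{u}_k,\Phi(\mathbf{X})\rangle=\mathbb{I}(k=y)$.
\end{itemize}
In the $\{\mathbf{s}_k\}$ coordinates, each relation block is a positive diagonal plus a positive rank-one matrix. So the feature matrix keeps least singular value $\Omega(\alpha_1\gamma\sqrt{md}\,\sigma_0)=\Omega(\sqrt{m}\log d/\lambda)$. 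This gives $\|\mathbf{u}_k\|_2=O(\lambda/(\sqrt{m}\log d))$ and $\|\mathbf{W}^*-\mathbf{W}^{(0)}\|_F^2=O(N\log^2(d/\epsilon)/\log^2 d)$. That is far below the paper's bound in the regime of Condition~1, so the stated $T_2$ then suffices.

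\textbf{The $B$ term.} The feature-norm bound you copy is inconsistent with the coefficients. In fact $\max\|\mathbf{V}^{(T_1)}\Xi^{(T_1)}(\mathbf{X})\|_2^2=\Theta(Nm\log^2 d/\lambda^2)$, as the paper's joint-training lemma states, not $O(Nmd\sigma_0^2\log^2 d/\lambda^2)$. Also $\|\nabla_{\mathbf{W}}f_k\|_F^2=(\lambda^2/m)\|\mathbf{V}\Xi\|_2^2$. So absorbing $B$ requires $\eta\lesssim 1/(N\log^2 d)$. Condition~1(3) does not imply this, because $d\sigma_0^2\ll\lambda^2/m$ in the regime of Condition~1, so it must be added as an explicit step-size assumption.
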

\begin{proof}
	Let 
	\begin{equation}
		\mathbf{w}^*_{k,l} = 
        \mathbf{w}^{(0)}_{k,l} + 
    \begin{cases}
\oldfrac{C}{\lambda}\cdot\log(1/\epsilon)\oldfrac{\mathbf{V}^{(0)}\mathbf{a}_i}{\|\mathbf{V}^{(0)}\mathbf{a}_i\|_2^2}, & \text{ if $k=\mathcal{I}(\mathbf{b}_i)$},\\
\oldfrac{C}{\lambda}\cdot\log(1/\epsilon)\oldfrac{\mathbf{V}^{(0)}\mathbf{b}_i}{\|\mathbf{V}^{(0)}\mathbf{b}_i\|_2^2}, & \text{ if $k=\mathcal{I}(\mathbf{c}_i)$},\\
0, & \text{ if $k\notin\{\mathcal{I}(\mathbf{b}_i),\mathcal{I}(\mathbf{c}_i)\}_{i=1}^n$}.
\end{cases}
	\end{equation}
	 for all $k\in[d], i\in[N]$ and $l\in[m]$.
	 For simplicity, we denote $\mathcal{S} = \mathcal{H}_1\cup \mathcal{H}_2$ and $n = |\mathcal{S}|$.
	Then, we have
	\begin{equation}\label{equ: increment5}
		\begin{aligned}
			&\left\|\mathbf{W}^{(t)} - \mathbf{W}^*\right\|_F^2 - \left\|\mathbf{W}^{(t+1)} - \mathbf{W}^*\right\|_F^2\\
			=&  2\eta\langle\nabla \mathcal{L}_{\mathcal{S}}(\mathbf{Z}^{(t)},\mathbf{V}^{(t)},\mathbf{W}^{(t)}), \mathbf{W}^{(t)} - \mathbf{W}^*\rangle - \eta^2 \left\|\nabla \mathcal{L}_{\mathcal{S}}(\mathbf{Z}^{(t)},\mathbf{V}^{(t)},\mathbf{W}^{(t)})\right\|_F^2\\
			=& \frac{2\eta}{n} \sum_{(\mathbf{X},y)\in\mathcal{S}}\sum_{k=1}^d \frac{\partial \mathcal{L}}{\partial f_k} \langle\nabla f_k(\mathbf{Z}^{(t)},\mathbf{V}^{(t)},\mathbf{W}^{(t)},\mathbf{X}),\mathbf{W}^{(t)}\rangle -
			\frac{2\eta}{n} \sum_{(\mathbf{X},y)\in\mathcal{S}}\sum_{k=1}^d \frac{\partial \mathcal{L}}{\partial f_k} \underbrace{\langle \nabla f_k(\mathbf{Z}^{(t)},\mathbf{V}^{(t)},\mathbf{W}^{(t)},\mathbf{X}), \mathbf{W}^*\rangle}_{A}\\
			&- \underbrace{\eta^2 \left\|\nabla \mathcal{L}_\mathcal{S}(\mathbf{Z}^{(t)},\mathbf{V}^{(t)},\mathbf{W}^{(t)})\right\|_F^2}_{B}.
		\end{aligned}
	\end{equation}
	Next, we bound the terms $A$ and $B$.
	For the term $A$, we have
\begin{align}\label{equ: A5}
		A 
		\begin{cases}\ge \frac{C\log(1/\epsilon)}{\lambda} - \mathcal{O}(\oldfrac{\sqrt{N}\log(d)\log(1/\epsilon)}{\lambda^2\sqrt{md}\sqrt{d\sigma_0^2}})& \text{if } k = y, \\
			\le \mathcal{O}(\oldfrac{\sqrt{N}\log(d)\log(1/\epsilon)}{\lambda^2\sqrt{md}\sqrt{d\sigma_0^2}}) & \text{if } k \neq y.
		\end{cases}
	\end{align}
	For the term $B$, we have
	\begin{equation}\label{equ: B5}
	\begin{aligned}
		B \le& \eta^2 \left[ \frac{1}{n} \sum_{(\mathbf{X},y)\in\mathcal{S}}\sum_{k\in[d]}\left|\frac{\partial \mathcal{L}(\mathbf{Z}^{(t)},\mathbf{V}^{(t)},\mathbf{W}^{(t)},\mathbf{X},y)}{\partial f_k}\right| \left\| \nabla f_k(\mathbf{Z}^{(t)},\mathbf{V}^{(t)},\mathbf{W}^{(t)},\mathbf{X})\right\|_F\right]^2\\
		\le& 2\eta^2\max_{(\mathbf{X},y)\in\mathcal{S}}\|\mathbf{V}^{(t)}\Xi^{(t)}(\mathbf{X})\|_2^2\cdot\mathcal{L}_\mathcal{S}(\mathbf{Z},\mathbf{V},\mathbf{W}).
	\end{aligned}
	\end{equation}
	Next, we need to bound $\max_{(\mathbf{X},y)\in\mathcal{S}}\|\mathbf{V}^{(t)}\Xi^{(t)}(\mathbf{X})\|_2^2$. 
	By Lemma \ref{lemma: h1}, We have
	\begin{align}
		\max_{(\mathbf{X},y)\in\mathcal{S}}\|\mathbf{V}^{(t)}\Xi^{(t)}(\mathbf{X})\|_2^2 = \max_{(\mathbf{X},y)\in\mathcal{S}}\|\mathbf{V}^{(T_1)}\Xi^{(T_1)}(\mathbf{X})\|_2^2 = \mathcal{O}(Nmd\sigma_0^2\log^2(d)/\lambda^2).
	\end{align}
	
	Combining (\ref{equ: A5}), (\ref{equ: B5}), and (\ref{equ: increment5}), with $\eta = \Theta(\oldfrac{\lambda^2}{Nmd\sigma_0^2\log^2(d)})$, we have
	\begin{equation}\label{equ: descent6}
		\begin{aligned}
			&\left\|\mathbf{W}^{(t)} - \mathbf{W}^*\right\|_F^2 - \left\|\mathbf{W}^{(t+1)} - \mathbf{W}^*\right\|_F^2\\
			=&  \frac{2\eta}{n} \sum_{(\mathbf{X},y)\in\mathcal{S}}\left(\sum_{k=1}^d \frac{\partial \mathcal{L}}{\partial f_k} \langle\nabla f_k(\mathbf{Z}^{(t)},\mathbf{V}^{(t)},\mathbf{W}^{(t)},\mathbf{X}),\mathbf{W}^{(t)}\rangle - \sum_{k=1}^d \frac{\partial \mathcal{L}}{\partial f_k} \langle \nabla f_k(\mathbf{Z}^{(t)},\mathbf{V}^{(t)},\mathbf{W}^{(t)},\mathbf{X}), \mathbf{W}^*\rangle\right)\\
			&- \eta^2 \left\|\nabla \mathcal{L}_\mathcal{S}(\mathbf{Z}^{(t)},\mathbf{V}^{(t)},\mathbf{W}^{(t)})\right\|_F^2\\
			\overset{(a)}{\ge}& \frac{2\eta}{n}\sum_{(\mathbf{X},y)\in\mathcal{S}}\left(\mathcal{L}(\mathbf{Z}^{(t)},\mathbf{V}^{(t)},\mathbf{W}^{(t)},\mathbf{X}_i,y_i)-\log(1/\epsilon)\right) - \eta\mathcal{L}_\mathcal{S}(\mathbf{Z}^{(t)},\mathbf{V}^{(t)},\mathbf{W}^{(t)})\\
			=&\eta \mathcal{L}_\mathcal{S}(\mathbf{Z}^{(t)},\mathbf{V}^{(t)},\mathbf{W}^{(t)}) - 2\eta \epsilon,
		\end{aligned}
	\end{equation}
	where $(a)$ is by the homogeneity of the feature layer and convexity of the cross-entropy function.
	
	Rearranging (\ref{equ: descent6}), we have
	\begin{align}
		\frac{1}{T_2+1}\sum_{t'=0}^{T_2}\mathcal{L}_{\mathcal{H}_1\cup\mathcal{H}_2}(\mathbf{Z}^{(T_1+t')},\mathbf{V}^{(T_1+t')},\mathbf{W}^{(T_1+t')}) \le \frac{\left\|\mathbf{W}^{(0)} - \mathbf{W}^*\right\|_F^2}{\eta (T_2+1)} + 2\epsilon.
	\end{align}
	Here, we have
	\begin{align}
		\left\|\mathbf{W}^{(0)} - \mathbf{W}^*\right\|_F^2 = \mathcal{O}(\frac{mN\log^2(1/\epsilon)}{\lambda^2d\sigma_0^2}).
	\end{align}
	As a result, we have
	\begin{align}
		\frac{1}{T_2+1}\sum_{t=0}^{T_2}\mathcal{L}_{\mathcal{H}_1\cup\mathcal{H}_2}(\mathbf{Z}^{(T_1+t')},\mathbf{V}^{(T_1+t')},\mathbf{W}^{(T_1+t')}) \le 3\epsilon.
	\end{align}
    Letting $\epsilon = 0.003/N$ finishes the proof.
\end{proof}
\subsection{Feature similarity (two-hop training w/o identity bridge)}
\begin{proposition}
    After the Stage-1 training, for all $i\in[N]$, we have
    $$\frac{\langle\mathbf{V}^{(T_1)}\mathbf{a}_i, \mathbf{V}^{(T_1)}\mathbf{b}_i\rangle}{\|\mathbf{V}^{(T_1)}\mathbf{a}_i\|_2\|\mathbf{V}^{(T_1)}\mathbf{b}_i\|_2}= o(1).$$
\end{proposition}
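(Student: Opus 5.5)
We follow the template of the A$\to$S feature-similarity proposition and rely on the coefficient decomposition of Lemma~\ref{lemma: decomposition2}. By Lemma~\ref{lemma: h1}, $\mathbf{W}^{(T_1)}=\mathbf{W}^{(0)}$, so we can write
\[
\mathbf{V}^{(T_1)}\mathbf{a}_i=\sum_{k,l}\gamma^{(T_1)}_{i,k,l}\mathbf{w}^{(0)}_{k,l}+\mathbf{V}^{(0)}\mathbf{a}_i,\qquad
\mathbf{V}^{(T_1)}\mathbf{b}_i=\sum_{k,l}\rho^{(T_1)}_{i,k,l}\mathbf{w}^{(0)}_{k,l}+\mathbf{V}^{(0)}\mathbf{b}_i .
\]
Lemma~\ref{lemma: h1} identifies the dominant coefficients. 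For $\mathbf{a}_i$ they are $\gamma^{(T_1)}_{i,\mathcal{I}(\mathbf{b}_i),l}=\Theta(\log d/(\lambda\sqrt d\sigma_0))$. For $\mathbf{b}_i$ they are $\rho^{(T_1)}_{i,\mathcal{I}(\mathbf{c}_i),l}$, of the same order. The key structural point is that the labels $\mathcal{I}(\mathbf{b}_i)$ and $\mathcal{I}(\mathbf{c}_i)$ differ. The other coefficients satisfy $\gamma_{i,\mathcal{I}(\mathbf{c}_i),l},\rho_{i,\mathcal{I}(\mathbf{b}_i),l}\in[-\mathcal{O}(\log d/(\lambda d\sigma_0)),0]$. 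Hence the two signal directions are built from disjoint families of initial feature vectors.

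\textbf{Step 1: lower bound the norms.} Expand $\|\mathbf{V}^{(T_1)}\mathbf{a}_i\|_2^2$. The diagonal terms contribute $m\cdot\Theta(\log^2 d/(\lambda^2 d\sigma_0^2))\cdot\Theta(d\sigma_0^2)=\Theta(m\log^2 d/\lambda^2)$, using Lemma~\ref{lemma: inner}. The term $\|\mathbf{V}^{(0)}\mathbf{a}_i\|^2=\Theta(d\sigma_0^2)$. Cross terms among the $m$ same-label vectors are bounded by $m^2$ times the squared coefficient times $\sigma_0^2\sqrt{d\log}$, which is $\tilde{\mathcal{O}}(m^2\log^2 d/(\lambda^2\sqrt d))$. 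This is lower order because $d\ge m^3nN^2/\lambda^6$. The same bound holds for $\mathbf{b}_i$. So both norms squared are $\Theta(m\log^2 d/\lambda^2+d\sigma_0^2)$.

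\textbf{Step 2: upper bound the inner product.} Split $\langle\mathbf{V}^{(T_1)}\mathbf{a}_i,\mathbf{V}^{(T_1)}\mathbf{b}_i\rangle$ into three groups.
\begin{itemize}
\item[(i)] Signal--signal terms $\gamma_{i,\mathcal{I}(\mathbf{b}_i),l}\rho_{i,\mathcal{I}(\mathbf{c}_i),l'}\langle\mathbf{w}^{(0)}_{\mathcal{I}(\mathbf{b}_i),l},\mathbf{w}^{(0)}_{\mathcal{I}(\mathbf{c}_i),l'}\rangle$. These involve distinct vectors, so each inner product is $\mathcal{O}(\sigma_0^2\sqrt{d\log})$, giving a total of $\tilde{\mathcal{O}}(m^2\log^2 d/(\lambda^2\sqrt d))$.
\item[(ii)] Signal--small terms, such as $\gamma_{i,\mathcal{I}(\mathbf{b}_i),l}\rho_{i,\mathcal{I}(\mathbf{b}_i),l}\|\mathbf{w}^{(0)}\|^2$. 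Here a diagonal norm $\Theta(d\sigma_0^2)$ multiplies a coefficient of size $\log d/(\lambda\sqrt d\sigma_0)\cdot\log d/(\lambda d\sigma_0)$. This yields $\mathcal{O}(m\log^2 d/(\lambda^2\sqrt d))$. The remaining small--small terms over the $dm$ indices are controlled the same way, using the $1/d$ scaling.
\item[(iii)] Terms involving $\mathbf{V}^{(0)}\mathbf{a}_i$ or $\mathbf{V}^{(0)}\mathbf{b}_i$. Lemma~5 gives $\langle\mathbf{V}^{(0)}\mathbf{a}_i,\mathbf{V}^{(0)}\mathbf{b}_i\rangle=\tilde{\mathcal{O}}(\sigma_0^2\sqrt d)$. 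The cross terms $\langle\mathbf{w}^{(0)},\mathbf{V}^{(0)}\cdot\rangle$ are concentrated at $\tilde{\mathcal{O}}(\sigma_0^2\sqrt d)$ because these are independent Gaussians.
\end{itemize}
Altogether this gives $|\langle\cdot,\cdot\rangle|=\tilde{\mathcal{O}}((m^2\log^2 d/\lambda^2+d\sigma_0^2)/\sqrt d)$.

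\textbf{Step 3: conclude.} Dividing by the product of norms gives a cosine of $\tilde{\mathcal{O}}(m/\sqrt d)=o(1)$ under Condition~\ref{condition:condition}.

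The main obstacle is group (ii) together with the aggregation over all $d\cdot m$ small coefficients. Summing naively $dm$ terms of size $\log d/(\lambda d\sigma_0)$ against $\Theta(d\sigma_0^2)$ diagonal norms could fail to be lower order. To handle this, I would pair indices. A diagonal contribution arises only when $\mathbf{a}_i$ and $\mathbf{b}_i$ share the same $(k,l)$, and both coefficients are then small. Their product is $\mathcal{O}(\log^2 d/(\lambda^2d^2\sigma_0^2))$, and after multiplying by $d\sigma_0^2$ and summing over $dm$ indices this is only $\mathcal{O}(m\log^2 d/(\lambda^2 d))$. The off-diagonal parts are handled by the $\sqrt{d\log}$ near-orthogonality from Lemma~\ref{lemma: inner}, where the coefficient sums are controlled by Cauchy--Schwarz.
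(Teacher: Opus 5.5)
Your route matches the paper's. You decompose $\mathbf{V}^{(T_1)}\mathbf{a}_i$ and $\mathbf{V}^{(T_1)}\mathbf{b}_i$ over the frozen vectors $\mathbf{w}^{(0)}_{k,l}$, observe that the dominant coefficients sit on the disjoint label families $\mathcal{I}(\mathbf{b}_i)$ and $\mathcal{I}(\mathbf{c}_i)$, and invoke near-orthogonality. The paper simply asserts the resulting $\tilde{\mathcal{O}}$ bound on the inner product.

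However, your explicit handling of the obstacle you flag does not go through. Using only the entrywise bounds $|\gamma^{(T_1)}_{i,k,l}|,|\rho^{(T_1)}_{i,k,l}|=\mathcal{O}(\log d/(\lambda d\sigma_0))$ from Lemma~\ref{lemma: h1}, the same-index terms sum to
\[
dm\cdot\mathcal{O}\Big(\frac{\log^2 d}{\lambda^2 d^2\sigma_0^2}\Big)\cdot\Theta(d\sigma_0^2)=\mathcal{O}\Big(\frac{m\log^2 d}{\lambda^2}\Big),
\]
not $\mathcal{O}(m\log^2 d/(\lambda^2 d))$; a factor of $d$ was lost. Off the signal labels both coefficients are $\le 0$, so these products are all nonnegative and cannot cancel. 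The bound is therefore of the same order as $\|\mathbf{V}^{(T_1)}\mathbf{a}_i\|_2\|\mathbf{V}^{(T_1)}\mathbf{b}_i\|_2$ and gives only a cosine of $\mathcal{O}(1)$. The off-diagonal and $\mathbf{V}^{(0)}$ cross terms are worse. Your Cauchy--Schwarz step, with the entrywise bound summed over $dm$ indices, yields a bound exceeding the norm product by a factor of order $m\sqrt d$.

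The missing ingredient is an $\ell_1$ bound coming from softmax normalization. Since $\mathbf{a}_i$ (resp.\ $\mathbf{b}_i$) occurs in exactly one training prompt, the increment along $\mathbf{w}^{(0)}_{k,l}$ for $k\neq y$ at step $t$ is $-\frac{\lambda\eta}{mn}\mathrm{logit}^{(t)}_k$ (up to a factor $\alpha_1\le 1$). Because $\sum_{k\neq y}\mathrm{logit}^{(t)}_k\le 1$, this gives $\sum_{k\neq\mathcal{I}(\mathbf{b}_i)}|\gamma^{(T_1)}_{i,k,l}|\le \lambda\eta T_1/(mn)=\mathcal{O}(\gamma_{\mathrm{sig}})$ with $\gamma_{\mathrm{sig}}=\Theta(\log d/(\lambda\sqrt d\sigma_0))$, and likewise for $\rho$. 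With this in hand:
\begin{itemize}
\item H\"older ($\ell_\infty$ times $\ell_1$) bounds the same-index sum by $\mathcal{O}(m\log^2 d/(\lambda^2\sqrt d))$.
\item $\max_{p\neq q}|\langle\mathbf{w}^{(0)}_p,\mathbf{w}^{(0)}_q\rangle|\cdot\|\gamma\|_1\|\rho\|_1$ bounds the off-diagonal part by $\tilde{\mathcal{O}}(m^2\log^2 d/(\lambda^2\sqrt d))$.
\item The $\mathbf{V}^{(0)}$ cross terms follow the same way.
\end{itemize}
The same control is what your Step~1 norm lower bound needs.

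Separately, your final rate $\tilde{\mathcal{O}}(m/\sqrt d)$ requires $m=o(\sqrt d)$, which Condition~\ref{condition:condition} does not force when $\lambda$ is large. The coefficients $\gamma_{i,k,l}$ and $\rho_{i,k,l}$ do not depend on $l$, since all $l$ receive identical updates while $\mathbf{W}$ is frozen. You can therefore aggregate $\sum_{l}\mathbf{w}^{(0)}_{k,l}\sim\mathcal{N}(\mathbf{0},m\sigma_0^2\mathbf{I})$ and recover the $\tilde{\mathcal{O}}(1/\sqrt d)$ rate implicit in the paper's bound.
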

\begin{proof}
    Based on Lemma \ref{lemma: h1}, $\mathbf{V}^{(T_1)}\mathbf{a}_i$ and $\mathbf{V}^{(T_1)}\mathbf{b}_i$ have large coefficients on two parts of nearly orthogonal components $\mathbf{w}^{(0)}_{\mathcal{I}(\mathbf{b}_i),l}$ and $\mathbf{w}^{(0)}_{\mathcal{I}(\mathbf{c}_i),l}$ for all $l\in[m]$.
    As a result, we have
    \begin{align}
        |\langle\mathbf{V}^{(T_1)}\mathbf{a}_i, \mathbf{V}^{(T_1)}\mathbf{b}_i\rangle| = \tilde{\mathcal{O}}(m\frac{\log^2(d)}{\lambda^2\sqrt{d}}+\frac{1}{\sqrt{d}})
    \end{align}
    Therefore, we have 
    \begin{align}
        \frac{|\langle\mathbf{V}^{(T_1)}\mathbf{a}_i, \mathbf{V}^{(T_1)}\mathbf{b}_i\rangle|}{\|\mathbf{V}^{(T_1)}\mathbf{a}_i\|_2\|\mathbf{V}^{(T_1)}\mathbf{b}_i\|_2}  =\frac{\tilde{\mathcal{O}}(m\frac{\log^2(d)}{\lambda^2\sqrt{d}}+\frac{1}{\sqrt{d}})}{\Theta(m\log^2(d)/\lambda^2+d\sigma^2_0)} = o(1). 
    \end{align}
    This finishes the proof.
\end{proof}

\subsection{Success rate bound (two-hop training without identity bridge)}
As at initialization and during training, all $i,j \in [N]$ are symmetric.
Then, for all $(\mathbf{X},y)\in\mathcal{R}$ and $t\in[T_1+T_2]$, we have
\begin{align}
    \mathbb{P}[f_{\mathcal{I}(\mathbf{c}_i)}(\mathbf{Z}^{(t)},\mathbf{V}^{(t)},\mathbf{W}^{(t)},\mathbf{X})>\max_{j\neq \mathcal{I}(\mathbf{c}_i)} f_j(\mathbf{Z}^{(t)},\mathbf{V}^{(t)},\mathbf{W}^{(t)},\mathbf{X}) ] = \frac{1}{N}.
\end{align}
Therefore,
\begin{align}
    \mathcal{L}^{0-1}_\mathcal{R}(\mathbf{Z}^{(t)},\mathbf{V}^{(t)},\mathbf{W}^{(t)}) = 1-\frac{1}{N}.
\end{align}
This finishes the proof.

\section{Probability bookkeeping}
In the above proofs for theorems, we assume some properties, such as Lemmas 3,4,5 hold with high probability. 
By union bound, letting $\delta = C' \delta'$ with a large constant $C'$, we can claim that the theorems hold with probability at least $1-\delta$.

\section{Proof of deep linear neural networks}\label{app: linear_nn}
For simplicity, we denote 
\begin{align}
& T = \sum_{i=1}^LT_i,\\
& s_i = \sum_{k=1}^iT_i, s_0:=0,\\
    &\textbf{logit}^{(t)}(\mathbf{x}) = \text{softmax}(\mathbf{W}_L^{(t)}\cdots\mathbf{W}_{1}^{(t)}\mathbf{x}),\\
    &\mathbf{A}^{(t)}_j = \mathbf{W}^{(t)}_{j} \dots \mathbf{W}^{(t)}_1.
\end{align}
For the data, we have the following lemma,
\begin{lemma}
    Suppose that $r\in\mathbb{R}$ is a constant.
    For any $i\in[n], j\in[d]$, with probability $1-\delta$, we have
    \begin{align}
        x_{i,j} = \mathcal{O}(\frac{\log(2nd/\delta)}{\sqrt{d}}).
    \end{align}
\end{lemma}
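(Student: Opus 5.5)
The statement concerns the inputs $\mathbf{x}_1,\dots,\mathbf{x}_n$, drawn as a uniformly random orthonormal system in $\mathbb{R}^d$. The plan is to reduce each coordinate $x_{i,j}$ to a coordinate of a single uniformly random unit vector and then apply a standard concentration bound plus a union bound.

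\textbf{Step 1 (marginals).} The orthonormal system is Haar-distributed, so it is invariant under left multiplication by any fixed orthogonal matrix. Hence each $\mathbf{x}_i$ is marginally uniform on the sphere $\mathbb{S}^{d-1}$. I will represent it as $\mathbf{x}_i = \mathbf{g}/\|\mathbf{g}\|_2$ with $\mathbf{g}\sim\mathcal{N}(0,\mathbf{I}_d)$. The dependence between different $i$ does not matter, because the union bound needs only marginal tail bounds.

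\textbf{Step 2 (coordinate tail).} For a fixed pair $(i,j)$, write $x_{i,j}=g_j/\|\mathbf{g}\|_2$.
\begin{itemize}
\item For the numerator, the Gaussian tail gives $\mathbb{P}[|g_j|>\sqrt{2\log(4nd/\delta)}]\le \delta/(2nd)$.
\item For the denominator, the Bernstein argument already used in Lemma~\ref{lemma: inner} (chi-square concentration) gives $\|\mathbf{g}\|_2^2\ge d/2$ with probability at least $1-\delta/(2nd)$. This requires $d=\Omega(\log(nd/\delta))$, which is implied by the later assumption $d^{0.9}>C'L^2\log(2nd/\delta)$.
\end{itemize}
Combining the two, with probability at least $1-\delta/(nd)$,
\begin{align}
|x_{i,j}|\le \frac{2\sqrt{\log(4nd/\delta)}}{\sqrt{d}}.
\end{align}

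\textbf{Step 3 (union bound).} A union bound over all $nd$ pairs $(i,j)$ yields the claim simultaneously for every $i,j$ with probability at least $1-\delta$. Since $\sqrt{\log(4nd/\delta)}\le \mathcal{O}(\log(2nd/\delta))$ whenever $\log(2nd/\delta)\ge 1$, the bound takes the stated form $\mathcal{O}(\log(2nd/\delta)/\sqrt{d})$. This is in fact slightly weaker than what Step 2 delivers.

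The only point needing care is Step 1: justifying that each vector of a random orthonormal frame is marginally uniform on the sphere, and that correlations across $i$ are irrelevant. Rotation invariance settles this directly. An alternative is to construct the frame by Gram--Schmidt applied to i.i.d.\ Gaussian vectors and note that the $i$-th output is uniform by symmetry. The constant $r$ in the statement plays no role in this argument.
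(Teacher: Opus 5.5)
Your proposal is correct and takes the same route as the paper, whose proof simply invokes the sub-Gaussian tail of the coordinates of a uniform vector on $\mathbb{S}^{d-1}$ plus a union bound over the $nd$ pairs. Your Gaussian-normalization argument $\mathbf{x}_i=\mathbf{g}/\|\mathbf{g}\|_2$ is an explicit derivation of that sub-Gaussian tail, and it gives the sharper $\sqrt{\log(4nd/\delta)}/\sqrt{d}$ rate; the side condition $d=\Omega(\log(nd/\delta))$ is not even needed, because if $d\le 16\log(2nd/\delta)$ then $|x_{i,j}|\le 1\le 16\log(2nd/\delta)/\sqrt{d}$ trivially.
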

This lemma is proved using the sub-Gaussian property of the variables sampled from unit sphere \cite{vershynin2009high} and union bound. 

For $i\in[L]$ and all $(\mathbf{x},y)\in\mathcal{S}_o$, the gradient of the loss with a linear neural network is given as 
    \begin{align}
        \frac{\partial \mathcal{L}}{\partial \mathbf{W}_i} = \nabla_{\mathbf{W}_i} \mathcal{L} = \frac{1}{n}(\mathbf{W}_L \dots \mathbf{W}_{i+1})^\top\sum_{(\mathbf{x},y)\in\mathcal{S}_o}(\text{softmax}(\mathbf{W}_L\cdots\mathbf{W}_{1}\mathbf{x})-\mathbf{e}_y)(\mathbf{W}_{i-1} \dots \mathbf{W}_1 \mathbf{x})^\top.
    \end{align}

When $T_1,\cdots,T_{L-1} = \mathcal{O}(\oldfrac{n}{L\eta})$, for all $t\in[T]$, $i\in[n]$ and $j\neq y_i$, we have
\begin{align}
    1-\text{logit}^{(t)}_{y_i}(\mathbf{x}_i) \ge& \frac{1}{2},\\
    \text{logit}^{(t)}_j(\mathbf{x}_i) \le& \frac{1}{d^{1/3}}.
\end{align}

During stage $i$, the model update for data sample $(\mathbf{x}_j,y_j)\in\mathcal{S}_o$ satisfies
\begin{align}\label{equ: update_multi}
    &\mathbf{W}^{(t+1)}_i\mathbf{A}_{i-1}^{(s_{i-1})}\mathbf{x}_j - \mathbf{W}_i^{(t)}\mathbf{A}_{i-1}^{(s_{i-1})}\mathbf{x}_j = \frac{\eta}{n} \sum_{k=1}^n(\mathbf{e}_{y_k}-\textbf{logit}^{(t)}(\mathbf{x}_k))\underbrace{\mathbf{x}_k^\top(\mathbf{A}^{(s_{i-1})}_{i-1})^\top\mathbf{A}^{(s_{i-1})}_{i-1}\mathbf{x}_j}_{D}.
\end{align}
For the term $D$, we have the following lemma.
\begin{lemma}\label{lemma: term_D}
Let $s_l = \sum_{q=1}^lT_q$. For all $l\in[1,L-1]$, at time $s_l$, we have
\begin{itemize}
    \item For any data,
    \begin{align}
    &1\le\mathbf{x}_i^\top(\mathbf{A}^{(s_l)}_l)^\top\mathbf{A}^{(s_l)}_l\mathbf{x}_i \le 2,
    \end{align}
    \item For any data with $y_j\neq y_i$,
    \begin{align}
    &|\mathbf{x}_j^\top(\mathbf{A}^{(s_l)}_l)^\top\mathbf{A}^{(s_l)}_l\mathbf{x}_i| \le \tilde{\mathcal{O}}(\frac{1}{d^{1/2}}),
    \end{align}
    \item For any data with $y_j = y_i$ and $j\neq i$,
    \begin{align}
    \mathbf{x}_j^\top(\mathbf{A}^{(s_l)}_l)^\top\mathbf{A}^{(s_l)}_l\mathbf{x}_i = \Theta(\frac{l^2}{L^2})\pm \tilde{\mathcal{O}}(\frac{1}{\sqrt{d}}).
    \end{align}
\end{itemize}
\end{lemma}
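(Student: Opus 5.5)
We argue by induction on the layer index $l$, tracking the Gram matrix $\mathbf{G}^{(l)}:=\mathbf{X}^\top(\mathbf{A}_l^{(s_l)})^\top\mathbf{A}_l^{(s_l)}\mathbf{X}$ of the training inputs after $l$ trained layers, where $\mathbf{X}=[\mathbf{x}_1,\dots,\mathbf{x}_n]$. The base case $l=0$ is the identity initialization: $\mathbf{A}_0=\mathbf{I}$, so $\mathbf{G}^{(0)}=\mathbf{I}$ by orthonormality, which matches all three bullets with the same-label entry $0=\Theta(0)$.

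For the inductive step we use \eqref{equ: update_multi}. During stage $l+1$ the trained layer $\mathbf{W}_{l+1}$ starts at $\mathbf{I}$ and only $\mathbf{W}_{l+1}$ moves. Hence
\[
\mathbf{A}_{l+1}^{(s_{l+1})}\mathbf{x}_j=\mathbf{A}_l^{(s_l)}\mathbf{x}_j+\frac{\eta}{n}\sum_{t}\sum_k \mathbf{u}_k^{(t)}\,G^{(l)}_{kj},
\qquad \mathbf{u}_k^{(t)}=\mathbf{e}_{y_k}-\textbf{logit}^{(t)}(\mathbf{x}_k).
\]
The layers above $l+1$ are still $\mathbf{I}$, so the gradient prefactor $(\mathbf{W}_L\cdots\mathbf{W}_{l+2})^\top$ is trivial.

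Using the stated logit bounds ($1-\text{logit}_{y}\ge 1/2$, other logits $\le d^{-1/3}$), we write $\mathbf{u}_k^{(t)}=c_k^{(t)}\mathbf{e}_{y_k}+\boldsymbol{\epsilon}_k^{(t)}$ with $c_k^{(t)}\in[1/2,1]$ and $\boldsymbol{\epsilon}$ small. The mechanism is that we do not need to track the logits precisely. The accumulated drift $\boldsymbol{\Delta}_j$ is a positive combination of $\mathbf{e}_{y}$ over labels $y$, weighted by $G^{(l)}_{jj}\in[1,2]$ plus same-label entries. Its coefficient on $\mathbf{e}_{y_j}$ is therefore $\Theta(\eta T_{l+1}/n)=\Theta(1/L)$. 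Its coefficients on other labels are $\tilde{\mathcal{O}}(d^{-1/2})$ from cross-label entries, plus the logit leakage term.

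We then expand $G^{(l+1)}_{ij}$ into four pieces:
\[
G^{(l+1)}_{ij}=G^{(l)}_{ij}+\langle \mathbf{A}_l\mathbf{x}_i,\boldsymbol{\Delta}_j\rangle+\langle \boldsymbol{\Delta}_i,\mathbf{A}_l\mathbf{x}_j\rangle+\langle\boldsymbol{\Delta}_i,\boldsymbol{\Delta}_j\rangle .
\]
\begin{itemize}
\item The quadratic term $\langle\boldsymbol{\Delta}_i,\boldsymbol{\Delta}_j\rangle$ is $\Theta(1/L^2)$ per stage when $y_i=y_j$, and $\tilde{\mathcal{O}}(d^{-1/2})$ otherwise, since distinct $\mathbf{e}_y$ are orthogonal.
\item The cross terms need $\langle\mathbf{A}_l\mathbf{x}_i,\mathbf{e}_y\rangle$. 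We carry this as a second inductive hypothesis: it equals $\Theta(l/L)$ when $y=y_i$ (the accumulated $\mathbf{e}_{y_i}$ coefficients from earlier stages) plus $\mathcal{O}(\log(2nd/\delta)/\sqrt d)$ from the coordinates of $\mathbf{x}_i$, using the preceding coordinate lemma. For $y\neq y_i$ it is only the small part.
\end{itemize}
Summing the pieces, the same-label entry grows from $\Theta(l^2/L^2)$ to
\[
\Theta\!\left(\frac{l^2}{L^2}+\frac{2l}{L^2}+\frac{1}{L^2}\right)=\Theta\!\left(\frac{(l+1)^2}{L^2}\right).
\]
The diagonal grows by the same amount, staying at most $1+\mathcal{O}(1)\le 2$ after tuning the constant in $T_i$. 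It stays at least $1$ because every added term is nonnegative up to $\tilde{\mathcal{O}}(d^{-1/2})$, which we absorb. Cross-label entries pick up only $\tilde{\mathcal{O}}(d^{-1/2})$ per stage.

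The main obstacle is error accumulation over $L$ stages. Each stage adds $\tilde{\mathcal{O}}(d^{-1/2})$ and logit-leakage errors of order $d^{-1/3}$ times the number of labels. We would control these by noting that the off-label logit contributions are spread across many $\mathbf{e}_k$ directions, so their norm is small, and then use $d^{0.9}>C'L^2\log(2nd/\delta)$ to keep the total $L\cdot\tilde{\mathcal{O}}(d^{-1/2})$ within the stated $\tilde{\mathcal{O}}(d^{-1/2})$ slack, or within $d^{-0.1}$ if needed. The additional hypothesis that at most $\mathcal{O}(1)$ samples share a label keeps the same-label sums in $\boldsymbol{\Delta}_j$ to $\mathcal{O}(1)$ terms. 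A union bound over stages and pairs finishes the argument.
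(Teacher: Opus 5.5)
Your proposal has the same skeleton as the paper's proof. Both argue by induction over the stages through \eqref{equ: update_multi}. In both, the stage-$(l+1)$ drift is concentrated on $\mathbf{e}_{y_j}$ with coefficient $\Theta(\eta T_{l+1}/n)=\Theta(1/L)$, the accumulated coordinate $(\mathbf{A}_l\mathbf{x}_i)_{y_i}=\Theta(l/L)$ is carried as an invariant, and everything else is controlled by the coordinate bound $\mathcal{O}(\log(2nd/\delta)/\sqrt d)$. The difference is bookkeeping. You update the Gram matrix stage by stage. The paper expands once around the inputs: with $\boldsymbol{\delta}_i=\mathbf{A}_l^{(s_l)}\mathbf{x}_i-\mathbf{x}_i$, it writes $\mathbf{x}_j^\top(\mathbf{A}_l^{(s_l)})^\top\mathbf{A}_l^{(s_l)}\mathbf{x}_i=\langle\mathbf{x}_j,\mathbf{x}_i\rangle+\langle\boldsymbol{\delta}_j,\boldsymbol{\delta}_i\rangle+\langle\boldsymbol{\delta}_j,\mathbf{x}_i\rangle+\langle\mathbf{x}_j,\boldsymbol{\delta}_i\rangle$. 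Since $\boldsymbol{\delta}_i$ is $\mathcal{O}(1)$ and concentrated on $\mathbf{e}_{y_i}$, the cross terms are bounded only once. The same-label entry is then read off directly as the product of the two accumulated coefficients, so no $\Theta$-constants have to survive an $L$-step recursion. In your version you should state the invariant with explicit constants for that reason.

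This difference matters at exactly the step you flag as the obstacle, and there your argument has a gap.
\begin{itemize}
\item \textbf{Error accumulation.} You charge $\tilde{\mathcal{O}}(d^{-1/2})$ per stage, then invoke $d^{0.9}>C'L^2\log(2nd/\delta)$ to fold the total $L\cdot\tilde{\mathcal{O}}(d^{-1/2})$ back into $\tilde{\mathcal{O}}(d^{-1/2})$. That inference fails: the condition only gives $L/\sqrt d<d^{-0.05}$. As written you would prove only a weaker form of the second and third bullets, and falling back to $d^{-0.1}$ changes the statement. The repair is to keep the factor $\Theta(1/L)$ that $\boldsymbol{\Delta}$ contributes to every cross term. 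For $y_i\neq y_j$, $\langle\mathbf{A}_l\mathbf{x}_i,\boldsymbol{\Delta}_j\rangle\approx\Theta(1/L)\,(\mathbf{A}_l\mathbf{x}_i)_{y_j}=\mathcal{O}(\log(2nd/\delta)/(L\sqrt d))$, so the $L$ stages sum to $\tilde{\mathcal{O}}(d^{-1/2})$. Alternatively, switch to the cumulative expansion.
\item \textbf{Lower bound $\ge 1$.} The same factor is what actually justifies it. The possibly negative first-stage contributions, such as $2\Theta(1/L)\,x_{i,y_i}$, are $\mathcal{O}(\log(2nd/\delta)/(L\sqrt d))$. They are dominated by $\|\boldsymbol{\Delta}_i\|^2=\Theta(1/L^2)$ once $L\log(2nd/\delta)\ll\sqrt d$, which the hypothesis on $d$ provides up to logarithmic factors. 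A bare $\tilde{\mathcal{O}}(d^{-1/2})$ against $\Theta(1/L^2)$ would instead need $L^2\ll\sqrt d$, which the hypothesis does not give.
\item \textbf{Logit leakage.} The entrywise bound $d^{-1/3}$ only bounds the $\ell_2$ norm of the off-label vector by $d^{-1/6}$, so ``spread across many directions'' does not by itself give $d^{-1/2}$. Use instead that the layers above are the identity and the network output stays $\mathcal{O}(1)$ entrywise throughout. Then every logit is $\Theta(1/d)$, the leakage vector has norm $\mathcal{O}(d^{-1/2})$, and it contributes only $\mathcal{O}(1/(L\sqrt d))$ per stage.
\end{itemize}
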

\begin{proof}
We prove it by induction.
We first prove that the conclusion holds for the first layer.
    For the first layer training $t\in[T_1]$, we have
    \begin{align}
        \frac{\eta}{2n}T_1\le (\mathbf{W}^{(T_1)}_1\mathbf{x}_i-\mathbf{W}^{(0)}_1\mathbf{x}_i)_{y_i} \le \frac{\eta}{n}T_1,
    \end{align}
    leading to $(\mathbf{W}^{(T_1)}_l\mathbf{x}_i-\mathbf{W}^{(0)}_l\mathbf{x}_i)_{y_i} = \Theta(\frac{1}{L})$.
    Then, we have
        \begin{align}
         \mathbf{x}_i^\top(\mathbf{A}^{(T_1)}_1)^\top\mathbf{A}^{(T_1)}_1\mathbf{x}_i = 1+ \underbrace{(\mathbf{W}^{(T_1)}_1\mathbf{x}_i-\mathbf{W}^{(0)}_1\mathbf{x}_i)^\top(\mathbf{W}^{(T_1)}_1\mathbf{x}_i-\mathbf{W}^{(0)}_1\mathbf{x}_i)}_{\Theta(\frac{1}{L^2})} +  \underbrace{2(\mathbf{W}^{(T_1)}_1\mathbf{x}_i - \mathbf{W}^{(0)}_1\mathbf{x}_i)^\top \mathbf{x}_i}_{\mathcal{O}(\frac{\log(2nd/\delta)}{\sqrt{d}})},
    \end{align}
    leading to 
    \begin{align}
        1\le\mathbf{x}_i^\top(\mathbf{A}^{(T_1)}_1)^\top\mathbf{A}^{(T_1)}_1\mathbf{x}_i\le 1.2.
    \end{align}
    For $y_j\neq y_i$, we have
    \begin{align}
        |\mathbf{x}_j^\top(\mathbf{A}^{(T_1)}_1)^\top\mathbf{A}^{(T_1)}_1\mathbf{x}_i| \le \tilde{\mathcal{O}}(\frac{1}{Ld^{1/2}}).
    \end{align}
For any data with $y_i=y_j$ and $i\neq j$, we have
\begin{equation}
\begin{aligned}
\mathbf{x}_j^\top(\mathbf{A}^{(T_1)}_{1})^\top\mathbf{A}^{(T_1)}_{1}\mathbf{x}_i  =& \underbrace{(\mathbf{A}_{1}^{(T_1)}\mathbf{x}_i-\mathbf{x}_i)^\top(\mathbf{A}_{1}^{(T_1)}\mathbf{x}_j-\mathbf{x}_j)}_{\Theta(\frac{1}{L^2})} +  \underbrace{(\mathbf{A}_{1}^{(T_1)}\mathbf{x}_i-\mathbf{x}_i)^\top \mathbf{x}_j}_{\mathcal{O}(\frac{\log(2nd/\delta)}{\sqrt{d}})})\\
&+ \underbrace{(\mathbf{A}_{1}^{(T_1)}\mathbf{x}_j-\mathbf{x}_j)^\top \mathbf{x}_i}_{\mathcal{O}(\frac{\log(2nd/\delta)}{\sqrt{d}})})\\
=& \Theta(\frac{1}{L^2}) \pm \tilde{\mathcal{O}}(\frac{1}{\sqrt{d}}).
\end{aligned}
\end{equation}    
    
    Assume that the conclusions hold for layer $l-1$. 
    For layer $l$, we have
    \begin{equation}
    \begin{aligned}
(\mathbf{W}^{(s_l)}_l\mathbf{A}_{l-1}^{(s_l)}\mathbf{x}_i-\mathbf{W}^{(s_{l-1})}_l\mathbf{A}_{l-1}^{(s_{l-1})}\mathbf{x}_i)_{y_i} =& \left(\frac{\eta}{n}\sum_{t'=s_{l-1}}^{s_l-1} \sum_{k=1}^n(\mathbf{e}_{y_k}-\textbf{logit}^{(t')}(\mathbf{x}_k))\mathbf{x}_k^\top(\mathbf{A}^{(s_{l-1})}_{l-1})^\top\mathbf{A}^{(s_{l-1})}_{l-1}\mathbf{x}_i\right)_{y_i}\\
=& \Theta(\frac{1}{L}).
    \end{aligned}
    \end{equation}
As a result, we have
\begin{align}
     (\mathbf{A}_{l}^{(s_l)}\mathbf{x}_i-\mathbf{x}_i)_{y_i} = \Theta(l/L).
\end{align}
For any data $(\mathbf{x}_i,y_i)\in\mathcal{S}_o$, we have
    \begin{align}
\mathbf{x}_i^\top(\mathbf{A}^{(s_{l})}_{l})^\top\mathbf{A}^{(s_{l})}_{l}\mathbf{x}_i  = 1+ \underbrace{(\mathbf{A}_{l}^{(s_l)}\mathbf{x}_i-\mathbf{x}_i)^\top(\mathbf{A}_{l}^{(s_l)}\mathbf{x}_i-\mathbf{x}_i)}_{\Theta(\frac{l^2}{L^2})} +  \underbrace{2(\mathbf{A}_{l}^{(s_l)}\mathbf{x}_i-\mathbf{x}_i)^\top \mathbf{x}_i}_{\mathcal{O}(\frac{\log(2nd/\delta)}{\sqrt{d}})}) \le 2.
    \end{align}
For any data with $y_i=y_j$ and $i\neq j$, we have
\begin{equation}
\begin{aligned}
\mathbf{x}_j^\top(\mathbf{A}^{(s_{l})}_{l})^\top\mathbf{A}^{(s_{l})}_{l}\mathbf{x}_i  =& \underbrace{(\mathbf{A}_{l}^{(s_l)}\mathbf{x}_i-\mathbf{x}_i)^\top(\mathbf{A}_{l}^{(s_l)}\mathbf{x}_j-\mathbf{x}_j)}_{\Theta(\frac{l^2}{L^2})} +  \underbrace{(\mathbf{A}_{l}^{(s_l)}\mathbf{x}_i-\mathbf{x}_i)^\top \mathbf{x}_j}_{\mathcal{O}(\frac{\log(2nd/\delta)}{\sqrt{d}})})\\
&+ \underbrace{(\mathbf{A}_{l}^{(s_l)}\mathbf{x}_j-\mathbf{x}_j)^\top \mathbf{x}_i}_{\mathcal{O}(\frac{\log(2nd/\delta)}{\sqrt{d}})})\\
=& \Theta(l^2/L^2) \pm \tilde{\mathcal{O}}(\frac{1}{\sqrt{d}}).
\end{aligned}
\end{equation}    
In addition, for $y_j\neq y_i$, we have
\begin{equation}
    \begin{aligned}
        |\mathbf{x}_j^\top(\mathbf{A}^{(s_l)}_l)^\top\mathbf{A}^{(s_l)}_l\mathbf{x}_i| \le& \underbrace{|(\mathbf{A}^{(s_l)}_l\mathbf{x}_j-\mathbf{x}_j)^\top(\mathbf{A}^{(s_l)}_l\mathbf{x}_i-\mathbf{x}_i)|}_{\tilde{\mathcal{O}}(\oldfrac{l^2}{L^2d^{1/2}})} +  \underbrace{|(\mathbf{A}^{(s_l)}_l\mathbf{x}_i - \mathbf{x}_i)^\top \mathbf{x}_j|}_{\mathcal{O}(\frac{\log(2nd/\delta)}{\sqrt{d}})}\\
        &+\underbrace{|(\mathbf{A}^{(s_l)}_l\mathbf{x}_j - \mathbf{x}_j)^\top \mathbf{x}_i|}_{\mathcal{O}(\frac{\log(2nd/\delta)}{\sqrt{d}})}\\
        =& \mathcal{O}\left(\frac{\log(2nd/\delta)}{d^{1/2}}\right).
    \end{aligned}
\end{equation}
\end{proof}

    By (\ref{equ: update_multi}), for $(\mathbf{x}_i,y_i)\in\mathcal{S}_o,l\in[1,L-1]$ and $k\neq y_i$, we have
    \begin{align}  \left(\mathbf{A}_{l+1}^{(s_{l+1})}\mathbf{x}_i-\mathbf{A}_{l}^{(s_l)}\mathbf{x}_i\right)_{y_i} = \Theta\left(\frac{1}{L}\right), \left|\left(\mathbf{A}^{(s_{l+1})}_{l+1}\mathbf{x}_i - \mathbf{A}^{(s_l)}_{l+1}\mathbf{x}_i\right)_{k}\right| = \mathcal{O}\left(\frac{\log(2nd/\delta)}{d^{1/2}}\right).
    \end{align}
As $\mathbf{B}^{(0)}_1 = \cdots = \mathbf{B}^{(0)}_L = \mathbf{I}$, based on Lemma \ref{lemma: term_D}, for $l\in [1,L-1]$, we have
\begin{align}
    \mathbf{W}_l^{(s_l)}\cdots\mathbf{W}_1^{(s_l)}\mathbf{x}_i = \mathbf{x}_i+\Theta\left(\frac{l}{L}\right)\mathbf{e}_{y_i} + \sum_{j\neq y_i}r_{l,j,i}\mathbf{e}_{j}.
\end{align}
where $|r_{l,j,i}| = \mathcal{O}\left(l\frac{\log(2nd/\delta)}{d^{1/2}}\right)$ for all $j\neq y_i$.
By the condition that $d^{0.9} > C'L^2\log(2nd/\delta)$ for a large constant $C'$, we have 
 \begin{equation}
    \begin{aligned}
        &\frac{\langle \mathbf{W}^{(T)}_{k+1}\cdots\mathbf{W}^{(T)}_1\mathbf{x}_i,\mathbf{W}^{(T)}_{k+1}\cdots\mathbf{W}^{(T)}_1\mathbf{x}_j \rangle}{\| \mathbf{W}^{(T)}_{k+1}\cdots\mathbf{W}^{(T)}_1\mathbf{x}_i\|_2\| \mathbf{W}^{(T)}_{k+1}\cdots\mathbf{W}^{(T)}_1\mathbf{x}_j\|_2} \ge \frac{\langle \mathbf{W}^{(T)}_{k}\cdots\mathbf{W}^{(T)}_1\mathbf{x}_i,\mathbf{W}^{(T)}_{k}\cdots\mathbf{W}^{(T)}_1\mathbf{x}_j \rangle}{\| \mathbf{W}^{(T)}_{k}\cdots\mathbf{W}^{(T)}_1\mathbf{x}_i\|_2\| \mathbf{W}^{(T)}_{k}\cdots\mathbf{W}^{(T)}_1\mathbf{x}_j\|_2} - \tilde{\mathcal{O}}(d^{-0.1}), 
    \end{aligned}
    \end{equation}
    and 
    \begin{align}
        \frac{\langle \mathbf{W}^{(T)}_{L-1}\cdots\mathbf{W}^{(T)}_1\mathbf{x}_i,\mathbf{W}^{(T)}_{L-1}\cdots\mathbf{W}^{(T)}_1\mathbf{x}_j \rangle}{\| \mathbf{W}^{(T)}_{L-1}\cdots\mathbf{W}^{(T)}_1\mathbf{x}_i\|_2\| \mathbf{W}^{(T)}_{L-1}\cdots\mathbf{W}^{(T)}_1\mathbf{x}_j\|_2} =\Omega(1),
    \end{align}
This finishes the proof.
The term $\tilde{\mathcal{O}}(d^{-0.1})$ is due to the randomness of random orthogonal embeddings.
\section{Experimental details}\label{app: exp}
\subsection{Synthetic data experiments}
For one-layer transformers, we set
\begin{itemize}
    \item Number of entity tuples $N = 100$.
    \item Embedding dimension $d = 427$.
    \item Scaling factor $\lambda = 20$.
    \item Training iterations $T_1 = 500$, $T_2 = 2000$ and $T_3 = 2000$ (for sequential training).
    \item Width of the feature layer $m=50$.
    \item Initialization magnitude $\sigma_0 = 0.03$.
\end{itemize}

For GPT-2, we set
\begin{itemize}
    \item Number of entity tuples $N = 100$.
    \item Embedding dimension $d = 200$.
    \item Training iterations: $T = 500$ for joint training and $T_1 = T_2 = 600$ for sequential training.
    \item Number of heads is 12 and number of layers is 12.
    \item Initialization magnitude: $\sigma_0 = 0.003$.
\end{itemize}

\subsection{Natural language data experiments}\label{app: exp_nlp}
We fine-tune for 10 epochs with per-device batch size 16, learning rate
\(10^{-5}\), zero weight decay, and mixed-precision FP16 training.

We show the representative examples of the generated natural language dataset in Table \ref{tab:extended_data}.
\begin{table}[h]
    \centering
    \caption{Dataset Sample Examples}
    \label{tab:extended_data}
    \small
    \setlength{\tabcolsep}{4pt}
    \begin{tabular}{l l l l l}
        \toprule
        \textbf{Subject 1} & \textbf{Subject 2} & \textbf{Aspect} & \textbf{Relation} & \textbf{Category} \\
        \midrule
        
        Apple & Pear & Tree & AtLocation & Fruit \\
        Ant & Bee & Small & HasProperty & Insect \\
        Eagle & Hawk & Hunting & CapableOf & Bird of Prey \\
        Bed & Sofa & Resting & UsedFor & Furniture \\
        Boots & Sneakers & Outfit & PartOf & Footwear \\
        
        \bottomrule
    \end{tabular}
\end{table}

The generated data has six possible relations in total, such as ``AtLocation'', ``MadeOf'', ``UsedFor'', ``CapableOf'', ``HasProperty'', and ``PartOf''. 
Each relation has a unique sentence template, as shown in Table \ref{tab:relations}.
\begin{table}[!t]
\vspace*{-0.8\textheight}
\centering
\caption{Sentence Templates of Relations}
\label{tab:relations}
\begin{tabular}{ll}
\toprule
\textbf{Relation} & \textbf{Natural Language Template} \\ \midrule
AtLocation        & [subject] is found in [Aspect]       \\
MadeOf            & [subject] is made of [Aspect]         \\
UsedFor           & [subject] is used for [Aspect]         \\
CapableOf         & [subject] is capable of [Aspect]      \\
HasProperty       & [subject] is [Aspect]                 \\
PartOf            & [subject] is part of [Aspect]         \\ \bottomrule
\end{tabular}
\end{table}

\end{document}